\begin{document}

\title{Overparametrized linear dimensionality reductions:\\
From projection pursuit to two-layer neural networks\footnote{An earlier version of this paper was accepted for presentation at the Conference on Learning Theory (COLT) 2022, with the title ``High-Dimensional Projection Pursuit:
		Outer Bounds and Applications to Interpolation in Neural Networks."}}
\author{Andrea Montanari\thanks{Department of Electrical Engineering and Department of Statistics, 
		Stanford University} \;\; and \;\; Kangjie Zhou\thanks{Department of Statistics, 
		Stanford University}}
\date{\today}
\maketitle

\begin{abstract}
	Given a cloud of $n$ data points in $\R^d$, consider
	all projections onto $m$-dimensional subspaces of $\R^d$ and, 
	for each such projection, the empirical distribution of the projected points.
	What does this collection of probability distributions look like when $n,d$ grow large?
	
	We consider this question under the null model in which the points are i.i.d. 
	standard Gaussian vectors, focusing on the asymptotic regime in which $n,d\to\infty$,
	with $n/d\to\alpha\in (0,\infty)$, while $m$ is fixed. Denoting by $\cuF_{m, \alpha}$
	the set of probability distributions in $\R^m$ that arise as low-dimensional projections
	in this limit, we establish new inner and outer bounds on $\cuF_{m, \alpha}$. In
	particular, we characterize the Wasserstein radius of $\cuF_{m,\alpha}$ up to constant multiplicative factors, and determine it exactly for $m=1$. We 
	also prove sharp bounds in terms of Kullback-Leibler divergence 
	and R\'{e}nyi information dimension.
	
	The previous question has application to unsupervised learning methods, such as projection pursuit
	and independent component analysis. We introduce a version of the same problem that is relevant for 
	supervised learning, and prove a sharp Wasserstein radius bound. As an application, we
	establish an upper bound on the interpolation threshold of two-layers neural networks with $m$
	hidden neurons.
\end{abstract}


\tableofcontents

\section{Introduction and main results}\label{sec:intro}

\subsection{A null model for unsupervised learning}

Given $n$ data points $\xx_1,\dots,\xx_n\in\R^d$, Friedman and Tukey \cite{friedman1974projection} proposed to look for interesting structures by plotting histograms of their projections onto $m$-dimensional subspaces (with $m\in\{1,2\}$). They suggested to seek for 
projections that maximize a certain index of clustering of the resulting histograms. Diaconis and Freedman \cite{diaconis1984asymptotics} showed that under incoherence conditions 
on the points $\{\xx_i\}_{i\le n}$, most one-dimensional projections are nearly Gaussian.
They also studied the null model in which $\{\xx_i\}_{i\le n}\sim_{\iid}\normal(\bzero,\id_d)$,
in the high-dimensional setting $n,d\to\infty$. 
They proved that, if $n/d\to \infty$ then the `least Gaussian' one-dimensional projection converges 
to a standard Gaussian (in Kolmogorov-Smirnov distance), while, if $n/d\to \alpha\in(0,\infty)$,
it does not.

We will study the (Gaussian) null model of  \cite{diaconis1984asymptotics},
under the proportional asymptotics  $n/d\to \alpha\in(0,\infty)$. 
Let $\cuP (\R^{m})$ denote the space of all probability measures on $\R^{m}$, and 
$\{\xx_i\}_{i\le n}\sim_{\iid}\normal(\bzero,\id_d)$. Denote by $\XX\in\R^{n\times d}$
the matrix with rows $\xx_i^\top$, $i\le n$.  
We say that $P \in \cuP (\R^{m})$ is $(\alpha, m)$-feasible if there exists a 
sequence of random orthogonal matrices $\WW = \WW_n (\XX,\omega) \in \R^{d \times m}$ ($\WW^\top \WW = \id_m$)
such that the empirical distribution of the projections $\{\WW^{\top}\xx_i\}_{i\le n}$
converges weakly to $P$, in probability with respect to the randomness in $\XX,\WW$.
(Here $\omega$ denotes additional randomness that can be used in the construction of $\WW$.)
In formulas:
\begin{align}
	\cuF_{m,\alpha}:= \Big\{P \in \cuP (\R^{m}):\; & \exists
	\WW = \WW_n (\XX,\omega), \ \WW^\top \WW = \id_m \\
	& \mbox{ such that }
	\frac{1}{n} \sum_{i=1}^{n} \delta_{\WW^\top \xx_i} \stackrel{w}{\Rightarrow} P\; \mbox{ in probability }
	\Big\}\, .\label{eq:FeasibleFirst}
\end{align}
(See Section \ref{sec:MainUnsupervised} for clarifications about this definition.)

Understanding $\cuF_{m,\alpha}$ is  relevant for a broad array of
unsupervised learning methods. Indeed, non-Gaussian 
projections are sought by independent component analysis \cite{hyvarinen2000independent}, 
blind deconvolution \cite{levin2011understanding}, and related methods
\cite{blanchard2006search,sasaki2016non,loperfido2018skewness}.

The problem of characterizing $\cuF_{m,\alpha}$ was recently studied by Bickel, Kur and Nadler \cite{bickel2018projection}, for the case $m=1$. 
Denoting by $\mu_2(P) = \int x^2\, P(\de x)$ the second moment of 
$P$, and by $d_{\sKS}(P_1,P_2)$ the Kolmogorov-Smirnov distance between $P_1$ and $P_2$,
\cite{bickel2018projection} proved the following:
\begin{align}
	\alpha\le 1&\;\; \Rightarrow \big\{P:\, \mu_2(P)\le \alpha^{-1}-1\big\}
	\subseteq \cuF_{1, \alpha}\subseteq 
	\big\{P:\, \mu_2(P)\le (\alpha^{-1/2}+1)^2\big\}\, ,\label{eq:Bickel1}\\
	\alpha> 1&\;\; \Rightarrow 
	\cuF_{1, \alpha}\subseteq 
	\big\{P:\, d_{\sKS}(P,\normal(0,1))\le C\sqrt{\alpha^{-1}\log\alpha}\big\}\, .
	\label{eq:Bickel2}
\end{align}
The paper \cite{bickel2018projection} also proved that certain mixtures of a Gaussian and a non-Gaussian component are
feasible for $\alpha>1$, leading the authors to conjecture that this is the case for all 
the feasible distributions. 

We will establish several new results on this model:
\begin{description}
	\item[Wasserstein radius for $m=1$.] Denoting by $W_2(P_1,P_2)$ the second Wasserstein distance between two probability measures $P_1$ and $P_2$,
	we prove that $\sup\{W_2(P,\normal(0,1)): P\in \cuF_{1, \alpha}\}=1/\sqrt{\alpha}$.
	Note that this implies as a corollary the outer bound in Eq.~\eqref{eq:Bickel1}, but it is significantly
	stronger.
	\item[KL-Wasserstein outer bound.] We show that, for any $m \in \mathbb{N}$, $\cuF_{m,\alpha}$ is contained 
	in a (small) $W_2$ neighborhood of the set of distributions $Q$ such that
	$D_{\rm KL}(Q\|\normal(\bzero,\id_m))\le C m \alpha^{-1}$, with $D_{\rm KL}$ the 
	Kullback-Leibler (KL) divergence (relative entropy). As a corollary, this bound implies an
	upper bound on  the $W_2$ radius of $\cuF_{m,\alpha}$ that is tight within 
	a constant factor $C$. However, the KL bound is significantly tighter than the $W_2$ bound
	for distributions $P$ that are less regular than Gaussians.
	\item[Information dimension bound.] Denoting by  $\lod(P)$
	the lower information dimension of $P$ (see Definition~\ref{def:info_dim}), we prove that
	$\cuF_{m,\alpha}$ is contained in $\{P:  \lod(P) \ge m(1-1/\alpha)\}$ for $\alpha > 1$.
	
	For instance, if $P$ is supported on an $s$-dimensional smooth manifold in $\R^m$,
	then $\lod(P)\le s$, and therefore $P$  is $(\alpha,m)$-feasible only if $\alpha\le m/(m-s)$.
	\item[$\chi^2$-KL divergence inner bound.] We establish an inner bound for the feasibility 
	set $\cuF_{m, \alpha}$, which is expressed in terms  of $\chi^2$ and KL divergence.
	For probability distributions $P$ for which these two distances from $\normal(\bzero,\id_m)$
	are comparable, this inner bound implies a lower bound on the maximum $\alpha$
	for which $P$ is feasible, which is tight up to a constant factor. 
	As a comparison, the inner bound result in Eq.~\eqref{eq:Bickel1} only applies to the
	 case  $\alpha \le 1$.
	 \item[Satisfiability threshold of negative perceptron.] Interestingly, our general KL-Wasserstein outer bound can be applied to derive an asymptotically tight upper bound on the satisfiability threshold of the negative spherical perceptron problem. This derivation demonstrates that our KL-Wassertein outer bound is tight up to a constant factor.
\end{description}

\subsection{A null model for supervised learning}\label{sec:NullModel}

Our main motivation to revisit projection pursuit comes from supervised learning, and we establish their connection below.

To be definite, we consider a data model whereby
$\{ ( \xx_i, y_i ) \}_{i \in [n]}$ are i.i.d., with isotropic Gaussian covariates 
$\xx_i \sim \sN (\bzero, \id_d)$ and responses $y_i \in \{+1,-1\}$ depending on low-dimensional 
projections of the $\xx_i$'s. Namely, for $k\le d$, let $\VV \in \R^{d \times k}$ 
be an orthogonal matrix such that $\VV^\top \VV = \id_k$. 
We assume that the conditional distribution of $y_i$ given 
$\xx_i$ only depends on $\VV^\top \xx_i$:
\begin{align}\label{eq:SupervisedModel}
	\P \left( y_i =+1  \vert \xx_i \right) = \varphi(\VV^\top \xx_i )  \, ,
\end{align}
(and $\P( y_i = -1  \vert \xx_i) = 1-\varphi(\VV^\top \xx_i )$)
for a 
measurable function $\varphi: \R^{k} \to [0, 1]$. We notice in passing 
that this model can be easily generalized to continuous sub-Gaussian responses $y_i$,
and our proofs apply to this case as well.

In many supervised learning methods, one seeks a model that only depends on a 
low-dimensional projection of the covariates. Fitting such a model requires
to consider the possible distributions
over $\{+1,-1\}\times\R^m$ that can be obtained by projecting the
covariates onto an $m$-dimensional subspace of $\R^d$. 

This motivates the following definition.
We say that $P \in \cuP (\{+1,-1\}\times \R^{m})$ is $(\alpha, m)$-feasible if there exists a 
sequence of random orthogonal matrices $\WW = \WW_n (\XX,\yy,\omega) \in \R^{d \times m}$
($\WW^\top \WW = \id_m$)
such that the empirical distribution of the pairs $\{(y_i,\WW^{\top}\xx_i)\}_{i\le n}$
converges weakly to $P$ (in probability with respect to the randomness in $\XX$, $\yy$ and $\WW$).
In formulas:
\begin{align*}
	\cuF^{\varphi}_{m,\alpha}:= \Big\{P \in \cuP ( \{ \pm 1 \} \times \R^{m}):\; & \exists
	\WW= \WW_n (\XX,\yy,\omega), \ \WW^\top \WW = \id_m \\
	& \mbox{ such that }
	\frac{1}{n} \sum_{i=1}^{n} \delta_{\left(y_i,\xx_i^\top \WW \right)} \stackrel{w}{\Rightarrow} P \mbox{ in probability }
	\Big\}\, .
\end{align*}

Characterizing the set $\cuF^{\varphi}_{m,\alpha}$ gives access to a number of statistical
quantities of interest. In this paper, we present the following results.
\begin{description}
	\item[General ERM asymptotics.] We consider a class of empirical risk minimization 
	(ERM) problems over functions $f:\R^d\to \R$ of the form
	$f(\xx) = h(\WW^\top\xx)$, where both $h$ and $\WW$ are optimized over.
	We show that the asymptotics of the minimum empirical
	risk can be expressed in terms of a variational problem over the feasibility set $\cuF_{m,\alpha}^{\varphi}$.
	\item[Wasserstein bound for $m=1$.] We prove an outer bound on  $\cuF_{1,\alpha}^{\varphi}$
	for general $k=O(1)$,
	which generalizes the Wasserstein radius result obtained in the unsupervised setting.
	In fact this outer bound characterizes the maximum $W_2$ distance between the 
	empirical distribution of one-dimensional projections and the expected distribution.
	\item[KL-Wasserstein outer bound.] We extend our KL-Wasserstein outer bound result for unsupervised projection pursuit to the supervised setting, and similarly derive an upper bound on the Wassertein radius of the feasible set for general $m > 1$.
	\item[Interpolation for two-layer networks.]
	As a corollary to the previous result, we prove that a neural network with two-layers and $m$
	hidden neurons can separate $n$ data points in $d$ dimensions with margin $\kappa$ only if
	$md\ge C\kappa^2n$ (where the limit $n,d\to\infty$ with $n/d\to\alpha$ is understood).
	Earlier bounds only required $md\ge C n/\log(d/\kappa)$.
	\item[Margin distributions for linear classifier.] We demonstrate the tightness of our 
	$W_2$ bound by deriving the asymptotic distribution of the margins in linear max-margin 
	classification.
\end{description}
The rest of this paper is organized as follows. We formally state our results for 
unsupervised and supervised learning, respectively, in Section
\ref{sec:MainUnsupervised}, Section \ref{sec:2nd_moment}, and Section \ref{sec:MainSupervised}. We
describe some of the proof ideas in Section \ref{sec:ProofIdeas}, with actual proofs deferred to 
the appendices.

\subsection*{Notations}\label{sec:notation}
We denote by $\delta_{x}$ the Dirac measure at $x \in \cX$, where $\cX$ is a measurable space. The set of all probability measures on $\cX$ is denoted as $\cuP(\cX)$.
For a random variable $U$, $\Law (U)$
denotes the probability distribution of $U$. For a positive integer $n$, 
we let $[n]$ be the set $\{ 1, 2, \cdots, n \}$. For two measures $P$ and $Q$, we use 
$P \otimes Q$ to denote their product measure.

We consistently use lowercase letters to denote scalars, boldface lowercase 
letters to denote vectors, and boldface uppercase letters to denote matrices. 
For a scalar $a$, we write $a_+ = \max (a, 0)$ and $a_- = \max(- a, 0)$. For two vectors 
$\uu$ and $\vv$, $\langle \uu, \vv \rangle$ denotes their scalar product.
We use $\norm{\uu}_2$ to denote the Euclidean norm of a vector $\uu$. We denote by 
$\S^{d - 1}$ the unit sphere in $\R^d$.

We always use $\Phi$ and $\phi$ to denote the CDF and PDF of a standard normal variable,
respectively. We write $X \perp Y$ if $X$ and $Y$ are two independent random variables. 
We denote by $O(d, m)$ the set of all $d \times m$ orthogonal matrices $\WW$ such that 
$\WW^\top \WW = \id_m$.

Finally, whenever clear from the context, we identify a vector $\xx\in\R^k$
with its transpose $\xx^{\top}$: This reduces some notational burden, and amounts
to identifying $\R^k$ with its dual.

\section{Outer bounds: Unsupervised learning}\label{sec:MainUnsupervised}

Before stating our results, it is useful to recall the definition of feasibility set
\eqref{eq:FeasibleFirst}, and to clarify one element of this definition.
Note that $\hat{P}_{n,\WW}:=n^{-1}\sum_{i=1}^{n} \delta_{\WW^\top \xx_i}$
is a random probability distribution on $\R^m$. We say that $\hat{P}_{n,\WW}\stackrel{w}{\Rightarrow} P$
in probability if, for any $\veps>0$,  $\P(d_{W}(\hat{P}_{n,\WW},P)>\veps)\to 0$, where
$d_W$ is a distance that metrizes weak convergence. For instance $d_W$ can be taken to be
the bounded Lipschitz distance or the  L\'{e}vy-Prokhorov metric.

\subsection{Wasserstein radius for $m=1$}
Given two probability distributions $P,Q$ on $\R^m$, 
we denote by $W_2 (P,Q)$ the second Wasserstein distance between $P$ and $Q$.
Namely
\begin{align}
	W_2(P,Q) := \left( \inf_{\gamma\in \Gamma(P,Q)} \int\|\xx-\yy\|_2^2\gamma(\d \xx\times \d \yy) \right)^{1 / 2} \, ,
\end{align}
where the infimum is taken over the space $\Gamma(P,Q)$ of couplings of $(P,Q)$. In the following theorem, we show that the maximum $W_2$ distance between any $(\alpha, 1)$-feasible distribution and $\normal(0, 1)$ is $1 / \sqrt{\alpha}$.
\begin{thm}[Wasserstein radius for $m=1$]\label{thm:outer_bound_1_dim}
	Consider the case $m = 1$. Then for any $\alpha\in(0, \infty)$, we have
	\begin{equation}\label{eq:outer_bound_1_dim}
		\sup\big\{W_2 ( P, \sN (0, 1)):\; P\in\cuF_{1, \alpha}\big\} =\frac{1}{\sqrt{\alpha}}.
	\end{equation}
\end{thm}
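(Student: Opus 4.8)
The plan is to establish the two inequalities separately. For the \emph{lower bound} $\sup\{W_2(P,\sN(0,1)):P\in\cuF_{1,\alpha}\}\ge 1/\sqrt\alpha$, I would exhibit an explicit family of feasible distributions whose $W_2$-distance to the standard Gaussian approaches $1/\sqrt\alpha$. The natural candidate is a rescaling of the Gaussian: take $\WW$ to be a single direction $\ww\in\S^{d-1}$, and note that the empirical second moment $n^{-1}\sum_i\langle\ww,\xx_i\rangle^2$ concentrates near $1$ for a fixed $\ww$. To get a \emph{larger} second moment we should instead choose $\ww$ adapted to $\XX$: e.g.\ take $\ww$ to be (close to) the top right-singular vector of $\XX$, so that $n^{-1}\sum_i\langle\ww,\xx_i\rangle^2\approx \sigma_{\max}(\XX)^2/n\approx(1+1/\sqrt\alpha)^2/1$ in the proportional regime — but that overshoots. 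Actually the clean construction is to realize \emph{any} target second moment $\mu_2\in[(1-1/\sqrt\alpha)^2,(1+1/\sqrt\alpha)^2]$, in particular values close to $(1-1/\sqrt\alpha)^2$ and $(1+1/\sqrt\alpha)^2$, by interpolating a random direction with a singular vector; pushing to the extremes gives a centered Gaussian-ish limit $\sN(0,\mu_2)$ with $W_2(\sN(0,\mu_2),\sN(0,1))=|\sqrt{\mu_2}-1|$, which reaches $1/\sqrt\alpha$ in the limit. One must check this construction is $(\alpha,1)$-feasible in the sense of \eqref{eq:FeasibleFirst}, i.e.\ the empirical distribution of $\{\langle\ww,\xx_i\rangle\}$ genuinely converges weakly to the claimed Gaussian — which follows from a Diaconis--Freedman/CLT-type argument for the chosen $\ww$, or by directly invoking the inner bounds established elsewhere in the paper.

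For the \emph{upper bound} $\sup\{W_2(P,\sN(0,1)):P\in\cuF_{1,\alpha}\}\le 1/\sqrt\alpha$, the key is a lower bound on the Wasserstein distance that holds simultaneously over all orthogonal $\WW$. Fix $\WW=\ww\in\S^{d-1}$ (possibly depending on $\XX$) and let $\hat P_{n,\ww}=n^{-1}\sum_i\delta_{\langle\ww,\xx_i\rangle}$. The central quantity is $W_2(\hat P_{n,\ww},\sN(0,1))^2$. I would use the dual / Kantorovich formulation together with the one-dimensional structure: in $\R^1$, $W_2$ between $\hat P_{n,\ww}$ and any $Q$ can be bounded below by $W_2$ between their \emph{recentered/variance-matched} versions plus a mean/variance contribution, so it suffices to control how far the empirical variance $n^{-1}\sum_i\langle\ww,\xx_i\rangle^2$ can be from $1$ uniformly over $\ww$. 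The point is that $W_2(\hat P_{n,\ww},\sN(0,1))^2 \ge (\,\sqrt{\widehat{\var}}-1\,)^2 \ge \widehat{\var}-1$ style bounds won't directly give $1/\alpha$; instead one wants the sharper $W_2^2 \le $ something — wait, we need a \emph{lower} bound on $W_2$ is false; we want to \emph{upper bound} the sup, so we need: for every feasible $P$, $W_2(P,\sN(0,1))\le 1/\sqrt\alpha$. Equivalently, $\limsup_n \sup_{\ww}W_2(\hat P_{n,\ww},\sN(0,1)) \le 1/\sqrt\alpha$ in probability. The controlling inequality is the elementary bound $W_2(\hat P_{n,\ww},\sN(0,1))^2 \le W_2(\hat P_{n,\ww},\sN(0,\widehat\var_\ww))^2 + W_2(\sN(0,\widehat\var_\ww),\sN(0,1))^2$ — but these don't add like that; rather, by the triangle inequality $W_2(\hat P_{n,\ww},\sN(0,1))\le W_2(\hat P_{n,\ww},\sN(m_\ww,\widehat\var_\ww)) + W_2(\sN(m_\ww,\widehat\var_\ww),\sN(0,1))$, and the first term is the $W_2$ distance of an empirical measure to the Gaussian with matched first two moments. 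So the real work splits into: (i) showing $\sup_\ww W_2(\hat P_{n,\ww},\sN(m_\ww,\widehat\var_\ww))\to 0$, i.e.\ every empirical one-dim projection is close (in $W_2$) to \emph{some} Gaussian, uniformly; and (ii) showing $\sup_\ww(\sqrt{\widehat\var_\ww}-1)_+ + |m_\ww| \to 1/\sqrt\alpha$, a spectral statement: $\sup_{\ww\in\S^{d-1}} n^{-1}\sum_i\langle\ww,\xx_i\rangle^2 = \sigma_{\max}(\XX)^2/n \to (1+1/\sqrt\alpha)^2$ and the centering can be absorbed.

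Step (ii) is classical random matrix theory (Bai--Yin: top singular value of an $n\times d$ Gaussian matrix with $n/d\to\alpha$), giving exactly the constant $1/\sqrt\alpha$ after taking square roots, $\sqrt{(1+1/\sqrt\alpha)^2}-1=1/\sqrt\alpha$. The \textbf{main obstacle is step (i)}: one must prove a \emph{uniform-over-$\ww$} quantitative CLT saying that the empirical distribution of $\{\langle\ww,\xx_i\rangle\}_{i\le n}$ is $W_2$-close to a Gaussian with matched moments, simultaneously for all of the continuum of directions $\ww\in\S^{d-1}$ (including $\XX$-dependent ones). This is delicate because for adversarially chosen $\ww$ the projections need not be Gaussian at all — indeed the whole point of the paper is that non-Gaussian projections \emph{exist}. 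So (i) as stated is false without restriction; the correct statement must quantify the \emph{deficit}: $W_2(\hat P_{n,\ww},\sN(0,1))^2 \le \widehat\var_\ww - 1 + o(1)$ uniformly, i.e.\ the excess $W_2^2$-distance beyond what the variance forces is negligible. I would prove this via the Cauchy--Schwarz/variational characterization of $W_2$ in one dimension together with a covering-net argument over $\S^{d-1}$ at scale $\exp(-cd)$ and a union bound using Gaussian concentration of $\ww\mapsto W_2(\hat P_{n,\ww},\sN(0,1))$, combined with a one-point estimate from a Berry--Esseen or Stein bound. Carefully assembling the net argument so that the $\exp(d)$-many points are beaten by the concentration, while keeping track of the $W_2$ (rather than bounded-Lipschitz) metric and the second-moment tails, is where the technical heart of the proof lies; I expect this to follow the structure of the general outer-bound machinery the paper develops in later sections (the KL--Wasserstein bound), specialized and sharpened to $m=1$.
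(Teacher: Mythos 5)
Your lower bound is essentially the paper's: take $\ww$ to be the top right singular vector of $\XX$, use Bai--Yin to get $s_1/\sqrt{n}\to 1+\alpha^{-1/2}$, note $\uu_1\sim\Unif(\S^{n-1})$ by rotational invariance, and conclude $\hat P_{n,\ww}\Rightarrow \sN(0,(1+\alpha^{-1/2})^2)$, whose $W_2$ distance from $\sN(0,1)$ is exactly $1/\sqrt{\alpha}$. Your aside that this ``overshoots'' is a confusion between $W_2=|\sigma_1-\sigma_2|$ and $|\sigma_1^2-\sigma_2^2|$: there is no overshoot, the singular-vector construction hits the radius on the nose, and the interpolation you sketch is unnecessary.

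The upper bound, however, has a genuine gap, and your proposed route is not the one the paper takes. You correctly identify that ``step (i)'' (uniform closeness of $\hat P_{n,\ww}$ to a moment-matched Gaussian) is false, but the salvage bound you then propose, $W_2(\hat P_{n,\ww},\sN(0,1))^2\le \widehat\var_\ww-1+o(1)$ uniformly over $\ww$, is also wrong. As stated it can force $W_2=0$ when $\widehat\var_\ww\le 1$; and in the charitable reading $W_2^2\le(\sqrt{\widehat\var_\ww}-1)^2+o(1)$, it would imply that every feasible $P$ with $\mu_2(P)=1$ satisfies $W_2(P,\sN(0,1))=0$, i.e.\ $P=\sN(0,1)$. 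This contradicts the inner bounds (Theorem~\ref{thm:inner_bd_1D} / Eq.~\eqref{eq:Bickel1}): $\cuF_{1,\alpha}$ is not a one-parameter family of rescaled Gaussians, and it contains non-Gaussian $P$ with $\mu_2(P)=1$ and $W_2(P,\sN(0,1))>0$. So variance alone cannot control the sup; the whole point of the theorem is to bound the non-Gaussian directions as well, with the same constant.

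The covering-net-plus-concentration strategy you fall back on is the mechanism the paper uses for Theorem~\ref{thm:div_outer_bd}/\ref{thm:m_W2_outer_bd}, and it provably loses a $\log\alpha$ factor (the paper's Theorem~\ref{thm:m_W2_outer_bd} gives $O(\sqrt{\log\alpha/\alpha})$, not $1/\sqrt\alpha$; the union bound over $(C/\veps)^{dm}$ net points forces $\veps\sim\alpha^{-1/2}$ and the Sanov rate then contributes $\log(1/\veps)$). You gesture at ``sharpening'' this to $m=1$, but there is no indication the $\log$ factor can be removed by a net argument. The paper instead proves the sharp constant by a different tool: it writes $\frac{1}{\sqrt\alpha}-\max_\ww W_2(\hat P_{n,\ww},\sN(0,1))$ as a min-max with a Lagrange multiplier enforcing $\uu=\XX\ww$, applies Gordon's Gaussian comparison inequality (CGMT, cf.\ Lemma~\ref{lem:gordon}) to replace $\XX$ by decoupled Gaussians $\bgg,\hh$, and then observes that the linearized problem is bounded below by essentially $\frac{1}{\sqrt n}\|\uu-\hh\|_2-\frac{1}{\sqrt n}\|\bgg\|_2 \ge W_2(\frac{1}{n}\sum\delta_{u_i},\sN(0,1))-\frac{1}{\sqrt\alpha}-\veps$ by the law of large numbers. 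This exploits the fact that $\|\uu-\hh\|_2/\sqrt n$ is itself a coupling cost, so the comparison naturally produces $W_2$ with no logarithmic slack. Without CGMT (or an equivalent sharp comparison principle), your proposal cannot close the gap between $O(\sqrt{\log\alpha/\alpha})$ and $1/\sqrt\alpha$.
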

\begin{rem}
	The supremum above is achieved by taking $P= \sN(0,(1+\alpha^{-1/2})^2)$.
	Indeed, as shown in the proof of Theorem~\ref{thm:outer_bound_1_dim}, this distribution is feasible by taking $\WW= (\vv_1(\XX))$,
	where $\vv_1(\XX)$ is the top right singular vector of $\XX$. 
\end{rem}

\subsection{KL-Wasserstein outer bound for general $m$}
In this section, we establish an outer bound on $\cuF_{m, \alpha}$, which is based on the $W_2$ metric and the KL divergence. Recall the definition of the KL divergence.
Given two probability measures $P$ and $Q$ on a measure space $\cX$, if $P$ is absolutely continuous 
with respect to $Q$, then
\begin{equation}\label{eq:KL_def}
	D_{\sKL} \left( P \Vert Q \right) = \int_{\cX} \log \left( \frac{\d P}{\d Q} \right) \d P,
\end{equation}
where $\d P / \d Q$ denotes the Radon-Nikodym derivative of $P$ with respect to $Q$. 
Otherwise, $D_{\rm KL}(P \Vert Q) = \infty$.
\begin{thm}[KL-Wasserstein outer bound]\label{thm:div_outer_bd}
	For $a, b > 0$, define the following neighborhood of $\sN (\bzero, \id_m)$:
	\begin{equation*}
		\cuS_m (a, b) = \Big\{ P \in \cuP (\R^m): \exists Q \in \cuP (\R^m), \ 
		\text{s.t.} \ W_2 (P, Q) \le a \ \text{and} \ D_{\rm KL}(Q \Vert \sN (\bzero, \id_m)) \le b \Big\}\, .
	\end{equation*}
	Then, there exist absolute constants $C > 0$ and $\veps_0 > 0$ such that for any  $\veps \in (0, \veps_0)$, we have
	\begin{equation*}
		\cuF_{m,\alpha} \subseteq \cuS_m \left( \sqrt{\frac{m}{\alpha}} \veps \sqrt{\log \left( \frac{C}{\veps} \right)}, \,\frac{m}{\alpha} \log \left( \frac{C}{\veps} \right) \right).
	\end{equation*}
\end{thm}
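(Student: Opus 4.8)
The plan is to fix a feasible distribution $P\in\cuF_{m,\alpha}$, with associated random orthogonal matrix $\WW=\WW_n(\XX,\omega)$, and to show that with probability tending to one the empirical distribution $\hat P_{n,\WW}=n^{-1}\sum_i\delta_{\WW^\top\xx_i}$ lies $W_2$-close to a distribution $Q$ with controlled KL divergence from $\sN(\bzero,\id_m)$. First I would reduce to a \emph{pointwise} (fixed $\WW$) statement via a union bound over a net of orthogonal matrices: the set $O(d,m)$ is a manifold of dimension $O(md)$, so an $\eta$-net has cardinality $\exp(O(md\log(1/\eta)))$, and since $\WW\mapsto\WW^\top\xx_i$ is Lipschitz in operator norm with constant $\|\xx_i\|_2=O(\sqrt d)$, controlling $\hat P_{n,\WW'}$ on a net transfers (in $W_2$) to all $\WW$. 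Thus it suffices to bound, for a \emph{fixed} $\WW\in O(d,m)$, the probability that $\hat P_{n,\WW}$ is not in $\cuS_m(\text{small},\,b)$, by something like $\exp(-c md\log(1/\veps))$, and then absorb the net cardinality.

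The heart of the argument is the fixed-$\WW$ large deviation bound. For fixed $\WW$, the rows $\zz_i:=\WW^\top\xx_i\in\R^m$ are i.i.d.\ $\sN(\bzero,\id_m)$, so $\hat P_{n,\WW}$ is just the empirical measure of $n$ i.i.d.\ standard Gaussians in $\R^m$. The key step is a quantitative Sanov-type estimate: for a suitable smoothing scale $\veps$, if we let $Q_\veps$ be $\hat P_{n,\WW}$ convolved with $\sN(\bzero,\veps^2\id_m)$ (so that $W_2(\hat P_{n,\WW},Q_\veps)\le\sqrt m\,\veps$ and $Q_\veps\ll\sN(\bzero,\id_m)$), then
\begin{equation*}
\P\Big(D_{\rm KL}\big(Q_\veps\,\|\,\sN(\bzero,\id_m)\big)\ge b\Big)\le \exp\big(-(1-o(1))\,nb\big).
\end{equation*}
This is the standard upper-bound direction of Sanov's theorem (the set $\{Q:D_{\rm KL}(Q\|\sN)\ge b\}$ is closed in the relevant topology and the empirical measure concentrates), but one must do it with the \emph{smoothed} empirical measure so that the KL divergence is almost surely finite and the estimate is uniform; the factor of $\log(C/\veps)$ in the statement is exactly what is needed to make $Q_\veps$ absolutely continuous with a density that is not too large — roughly, smoothing $n$ point masses at scale $\veps$ produces a density bounded by $\veps^{-m}$ times a constant, contributing $O(m\log(1/\veps))$ to the divergence, which is the reference scale. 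Choosing $b=\frac{m}{\alpha}\log(C/\veps)$ and using $n/d\to\alpha$, the exponent $nb\asymp md\log(C/\veps)$ matches (up to the constant $C$, which we are free to enlarge) the net cardinality $\exp(O(md\log(1/\veps)))$, so the union bound closes.

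Putting the pieces together: on the high-probability event, for the realized $\WW$ we find a net point $\WW'$ with $W_2(\hat P_{n,\WW},\hat P_{n,\WW'})\le\frac{1}{\sqrt\alpha}\sqrt m\,\veps$ (the $1/\sqrt\alpha$ accounting for the $\sqrt{d}\asymp\sqrt{n/\alpha}$ scaling of $\|\xx_i\|_2$ against the normalization $1/n$ in the empirical average, which I would track carefully), and a smoothed measure $Q$ with $W_2(\hat P_{n,\WW'},Q)\le\sqrt m\,\veps$ and $D_{\rm KL}(Q\|\sN(\bzero,\id_m))\le\frac{m}{\alpha}\log(C/\veps)$; by the triangle inequality for $W_2$ this puts $\hat P_{n,\WW}$ in $\cuS_m\big((1+\alpha^{-1/2})\sqrt m\,\veps,\ \frac{m}{\alpha}\log(C/\veps)\big)$. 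Since $\hat P_{n,\WW}\Rightarrow P$ in probability and $\cuS_m(a,b)$ is $W_2$-closed, $P$ itself lies in this set. The main obstacle I anticipate is making the net/transfer argument genuinely quantitative in a way that is compatible with the Sanov exponent — in particular, controlling the $W_2$ perturbation of the empirical measure under a small perturbation of $\WW$ uniformly over the event $\{\max_i\|\xx_i\|_2\le C\sqrt d\}$, and verifying that the smoothing scale can be chosen as the \emph{same} $\veps$ in both the covering estimate and the KL bound so that the two exponents are of the same order; a secondary technical point is handling the finiteness and semicontinuity of KL divergence, which the smoothing is designed to resolve.
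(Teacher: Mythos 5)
Your outer scaffolding matches the paper's proof: an $\veps$-net of $O(d,m)$ with cardinality $\lesssim(C/\veps)^{dm}$, a $W_2$-transfer to a net point using the operator-norm bound $\|\XX\|_{\op}/\sqrt n\to 1+\alpha^{-1/2}$, a fixed-$\WW$ large-deviation estimate for the empirical measure of i.i.d.\ Gaussians, a union bound whose exponent must beat $dm\log(C/\veps)$, and a final passage from high-probability statements about $\hat P_{n,\WW}$ to a statement about the limit $P$. (One small correction on the transfer step: the $W_2$ change from $\WW$ to a net point $\WW'$ is bounded by $\|\XX\|_{\op}\|\WW-\WW'\|_{\rm F}/\sqrt n\le(1+\alpha^{-1/2})\sqrt m\,\veps$, not $\alpha^{-1/2}\sqrt m\,\veps$; the $1/\sqrt\alpha$ is a piece of $1+1/\sqrt\alpha$, not the whole thing.)

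The genuine gap is in the fixed-$\WW$ step. You propose to \emph{explicitly} build the KL-witness as $Q_\veps=\hat P_{n,\WW}\ast\sN(\bzero,\veps^2\id_m)$ and then claim $\P\big(D_{\rm KL}(Q_\veps\|\sN)\ge b\big)\le\exp(-(1-o(1))nb)$ as ``the standard upper-bound direction of Sanov's theorem.'' That step does not follow from Sanov as stated. The map $Q\mapsto D_{\rm KL}(Q\ast\phi_\veps\|\sN)$ is only lower semicontinuous (composition of a weakly continuous map with a l.s.c. one), and for a l.s.c. functional $g$ the superlevel set $\{g\ge b\}$ is \emph{not} closed; Sanov's upper bound requires bounding $\inf_{Q\in\overline{F}}D_{\rm KL}(Q\|\sN)$, and the closure $\overline{F}$ of $F=\{g\ge b\}$ can contain measures of arbitrarily small relative entropy. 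To salvage the estimate you would need an extra data-processing/second-moment argument (e.g., that $D_{\rm KL}(Q\ast\phi_\veps\|\sN)\ge b$ together with a moment bound forces $D_{\rm KL}(Q\|\sN)\ge b-O(m\veps^2)$, and that this survives taking closures in a topology where Sanov still applies). The paper sidesteps this entirely: it does not commit to a particular witness $Q$, but instead defines the ``good set'' $S_{m,p}$ by an existential clause (``there exists $Q$ that is $W_p$-close with small KL'') and shows the $\tau_p$-closure of its complement has large Sanov rate via a direct contradiction — if a limit point $Q$ of $T_{m,p}$ had small KL, the nearby $Q_k\in T_{m,p}$ would be $W_p$-close to $Q$ and hence in $S_{m,p}$. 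Your proposal also omits the second technical hurdle, which the paper spends Part (ii) on: the Sanov variant used (in the $\tau_p$-topology of \cite{Wang2010}) is only available for $p<2$, so one must extract a single witness $Q$ valid in $W_2$ by a diagonal/compactness argument (via the transport-entropy inequality and Lemma~\ref{lem:Wp_continuity}). These are not cosmetic points; the $\log(C/\veps)$ in the statement is precisely the quantity that must beat the net entropy, and a loose version of the Sanov exponent at this step would lose the theorem.
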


As a direct consequence of Theorem~\ref{thm:div_outer_bd}, we obtain the following:
\begin{thm}\label{thm:m_W2_outer_bd}
	There exists an absolute constant $C_1 > 0$ such that
	\begin{equation*}
		\cuF_{m,\alpha} \subseteq \left\{P \in \cuP (\R^m): \;\;  
		W_2 \left( P, \sN (\bzero, \id_m) \right) \le C_1 \sqrt{\frac{m}{\alpha}}
		\right\}.
	\end{equation*}
	Further, there exists $P\in \cuF_{m,\alpha}$ such that 
	$W_2( P, \sN (\bzero, \id_m))=\sqrt{m/\alpha}$.
\end{thm}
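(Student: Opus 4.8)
The plan is to read the inclusion off directly from Theorem~\ref{thm:div_outer_bd} combined with Talagrand's Gaussian transportation-cost inequality, and to realize the extremal distribution as an isotropic Gaussian produced by a spectral choice of $\WW$, as in the remark following Theorem~\ref{thm:outer_bound_1_dim}. For the inclusion, fix $P\in\cuF_{m,\alpha}$ and $\veps\in(0,\veps_0)$. By Theorem~\ref{thm:div_outer_bd} there is $Q$ with $W_2(P,Q)\le(1+\alpha^{-1/2})\sqrt{m}\,\veps$ and $D_{\sKL}\big(Q\,\|\,\sN(\bzero,\id_m)\big)\le(m/\alpha)\log(C/\veps)$. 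Since the standard Gaussian (in every dimension, with a dimension-free constant) satisfies Talagrand's $T_2$ inequality $W_2\big(Q,\sN(\bzero,\id_m)\big)^2\le 2\,D_{\sKL}\big(Q\,\|\,\sN(\bzero,\id_m)\big)$, the triangle inequality for $W_2$ yields
\[
W_2\big(P,\sN(\bzero,\id_m)\big)\ \le\ (1+\alpha^{-1/2})\sqrt{m}\,\veps+\sqrt{\tfrac{2m}{\alpha}\,\log\tfrac{C}{\veps}}\,.
\]
It then remains to optimize $\veps$: I would take $\veps=\min\{\veps_0/2,\ \sqrt{\max(\log\alpha,1)/\alpha}\}$ and split into two cases. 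If the minimum is $\sqrt{\max(\log\alpha,1)/\alpha}$, this forces $\alpha$ bounded away from $0$ (so $1+\alpha^{-1/2}\le 2$) and makes $\log(C/\veps)\le(\log C+\tfrac12)\max(\log\alpha,1)$, so both terms are $O\big(\sqrt{m}\,\sqrt{\max(\log\alpha,1)/\alpha}\big)$. If the minimum is $\veps_0/2$, then $\alpha$ lies in a bounded interval $(0,\alpha^\ast)$ with $\alpha^\ast$ depending only on $\veps_0$; there I bound $1+\alpha^{-1/2}\le(\sqrt{\alpha^\ast}+1)\,\alpha^{-1/2}$ and $\log(C/\veps)=O(1)$, so both terms are $O(\sqrt{m}\,\alpha^{-1/2})\le O\big(\sqrt{m}\,\sqrt{\max(\log\alpha,1)/\alpha}\big)$. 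Taking $C_1$ to be the larger of the two absolute constants completes the inclusion.

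For the matching construction I would take $P=\sN(\bzero,(1+\alpha^{-1/2})^2\id_m)$. The closed form for the $W_2$ distance between centered commuting Gaussians gives $W_2\big(P,\sN(\bzero,\id_m)\big)=\sqrt{m}\,\big|(1+\alpha^{-1/2})-1\big|=\sqrt{m/\alpha}$, so only feasibility, $P\in\cuF_{m,\alpha}$, must be checked. Choose $\WW=[\vv_1(\XX)\,|\,\cdots\,|\,\vv_m(\XX)]$, the top $m$ right singular vectors of $\XX$ (generically well defined). Writing $\XX=\bU\bS\bV^\top$, one has $\WW^\top\xx_i=(s_1U_{i1},\dots,s_mU_{im})$, with $s_1\ge\cdots$ the singular values and $U_{ij}$ the entries of $\bU$. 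By standard Wishart-spectrum results, for fixed $m$ we have $s_j(\XX)/\sqrt{n}\to 1+\alpha^{-1/2}$ in probability for every $j\le m$ and every $\alpha\in(0,\infty)$; and by rotational invariance of the Gaussian $\XX$, the matrix $[\bu_1|\cdots|\bu_m]$ is a Haar-random $n\times m$ orthonormal frame independent of $\bS$, so the empirical distribution of $\{(\sqrt{n}U_{i1},\dots,\sqrt{n}U_{im})\}_{i\le n}$ converges weakly, in probability, to $\sN(\bzero,\id_m)$ — for instance via the representation $[\bu_1|\cdots|\bu_m]=\bG(\bG^\top\bG)^{-1/2}$ with $\bG\in\R^{n\times m}$ i.i.d.\ Gaussian and $\bG^\top\bG/n\to\id_m$. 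A quantitative comparison between $n^{-1}\sum_i\delta_{\WW^\top\xx_i}$ and $n^{-1}\sum_i\delta_{(1+\alpha^{-1/2})(\sqrt{n}U_{i1},\dots,\sqrt{n}U_{im})}$, using $|s_j/\sqrt{n}-(1+\alpha^{-1/2})|\to 0$ and $n^{-1}\sum_iU_{ij}^2\le 1$, then gives $n^{-1}\sum_i\delta_{\WW^\top\xx_i}\stackrel{w}{\Rightarrow}\sN(\bzero,(1+\alpha^{-1/2})^2\id_m)$ in probability, i.e.\ $P$ is feasible.

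I expect the only genuinely delicate step to be the feasibility part of the construction: it requires invoking joint delocalization/asymptotic Gaussianity of the top $m$ left singular vectors together with the edge location of the top $m$ singular values, covering the case $\alpha<1$ as well (where the top singular value of an $n\times d$ Gaussian matrix still behaves like $\sqrt{n}+\sqrt{d}$), and carrying out the quantitative comparison above in a metric for weak convergence. Everything in the inclusion part is routine bookkeeping once Theorem~\ref{thm:div_outer_bd} and Talagrand's inequality are in hand.
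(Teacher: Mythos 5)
Your proposal is correct and follows essentially the same route the paper takes: Theorem~\ref{thm:div_outer_bd} plus Talagrand's $T_2$ inequality and the triangle inequality, followed by a case split on $\alpha$ to optimize $\veps$, and then the spectral construction $\WW=(\vv_1(\XX),\dots,\vv_m(\XX))$ yielding $P=\sN(\bzero,(1+\alpha^{-1/2})^2\id_m)$ with $W_2(P,\sN(\bzero,\id_m))=\sqrt{m/\alpha}$. The only cosmetic difference is your choice of $\veps$ (you take $\veps=\min\{\veps_0/2,\sqrt{\max(\log\alpha,1)/\alpha}\}$, the paper splits at $\alpha_0=\max(1,1/\veps_0^2)$ and uses $\veps=\veps_0$ or $\veps=1/\sqrt{\alpha}$); both lead to the same bound with different absolute constants.
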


\begin{rem}
	This theorem gives an upper bound on the $W_2$ radius of $\cuF_{m,\alpha}$ for all $m$. A comparison with Theorem \ref{thm:outer_bound_1_dim} suggests that this upper bound is tight up to a constant factor.
	On the other hand, the lower bound $\sqrt{m / \alpha}$ shows that the factor $\sqrt{m}$ is necessary.
\end{rem}

Let us emphasize that even in the one-dimensional case,
Theorem~\ref{thm:div_outer_bd} is not a consequence of Theorem~\ref{thm:outer_bound_1_dim}.
There exist infeasible distributions $P$ that are excluded by Theorem~\ref{thm:div_outer_bd}
and satisfy $W_2 (P, \sN(0, 1)) \le 1 / \sqrt{\alpha}$. 
An interesting case is the one in which $P$ is supported on a set of lower dimension.
In particular for certain values of $\alpha$, probability measures supported on low-dimensional 
manifolds in $\R^m$ are not feasible, no matter how close they are to the standard normal distribution 
in $W_2$ distance. In the next section, we present such an example.

\subsection{Information dimension bound}
We first introduce the notion of \emph{information dimension}, a measurement of the fractal dimension of any probability distribution.
\begin{defn}[Information dimension \cite{renyi1959dimension}]\label{def:info_dim}
	Let $X = (X_1, \cdots, X_m)$ be an arbitrary random variable in $\R^m$, and denote for $\veps > 0$ the following discretization of $X$:
	\begin{equation*}
		\langle X \rangle_{\veps} = \veps \left\lfloor \frac{X}{\veps} \right\rfloor, \ \text{where} \ \left\lfloor \frac{X}{\veps} \right\rfloor = \left( \left\lfloor \frac{X_1}{\veps} \right\rfloor, \cdots, \left\lfloor \frac{X_m}{\veps} \right\rfloor \right).
	\end{equation*}
	Let $H(Z)$ denote the Shannon entropy of a discrete random variable $Z$, i.e., $H(Z) = - \sum_{z} \P(Z = z) \log \P(Z = z)$, and define
	\begin{equation*}
		\underline{d} (X) = \liminf_{\veps \to 0} \frac{H (\langle X \rangle_{\veps})}{\log (1 / \veps)}, \;\;\;\text{and} \;\;\; \overline{d} (X) = \limsup_{\veps \to 0} \frac{H (\langle X \rangle_{\veps})}{\log (1 / \veps)},
	\end{equation*}
	where $\underline{d} (X)$ and $\overline{d} (X)$ are called lower and upper information dimensions of $X$, respectively.
	With an abuse of notation, we will write $\underline{d} (P_X):=\underline{d} (X)$,
	$\overline{d} (P_X):=\overline{d} (X)$ when $X\sim P_X$. 
\end{defn}

\begin{thm}\label{thm:infeasible_info_dim}
	Assume $\alpha > 1$, and $P \in \cuP (\R^m)$ satisfies $\underline{d} (P) < m (1 - 1 / \alpha)$. Then, $P$ is not $(\alpha, m)$-feasible. As a consequence, if $P$ is supported on an $s$-dimensional smooth manifold in $\R^m$ (where $s < m$) such that $\alpha > m / (m - s)$, then $P$ is not $(\alpha, m)$-feasible.
\end{thm}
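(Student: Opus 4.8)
The plan is to argue by contradiction: suppose $P$ is $(\alpha,m)$-feasible with $\alpha>1$, and extract from the feasibility witness $\WW_n(\XX,\omega)$ a contradiction with the entropy-growth rate encoded by $\underline d(P)<m(1-1/\alpha)$. The key information-theoretic idea is that the empirical distribution $\hat P_{n,\WW}=n^{-1}\sum_i\delta_{\WW^\top\xx_i}$ is supported on the $n$ points $\{\WW^\top\xx_i\}_{i\le n}$, which all lie in the $m$-dimensional column space of $\WW$. When we pass to the $\veps$-discretization $\langle\,\cdot\,\rangle_\veps$, the number of occupied cells is therefore at most $\min(n, (\text{\# cells meeting the point cloud}))$; but the deeper constraint is that these $n$ projected points are not arbitrary — they are the projections of $n$ i.i.d.\ Gaussians in $\R^d$ onto a subspace that is itself a (randomized) function of $\XX$. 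The heart of the proof is a counting/covering bound: as $\WW$ ranges over $O(d,m)$, the projected point cloud $(\WW^\top\xx_1,\dots,\WW^\top\xx_n)\in(\R^m)^n$ can only reach a set whose metric entropy (at scale $\veps$) is controlled, because fixing $\XX$ leaves only $\dim O(d,m)=dm-\binom{m+1}{2}\approx dm$ free real parameters, while the ambient target has $nm$ coordinates. So the reachable configurations form, up to $\veps$-perturbation, roughly a $dm$-dimensional object inside $(\R^m)^n$.

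Concretely, first I would reduce to a high-probability statement: for any feasible $P$, with probability $\to 1$ there is \emph{some} $\WW\in O(d,m)$ with $d_W(\hat P_{n,\WW},P)<\veps$, so it suffices to show that with high probability \emph{no} such $\WW$ exists when $\underline d(P)$ is too small. Second, I would build a finite $\veps$-net $\cN$ over $O(d,m)$ of cardinality $\exp(Cdm\log(1/\veps))$ such that $\WW\mapsto \hat P_{n,\WW}$ is (weakly) continuous enough that some net point $\WW'$ has $d_W(\hat P_{n,\WW'},P)<2\veps$; this uses that each $\xx_i$ has norm $O(\sqrt d)$ with high probability so the map is Lipschitz on the good event. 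Third — the combinatorial core — I would fix a net point $\WW'$ and estimate $\P(d_W(\hat P_{n,\WW'},P)<2\veps)$; since $\WW'$ is nonrandom, $\{\WW'^\top\xx_i\}$ are i.i.d.\ $\sN(\bzero,\id_m)$, and I would lower-bound the number of distinct $\veps$-cells that such an empirical distribution must occupy to be $2\veps$-close to $P$. If $\langle X\rangle_\veps$ for $X\sim P$ occupies (essentially) $e^{H(\langle X\rangle_\veps)}=(1/\veps)^{\underline d(P)+o(1)}$ cells with non-negligible mass, then any distribution $2\veps$-close to $P$ in $d_W$ must put mass on $\gtrsim(1/\veps)^{\underline d(P)-o(1)}$ cells, hence the empirical measure of $n$ i.i.d.\ points must hit at least that many cells; but the probability that $n$ i.i.d.\ $\sN(\bzero,\id_m)$ points fall into a \emph{prescribed} low-mass cell is tiny, and a union bound over which cells are hit, combined with $n\asymp\alpha d$, turns the requirement ``$\ge(1/\veps)^{\underline d(P)}$ cells hit'' into a large-deviations cost that beats the net cardinality $\exp(Cdm\log(1/\veps))$ precisely when $\underline d(P)<m(1-1/\alpha)$. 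Taking $\veps\to 0$ (after $n,d\to\infty$) removes the $o(1)$ slack and the constant $C$, yielding the sharp threshold $m(1-1/\alpha)$.

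For the manifold corollary, one only needs that a probability measure supported on an $s$-dimensional $C^1$ manifold has $\overline d(P)\le s$, hence $\underline d(P)\le s$: cover a compact piece of the manifold by $O(\veps^{-s})$ balls of radius $\veps$, so $\langle X\rangle_\veps$ takes $O(\veps^{-s})$ values and $H(\langle X\rangle_\veps)\le s\log(1/\veps)+O(1)$; then $s<m(1-1/\alpha)$ is exactly $\alpha>m/(m-s)$, and Theorem~\ref{thm:infeasible_info_dim} applies. The main obstacle I anticipate is the third step: making the union bound over ``which set of cells is occupied'' quantitatively win against the $dm\log(1/\veps)$ entropy of $O(d,m)$ requires care, because one must account both for cells of non-trivial $P$-mass (finitely many, cheap) and for the tail of cells of small mass, and one must ensure the large-deviation rate for a single i.i.d.\ Gaussian sample landing in a small Gaussian-measure cell is genuinely of order $\log(1/\veps)$ per point and not washed out by the $n$-fold product. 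I would handle this by discretizing $P$'s mass into dyadic level sets of cell-probability and optimizing the trade-off between the number of cells at each level and the per-point cost, which is where the factor $m$ (ambient dimension of the projection) and the factor $1-1/\alpha$ (ratio of free parameters $dm$ to sample budget $nm$) combine.
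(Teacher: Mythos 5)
Your high-level scheme (reduce to a high-probability statement, take an $\veps$-net of $O(d,m)$ with $\log|\cN|\asymp dm\log(1/\veps)$, show the per-net-point probability that $\hat P_{n,\WW'}$ lands near $P$ is small enough to beat the union bound, then let $\veps\to 0$) is in fact exactly the machinery the paper uses, just inlined rather than modularized: the paper first proves Theorem~\ref{thm:div_outer_bd} by precisely this net-plus-Sanov argument, and then deduces Theorem~\ref{thm:infeasible_info_dim} from it. So the organizational skeleton matches.

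The genuine gap is in your Step~3, and it is not a technicality. You propose to bound the Sanov rate by \emph{counting occupied $\veps$-cells}: any $Q$ within $W_2$-distance $O(\veps)$ of $P$ concentrates on a ``tube'' of $\approx(1/\veps)^{\underline d(P)}$ cells, each of Gaussian mass $\lesssim\veps^m$, giving per-sample cost $\veps^{m-\underline d(P)}$ and total cost $n(m-\underline d(P))\log(1/\veps)$, which beats $dm\log(1/\veps)$ exactly when $\underline d(P)<m(1-1/\alpha)$. The problem is that the cell count of $\supp(P_\veps)$ and its tube is \emph{not} controlled by $\underline d(P)$. Take $P=\tfrac12\delta_{\bzero}+\tfrac12\,\sN(\bzero,\id_m)$: then $\underline d(P)=m/2$, but $\langle P\rangle_\veps$ is supported on $\Theta(\veps^{-m})$ cells and the Gaussian mass of its tube is $\Theta(1)$, so your per-sample probability is order one and the argument gives nothing, even though the theorem applies for $\alpha>2$. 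In short, $W_2$-closeness constrains neither the \emph{number} of cells $Q$ touches nor the Gaussian mass of those cells in a way tied to $\underline d(P)$; information dimension is a statement about the \emph{Shannon entropy} $H(\langle P\rangle_\veps)=-\sum_k p_k\log p_k$, which can be far smaller than the log of the support size. Your ``dyadic level set'' idea is gesturing at this, but you would need a bound on the Sanov rate that produces the quantity $-\sum_k p_k\log p_k$ rather than $\log(\#\text{cells})$, and the cell-counting language never gets there.

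What the paper's proof actually does at this step: given $Q$ with $W_2(P_\veps,Q)\lesssim\veps$ and $D_{\sKL}(Q\,\Vert\,\sN(\bzero,\id_m))\le (m/\alpha)\log(C/\veps)$ (from Theorem~\ref{thm:div_outer_bd}), it takes the $W_2$-optimal map $T$ from $Q$ to $P_\veps$, partitions $\R^m$ into $S_k=T^{-1}(\cc_k)$ so that $\int_{S_k}q=p_k$ by construction, rescales each piece by $\veps$, and compares the rescaled density $r_k$ to a standard Gaussian using $D_{\sKL}\ge0$. Because the mass of $S_k$ under $Q$ is forced to equal $p_k$, the resulting lower bound reads
\[
D_{\sKL}\big(Q\,\Vert\,\sN(\bzero,\id_m)\big)\;\ge\; m\log\tfrac1\veps+\sum_k p_k\log p_k-\tfrac{(C_1+\sqrt m)^2}{2}
\;=\; m\log\tfrac1\veps - H(P_\veps) - O(1),
\]
which directly involves $H(P_\veps)$ and hence $\underline d(P)$. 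It is the transport-map partition that converts a $W_2$ constraint into an entropy inequality; your approach has no analogue of this step, and without it you cannot obtain the sharp threshold $m(1-1/\alpha)$ for a general $P$. Your manifold corollary, on the other hand, is fine: covering a compact $s$-manifold by $O(\veps^{-s})$ balls gives $H(\langle X\rangle_\veps)\le s\log(1/\veps)+O(1)$, hence $\underline d(P)\le s$, and $s<m(1-1/\alpha)$ is equivalent to $\alpha>m/(m-s)$.
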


\begin{rem}\label{rem:feasible_set}
	Since any discrete distribution $P \in \cuP(\R^m)$ with finite entropy has 
	information dimension equal to $0$, we know that $P$ is not $(\alpha, m)$-feasible provided 
	$\alpha > 1$.
	As a consequence, for any $\alpha>1$, $\veps>0$, we can construct a distribution $P$ such 
	that $W_2(P,\sN(\bzero,\id_m))\le \veps$, and yet $P$ is infeasible. This is achieved by 
	discretizing $\sN(\bzero,\id_m)$ on a scale $\veps$, obtaining $P$ that is a countable combination
	of point masses at points in $\veps\cdot \Z^m$.
\end{rem}

A cartoon of the $W_2$ geometry of $\cuF_{1, \alpha}$  is given in Figure \ref{fig:CartoonW2}. Note that, since $\cuF_{m,\alpha}$ is closed under weak convergence (Lemma~\ref{lem:closure_prop}), the
last infeasibility result applies to distributions $P$ that have a density in $\R^m$ but
are sufficiently close ---say--- to a low-dimensional manifold.

\begin{figure}[t!]
	\centering
	\includegraphics[width=25em]{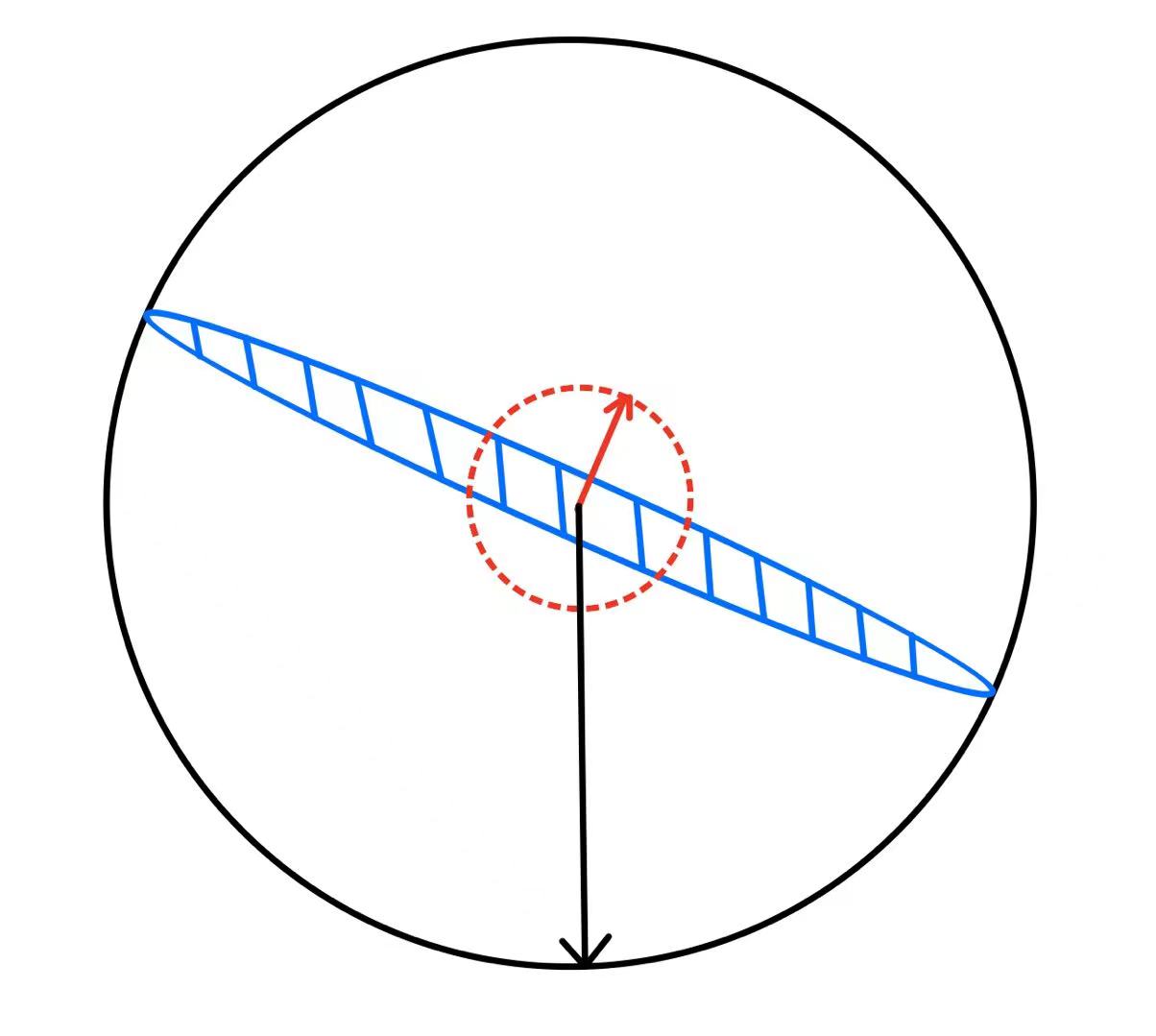}
	\put(-157.5, 100){$\sN(0, 1)$}
	\put(-120, 60){$1 / \sqrt{\alpha}$}
	\put(-117.5, 142.5){ $\veps$}
	\put(-225, 135){ $\cuF_{1, \alpha}$}
	\put(-60, 200){$W_2$ ball}
	\caption{A cartoon of the $W_2$ geometry of the feasibility set $\cuF_{1, \alpha}$ (blue shaded area).
	 The outer $W_2$ radius of $\cuF_{1, \alpha}$ 
	 (with respect to center $\sN (0, 1)$) is equal to $1 / \sqrt{\alpha}$,
	 but the  inner radius is zero. Namely, for any $\veps$, the  $W_2$ ball centered at $\sN (0, 1)$ 
	 with radius 
	 $\veps$ is not contained in $\cuF_{1, \alpha}$ for any $\alpha > 1$.}\label{fig:CartoonW2}
\end{figure}

\subsection{Application to the negative spherical perceptron}
In this section we apply our KL-Wasserstein outer bound to derive an upper bound on the satisfiability threshold of the negative spherical perceptron, which is a continuous random constraint satisfaction problem (CSP) studied in \cite{stojnic2013negative,montanari2021tractability}. Our bound matches the upper bound in \cite{montanari2021tractability} that is tight up to an $1 + o_{\kappa} (1)$ factor as the margin $\kappa \to - \infty$. In particular, we show that
\begin{thm}\label{thm:negperc_upper_bd}
    For any $\eta > 0$, there exists $\underline{\kappa} (\eta) < 0$, such that as long as $\kappa < \underline{\kappa} (\eta)$ and
    \begin{equation}
    	\alpha > \frac{(1 + \eta) \log \vert \kappa \vert}{\Phi (\kappa)},
    \end{equation}
    we have
    \begin{equation}\label{eq:negperc_upper_bd}
    	\lim_{n \to \infty} \P \left( \exists \btheta \in \S^{d-1} \ \mbox{s.t.} \ \langle \btheta, \xx_i \rangle \ge \kappa \ \forall i \in [n] \right) = 0.
    \end{equation}
\end{thm}

\begin{proof}
	Note that, if $\btheta$ is a $\kappa$-margin solution, then $\hat{P}_{n, \btheta}$ is a probability distribution supported on $[\kappa, +\infty)$. Following the same argument as that in the proof of Theorem~\ref{thm:div_outer_bd}, we know that there exists two probability measures $P, Q \in \cuP(\R)$ such that $\supp(P) \subset [\kappa, +\infty)$, and
	\begin{equation}\label{eq:neg_perc_cond}
		W_2^2 \left( P, Q \right) \le \frac{\veps^2}{\alpha} \log \left( \frac{C}{\veps} \right), \quad D_{\rm KL} \left( Q \Vert \normal(0, 1) \right) \le \frac{1}{\alpha} \log \left( \frac{C}{\veps} \right). 
	\end{equation}
	Since $D_{\rm KL} ( Q \Vert \normal(0, 1) ) < \infty$, $Q$ must have density $q$. It is easy to see that
	\begin{equation*}
		\inf_{\supp(P) \subset [\kappa, +\infty)} W_2^2 \left( P, Q \right) = \int_{-\infty}^{\kappa} (\kappa - x)^2 q(x) \d x.
	\end{equation*}
	Therefore, the conditions Eq.~\eqref{eq:neg_perc_cond} reduce to
	\begin{equation}
		\int_{-\infty}^{\kappa} (\kappa - x)^2 q(x) \d x = \int_{\R} (\kappa - x)_+^2 q(x) \d x \le \Delta_1, \quad \int_{-\infty}^{\infty} q(x) \log \frac{q(x)}{\phi(x)} \d x \le \Delta_2,
	\end{equation}
	where
	\begin{equation*}
		\Delta_1 = \Delta_1 (\alpha, \veps) = \frac{\veps^2}{\alpha} \log \left( \frac{C}{\veps} \right), \ \Delta_2 = \Delta_2 (\alpha, \veps) = \frac{1}{\alpha} \log \left( \frac{C}{\veps} \right).
	\end{equation*}
	According to Lemma~\ref{lem:min_entropy}, given that $q$ is a density and $\int_{\R} (\kappa - x)_+^2 q(x) \d x \le \Delta_1$, we have
	\begin{equation*}
		\int_{-\infty}^{\infty} q(x) \log \frac{q(x)}{\phi(x)} \ge - \min_{\mu \ge 0} \left\{ \mu \Delta_1 + \log \E \left[ \exp \left( -\mu (\kappa - G)_+^2 \right) \right] \right\}, \ G \sim \normal (0, 1),
	\end{equation*}
	thus leading to
	\begin{align*}
		& \Delta_2 \ge - \min_{\mu \ge 0} \left\{ \mu \Delta_1 + \log \E \left[ \exp \left( -\mu (\kappa - G)_+^2 \right) \right] \right\} \\
		\implies\, & \min_{\mu \ge 0} \left\{ \Delta_2 +  \mu \Delta_1 + \log \E \left[ \exp \left( -\mu (\kappa - G)_+^2 \right) \right] \right\} \ge 0.
	\end{align*}
	The above inequality must hold for any $\veps \in (0, \veps_0)$. Hence, we obtain that
	\begin{equation}\label{eq:cond_negperc}
	    \forall \veps \in (0, \veps_0) \ \mbox{and} \ \mu \ge 0, \ \frac{1 + \veps^2 \mu}{\alpha} \log \left( \frac{C}{\veps} \right) \ge - \log \E \left[ \exp \left( -\mu (\kappa - G)_+^2 \right) \right].
	\end{equation}
    Based on the proof of Theorem 3.2 in \cite{montanari2021tractability}, we know that taking $\mu = \vert \kappa \vert^{2(1 + \eta / 4)}$ yields the estimate:
    \begin{equation*}
    	- \log \E \left[ \exp \left( -\mu (\kappa - G)_+^2 \right) \right] = (1 + o_{\kappa} (1)) \left( 1 - \E \left[ \exp \left( -\mu (\kappa - G)_+^2 \right) \right] \right) = (1 + o_{\kappa} (1)) \Phi (\kappa).
    \end{equation*}
    We now choose $\veps = \vert \kappa \vert ^{-1 - \eta/2}$, then Eq.~\eqref{eq:cond_negperc} implies that
    \begin{equation*}
    	\alpha \le \frac{(1 + \veps^2 \mu) \log (C/\veps)}{- \log \E \left[ \exp \left( -\mu (\kappa - G)_+^2 \right) \right]} \le (1 + o_{\kappa} (1)) \frac{(1 + \vert \kappa \vert^{- \eta/2}) \log \left( C \vert \kappa \vert^{1 + \eta/2} \right)}{\Phi(\kappa)} = \left(1 + \frac{\eta}{2} + o_{\kappa} (1) \right) \frac{\log \vert \kappa \vert}{\Phi(\kappa)},
    \end{equation*}
    which contradicts the assumption $\alpha > (1 + \eta) \log \vert \kappa \vert / \Phi(\kappa)$ for sufficiently negative $\kappa$. This proves Eq.~\eqref{eq:negperc_upper_bd}.
\end{proof}

\begin{rem}
	The above theorem shows that our KL-Wasserstein outer bound is tight up to a constant factor. Indeed, if we take $\alpha = (1 - \eta) \log \vert \kappa \vert / \Phi (\kappa)$, then \cite[Theorem~3.1]{montanari2021tractability} implies that there exists an $(\alpha, 1)$-feasible distribution $P$ supported on $[\kappa, +\infty)$. Taking $\veps = \vert \kappa \vert^{-1-\eta}$ and setting
	\begin{equation*}
		\Delta_1 = \Delta_1 (\alpha, \veps) = \frac{\veps^2}{\alpha} \log \left( \frac{C}{\veps} \right) = (1 + o_{\kappa} (1)) \Phi(\kappa) \cdot \frac{1+\eta}{1 - \eta} \vert \kappa \vert^{- 2 - 2 \eta} ,
	\end{equation*}
	then from the proof of Theorem~\ref{thm:negperc_upper_bd}, we know that for any $Q \in \cuP (\R)$ satisfying $W_2^2 (P, Q) \le \Delta_1$, one has
	\begin{align*}
		\Delta_2 =\, & \Delta_2 (\alpha, \veps) = D_{\sKL} (P \Vert Q) \ge - \min_{\mu \ge 0} \left\{ \mu \Delta_1 + \log \E \left[ \exp \left( -\mu (\kappa - G)_+^2 \right) \right] \right\} \\
		\stackrel{(i)}{\ge}\, & (1 + o_{\kappa} (1)) \Phi(\kappa) - o_{\kappa} (1) \Phi(\kappa) = (1 + o_{\kappa}(1)) \Phi(\kappa) \\
		\ge\, & (1+o_{\kappa} (1) ) \frac{1 - \eta}{1+\eta} \cdot \frac{1}{\alpha} \log \left( \frac{C}{\veps} \right),
	\end{align*}
	where $(i)$ follows from the choice of $\Delta_1$ and $\mu = \vert \kappa \vert^{2+\eta}$. Note that the last line matches our KL-Wasserstein outer bound up to a constant factor for small $\veps > 0$.
\end{rem}

\begin{lem}\label{lem:min_entropy}
	Given $w(x) \ge 0$ and $p(x)$ a density function on $\R$, the minimum of $D_{\sKL} (q \Vert p)$ subject to $\int_{\R} w(x) q(x) \d x \le \Delta$ for a density $q$ is given as
	\begin{equation*}
		- \min_{\mu \ge 0} \left\{ \mu \Delta + \log \int_{\R} p(x) \exp \left( - \mu w(x) \right) \d x \right\} = - \min_{\mu \ge 0} \left\{ \mu \Delta + \log \E_{X \sim p} \left[ \exp \left( - \mu w(X) \right) \right] \right\}.
	\end{equation*}
\end{lem}

\begin{proof}
	We study the following optimization problem:
	\begin{equation}\label{eq:min_entropy}
		\mbox{minimize} \ \int_{\R} q(x) \log \frac{q(x)}{p(x)} \d x, \quad \mbox{subject to} \ \int_{\R} q(x) \d x = 1, \ \int_{\R} w(x) q(x) \d x \le \Delta.
	\end{equation}
	According to Lagrange duality, the optimum of Eq.~\eqref{eq:min_entropy} can be expressed as
	\begin{align*}
		& \min_{q(x) \ge 0} \max_{\lambda \in \R, \mu \ge 0} \left\{ \int_{\R} q(x) \log \frac{q(x)}{p(x)} \d x + \lambda \left( \int_{\R} q(x) \d x - 1 \right) + \mu \left( \int_{\R} w(x) q(x) \d x - \Delta \right) \right\} \\
		=\, & \max_{\lambda \in \R, \mu \ge 0} \left\{ - \lambda - \mu \Delta + \min_{q(x) \ge 0} \int_{\R} q(x) \left( \log \frac{q(x)}{p(x)} + \lambda + \mu w(x) \right) \d x \right\} \\
		=\, & \max_{\lambda \in \R, \mu \ge 0} \left\{ - \lambda - \mu \Delta + \int_{\R} \min_{q \ge 0} \left\{ q \left( \log \frac{q}{p(x)} + \lambda + \mu w(x) \right) \right\} \d x \right\} \\
		\stackrel{(i)}{=}\, & \max_{\lambda \in \R, \mu \ge 0} \left\{ - \lambda - \mu \Delta -e^{-\lambda - 1} \int_{\R} p(x) \exp(- \mu w(x)) \d x \right\} \\
		\stackrel{(ii)}{=}\, & \max_{\mu \ge 0} \left\{ - \mu \Delta - \log \int_{\R} p(x) \exp(- \mu w(x)) \d x \right\} \\
		=\, & - \min_{\mu \ge 0} \left\{ \mu \Delta + \log \int_{\R} p(x) \exp(- \mu w(x)) \d x \right\},
	\end{align*}
	where $(i)$ and $(ii)$ both follow from direct calculations. This completes the proof.
\end{proof}


\section{Inner bounds: Unsupervised learning}\label{sec:2nd_moment}
In this section, we present our main results on inner bounds for the feasibility set 
$\cuF_{m, \alpha}$. Given a target distribution $P$, we will show that it is feasible 
(below a critical value of $\alpha$) in two steps: 
First, we consider a discretization of $P$ supported on a finite set of points
$A\subseteq\R^m$. We will prove that the corresponding discretization of the empirical
distribution $\hat{P}_{n,\WW}$ of the projected points converges to the discretization of $P$.
We then establish feasibility of $P$ by taking increasingly denser meshes $A$.

For $P \in \cuP (\R^m)$ and a finite set $A = \{ \aa_1, \dots, \aa_M \} \subset \R^m$, define the 
$A$-discretization of $P$ as follows:
\begin{defn}[$A$-discretization]
	For $\xx \in \R^m$, denote the projection of $\xx$ onto $A$ as $\langle \xx \rangle_A$, namely
	\begin{equation*}
		\langle \xx \rangle_A = \argmin_{\aa_i \in A} \norm{\xx - \aa_i}_2.
	\end{equation*}
    The $A$-discretization of $P$ is then defined as $\langle P \rangle_A = 
    \Law (\langle X \rangle_A)$ where $X \sim P$.
\end{defn}
 
 Let $P_A$ be a probability distribution with support on $A$ such that $P_{A}(\aa_i)>0$
 for every  $\aa_i\in A$.
  We next define the feasibility lower bound $\oalpha_{\rm lb} (P_A)$ for $P_A$. As stated formally below, for 
  $\alpha<\oalpha_{\rm lb} (P_A)$ we can find projections $\WW=\WW_n(\bX)$ such that
  the $A$-discretization of the empirical distribution of projected points is close to
  $P_{A}$ with probability bounded away from zero.
\begin{defn}[Feasibility threshold of $P_{A}$]\label{def:2nd_thres}
	For $\QQ \in \R^{m \times m}$ satisfying $\QQ^\top \QQ \preceq \id_m$,
	let $\Phi^{(2)}_{A,\QQ}$ be the following probability distribution supported on $A\times A$:
	\begin{equation*}
		\Phi^{(2)}_{A,\QQ} := \Law\left( \langle \GG_1 \rangle_A , \ \langle \GG_2 \rangle_A \right),
		 \;\;\; (\GG_1, \GG_2) \sim \sN \left( \bzero, \ \begin{bmatrix}
    	\id_m & \QQ \\
    	\QQ^\top & \id_m
    \end{bmatrix} \right)\, .
	\end{equation*}
	 We define the probability distribution $R^{(2)}_{A,\QQ}$ supported on $A\times A$
	 via:
    \begin{equation}\label{eq:def_info_proj_Q}
    	R^{(2)}_{A,\QQ} := 
    	\argmin_{R \in \cuP(A \times A)} \Big\{ D_{\sKL}(R \Vert 	\Phi^{(2)}_{A,\QQ}):\;\;
        \mbox{\rm subject to} \  \sum_{j=1}^{M} R(\aa_i,\aa_j)= \sum_{j=1}^{M} R(\aa_j,\aa_i)=
        P_A(\aa_i), \ \forall i \in [M]\Big\}\, ,
    \end{equation}
   where, specializing Eq.~\eqref{eq:KL_def}, we have  
   $D_{\sKL}(R \Vert 	\Phi^{(2)}) = \sum_{i, j = 1}^{M} R(\aa_i,\aa_j) \log (R(\aa_i,\aa_j)/ 
   \Phi^{(2)}(\aa_i,\aa_j))$. In words,
     $R^{(2)}_{A,\QQ}$ is the information projection of the distribution $\Phi^{(2)}_{A,\QQ}$ 
     onto the set of distributions on $A \times A$ whose both margins are 
     $P_{A}$. 
  
  Next define $\Psi(\cdot\,;P_{A}):\R^{m\times m}\to \R$ via 
  \begin{align}
  \Psi(\QQ;P_{A}) := D_{\sKL} \big( R^{(2)}_{A,\QQ} \Vert \Phi^{(2)}_{A,\QQ}\big) + \frac{1}{\alpha}I(\QQ) \, ,
 \;\;\;\; I(\QQ) := - \frac{1}{2}\log \det (\id_m - \QQ^\top \QQ) \, .
  \end{align}   
  (We set  $\Psi(\QQ;P_{A})=+\infty$ if $\|\QQ\|_{\op}>1$, which is equivalent to defining $\Psi(\QQ;P_{A})$ only when $\QQ^\top \QQ \preceq \id_m$.)
  
Finally, we define
\begin{align}
\oalpha_{\rm lb} (P_A) := \sup\Big\{\alpha>0\, :\;\; \Psi(\QQ;P_{A})>\Psi(\bzero;P_{A}), \,
\;\;\;\forall \QQ\neq \bzero\Big\}\, .
\end{align}
\end{defn}

We next establish that $\oalpha_{\rm lb} (P_A)$ is indeed a lower bound on the 
feasibility threshold for $P_A$. (Here, feasibility is understood for the discretized
projections.)
\begin{thm}\label{thm:discrete_lower_bd}
Let $P_A$ be a probability distribution on the finite set $A\subseteq\R^m$ giving 
strictly positive mass to every $\aa_i\in A$.
	If $\alpha < \oalpha_{\rm lb} (P_A)$, then there exists a constant $C$ and
a sequence of a random orthogonal matrices $\WW = \WW_n (\XX)$ such that 
	\begin{equation*}
		\liminf_{n \to \infty} \P \left( d_{\sTV} \left( \langle \hat{P}_{n, \WW} \rangle_A, P_{A} \right) \le \frac{C}{n} \right) > 0,
	\end{equation*}
	where $\hat{P}_{n, \WW} = (1/n) \sum_{i=1}^{n} \delta_{\WW^\top \xx_i}$ is the empirical
	 distribution of the projected data points, and $d_{\sTV}$ is 
	 the total variation distance.  
\end{thm}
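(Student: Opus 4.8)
The goal is to construct, for $\alpha < \oalpha_{\rm lb}(P_A)$, a random $\WW_n(\XX)$ so that the $A$-discretization of the empirical projection distribution lands within $C/n$ of $P_A$ in total variation with positive probability. The plan is a \emph{second-moment / conditional-expectation} argument on the "number of good projections," combined with a change of measure to the tilted law $R^{(2)}_{A,\QQ}$.

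\textbf{Step 1: Counting functional on a net of projections.} Fix a target type-counting vector $\bn = (n_1,\dots,n_M)$ with $n_i \approx n P_A(\aa_i)$ and $\sum_i n_i = n$, so that if the projected points $\{\WW^\top\xx_i\}$ have exactly $n_i$ of them nearest to $\aa_i$, then $d_{\sTV}(\langle\hat P_{n,\WW}\rangle_A, P_A) \le C/n$. Let $N(\XX)$ be the number of matrices $\WW$ in a suitable discretization (an $\eta$-net) of $O(d,m)$ such that the type of $\langle\WW^\top\xx_i\rangle_A$ equals $\bn$. I would then run a first-and-second-moment computation: show $\E[N] \to \infty$ exponentially and $\E[N^2] \le (1+o(1))(\E[N])^2$, so that by Paley–Zygmund, $\P(N \ge 1) = \P(N \ge c\,\E[N])$ stays bounded away from $0$. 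One then sets $\WW_n(\XX)$ to be a uniformly random element of the "good" set when it is nonempty (and arbitrary otherwise); on the event $\{N\ge 1\}$ the conclusion holds. The $C/n$ slack, rather than exact equality, gives the room to round $nP_A(\aa_i)$ to integers and to absorb the net discretization error, and also lets $\QQ = \bzero$ be the only relevant saddle.

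\textbf{Step 2: First moment.} By rotational invariance of the Gaussian design, for a single fixed $\WW$ the vectors $\WW^\top\xx_i$ are i.i.d.\ $\sN(\bzero,\id_m)$, so the probability that the type equals $\bn$ is a multinomial probability with cell probabilities $\Phi_A(\aa_i) := \P(\langle\GG\rangle_A = \aa_i)$, $\GG\sim\sN(\bzero,\id_m)$. Its exponential rate is $-D_{\sKL}(P_A\Vert\Phi_A)$ by Sanov. Multiplying by the cardinality of the net, $|{\rm net}| = e^{\Theta(dm\log(1/\eta))} = e^{\Theta(n m \log(1/\eta)/\alpha)}$, the first moment grows provided $\eta$ is small enough depending on $D_{\sKL}(P_A\Vert\Phi_A)$ and $\alpha$; this is exactly where the $I(\bzero) = 0$ normalization and the definition of $\Psi(\bzero;P_A) = D_{\sKL}(P_A\Vert\Phi_A)$ enter.

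\textbf{Step 3: Second moment — the crux.} Writing $N^2 = \sum_{\WW,\WW'}\mathbbm 1\{\cdots\}$, I would split the sum over pairs $(\WW,\WW')$ according to the (discretized) value of $\QQ := \WW^\top\WW' \in \R^{m\times m}$, $\QQ^\top\QQ\preceq\id_m$. Conditioned on $\QQ$, the pairs $(\WW^\top\xx_i, \WW'^\top\xx_i)$ are i.i.d.\ $\sN(\bzero,\begin{bsmallmatrix}\id_m & \QQ\\ \QQ^\top & \id_m\end{bsmallmatrix})$, so the joint probability that both types equal $\bn$ is governed, via Sanov on $A\times A$ with marginal constraints, by $\exp(-n D_{\sKL}(R^{(2)}_{A,\QQ}\Vert\Phi^{(2)}_{A,\QQ}))$ — this is precisely why the constrained information projection $R^{(2)}_{A,\QQ}$ in Definition~\ref{def:2nd_thres} is the right object. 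Meanwhile the number of pairs $(\WW,\WW')$ in the net with $\WW^\top\WW'\approx\QQ$ contributes an entropy factor whose exponential rate is $\frac{2}{\alpha}\cdot(\text{full net rate}) - \frac{1}{\alpha}I(\QQ)$, since conditioning $\WW'$ to have overlap $\QQ$ with $\WW$ costs $\propto -\log\det(\id_m - \QQ^\top\QQ)$ per dimension (the Jacobian of the conditional Haar measure / Gaussian-overlap density). Collecting exponents, the rate of $\E[N^2]$ at overlap $\QQ$ is $2(\text{rate of }\E[N]) - \Psi(\QQ;P_A) + \Psi(\bzero;P_A)$ up to the net error. By the definition of $\oalpha_{\rm lb}(P_A)$, for $\alpha < \oalpha_{\rm lb}(P_A)$ we have $\Psi(\QQ;P_A) > \Psi(\bzero;P_A)$ for all $\QQ\ne\bzero$, so every $\QQ\ne\bzero$ contributes strictly less than $(\E[N])^2$; the $\QQ\approx\bzero$ block (near-orthogonal pairs) contributes $(1+o(1))(\E[N])^2$ by asymptotic independence. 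Summing over the polynomially-many net values of $\QQ$ then yields $\E[N^2] \le (1+o(1))(\E[N])^2$.

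\textbf{The main obstacle.} The delicate point is making Step 3 rigorous with honest constants: (i) quantifying the "number of net pairs with $\WW^\top\WW'\approx\QQ$" — this requires a uniform two-sided estimate on the density of $\WW^\top\WW'$ under Haar measure on $O(d,m)^2$, essentially the volume of a Grassmannian tube, with the correct $-\frac12\log\det(\id_m-\QQ^\top\QQ)$ exponent and controlled polynomial prefactors; and (ii) handling the singularity of $I(\QQ)$ and of $R^{(2)}_{A,\QQ}$ as $\|\QQ\|_{\op}\to 1$, where $\WW$ and $\WW'$ become nearly aligned — here $\Psi(\QQ;P_A)\to+\infty$, so this regime is harmless, but one must show the blow-up dominates uniformly rather than merely pointwise, e.g.\ by a compactness argument on $\{\|\QQ\|_{\op}\le 1-\delta\}$ together with a crude bound near the boundary. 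Everything else is bookkeeping with Sanov-type large deviation estimates and the continuity of $\QQ\mapsto\Psi(\QQ;P_A)$ on the open set $\{\QQ^\top\QQ\prec\id_m\}$, plus a strict-positivity argument that, because $P_A$ charges every atom, $R^{(2)}_{A,\bzero} = P_A\otimes P_A$ is the unique minimizer with full support and the Hessian of $\Psi(\cdot;P_A)$ at $\bzero$ is positive definite for $\alpha$ strictly below the threshold.
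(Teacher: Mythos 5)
Your proposal captures the essential structure of the paper's proof: a Paley--Zygmund second-moment argument, a decomposition of pairs $(\WW,\WW')$ by the overlap $\QQ=\WW^\top\WW'$, Sanov-type large-deviation asymptotics on $A\times A$ with marginal constraints producing exactly the information projection $R^{(2)}_{A,\QQ}$, and the $-\tfrac{1}{\alpha}I(\QQ)$ entropy cost coming from the overlap density, so that the rate of $\E[N^2]/\E[N]^2$ at overlap $\QQ$ is $-\Psi(\QQ;P_A)+\Psi(\bzero;P_A)\le 0$ with equality only at $\QQ=\bzero$. The one genuine difference is that the paper replaces your $\eta$-net count $N$ by the continuous functional $Z=\int_{O(d,m)}\bone\{\langle\hat P_{n,\WW}\rangle_A=\qq\}\,\mu_{d,m}(\d\WW)$. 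This buys quite a bit: the density of $\QQ=\WW_1^\top\WW_2$ under $\mu_{d,m}\otimes\mu_{d,m}$ is known in closed form (Diaconis--Eaton), so the obstacle you correctly flag --- a uniform two-sided Grassmannian tube-volume estimate with controlled polynomial prefactors for the net --- disappears entirely; the $O(1)$ relation between $\E[Z^2]$ and $\E[Z]^2$ then follows from a local CLT for the multinomial probabilities (uniformly on $\{\lambda_{\max}(\QQ^\top\QQ)\le\tfrac12\}$) combined with Laplace's method near $\QQ=\bzero$. Your handling of the $\|\QQ\|_{\op}\to 1$ boundary matches the paper's (cut at $\lambda_{\max}(\QQ^\top\QQ)=\tfrac12$, crude bound using the divergence of $I(\QQ)$ beyond it). So the plan is sound, but with the net you would owe the tube-volume two-sided estimate with matching prefactors, which is nontrivial extra work the integral formulation simply avoids; a small bookkeeping slip aside ($\Psi(\bzero;P_A)=2D_{\sKL}(P_A\Vert\Phi_A)$, not $D_{\sKL}(P_A\Vert\Phi_A)$, though your rate computations implicitly use the correct value), the approach is the same.
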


In the next two subsections, we will apply this result
to prove lower bounds on the supremum of $\alpha$ such that $P\in \cuF_{m,\alpha}$
for $P$ a probability distribution with a density on $\R^m$. We treat the cases $m=1$
and $m>1$ separately and carry out a more accurate analysis in the first case.
As anticipated above, our proof is based on approximating $P$ by $P_A=\<P\>_A$ for a sequence of
 finite sets $A$ of increasing cardinality.
It will be crucial to control the lower bounds $\oalpha_{\rm lb} (P_A)$ along such a sequence.
The following variational representation of 
 the KL divergence turns out to be particularly useful.
\begin{lem}[Donsker-Varadhan representation of KL divergence]\label{lem:var_rep_KL}
	Let $Q$ and $P$ be two probability measures on the same space $\cX$, then we have
	\begin{equation*}
		D_{\sKL} \left( Q \Vert P \right) = \sup_{g: \ \cX \to \R} \left\{ \E_{Q} \left[ g(X) \right] - \log \E_{P} \left[ \exp(g(X)) \right] \right\}.
	\end{equation*}
    Moreover, recalling $R^{(2)}_{A,\QQ}$ and $\Phi^{(2)}_{A,\QQ}$ from 
    Definition~\ref{def:2nd_thres}, and denoting
    \begin{equation*}
    	\Phi_A := \Law\big(\langle \GG \rangle_A \big) \;\; \text{for} \;\; \GG \sim \sN (\bzero, \id_m),
    \end{equation*}
    we get that
    \begin{align*}
    	D_{\sKL} (R^{(2)}_{A,\QQ}\Vert \Phi^{(2)}_{A,\QQ}) - & D_{\sKL} \left( 
    	R^{(2)}_{A,\bzero}\Vert \Phi^{(2)}_{A,\bzero} \right) \\
    	&\ge \sup_{(\lambda_i, \mu_i)_{i \in [M]}} \left\{ \sum_{i = 1}^{M} \left( \lambda_i + \mu_i \right) P_A(\aa_i) - \log \left( \sum_{i, j=1}^{M} e^{\lambda_i + \mu_j} \frac{P_A(\aa_i)P_A(\aa_j)}{\Phi_A(\aa_i)\Phi_A(\aa_j)} \Phi^{(2)}_{A,\QQ}(\aa_i,\aa_j)\right) \right\}.
    \end{align*}
\end{lem}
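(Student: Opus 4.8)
The statement of Lemma~\ref{lem:var_rep_KL} has two parts. The first part is the classical Donsker--Varadhan variational formula for the KL divergence; the second is the specific lower bound relating $D_{\sKL}(R^{(2)}_{A,\QQ}\Vert\Phi^{(2)}_{A,\QQ})$ to its value at $\QQ=\bzero$. My plan is to prove the first part by a standard change-of-measure argument, and then derive the second part by feeding carefully chosen test functions $g$ into the variational formula applied to $D_{\sKL}(R^{(2)}_{A,\QQ}\Vert\Phi^{(2)}_{A,\QQ})$ and exploiting the marginal constraints that define $R^{(2)}_{A,\QQ}$.

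\medskip

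\textbf{Part 1: Donsker--Varadhan.} For the ``$\ge$'' direction, fix any bounded measurable $g:\cX\to\R$ and define the tilted probability measure $P_g$ by $\d P_g/\d P = e^{g(X)}/\E_P[e^{g(X)}]$. A direct computation using the chain rule for Radon--Nikodym derivatives gives
\[
D_{\sKL}(Q\Vert P) = D_{\sKL}(Q\Vert P_g) + \E_Q[g(X)] - \log\E_P[e^{g(X)}]\, ,
\]
and since $D_{\sKL}(Q\Vert P_g)\ge 0$ this yields $D_{\sKL}(Q\Vert P)\ge \E_Q[g]-\log\E_P[e^g]$; unbounded $g$ follow by truncation/monotone convergence. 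For the ``$\le$'' direction, if $D_{\sKL}(Q\Vert P)<\infty$ and $Q\ll P$, take $g=\log(\d Q/\d P)$ (suitably truncated), for which the right-hand side equals $D_{\sKL}(Q\Vert P)$ exactly; if $Q\not\ll P$ or the divergence is infinite, one exhibits test functions making the supremum diverge. Since in our application $\cX=A\times A$ is finite, all of this is elementary and the supremum over $g:\cX\to\R$ is a supremum over $\R^{M^2}$.

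\medskip

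\textbf{Part 2: the structured lower bound.} Apply the Donsker--Varadhan formula with $Q=R^{(2)}_{A,\QQ}$ and $P=\Phi^{(2)}_{A,\QQ}$, restricting attention to additively separable test functions $g(\aa_i,\aa_j)=\lambda_i+\mu_j$ plus the fixed function $\log\big(\tfrac{P_A(\aa_i)P_A(\aa_j)}{\Phi_A(\aa_i)\Phi_A(\aa_j)}\big)$; call the latter $\psi(\aa_i,\aa_j)$. The key point is that under $R^{(2)}_{A,\QQ}$ both marginals equal $P_A$, so the $\lambda$- and $\mu$-parts of $\E_Q[g]$ evaluate to $\sum_i\lambda_i P_A(\aa_i)+\sum_j\mu_j P_A(\aa_j)$, independently of $\QQ$. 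Moreover, since $\Phi^{(2)}_{A,\bzero}=\Phi_A\otimes\Phi_A$, one checks that $R^{(2)}_{A,\bzero}$ (the information projection onto the double-$P_A$-marginal polytope) is exactly $P_A\otimes P_A$, whence $D_{\sKL}(R^{(2)}_{A,\bzero}\Vert\Phi^{(2)}_{A,\bzero})=2 D_{\sKL}(P_A\Vert\Phi_A)=\E_{R^{(2)}_{A,\bzero}}[\psi]$. The function $\psi$ has been chosen precisely so that $\E_{R^{(2)}_{A,\QQ}}[\psi]-\log\E_{\Phi^{(2)}_{A,\QQ}}[e^{\psi}]$, combined with the identity for $\QQ=\bzero$, produces the claimed difference; the $\log$-term on the right of the stated inequality is exactly $\log\E_{\Phi^{(2)}_{A,\QQ}}[e^{\lambda_i+\mu_j+\psi}]=\log\big(\sum_{i,j}e^{\lambda_i+\mu_j}\tfrac{P_A(\aa_i)P_A(\aa_j)}{\Phi_A(\aa_i)\Phi_A(\aa_j)}\Phi^{(2)}_{A,\QQ}(\aa_i,\aa_j)\big)$, and taking the supremum over $(\lambda_i,\mu_i)$ gives the bound.

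\medskip

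\textbf{Main obstacle.} The routine part is the change-of-measure algebra; the step requiring genuine care is the bookkeeping that turns the raw Donsker--Varadhan inequality for $D_{\sKL}(R^{(2)}_{A,\QQ}\Vert\Phi^{(2)}_{A,\QQ})$ into a \emph{difference} bound, i.e.\ correctly absorbing the constant $D_{\sKL}(R^{(2)}_{A,\bzero}\Vert\Phi^{(2)}_{A,\bzero})$ using the explicit form $R^{(2)}_{A,\bzero}=P_A\otimes P_A$ and $\Phi^{(2)}_{A,\bzero}=\Phi_A\otimes\Phi_A$, and verifying that the fixed offset $\psi$ plus the free parameters $(\lambda_i,\mu_i)$ span a rich enough family of test functions to reproduce exactly the right-hand side. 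One must also confirm that using only the marginal constraints (rather than the full optimality conditions for the information projection $R^{(2)}_{A,\QQ}$) is what makes $\E_{R^{(2)}_{A,\QQ}}[g]$ computable and $\QQ$-independent — that is the crux of why the inequality, rather than an equality, is the natural statement.
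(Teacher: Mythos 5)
Your proposal is correct and follows essentially the same route as the paper: restrict the Donsker–Varadhan supremum to separable test functions, exploit the marginal constraints to make $\E_{R^{(2)}_{A,\QQ}}[\lambda_i+\mu_j]$ $\QQ$-independent, and absorb $D_{\sKL}(R^{(2)}_{A,\bzero}\Vert\Phi^{(2)}_{A,\bzero})=2D_{\sKL}(P_A\Vert\Phi_A)$ via an offset — the paper realizes the same offset as the change of variables $\lambda_i\mapsto\lambda_i+\log(P_A(\aa_i)/\Phi_A(\aa_i))$, $\mu_j\mapsto\mu_j+\log(P_A(\aa_j)/\Phi_A(\aa_j))$, which is identical to your $\psi$. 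The only cosmetic difference is that you prove the Donsker–Varadhan identity by a change-of-measure argument while the paper does it by direct differentiation; on the finite space $A\times A$ these are equivalent.
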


\subsection{Inner bound for $m = 1$}\label{sec:1D_inner_bd}
Here we specialize our results to the case $m = 1$, and consequently obtain an inner bound for
 $\cuF_{1, \alpha}$. To simplify our treatment,  
  we will require that our target distribution $P$ has a density $p(x)$ on $\R$, 
  which satisfies the following assumption:
\begin{ass}\label{ass:1D_inner_bd}
	Denote by $\phi (x)$ the density of the standard normal distribution. Assume that
	$P(\d x) = p(x)\, \d x$,
	 $\E_P [X] = \int_{\R} x p (x) \, \de x= 0$, and that the chi-square distance between
	  $P$ and $\sN (0, 1)$ is finite, namely
	\begin{equation*}
		\chi^2 \left( P, \sN (0, 1) \right) = \int_{\R} \frac{(p(x) - \phi(x))^2}{\phi(x)}\, \d x < \infty.
	\end{equation*}
    Since the KL divergence is always dominated by the chi-square distance, we know that $D_{\sKL} (P \Vert \sN(0, 1)) < \infty$.
\end{ass}

\begin{thm}[Lower bound on feasibility threshold for $m=1$]\label{thm:inner_bd_1D}
	Let $P \in \cuP(\R)$ have density function $p(x)$ satisfying Assumption~\ref{ass:1D_inner_bd},
	 and denote
	\begin{equation*}
		c_2 = \frac{1}{\sqrt{2}} \E_P (X^2 - 1) = \frac{1}{\sqrt{2}} \int_{\R} (x^2 - 1) (p(x) - \phi(x)) \, \d x.
	\end{equation*}
	(Note that $c_2^2 \le \chi^2 (P, \sN(0, 1))$ by Cauchy-Schwarz inequality.) 
	
 Define the following lower bound on the feasibility threshold for $P$:
	\begin{equation}\label{eq:def_feas_thres_cont}
		\alpha_{\rm lb} (P) = \max_{q \in [0, 1]} \min \left\{ \frac{I(q)}{D_{\sKL} (P \Vert \sN(0, 1))}, \ \frac{1}{2 \left( c_2^2 + q \left( \chi^2 (P, \sN(0, 1)) - c_2^2 \right) \right)} \right\},
	\end{equation} 
	where $I(q) = - \log (1 - q^2) / 2$. Then, as long as $\alpha < \alpha_{\rm lb} (P)$, 
	$P$ is $(\alpha, 1)$-feasible.
\end{thm}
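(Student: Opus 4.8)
The plan is to apply Theorem~\ref{thm:discrete_lower_bd} to discretizations $P_A = \langle P \rangle_A$ of $P$ along a sequence of increasingly fine meshes $A$, and to show that $\liminf_{|A|\to\infty} \oalpha_{\rm lb}(P_A) \ge \alpha_{\rm lb}(P)$. Since $\cuF_{1,\alpha}$ is closed under weak convergence (Lemma~\ref{lem:closure_prop}) and $\langle P\rangle_A \stackrel{w}{\Rightarrow} P$ as the mesh refines, the theorem follows once we know each $\langle P\rangle_A$ is feasible for every $\alpha < \alpha_{\rm lb}(P)$. So the heart of the matter is a lower bound on $\oalpha_{\rm lb}(P_A)$ that survives the continuum limit. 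Recall $\oalpha_{\rm lb}(P_A)$ is the largest $\alpha$ such that $\QQ=q\mapsto \Psi(q;P_A) = D_{\sKL}(R^{(2)}_{A,q}\Vert \Phi^{(2)}_{A,q}) + \alpha^{-1} I(q)$ is minimized (strictly) at $q=0$ over $q\in[0,1]$ (here $m=1$ so $\QQ$ is the scalar $q$). Writing $\Delta(q):= D_{\sKL}(R^{(2)}_{A,q}\Vert \Phi^{(2)}_{A,q}) - D_{\sKL}(R^{(2)}_{A,0}\Vert \Phi^{(2)}_{A,0})$, the condition ``$q=0$ is the strict minimizer'' is exactly $\Delta(q) + \alpha^{-1}(I(q)-I(0)) > 0$ for all $q\neq 0$, i.e.\ $\alpha^{-1} I(q) > -\Delta(q)$ for all $q\in(0,1]$ (since $I(0)=0$ and $\Delta(0)=0$). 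Thus it suffices to produce, for each $q$, a lower bound on $\Delta(q)$ of the form $\Delta(q) \ge -\big(c_2^2 + q(\chi^2(P,\sN(0,1))-c_2^2)\big)q^2 + o_A(q^2)$ for small $q$, plus a global bound $\Delta(q)\ge - D_{\sKL}(P\Vert\sN(0,1)) - o_A(1)$ valid for all $q\in[0,1]$ — the latter because $R^{(2)}_{A,q}$ has margins $P_A$, so its relative entropy against the product-like reference is at least something controlled by $D_{\sKL}(P_A\Vert \Phi_A) \to D_{\sKL}(P\Vert\sN(0,1))$. Combining the two regimes and optimizing the crossover point $q$ yields precisely the $\max_q\min\{\cdot,\cdot\}$ structure of $\alpha_{\rm lb}(P)$.

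Concretely, I would proceed as follows. First, use the Donsker–Varadhan lower bound from Lemma~\ref{lem:var_rep_KL}: choosing the dual variables $\lambda_i=\mu_i$ to be a small multiple $t$ of the centered discretized quadratic $(a_i^2-1)$ — the natural ``Hermite-like'' direction picked out by the second moment — and Taylor-expanding in $t$, the right-hand side becomes a quadratic in $t$ whose optimal value is $\frac{1}{2}\, (\text{first order coefficient})^2 / (\text{second order coefficient})$. The first-order coefficient reproduces $\E_{P}(X^2-1)=\sqrt{2}\,c_2$ in the limit, and the second-order coefficient, which involves $\sum_{i,j} e^{\lambda_i+\mu_j}\cdots\Phi^{(2)}_{A,q}(a_i,a_j)$, expands in $q$ as a combination of $\E_P[(X^2-1)^2]$-type terms; carefully bookkeeping the $q$-dependence gives the $c_2^2 + q(\chi^2-c_2^2)$ shape. (The constant $c_2=\tfrac1{\sqrt2}\E_P(X^2-1)$ and the identity $\chi^2(P,\sN(0,1)) = \sum_{k\ge1} (\text{Hermite coeff}_k)^2$ make the bookkeeping transparent: $c_2^2$ is the $k=2$ Hermite contribution and $\chi^2-c_2^2$ collects all the rest, which couples to $q$ through $\Phi^{(2)}_{A,q}$ because the bivariate Gaussian with correlation $q$ weights the $k$-th Hermite mode by $q^k$.) Second, for the global bound, note that any coupling $R$ of $P_A$ with itself satisfies $D_{\sKL}(R\Vert \Phi^{(2)}_{A,q}) \ge D_{\sKL}(\text{(first margin of }R)\Vert \Phi_A) = D_{\sKL}(P_A\Vert\Phi_A)$ only in the product case $q=0$; for general $q$ one compares against $\Phi^{(2)}_{A,q}$ whose margins are still $\Phi_A$, and a data-processing / chain-rule argument bounds $\Delta(q) \ge -D_{\sKL}(P_A\Vert\Phi_A)$, which converges to $-D_{\sKL}(P\Vert\sN(0,1))$. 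Third, assemble: $\alpha < \alpha_{\rm lb}(P)$ means there is $q^\star$ with $\alpha < I(q^\star)/D_{\sKL}(P\Vert\sN(0,1))$ and $\alpha < 1/(2(c_2^2 + q^\star(\chi^2-c_2^2)))$; use the small-$q$ bound below $q^\star$ and the global bound above $q^\star$ to verify $\Psi(q;P_A) > \Psi(0;P_A)$ for all $q\neq0$ when $A$ is fine enough, hence $\alpha < \oalpha_{\rm lb}(P_A)$.

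The main obstacle, I expect, is making the two estimates on $\Delta(q)$ uniform in the mesh $A$ and showing they match up cleanly at the crossover — in particular controlling the error terms in the small-$q$ Taylor expansion so that they are $o(q^2)$ uniformly as $|A|\to\infty$, which requires Assumption~\ref{ass:1D_inner_bd} (finite $\chi^2$) to dominate the discretization errors, and checking that the information projection $R^{(2)}_{A,q}$ does not behave pathologically for $q$ in an intermediate range where neither the quadratic nor the crude global bound is obviously sharp. A secondary technical point is justifying the interchange of limits (refining $A$ versus the $q$-optimization defining $\oalpha_{\rm lb}$): one wants $\liminf_A \oalpha_{\rm lb}(P_A) \ge \alpha_{\rm lb}(P)$, which needs the lower bounds on $\Delta(q)$ to hold with $A$-independent constants and $o_A(1)$ corrections, established via the explicit Hermite expansion of $\chi^2(P,\sN(0,1))$ and dominated convergence under Assumption~\ref{ass:1D_inner_bd}. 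Once those uniformity statements are in hand, the rest is the elementary $\max$–$\min$ optimization already displayed in \eqref{eq:def_feas_thres_cont}.
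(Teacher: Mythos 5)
Your high-level blueprint (discretize $P$, invoke Theorem~\ref{thm:discrete_lower_bd}, and pass to the continuum) is the right one, and your treatment of the large-$q$ regime via the chain rule — $D_{\sKL}(R^{(2)}_{A,q}\|\Phi^{(2)}_{A,q})\ge D_{\sKL}(P_A\|\Phi_A)$ by marginalization, giving $\Delta(q)\ge -D_{\sKL}(P_A\|\Phi_A)$ — matches the paper's Case~(ii). However, there are three concrete gaps.

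First, the framing ``show each $\langle P\rangle_A$ is feasible and use closure under weak limits'' cannot work as stated: $\langle P\rangle_A$ is finitely supported, hence has zero information dimension, so by Theorem~\ref{thm:infeasible_info_dim} (see Remark~\ref{rem:feasible_set}) it is \emph{not} $(\alpha,1)$-feasible for any $\alpha>1$ — a range we need whenever $\alpha_{\rm lb}(P)>1$. The paper never claims feasibility of $\langle P\rangle_A$; it only controls $d_{\sTV}(\langle\hat P_{n,\WW}\rangle_A,P_A)$ and then shows $\hat P_{n,\WW}$ itself (not its discretization) is $W_p$-close to the continuous $P$, by adding $W_p(\hat P_{n,\ww},\langle\hat P_{n,\ww}\rangle_A)$ and $W_p(\langle P\rangle_A,P)$, both controllable uniformly. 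Second, Theorem~\ref{thm:discrete_lower_bd} only yields $\liminf_n\P>0$; feasibility requires convergence in probability (probability $\to1$). The paper bridges this by observing that $F(\XX)=\inf_{\ww}W_p(\hat P_{n,\ww},P)$ is $(1/\sqrt n)$-Lipschitz in $\XX$ and applying Gaussian concentration to upgrade ``positive probability'' to ``high probability,'' followed by a diagonal argument — a step your sketch omits entirely.

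Third, in the small-$q$ regime your claimed bound $\Delta(q)\ge-(c_2^2+q(\chi^2-c_2^2))q^2+o_A(q^2)$ is not attainable for an arbitrary mesh. With $\lambda=\mu=0$ in Lemma~\ref{lem:var_rep_KL}, the expansion of $\Phi^{(2)}_{A,q}$ in $q$ produces a term $\bigl(\sum_i\int_{I_i}(p-\phi)\,\tfrac{\int_{I_i}x\phi}{\int_{I_i}\phi}\bigr)^2\,q$, which is nonnegative and does not vanish for a generic $A$; a residual linear term $c_A\,q$ (with $c_A>0$, however small) defeats the inequality $\alpha^{-1}I(q)>-\Delta(q)$ as $q\to0^+$ because $I(q)=O(q^2)$. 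The paper resolves this by exploiting $\E_P X=\E_{\phi}X=0$ (Assumption~\ref{ass:1D_inner_bd}) to \emph{choose} a mesh $A$ for which this linear coefficient is exactly zero, and only then the Hermite expansion $\E_q[h(G_1)h(G_2)]=\sum_{n\ge2}c_n^2q^n\le(c_2^2+q_0(\chi^2-c_2^2))q^2$ kicks in. Your alternative of taking nontrivial dual variables $\lambda_i=\mu_i=t(a_i^2-1)$ and optimizing $t$ does not obviously fix this — at $q=0$ the optimal $t$ is $0$ so it reduces to the paper's choice, and for $q\ne0$ the claimed ``quadratic-in-$t$'' form with optimal value $\tfrac12(\text{linear})^2/(\text{quadratic})$ ignores the nonzero $t=0$ contribution and does not yield the $c_2^2+q(\chi^2-c_2^2)$ shape cleanly. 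The simpler $\lambda=\mu=0$ choice, combined with the explicit mesh construction, is both necessary and sufficient.
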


\begin{prop}[Characterization of $\alpha_{\rm lb} (P)$]\label{prop:char_lower_bd}
	Let $\alpha_{\rm lb} (P)$ be as defined in Eq.~\eqref{eq:def_feas_thres_cont}. Then, we have
	\begin{equation*}
		\alpha_{\rm lb} (P) \ge \frac{1}{4} \cdot \min \left\{ \frac{1}{c_2^2}, \ \frac{1}{D_{\sKL} (P \Vert \sN(0, 1))^{1/3} \left( \chi^2 (P, \sN(0, 1)) - c_2^2 \right)^{2/3} } \right\}\, .
	\end{equation*}
\end{prop}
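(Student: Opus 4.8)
The statement is purely a one–variable calculus fact about the explicitly defined quantity $\alpha_{\rm lb}(P)$ of Eq.~\eqref{eq:def_feas_thres_cont}; nothing probabilistic enters, so the plan is simply to optimize carefully over $q$. Throughout, abbreviate $D := D_{\sKL}(P\Vert\sN(0,1))$, $\chi^2 := \chi^2(P,\sN(0,1))$, and $\Delta := \chi^2 - c_2^2$, which is nonnegative by the Cauchy--Schwarz remark in Theorem~\ref{thm:inner_bd_1D}. Two elementary facts are the whole engine: (i) $I(q) = -\tfrac12\log(1-q^2) \ge \tfrac{q^2}{2}$ for every $q\in[0,1]$, since $-\log(1-u)\ge u$; and (ii) the domination $D \le \chi^2 = c_2^2+\Delta$ recorded in Assumption~\ref{ass:1D_inner_bd}. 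The target lower bound is $v := \tfrac14\min\{\,c_2^{-2},\ (D^{1/3}\Delta^{2/3})^{-1}\,\}$, with the convention that a zero denominator makes that term $+\infty$; if $v=+\infty$ then $\chi^2=0$, hence $P=\sN(0,1)$, $D=0$, and $\alpha_{\rm lb}(P)=+\infty$, so we may assume $v<\infty$.

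Since $\alpha_{\rm lb}(P)$ is a maximum over $q\in[0,1]$, it suffices to exhibit one admissible $q^\star$ at which \emph{both} entries of the min in Eq.~\eqref{eq:def_feas_thres_cont} are at least $v$. The natural choice is
\[
q^\star := \sqrt{2vD},
\]
which is designed so that $\tfrac{(q^\star)^2}{2D}=v$; then fact (i) gives $I(q^\star)/D \ge (q^\star)^2/(2D)=v$ for free, so the first entry is already handled. It remains to verify that $q^\star\in[0,1]$ and that $\tfrac{1}{2(c_2^2+q^\star\Delta)}\ge v$.

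For $q^\star\le 1$: unwinding definitions, $2vD = \min\{\,D/(2c_2^2),\ D^{2/3}/(2\Delta^{2/3})\,\}$, and by fact (ii), $D\le c_2^2+\Delta\le 2\max(c_2^2,\Delta)$. If $c_2^2\ge\Delta$ then $D\le 2c_2^2$, so the first term of this min is $\le 1$; if $\Delta\ge c_2^2$ then $D\le 2\Delta$, so the second term is $\le 2^{2/3}/2<1$; either way $2vD\le 1$ and $q^\star\le 1$. For the second entry, it is equivalent to $c_2^2+q^\star\Delta\le \tfrac1{2v}$, which I split into two halves: $c_2^2\le\tfrac1{4v}$ is immediate from $v\le\tfrac1{4c_2^2}$, and $q^\star\Delta=\sqrt{2vD}\,\Delta\le\tfrac1{4v}$ is equivalent, after squaring, to $v\le (2^{5/3}D^{1/3}\Delta^{2/3})^{-1}$, which holds because $v\le (4D^{1/3}\Delta^{2/3})^{-1}$ and $2^{5/3}<4$. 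Adding the two halves yields $c_2^2+q^\star\Delta\le\tfrac1{2v}$, hence $\tfrac{1}{2(c_2^2+q^\star\Delta)}\ge v$, and therefore $\alpha_{\rm lb}(P)\ge\min\{\,I(q^\star)/D,\ \tfrac{1}{2(c_2^2+q^\star\Delta)}\,\}\ge v$, as claimed.

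The one place that genuinely requires care — and the step I would double-check most — is the bound $q^\star\le 1$: it is exactly here that one must exploit that $D$ is controlled by the \emph{sum} $c_2^2+\Delta=\chi^2(P,\sN(0,1))$ rather than by $c_2^2$ or $\Delta$ individually, which is what forces the case split on which of $c_2^2,\Delta$ is larger. Everything else is bookkeeping of absolute constants, and the modest slack $2^{5/3}\approx 3.17<4$ is what makes the clean prefactor $\tfrac14$ (rather than $\tfrac12$) suffice; the degenerate subcases $c_2^2=0$, $\Delta=0$, or $D=0$ are all covered by the same computation under the stated conventions on reciprocals, or verified in one line each.
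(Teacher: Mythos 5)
Your proposal is correct and rests on the same two elementary ingredients as the paper's proof: the inequality $I(q)\ge q^2/2$ and the domination $D_{\sKL}\le\chi^2=c_2^2+\Delta$. The organization, however, is genuinely different. The paper first applies $\tfrac1{a+b}\ge\min\{\tfrac1{2a},\tfrac1{2b}\}$ to peel off the $q$-independent term $\tfrac1{4c_2^2}$, and then solves the remaining one-parameter maximin $\max_{q\in[0,1]}\min\{q^2/(2D),\,1/(4q\Delta)\}$ in closed form, claiming the value $1/(2^{5/3}D^{1/3}\Delta^{2/3})$. You instead fix the target $v$ up front, tune $q^\star=\sqrt{2vD}$ so that the first entry of the min equals $v$ automatically via $I(q)\ge q^2/2$, and then verify the second entry and the feasibility constraint $q^\star\le 1$ directly, with the constant slack $2^{5/3}<4$ absorbing the split $c_2^2+q^\star\Delta\le\tfrac1{4v}+\tfrac1{4v}$.

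The one substantive difference is worth flagging: your verification that $q^\star\le 1$ (via the case split on whether $c_2^2$ or $\Delta$ dominates, each using $D\le c_2^2+\Delta$) is a step the paper's derivation silently skips. The paper's stated equality $\max_{q\in[0,1]}\min\{q^2/(2D),\,1/(4q\Delta)\}=1/(2^{5/3}D^{1/3}\Delta^{2/3})$ is only correct when the unconstrained optimizer $q^*=(D/(2\Delta))^{1/3}$ lies in $[0,1]$, i.e.\ when $D\le 2\Delta$; when $D>2\Delta$ the constrained maximum is the strictly smaller $1/(2D)$. The paper's final conclusion survives anyway (because in that regime $D<2c_2^2$ forces the outer $\min$ to be $1/(4c_2^2)$, which already suffices), but this is a loose end the paper does not discuss, and your explicit feasibility check closes it cleanly. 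So: same key inequalities, a different and marginally more careful packaging.
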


\begin{rem}
	Recall that $D_{\sKL} (P \Vert \sN(0, 1))\le  \chi^2 (P, \sN(0, 1))$ always holds.
	If, in addition, $ \chi^2 (P, \sN(0, 1)) \le C_0 \cdot D_{\sKL} (P \Vert \sN(0, 1))$, 
	for a constant $C_0$, then there exists another constant $C=C(C_0)$ depending on $C_0$ such that the following lower bound on the feasibility threshold holds:
	\begin{equation*}
		\alpha_{\rm lb} (P) \ge \frac{C}{D_{\sKL} (P \Vert \sN(0, 1))}\, .
	\end{equation*}
	This matches the outer bound in Theorem~\ref{thm:div_outer_bd} up to a logarithmic factor.
\end{rem}

\subsection{Inner bound for general $m>1$}\label{sec:mD_inner_bd}

We now generalize the results in Section~\ref{sec:1D_inner_bd} to dimensions $m > 1$.
\begin{ass}\label{ass:mD_inner_bd}
	We assume that $P \in \cuP (\R^m)$ has zero mean, and 
	$\chi^2 (P, \sN (\bzero, \id_m)) < \infty$.
	In particular, $P$ has a density which we denote by $p(\xx)$.
	Let $\phi(\xx)$ be the density of $\sN (\bzero, \id_m)$, 
	and further denote $h(\xx) = (p(\xx) - \phi(\xx)) / \phi(\xx)$. For any 
	$\UU \in O(m, m)$, the function $h(\UU \xx)$ has the following multivariate Hermite expansion
	(see the proof of Theorem~\ref{thm:inner_bd_mD}):
	\begin{equation*}
    	h \left( \UU \xx \right) = \sum_{\vert \nn \vert \ge 2} c_{\nn} (\UU) {\rm He}_{n_1} (x_1) \cdots {\rm He}_{n_m} (x_m),
	\end{equation*}
	where $\xx = (x_1, \cdots, x_m)$, $\nn = (n_1, \cdots, n_m)$, $c_{\nn} (\UU) = c_{n_1, \cdots, n_m} (\UU)$, and 
	$ \vert \nn \vert = n_1 + \cdots + n_m$. Moreover, we know that for $\GG \sim \sN (\bzero, \id_m)$,
	\begin{equation*}
		\chi^2 (P, \sN(\bzero, \id_m)) = \E [h(\GG)^2] = \E [h(\UU \GG)^2] =
		 \sum_{\vert \nn \vert \ge 2} c_{\nn} (\UU)^2,
	\end{equation*}
	which is independent of $\UU$ due to rotational invariance of $\sN (\bzero, \id_m)$.
\end{ass}
\begin{thm}[Lower bound on feasibility threshold for $m\ge 2$]\label{thm:inner_bd_mD}
	Let $P \in \cuP (\R^m)$ satisfy Assumption~\ref{ass:mD_inner_bd}, and define 
	$\alpha_{\rm lb} (P)$ to be the supremum of all $\alpha > 0$ such that the following happens:
	 There exists a neighborhood $Q_0$ of $\bzero$, such that for any $\QQ \in Q_0$ 
	 having singular values $\{ q_1, \cdots, q_m \}$,
	\begin{equation*}
        \sup_{\UU \in O(m, m)} \left\{ \sum_{\vert \nn \vert \ge 2} c_{\nn} (\UU)^2 \prod_{i=1}^{m} q_i^{n_i} \right\} <
         \frac{1}{2 \alpha} \sum_{i=1}^{m} q_i^2 = \frac{1}{2 \alpha} \norm{\QQ}_{\rm F}^2,
    \end{equation*}
	and that
	\begin{equation*}
		\alpha < \frac{\inf_{\QQ \notin Q_0} I(\QQ)}{D_{\sKL} (P \Vert \sN (\bzero, \id_m))}.
	\end{equation*}
	Then, as long as $\alpha < \alpha_{\rm lb} (P)$, $P$ is $(\alpha, m)$-feasible.
\end{thm}

\section{Main results: Supervised learning}\label{sec:MainSupervised}

In order to motivate our generalization of previous results to supervised learning, 
consider the following empirical risk minimization (ERM) problem:
\begin{align}\label{eq:EmpiricalRisk}
	\hR_n^{\star}(\XX,\yy):= \inf_{h\in \cH_m} \inf_{\WW\in O(d,m)}\frac{1}{n}\sum_{i=1}^n
	L(y_i,h(\WW^\top\xx_i))\, .
\end{align}
Here the minimization is over $\WW$ in $O(d, m)$, the set of $d\times m$ orthogonal matrices,
and $h$ in a set $\cH_m$ of functions $h:\R^m\to\R$. For instance, we could consider
$\cH_m:=\{h:\R^m\to\R:\|h\|_{\Lip}\le C\}$ (the set of functions with Lipschitz modulus at most 
$C$), or $\cH_m:=\{h(\uu) = \sum_{i=1}^ma_i\sigma(u_i):\;\|\aa\|_1\le B\}$
so that $\xx\mapsto h(\WW^\top\xx)$ is a two-layer neural network with total second-layer
weights bounded by $B$ and orthonormal first-layer weights.
Finally  $\cH_m:=\{h(\uu) = \sum_{i=1}^ma_i\sigma(\<\bb_i,\uu\>):\;\|\aa\|_1\le B\,, \
\|\bb_i\|_2 \le B, \ \forall i \le m \}$ allows to treat all two-layer networks with $m$ neurons and bounded first and second-layer weights.

The next proposition establishes that the minimum of the empirical risk is
given asymptotically by a variational problem over the feasibility set $\cuF_{m,\alpha}^{\varphi}$.
\begin{prop}\label{prop:ERM_asymptotics}
	Assume $n,d\to\infty$ with $n/d\to\alpha\in (0,\infty)$. 
	Further assume $L:\{+1,-1\}\times\R\to\R$ to be bounded continuous, and 
	$\cH_m\subseteq\{h:\R^m\to\R:\|h\|_{\Lip}\le C\}$, for some constant $C$. Then, we have
	\begin{equation}
		\plimsup_{n,d\to\infty}\hR_n^{\star}(\XX,\yy) = \inf_{h\in \cH_m} 
		\inf_{P\in\cuF^{\varphi}_{m,\alpha}}
		\int_{\{\pm 1\} \times \R^m} L(y,h(\zz))\, P(\d y,\d \zz)\, .\label{eq:ERM-asymp}
	\end{equation}
    (For a sequence of random variables $\{Z_n\}_{n\ge 1}$ and a constant $c$,
    $\plimsup_{n \to\infty}Z_n=c$ if $c$ is the infimum of all $c'$ such that 
    $\lim_{n \to\infty}\P(Z_n\ge c')=0$.)
\end{prop}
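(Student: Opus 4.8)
\textbf{Proof plan for Proposition~\ref{prop:ERM_asymptotics}.}
The plan is to prove the two inequalities separately, using the definition of $\plimsup$ and the definition of $\cuF^{\varphi}_{m,\alpha}$. Throughout, write $R_\star := \inf_{h\in\cH_m}\inf_{P\in\cuF^{\varphi}_{m,\alpha}}\int L(y,h(\zz))\,P(\d y,\d\zz)$ for the right-hand side. The key structural fact that makes the argument go through is that, for fixed bounded continuous $L$ and fixed $C$-Lipschitz $h$, the functional $P\mapsto \int L(y,h(\zz))\,P(\d y,\d\zz)$ is bounded and continuous with respect to weak convergence on $\cuP(\{\pm 1\}\times\R^m)$ — indeed $(y,\zz)\mapsto L(y,h(\zz))$ is bounded and continuous since $h$ is continuous and $L$ is continuous in both arguments (with the discrete topology on $\{\pm 1\}$). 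This lets us transfer weak convergence of empirical measures into convergence of empirical risks.

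\emph{Upper bound} ($\plimsup \hR_n^{\star}\le R_\star$). Fix $\veps>0$ and pick $h\in\cH_m$ and $P\in\cuF^{\varphi}_{m,\alpha}$ with $\int L(y,h(\zz))\,P(\d y,\d\zz)\le R_\star+\veps$. By feasibility of $P$, there is a sequence $\WW_n=\WW_n(\XX,\yy,\omega)\in O(d,m)$ with $\hat P_{n,\WW_n}:=n^{-1}\sum_i\delta_{(y_i,\WW_n^\top\xx_i)}\stackrel{w}{\Rightarrow}P$ in probability. Since $(y,\zz)\mapsto L(y,h(\zz))$ is bounded continuous, weak convergence in probability gives $\frac1n\sum_i L(y_i,h(\WW_n^\top\xx_i))=\int L\,\d\hat P_{n,\WW_n}\to\int L\,\d P$ in probability. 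Because $\hR_n^{\star}\le \frac1n\sum_i L(y_i,h(\WW_n^\top\xx_i))$ for this particular choice, we get $\P(\hR_n^{\star}\ge R_\star+2\veps)\to 0$, hence $\plimsup_n\hR_n^{\star}\le R_\star+2\veps$; let $\veps\downarrow 0$. (One subtlety: the approximating $P$ may fail to achieve the infimum, which is why we take an $\veps$-minimizer; also $L$ bounded ensures $\int L\,\d P$ is finite so the argument is not vacuous.)

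\emph{Lower bound} ($\plimsup \hR_n^{\star}\ge R_\star$). This is the harder direction, and it is the main obstacle: we must show that \emph{no matter how} $\WW$ and $h$ are chosen (adaptively in $\XX,\yy,\omega$), the resulting empirical risk cannot asymptotically beat $R_\star$. The idea is a compactness/subsequence argument. Suppose not: then there is $c<R_\star$ and a subsequence along which $\P(\hR_n^{\star}\le c)\not\to 0$, so along a further subsequence $\P(\hR_n^{\star}\le c)\ge\eta>0$ for all $n$. For each $n$ in this subsequence, on the event $\{\hR_n^{\star}\le c\}$ pick (near-)optimal $h_n\in\cH_m$ and $\WW_n\in O(d,m)$; then the empirical measure $\hat P_{n,\WW_n}$ satisfies $\int L(y,h_n(\zz))\,\d\hat P_{n,\WW_n}\le c+o(1)$. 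The tightness of $\{\hat P_{n,\WW_n}\}$ follows because $\{\WW_n^\top\xx_i\}$ has uniformly bounded second moments with high probability (each coordinate is $\sN(0,1)$, so $\E\|\WW_n^\top\xx_i\|_2^2=m$ and a union/Markov bound controls the empirical second moment); combined with Prokhorov this gives a weakly convergent (in distribution, along a further subsequence) limit $P$, which one argues lies in $\cuF^{\varphi}_{m,\alpha}$ — this requires massaging the "in probability / in distribution" convergence and the definition of feasibility, e.g. via a diagonal/Skorokhod construction so that a deterministic-limit statement holds along a subsequence with positive probability, matching the structure of \eqref{eq:FeasibleFirst}. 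The family $\cH_m$ being uniformly $C$-Lipschitz and (one should check, or restrict to) uniformly bounded on compacts lets us extract, via Arzel\`a--Ascoli, a subsequential uniform-on-compacts limit $h_\infty$ of the $h_n$, still $C$-Lipschitz; pushing the inequality to the limit — using that $L(y,h_n(\zz))\to L(y,h_\infty(\zz))$ uniformly on compacts together with tightness of $P$ to handle the tails — yields $\int L(y,h_\infty(\zz))\,P(\d y,\d\zz)\le c<R_\star$, contradicting the definition of $R_\star$ since $(h_\infty,P)$ is an admissible pair.

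\textbf{Main obstacle.} The delicate point throughout the lower bound is the interplay between "in probability" convergence in the definition of feasibility and the almost-sure / subsequential extractions needed for compactness: one has to be careful that the limit $P$ extracted from a positive-probability event is genuinely $(\alpha,m)$-feasible in the sense of the deterministic-limit definition \eqref{eq:FeasibleFirst}. I expect this is handled by Lemma~\ref{lem:closure_prop} (closure of the feasibility set under weak limits) together with a careful diagonalization over $\veps$ in the metric $d_W$; the rest (tightness from second-moment bounds, Arzel\`a--Ascoli for $h_n$, continuity of the risk functional) is routine.
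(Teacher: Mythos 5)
Your upper bound is the paper's argument verbatim: fix an $\veps$-minimizing $(h,P)$, use the defining sequence $\WW_n$, and transfer weak convergence of $\hat P_{n,\WW_n}$ to convergence of the empirical risk (boundedness and continuity of $(y,\zz)\mapsto L(y,h(\zz))$). That direction is fine.

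For the lower bound the paper takes a genuinely different route, and your contradiction argument has a gap precisely where you flag it. The paper does not try to extract a deterministic limit $P$ and a limit $h_\infty$ jointly. Instead it introduces the collection $\widetilde{\cuF}^\varphi_{m,\alpha}$ of feasible \emph{random} probability measures (limits in distribution rather than in probability), extracts a subsequential weak limit in distribution $\tilde P$ of the optimal $\hat P_n$ by tightness (deduced from Theorem~\ref{thm:W2_outer_bd}), and pushes the inequality through with $\inf_{h\in\cH_m}$ kept \emph{inside} the expectation via Fatou + Skorokhod. This sidesteps two problems your plan runs into: (i) the need for Arzel\`a--Ascoli on the random minimizers $h_n$, which as you note requires extra uniform-boundedness hypotheses, and whose limit $h_\infty$ may not even lie in $\cH_m$ if that set is not closed; (ii) the pairing of a deterministic limit $P$ with a deterministic limit $h_\infty$, which is not straightforward when both are chosen adaptively in $\XX,\yy,\omega$. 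The paper then shows that every $P$ in the support of $\tilde P$ is $(\alpha,m)$-feasible and concludes $\E_{\tilde P}\inf_h\int L\,\d\tilde P\ge R_\star$.

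The single most important missing ingredient in your proposal is the mechanism for upgrading ``$\hat P_{n,\WW_n}$ is near $P$ with probability bounded away from $0$'' to the feasibility statement ``$\inf_{\WW}d_{\rm BL}(\hat P_{n,\WW},P)\to 0$ in probability.'' You attribute this to Lemma~\ref{lem:closure_prop}, but that lemma is a closure statement for deterministic weak limits and does not resolve the in-probability versus in-distribution mismatch. What the paper actually uses is the sharp-concentration step from the proof of Theorem~\ref{thm:inner_bd_1D}: the map $\XX\mapsto\inf_{\WW\in O(d,m)}d_{\rm BL}(\hat P_{n,\WW},P)$ is $O(1/\sqrt n)$-Lipschitz in Frobenius norm, hence by Gaussian concentration it concentrates around its mean, so a positive-probability event of being $\le\veps$ forces the mean $\le\veps+o(1)$ and thence high probability of being $\le\veps+o(1)$. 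Without naming that mechanism, the hard step of your lower bound is not filled. A smaller issue: the negation of $\plimsup\hR_n^\star\ge R_\star$ gives $\P(\hR_n^\star<c')\to 1$ for some $c'<R_\star$, which is stronger than the ``$\P(\hR_n^\star\le c)\ge\eta>0$ along a subsequence'' you begin from; the full-probability version is what lets you avoid arbitrary subsequence juggling.
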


\subsection{Wasserstein outer bound for $m = 1$}

Throughout this section, we assume $m=1$, so that $\WW=\ww\in\S^{d-1}$.
Recall the data model defined in Section \ref{sec:NullModel},
whereby $\xx_i\sim\normal(\bzero,\id_d)$ and $y_i\in\{+1,-1\}$ are such that
\begin{equation*}
	\P(y_i=+1|\xx_i)= \varphi(\VV^\top\xx_i) = 1 - \P(y_i=-1|\xx_i),
\end{equation*}
with $\VV\in O(d,k)$. Notice that we allow for $k>1$.

We are interested in the empirical distribution of $\{(y_i,\<\ww,\xx_i\>)\}_{i\le n}$, denoted by
\begin{equation*}
	\hat{P}_{n, \ww} := \frac{1}{n} \sum_{i = 1}^{n} \delta_{\left( y_i, \langle \xx_i, \ww \rangle \right)}.
\end{equation*}
We begin with an elementary calculation that characterizes the asymptotics of this 
distribution for a vector $\ww$ that does not depend on the data $\XX,\yy$.
\begin{lem}\label{lem:key_triple}
	Let the random variables $(Y, G, Z)$ be such that
	\begin{equation}\label{eq:key_triple}
		(Y, G) \perp Z, \ G \sim \sN \left( \bzero, \id_k \right), \ Z \sim \sN (0, 1), \ \text{and} \ \P (Y = +1 \vert G) = \varphi(G) = 1 - \P (Y = -1 \vert G)\, .
	\end{equation}
	For $\ww \in \S^{d - 1}$ independent of $\XX,\yy$,  define
	\begin{equation}\label{eq:p_w_dist}
		P_{\ww} = \Law \left( Y, \ww^\top \VV G + \sqrt{1 - \norm{\VV^\top \ww}_2^2} \cdot Z \right)\,.
	\end{equation}
	Then, in the limit $n,d\to\infty$ (not necessarily proportionally), we have, in probability,
	\begin{align}
		\lim_{n,d\to\infty}d_{\sKS}(\hat{P}_{n, \ww},P_{\ww}) = 0\,.
	\end{align}
\end{lem}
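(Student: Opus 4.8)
The plan is to reduce the statement to a standard Glivenko--Cantelli-type argument by exploiting the rotational invariance of the Gaussian design. Since $\ww\in\S^{d-1}$ is independent of $(\XX,\yy)$, and since the conditional law of $y_i$ given $\xx_i$ depends only on $\VV^\top\xx_i$, the joint law of $\{(y_i,\langle\ww,\xx_i\rangle)\}_{i\le n}$ is invariant under simultaneously rotating $\ww$ and the rows of $\XX$. Concretely, for each $i$ write $\xx_i = \VV\VV^\top\xx_i + (\id_d-\VV\VV^\top)\xx_i$; then $\bgg_i:=\VV^\top\xx_i\sim\sN(\bzero,\id_k)$ carries all the information about $y_i$, and $\langle\ww,\xx_i\rangle = \langle\VV^\top\ww,\bgg_i\rangle + \langle(\id_d-\VV\VV^\top)\ww,\xx_i\rangle$. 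The first term equals $\ww^\top\VV\bgg_i$, and the second is, conditionally on $\ww$, a centered Gaussian of variance $1-\|\VV^\top\ww\|_2^2$ that is independent of $(\bgg_i,y_i)$ (because $(\id_d-\VV\VV^\top)\xx_i$ is independent of $\VV^\top\xx_i$ for a standard Gaussian vector). Hence, conditionally on $\ww$, the pairs $\{(y_i,\langle\ww,\xx_i\rangle)\}_{i\le n}$ are i.i.d.\ with common law exactly $P_\ww$ as defined in \eqref{eq:p_w_dist}.

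Given this observation, the lemma becomes a uniform law of large numbers. First I would condition on $\ww$, so that $\hat P_{n,\ww}$ is the empirical measure of $n$ i.i.d.\ samples from $P_\ww$. The Kolmogorov--Smirnov distance $d_{\sKS}(\hat P_{n,\ww},P_\ww)$ is the sup over half-spaces (more precisely, over "lower-left quadrants" in $\{\pm1\}\times\R$, of which there are effectively two one-parameter families indexed by the threshold on the second coordinate, one for $y=+1$ and one for $y=-1$) of the difference between empirical and true probabilities. For a fixed $\ww$, the classical Dvoretzky--Kiefer--Wolfowitz inequality (applied to each of the two conditional CDFs) gives $\P(d_{\sKS}(\hat P_{n,\ww},P_\ww)>\veps \mid \ww)\le 4e^{-2n\veps^2/C}$ for an absolute constant, uniformly in $\ww$. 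Taking expectation over $\ww$ removes the conditioning without changing the bound, and then Borel--Cantelli (or simply the fact that $4e^{-2n\veps^2/C}\to0$) yields convergence in probability, indeed almost surely, of $d_{\sKS}(\hat P_{n,\ww},P_\ww)$ to $0$ as $n\to\infty$. Note the limit holds with $n,d\to\infty$ in any fashion, since once we condition on $\ww$ the dimension $d$ plays no further role.

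I would also spell out the one computation needed to justify the decomposition: if $\bz\sim\sN(\bzero,\id_d)$ and $\bP=\id_d-\VV\VV^\top$ is the orthogonal projector onto the orthogonal complement of $\col(\VV)$, then $\VV^\top\bz$ and $\bP\bz$ are jointly Gaussian with zero cross-covariance, hence independent; and for fixed unit $\ww$, $\langle\bP\ww,\bz\rangle = \langle\bP\ww,\bP\bz\rangle$ is centered Gaussian with variance $\|\bP\ww\|_2^2 = 1-\|\VV^\top\ww\|_2^2$, independent of $\VV^\top\bz$ and therefore of $y_i$. This gives, for each $i$, the representation $(y_i,\langle\ww,\xx_i\rangle)\stackrel{d}{=}(Y,\ww^\top\VV G+\sqrt{1-\|\VV^\top\ww\|_2^2}\,Z)$ with $(Y,G,Z)$ as in \eqref{eq:key_triple}, jointly over $i$ with independence across $i$.

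The only mild subtlety --- and the step I expect to need the most care --- is making the supremum defining $d_{\sKS}$ on $\{\pm1\}\times\R^m$ (with $m=1$ here) genuinely a supremum over a Vapnik--Chervonenkis class of bounded complexity, so that a single DKW-type bound applies uniformly; this is routine since the relevant class is just products of the two-point class on $\{\pm1\}$ with intervals $(-\infty,t]$ on $\R$, which has VC dimension $O(1)$, but it should be stated cleanly. Everything else is bookkeeping: the key structural fact is the conditional-i.i.d.\ representation, after which standard empirical-process theory finishes the argument.
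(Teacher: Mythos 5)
Your proof is correct and follows essentially the same route as the paper: both decompose $\langle\ww,\xx_i\rangle$ into its component along $\mathrm{col}(\VV)$ and the orthogonal component, exploit the independence of $\ww$ from $(\XX,\yy)$ to obtain a conditional i.i.d.\ representation of $\{(y_i,\langle\ww,\xx_i\rangle)\}_{i\le n}$ with common law $P_\ww$, and then invoke a Glivenko--Cantelli-type result. You additionally invoke the DKW inequality to get a quantitative rate, which the paper does not bother to state, but this is a cosmetic strengthening rather than a different argument.
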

In other words, the empirical distribution we are interested in, $\hat{P}_{n, \ww}$,
is close to  $P_{\ww}$ for `most' directions $\ww$. This is analogous to the result of
 \cite{diaconis1984asymptotics} establishing, in the unsupervised setting, 
 that one-dimensional projections are Gaussian along most directions. (Note
 however that, unlike \cite{diaconis1984asymptotics}, we assume here a specific data distribution.)

We next quantify the $W_2$ deviation of $\hat{P}_{n, \ww}$ from $P_\ww$ along
`atypical' directions. It is useful to define a modified $W_2$ distance for future applications.
\begin{defn}[$\eta$-constrained $W_2$ distance]\label{def:cons_wp_metric}
	Let $P$ and $Q$ be two probability measures on $\R^d$. 
	For any $\eta \ge 0$, the $\eta$-constrained $W_2$ distance between $P$ and $Q$ 
	is defined by
	\begin{equation}\label{eq:cons_wp_metric}
		W_{2}^{(\eta)} ( P, Q ) = \left( 
		\inf_{\gamma \in \Gamma^{(\eta)}(P, Q)} \int_{\R^d \times \R^d} 
		\norm{\xx - \yy}_2^2 \gamma (\d \xx \times \d \yy) \right)^{1/2},
	\end{equation}
	where $\Gamma^{(\eta)}(P, Q)$ denotes the set of all couplings $\gamma$ of 
	$P$ and $Q$ which satisfy
	\begin{equation*}
		\int_{\R^d \times \R^d} \<\ee_1, \xx - \yy \>^2 \, \gamma (\d \xx \times \d \yy)  \le \eta^2\,,
	\end{equation*}
	where $\ee_1 = (1, 0, \cdots, 0)^\top$. By convention 
	$W_{2}^{(\eta)} ( P, Q ) = \infty$ whenever $\Gamma^{(\eta)}(P, Q) = \emptyset$. 
\end{defn}
We notice that this definition can be easily generalized to $W_p$ distance for every $p \ge 1$, and to projections along more than 
one direction.

\begin{thm}\label{thm:W2_outer_bd}
	Consider i.i.d. data $(\yy, \XX) = \{ (y_i, \xx_i) \}_{i \in [n]}$, 
	with $\xx_i\sim\sN(\bzero,\id_d)$ and $\P \left( y_i =+1  \vert \xx_i \right) = \varphi(\VV^\top \xx_i )$,
	cf. Section~\ref{sec:NullModel}. Assume that $n, d \to \infty$ with 
	$n / d \to \alpha \in (0, \infty)$ with $k$ fixed.
	Then for all $\eta > 0$, we have, almost surely,
	\begin{equation}\label{eq:W2_as_conv}
		\lim_{n,d \to \infty} \max_{\norm{\ww}_2 = 1} \left( W_2^{(\eta)} \left( \hat{P}_{n, \ww}, P_{\ww} \right) - \frac{1}{\sqrt{\alpha}} \sqrt{1 - \norm{\VV^\top \ww}_2^2} \right)_{+} =0\, .
	\end{equation}
\end{thm}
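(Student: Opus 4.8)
The plan is to establish the one‑sided bound by splitting the test direction into a signal component in $\col(\VV)$ and a noise component in its orthogonal complement, and treating the two contributions separately. Fix an orthonormal basis $\bB\in\R^{d\times(d-k)}$ of $\col(\VV)^\perp$ and write $\ww=\VV\rr+\sqrt{1-\|\rr\|_2^2}\,\bB\tilde\uu$ with $\rr=\VV^\top\ww$ and $\tilde\uu\in\S^{d-k-1}$ (the degenerate case $\|\rr\|_2=1$ is covered by the signal step alone). Setting $\bgg_i:=\VV^\top\xx_i$ and $\tilde\bz_i:=\bB^\top\xx_i$, these form two independent i.i.d.\ standard Gaussian families, $y_i$ is a function of $\bgg_i$ (and external randomness), $\langle\xx_i,\ww\rangle=\rr^\top\bgg_i+\sqrt{1-\|\rr\|_2^2}\,\langle\tilde\bz_i,\tilde\uu\rangle$, and $P_\ww=\Law\!\big(Y,\ \rr^\top G+\sqrt{1-\|\rr\|_2^2}\,Z\big)$ with $(Y,G)\perp Z$ as in \eqref{eq:key_triple}.

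For the signal part, since $\{(y_i,\bgg_i)\}$ are i.i.d.\ on $\{\pm1\}\times\R^k$ with finite second moment, $\frac1n\sum_i\delta_{(y_i,\bgg_i)}\to\Law(Y,G)$ in $W_2$ a.s.; pushing forward by $(y,g)\mapsto(y,\rr^\top g)$ — which is $1$-Lipschitz in $g$ uniformly over $\|\rr\|_2\le1$ and leaves the label untouched — gives $\varepsilon_n:=\sup_{\|\rr\|_2\le1}W_2^{(\eta)}\big(\tfrac1n\sum_i\delta_{(y_i,\rr^\top\bgg_i)},\Law(Y,\rr^\top G)\big)\to0$ a.s.\ for every $\eta>0$ (the label marginals agree up to $O(n^{-1/2})$ in total variation, which absorbs the $\eta$-constraint). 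A gluing that uses the independence of $\{\tilde\bz_i\}$ from $\{(\bgg_i,y_i)\}$ — so that the signal and noise perturbations are asymptotically uncorrelated and the $\sqrt{1-\|\rr\|_2^2}$ factor is preserved — then bounds $W_2^{(\eta)}(\hat P_{n,\ww},P_\ww)$ by $\varepsilon_n+\sqrt{1-\|\rr\|_2^2}\,\kappa_n+o(1)$, where $\kappa_n:=\max_{\tilde\uu\in\S^{d-k-1}}W_2\big(\tfrac1n\sum_i\delta_{\langle\tilde\bz_i,\tilde\uu\rangle},\ \sN(0,1)\big)$. So the theorem reduces to the a.s.\ statement $\kappa_n\le 1/\sqrt\alpha+o(1)$.

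The bound on $\kappa_n$ is the heart of the matter and is essentially the quantitative content behind the outer bound of Theorem~\ref{thm:outer_bound_1_dim}. Write $\tilde\bZ\in\R^{n\times(d-k)}$ for the matrix with rows $\tilde\bz_i^\top$ and $\bv:=\tilde\bZ\tilde\uu$ for a fixed $\tilde\uu$. Using the explicit increasing‑rearrangement (quantile) coupling — with $v_{(1)}\le\cdots\le v_{(n)}$ the ordered entries of $\bv$ and $m_i$ the conditional mean of $\sN(0,1)$ over its $i$‑th $n$‑quantile cell — one gets $W_2\big(\tfrac1n\sum_i\delta_{v_i},\sN(0,1)\big)^2\le \|\bv\|_2^2/n-\tfrac2n\sum_i v_{(i)}m_i+1+o(1)$ with a deterministic $o(1)$ (the intra‑cell variances). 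The first term is at most $\|\tilde\bZ\|_{\op}^2/n\to(1+1/\sqrt\alpha)^2$ uniformly in $\tilde\uu$ by Bai–Yin. The real work is a matching lower bound on the cross term $\tfrac1n\sum_i v_{(i)}m_i$, uniform over all directions: if $\|\bv\|_2^2/n\le 1/\alpha-1$ (possible only when $\alpha<1$) it is $\ge0$ by Chebyshev's sum inequality and we are done; otherwise one invokes the bi‑orthogonal invariance of the Gaussian ensemble — the left singular matrix $\bO$ of $\tilde\bZ$ is Haar and independent of the singular values — so $\bv=\bO\bc$ with $\bc$ in the ellipsoid of semi‑axes $(s_j)$, and for each fixed $\bc$, $\bO\bc$ is uniform on the sphere of radius $\|\bc\|_2$ with empirical distribution converging to $\sN(0,\|\bc\|_2^2/n)$, hence at $W_2$‑distance $|\|\bc\|_2/\sqrt n-1|\le 1/\sqrt\alpha+o(1)$ from $\sN(0,1)$ since $\|\bc\|_2/\sqrt n$ lies between the extreme singular values of $\tilde\bZ/\sqrt n$. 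Upgrading this from "each fixed $\bc$" to "uniformly over the ellipsoid" is the delicate step: it should be done by a multi‑scale covering argument, using that $\bc\mapsto W_2(\mathrm{emp}(\bO\bc),\sN(0,1))$ is $n^{-1/2}$-Lipschitz (because $\bO$ is exactly orthogonal) together with the concentration of $\mathrm{emp}(\bO\bc)$ around $\sN(0,\|\bc\|_2^2/n)$ and the fact that the $\bc$'s of large norm are confined near the low‑multiplicity top of the spectrum; alternatively, by contradiction, a violating sequence of directions would produce (after passing to a subsequence along which the second moments stay bounded) a limiting $(\alpha,1)$-feasible law at $W_2$-distance $>1/\sqrt\alpha$ from $\sN(0,1)$, contradicting Theorem~\ref{thm:outer_bound_1_dim}.

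Assembling: with $\kappa_n\le 1/\sqrt\alpha+o(1)$ in hand, for every $\ww$ we obtain $W_2^{(\eta)}(\hat P_{n,\ww},P_\ww)\le \varepsilon_n+\sqrt{1-\|\rr\|_2^2}\,\kappa_n+o(1)\le \sqrt{1-\|\VV^\top\ww\|_2^2}\big(1/\sqrt\alpha+o(1)\big)+o(1)$ with the $o(1)$'s uniform in $\ww$ and the $\eta$-constraint satisfied for every fixed $\eta>0$ (the only label mass moved in the construction is the $O(n^{-1/2})$ needed to reconcile the label marginals); taking the positive part, the maximum over $\ww$, and a Borel–Cantelli/monotonicity step gives the claimed almost‑sure limit. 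Everything outside the $\kappa_n$ estimate is bookkeeping. I expect the main obstacle to be exactly that sharp uniform‑over‑directions bound $\kappa_n\le 1/\sqrt\alpha+o(1)$: a naive $\varepsilon$-net over $\tilde\uu$ is too crude, since $\tilde\uu\mapsto\mathrm{emp}(\tilde\bZ\tilde\uu)$ has Lipschitz constant $\asymp\|\tilde\bZ\|_{\op}/\sqrt n\asymp1$ and a union bound over an exponentially large net cannot see the optimal constant — which is precisely why one must pass to the $\bO\bc$ representation (Lipschitz constant $n^{-1/2}$) and still localize the large‑norm $\bc$'s, or else route through Theorem~\ref{thm:outer_bound_1_dim}.
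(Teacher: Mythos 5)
Your decomposition of $\ww$ into the signal component $\rr = \VV^\top\ww$ and the orthogonal noise direction $\tilde\uu$ is the same as the paper's, and your bookkeeping for the signal part and the gluing (including the observation that the label marginals mismatch by $O(n^{-1/2})$, which is eventually below any fixed $\eta$) is sound. But the paper attacks the hard part — the uniform bound over all noise directions — with an entirely different tool: it introduces Lagrange multipliers for the constraint $\uu = \ZZ\ww_\perp$, obtains a min--max over $(\ww_\perp,\uu,\blambda)$ in which $\ZZ$ enters only through the bilinear form $\blambda^\top\ZZ\ww_\perp$, and then applies Gordon's Gaussian comparison inequality (the CGMT of Lemma~\ref{lem:gordon}) to replace $\blambda^\top\ZZ\ww_\perp$ by $\|\blambda\|_2\langle\ww_\perp,\vv\rangle + \|\ww_\perp\|_2\langle\blambda,\hh\rangle$ for independent Gaussian vectors $\vv,\hh$. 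The resulting decoupled scalar min--max is then analyzed directly (LLN on $\|\vv\|_2/\sqrt n\to1/\sqrt\alpha$, concentration of $W_2$ for i.i.d.\ samples), and the uniformity over $\ww$ is handled \emph{automatically} by the min--max structure — there is no net, no chaining. Your approach, by contrast, routes through rotational invariance and a covering argument on the ellipsoid of $\bc = \bS\bT^\top\tilde\uu$; this is a genuinely different strategy.

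The trouble is that your strategy has a gap precisely at the step you flag as "the delicate step," and neither of your proposed fixes closes it. (i) The $\bO\bc$ representation improves the Lipschitz constant in $\bc$ to $n^{-1/2}$, which lets you take a coarse mesh, but the bottleneck is the union bound itself, not the mesh: L\'evy concentration gives $\P(|W_2(\mathrm{emp}(\bO\bc),\sN(0,1)) - m(\bc)| > t) \le \exp(-cnt^2)$, the net has size $\exp(\Theta(d\log n))$, and since $d/n\to 1/\alpha$ the union bound forces $t \gtrsim \sqrt{\log n}$, not $o(1)$. Dudley/generic chaining with these increments gives a bound of order $\sqrt{d/n}\cdot\mathrm{diam}$, i.e.\ $\Theta(1)$ with an absolute constant, again not the sharp constant $1/\sqrt\alpha$. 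The remark that "the $\bc$'s of large norm are confined near the low-multiplicity top of the spectrum" gestures at a multi-scale refinement but is not an argument: the set of $\bc$'s with $\|\bc\|_2/\sqrt n$ near $1$ (the bulk) is still $\Theta(d)$-dimensional and already saturates the $\sqrt{\log n}$ loss. (ii) Your fallback — deduce the uniform bound on $\kappa_n$ by contradiction from Theorem~\ref{thm:outer_bound_1_dim} — is circular: the paper \emph{proves} Theorem~\ref{thm:outer_bound_1_dim} by specializing Theorem~\ref{thm:W2_outer_bd} to $\varphi\equiv1/2$, so that theorem is not available to you here. In short, you have correctly identified the crux (the sharp, direction-uniform bound $\kappa_n\le 1/\sqrt\alpha + o(1)$), but the mechanism you propose to establish it does not deliver the right constant, and the alternative you offer is circular; the CGMT step in the paper is precisely what makes the sharp constant achievable without any covering.
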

Informally, this theorem ensures that the $W_2$ distance between the empirical distribution
of $\{\<\ww,\xx_i\>\}_{i\le n}$ and the second marginal distribution of $P_{\ww}$ is upper bounded by
$\sqrt{(1 - \Vert \VV^\top \ww \Vert_2^2)/\alpha}$ if we match the two probability measures 
on their first marginal (by letting $\eta\to 0$).

\begin{rem}
	Since the constrained Wasserstein distance always dominates the original one, an immediate consequence of Theorem~\ref{thm:W2_outer_bd} is that, almost surely,
	\begin{equation*}
		\lim_{n,d \to \infty} \max_{\norm{\ww}_2 = 1} \left( W_2 \left( \hat{P}_{n, \ww}, P_{\ww} \right) - \frac{1}{\sqrt{\alpha}} \sqrt{1 - \norm{\VV^\top \ww}_2^2} \right)_+ =0,
	\end{equation*}
    which further implies that
    \begin{equation*}
    	\cuF^{\varphi}_{1,\alpha} \subseteq \bigcup_{\norm{\bbeta}_2 \le 1} \left\{ P \in \cuP ( \{ \pm 1 \} \times \R): W_2 \left(P, \ \Law \left( Y, \bbeta^\top G + \sqrt{1 - \norm{\bbeta}_2^2} Z \right) \right) \le \frac{\sqrt{1 - \norm{\bbeta}_2^2}}{\sqrt{\alpha}} \right\}.
    \end{equation*}
	While the above bounds are for $m=1$, any one-dimensional projection of a probability distribution
	in $\cuF_{m, \alpha}^{\varphi}$ must belong to  $\cuF_{1, \alpha}^{\varphi}$.
	We thus obtain the following outer bound on $\cuF_{m, \alpha}^{\varphi}$:
	\begin{equation*}
		\cuF^{\varphi}_{m,\alpha} \subseteq \bigcap_{\norm{\btheta}_2 = 1} \bigcup_{\norm{\bbeta}_2 \le 1} \left\{ P \in \cuP ( \{ \pm 1 \} \times \R^{m}): W_2 \left(\btheta^\top P, \ \Law \left( Y, \bbeta^\top G + \sqrt{1 - \norm{\bbeta}_2^2} Z \right) \right) \le \frac{\sqrt{1 - \norm{\bbeta}_2^2}}{\sqrt{\alpha}} \right\},
	\end{equation*}
	where we recall the triple $(Y, G, Z)$ from Lemma~\ref{lem:key_triple}, and define $\btheta^\top P = \Law (Y, \btheta^\top U)$ for $P = \Law (Y, U)$.
\end{rem}

\begin{rem}\label{rem:sub_gauss_resp}
	While for simplicity we state Theorem \ref{thm:W2_outer_bd} for the case of binary 
	responses $y_i\in \{+1,-1\}$, the proof actually applies to the case of continuous sub-Gaussian responses 
	$y_i\in \R$ as well.
\end{rem}

\subsection{KL-Wasserstein outer bound for general $m$}\label{sec:supervised_kl}
In this section we generalize our KL-Wasserstein outer bound for unsupervised projection pursuit (Theorem~\ref{thm:div_outer_bd}) to the supervised case, and consequently derive an upper bound on the Wasserstein radius of $\cuF_{m, \alpha}^{\varphi}$ for general $m > 1$. To begin with, for $\WW \in O(d, m)$ we denote by
\begin{equation}
	\hat{P}_{n, \WW} := \frac{1}{n} \sum_{i=1}^{n} \delta_{(y_i, \WW^\top \xx_i)}
\end{equation}
the empirical distribution of $\{ (y_i, \WW^\top \xx_i) \}_{i=1}^{n}$. Similarly as Eq.~\eqref{eq:key_triple}, we define the random variables $(Y, G, Z_m)$ to be such that
\begin{equation}\label{eq:key_triple_m}
	(Y, G) \perp Z_m, \ G \sim \sN \left( \bzero, \id_k \right), \ Z_m \sim \sN (\bzero, \id_m), \ \text{and} \ \P (Y = +1 \vert G) = \varphi(G) = 1 - \P (Y = -1 \vert G)\, .
\end{equation}

\begin{thm}\label{thm:KL_W2_sup}
	For $a, b > 0$, define $\cuS_m^{\varphi} (a, b)$ as the set of probability measures $P = \Law(Y, G, U) \in \cuP(\{ \pm 1 \} \times \R^{k+m})$ satisfying the following conditions:
	\begin{itemize}
		\item [(a)] The marginal distribution of $(Y, G)$ is the same as specified in Eq.~\eqref{eq:key_triple_m}, namely
		    \begin{equation*}
		    	G \sim \sN \left( \bzero, \id_k \right) \ \text{and} \ \P (Y = +1 \vert G) = \varphi(G) = 1 - \P (Y = -1 \vert G)\, .
		    \end{equation*}
		\item [(b)] $P$ lies in a neighborhood of $\Law(Y, G, Z_m)$ defined via the $W_2$ distance and KL divergence: There exists $Q = \Law(Y, G, H) \in \cuP(\{ \pm 1 \} \times \R^{k+m})$ satisfying the above Condition (a), and that
		    \begin{equation*}
		    	W_2 (P, Q) \le a, \quad D_{\sKL} (Q \Vert \Law(Y, G, Z_m)) \le b.
		    \end{equation*}	    
	\end{itemize}
	Then, there exist absolute constants $C > 0$ and $\veps_0 > 0$ such that
	\begin{align*}
		\cuF_{m,\alpha}^{\varphi} \subseteq \bigcup_{\QQ \in \R^{m \times k}: \ \QQ \QQ^\top \preceq \id_m} \Bigg\{\, & \Law \left(Y, \QQ G + (\id_m - \QQ \QQ^\top)^{1/2} U \right): \\
		 & P = \Law(Y, G, U) \in \bigcap_{\veps \in (0, \veps_0)} \cuS_m^{\varphi} \left( \sqrt{\frac{m}{\alpha}} \veps \sqrt{\log \left( \frac{C}{\veps} \right)}, \,\frac{m}{\alpha} \log \left( \frac{C}{\veps} \right) \right) \Bigg\}.
	\end{align*}
\end{thm}

\begin{proof}
	Similar to the proof of Theorem~\ref{thm:W2_outer_bd}, for each $i \in [n]$ we obtain the following decomposition
	\begin{equation*}
		\left( y_i, \WW^\top \xx_i \right) = \left( y_i, \WW^\top \VV \bgg_i + \WW^\top \VV_{\perp} \zz_i \right),
	\end{equation*}
	where $(y_i, \bgg_i) \perp \zz_i$, $\bgg_i \sim \normal (\bzero, \id_k)$, $\zz_i \sim \normal (\bzero, \id_{d-k})$, and
	\begin{equation*}
		\P \left( y_i = +1 \vert \bgg_i \right) = \varphi(\bgg_i) = 1 - \P \left( y_i = -1 \vert \bgg_i \right).
	\end{equation*}
	Denoting $
	\QQ = \WW^\top \VV \in \R^{m \times k}$, we know that
	\begin{equation*}
		\left( \WW^\top \VV_{\perp} \right) \left( \WW^\top \VV_{\perp} \right)^\top = \id_m - \QQ \QQ^\top.
	\end{equation*}
	Hence, there exists some $(d-k) \times m$ orthogonal matrix $\WW_{\perp} \in O(d-k, m)$ such that
	\begin{equation*}
		\WW^\top \VV_{\perp} = \left( \id_m - \QQ \QQ^\top \right)^{1/2} \WW_{\perp},
	\end{equation*}
	and consequently we can write
	\begin{equation*}
		\left( y_i, \WW^\top \xx_i \right) = \left( y_i, \QQ \bgg_i + \left( \id_m - \QQ \QQ^\top \right)^{1/2} \WW_{\perp} \zz_i \right), \quad \QQ \QQ^\top \preceq \id_m.
	\end{equation*}
	It thus follows that
	\begin{equation*}
		\hat{P}_{n, \WW} = \Law \left( Y_n, \, \QQ G_n + (\id_m - \QQ \QQ^\top)^{1/2} U_n \right),
	\end{equation*}
	where
	\begin{equation*}
		(Y_n, G_n, U_n) \sim \hat{Q}_{n, \WW_{\perp}} = \frac{1}{n} \sum_{i=1}^{n} \delta_{(y_i, \bgg_i, \WW_{\perp} \zz_i)}.
	\end{equation*}
	Using a similar argument as that in the proof of Theorem~\ref{thm:div_outer_bd}, we obtain the following:
	\begin{lem}\label{lem:KL_W2_joint}
		There exist absolute constants $C > 0$ and $\veps_0 > 0$ such that for any $\veps \in (0, \veps_0)$ and $p \in [1, 2)$, the following happens: With high probability, for any $\WW_{\perp} \in O(d-k, m)$, $\hat{Q}_{n, \WW_{\perp}}$ satisfies Condition (b) in the statement of Theorem~\ref{thm:KL_W2_sup} with $W_2$ replaced by $W_p$, and
		\begin{equation*}
			a = \sqrt{\frac{m}{\alpha}} \veps \sqrt{\log \left( \frac{C}{\veps} \right)}, \, b =\frac{m}{\alpha} \log \left( \frac{C}{\veps} \right).
		\end{equation*}
	\end{lem}
	The conclusion of Theorem~\ref{thm:KL_W2_sup} then follows naturally from combining Lemma~\ref{lem:KL_W2_joint} and the compactness argument in the second part of the proof of Theorem~\ref{thm:div_outer_bd}.
\end{proof}

\subsection{Interpolation threshold for two-layers neural network}\label{sec:small_nn_apply}

As mentioned at the beginning of this section, a bound on the feasibility set 
$\cuF^{\varphi}_{m,\alpha}$ can be used to bound the typical value of the training
error (minimum empirical risk) over certain function classes, in the proportional asymptotics.
Of particular interest is the case of zero training error, which corresponds
to interpolation\footnote{Notice that the vanishing of \eqref{eq:ERM-asymp}
does not imply exactly vanishing training error but $\hR_n^{\star}(\XX,\yy)$.
For instance, in the case of hinge loss this means that all but $o(n)$ points are 
classified correctly with margin $\kappa-o(1)$.}.
We provide an illustration of
this application by considering binary classification with a positive margin $\kappa>0$,
using two-layer networks with $m$ hidden neurons.

For $\kappa > 0$, consider the loss function $L(y, \hat{y}) = (\kappa - y \hat{y})_+$
and the following set of functions:
\begin{equation*}
	\cH_m = \left\{ h (\uu) = \frac{1}{\sqrt{m}} \sum_{i = 1}^{m} a_i \sigma \left( \langle \bb_i, \uu \rangle \right): \norm{\aa}_1 \le m B, \ \max_{i \in [m]} \norm{\bb_i}_2 \le B \right\},
\end{equation*}
where $B > 0$ is a constant. 

Recalling the definition of $\hat{R}_n^\star (\XX, \yy)$ in Eq.~\eqref{eq:EmpiricalRisk}, we have $\hat{R}_n^\star (\XX, \yy) = 0$
if and only if there exists a function $f (\xx; \hat{\aa}, \hat{\WW}) \in \cF_{\mathsf{NN}}^{m, B}$
such that $y_i f(\xx_i; \hat{\aa}, \hat{\WW}) \ge \kappa$ for all $i \in [n]$. Here, 
$\cF_{\mathsf{NN}}^{m, B}$  denotes the collection of two-layers neural networks with bounded first and second-layer coefficients:
\begin{equation}\label{eq:2_layer_NN}
	\cF_{\mathsf{NN}}^{m, B} = \Big\{ 
	f \left( \xx; \aa, \WW \right) = \frac{1}{\sqrt{m}} \sum_{j=1}^{m} a_j \sigma \left( \langle \ww_j, \xx \rangle \right), \ \norm{\aa}_1 \le m B, \ \max_{j \in [m]} \norm{\ww_j}_2 \le B \Big\}.
\end{equation}

Our next theorem applies the $W_2$ upper bound of the previous section to bound
the $\kappa$-margin interpolation threshold for this model.
\begin{thm}\label{thm:NN_upper_bd}
	Consider i.i.d. data $(\yy, \XX) = \{ (y_i, \xx_i) \}_{1 \le i \le n}$ where
	$y_i \sim \Unif (\{ + 1,-1 \})$ is independent of $\xx_i \sim \sN (\bzero, \id_d)$.
	Assume $n, d \to \infty$ with $m$ fixed, and further assume that
	$\sigma(x)$ is  $L$-Lipschitz. Then,
	\begin{equation}
		\frac{n}{md} > \frac{2 L^2 B^4}{\kappa^2} \;\;\implies\;\; \lim_{n,d \to \infty} \P \left( \hat{R}_n^\star (\XX, \yy) = 0 \right) = 0.
	\end{equation}
\end{thm}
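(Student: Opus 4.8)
The plan is to reduce the interpolation statement to the Wasserstein outer bound of Theorem~\ref{thm:W2_outer_bd}. Suppose, for contradiction, that $\hat{R}_n^\star(\XX,\yy) = 0$ with positive probability along a subsequence, so that there exist $\hat{\aa}, \hat{\WW} = (\hat{\ww}_1,\dots,\hat{\ww}_m)$ with $\|\hat{\aa}\|_1 \le mB$, $\max_j \|\hat{\ww}_j\|_2 \le B$, and $y_i f(\xx_i;\hat{\aa},\hat{\WW}) \ge \kappa$ for all $i \le n$. The first step is to rescale: write $\hat{\ww}_j = r_j \ww_j$ with $\ww_j \in \S^{d-1}$ and $r_j = \|\hat{\ww}_j\|_2 \le B$, and observe that $L$-Lipschitzness of $\sigma$ gives $|\sigma(\langle \hat{\ww}_j, \xx_i\rangle)| \le |\sigma(0)| + L r_j |\langle \ww_j, \xx_i\rangle|$; more usefully, the separation condition becomes $\frac{1}{\sqrt m}\sum_j a_j y_i \sigma(r_j \langle \ww_j,\xx_i\rangle) \ge \kappa$, which we will turn into a statement about $y_i \langle \ww_j, \xx_i\rangle$ via Lipschitz continuity.

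Here the key data model point is that in Theorem~\ref{thm:NN_upper_bd} the labels $y_i$ are \emph{pure noise}, independent of $\xx_i$, so this is the case $\varphi \equiv 1/2$, equivalently $k=0$ and $\|\VV^\top\ww\|_2^2 = 0$ in Lemma~\ref{lem:key_triple} and Theorem~\ref{thm:W2_outer_bd}. Consequently $P_{\ww} = \Unif(\{\pm1\}) \otimes \sN(0,1)$ for every $\ww$, and Theorem~\ref{thm:W2_outer_bd} says that, almost surely, $\max_{\|\ww\|_2=1} W_2(\hat P_{n,\ww}, \Unif(\{\pm1\})\otimes\sN(0,1)) \le 1/\sqrt{\alpha} + o(1)$. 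Since $n/d \to \infty$ is \emph{not} assumed here — wait, actually $n,d\to\infty$ with $m$ fixed but we need the proportional regime; the cleanest route is to apply Theorem~\ref{thm:W2_outer_bd} with the understanding $n/d\to\alpha$, and then note the final conclusion $n/(md) > 2L^2B^4/\kappa^2$ is scale-free in the limit. The second step is the analytic heart: the margin condition forces, for each $i$, the vector $(y_i\langle\ww_1,\xx_i\rangle,\dots,y_i\langle\ww_m,\xx_i\rangle)$ to lie in the half-space $\{\bu: \frac{1}{\sqrt m}\sum_j a_j \sigma(r_j u_j) \ge \kappa\}$. Under the product law $\Unif(\{\pm1\})\otimes\sN(0,\id_m)$, the event that a single draw of $(y\langle\ww_1,\xx\rangle,\dots)$ — no, the coordinates $\langle\ww_j,\xx\rangle$ are \emph{not} independent because the $\ww_j$ are correlated; but for the \emph{first} coordinate alone, $\langle\ww_1,\xx_i\rangle \sim \sN(0,1)$ and $y_i$ is a fair coin independent of it, so the pair $(y_i, \langle\ww_1,\xx_i\rangle)$ has limiting law $\Unif(\{\pm1\})\otimes\sN(0,1)$ — this is exactly $\hat P_{n,\ww_1}$, which by Theorem~\ref{thm:W2_outer_bd} is within $W_2$-distance $1/\sqrt\alpha+o(1)$ of that product law.

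So the plan is: fix the best single neuron. By a union/averaging argument over $j\in[m]$ and over signs of $a_j$, the separation $\frac{1}{\sqrt m}\sum_j a_j y_i\sigma(r_j\langle\ww_j,\xx_i\rangle)\ge\kappa$ together with $\|\aa\|_1\le mB$ implies $\max_j |a_j|\,|y_i\sigma(r_j\langle\ww_j,\xx_i\rangle) - y_i\sigma(0)| \ge$ (something like) $\kappa\sqrt m/\|\aa\|_1 \cdot (\ldots)$ — more carefully, $\frac{1}{\sqrt m}\|\aa\|_1 \max_j|\sigma(r_j\langle\ww_j,\xx_i\rangle)| \ge \kappa - \frac{1}{\sqrt m}\|\aa\|_1|\sigma(0)|$, hence by $L$-Lipschitzness $\max_j LB|\langle\ww_j,\xx_i\rangle| \ge (\kappa\sqrt m - \|\aa\|_1|\sigma(0)|)/\|\aa\|_1 \ge \kappa/\sqrt m B - |\sigma(0)|/B$, i.e. $\max_j|\langle\ww_j,\xx_i\rangle| \ge \kappa/(LB^2\sqrt m) - o(1)$ — this should hold for \emph{all} $i$. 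Now pick, among the $m$ neurons, the one achieving the max for the largest fraction of data points: there is a fixed $j^\star$ (and a sign) such that at least $n/(2m)$ of the points $i$ satisfy $y_i\langle\ww_{j^\star},\xx_i\rangle \ge \kappa/(LB^2\sqrt m) - o(1)$ (using that the distribution of $y_i\langle\ww_{j^\star},\xx_i\rangle$ is symmetric so the sign can be absorbed; here one must be slightly careful, handling the $\langle\ww_{j^\star},\xx_i\rangle$ large positive vs large negative split and the coin $y_i$). Then $\hat P_{n,\ww_{j^\star}}$ places mass $\ge 1/(2m)$ on the set $\{(y,z): yz \ge \kappa/(LB^2\sqrt m)\}$, whereas $\Unif(\{\pm1\})\otimes\sN(0,1)$ places on the same set only the Gaussian-tail mass, which is tiny. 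Transporting this excess mass back to the product law costs $W_2^2 \ge \frac{1}{2m}\cdot(\text{displacement})^2$; choosing the threshold to balance the tail, the excess mass $\gtrsim 1/m$ must be moved a distance $\gtrsim \kappa/(LB^2\sqrt m)$, giving $W_2(\hat P_{n,\ww_{j^\star}}, \Unif\otimes\sN) \gtrsim \frac{\kappa}{LB^2\sqrt m}\cdot\frac{1}{\sqrt m}\cdot(\ldots)$; comparing with the upper bound $1/\sqrt\alpha$ and using $n/d\to\alpha$ yields precisely $n/(md) \le 2L^2B^4/\kappa^2$, contradicting the hypothesis. The main obstacle I anticipate is bookkeeping the constants and the symmetrization over $j$ and the sign of $a_j$ so that the final numerical constant is exactly $2$ rather than some larger absolute constant — in particular getting the optimal transport lower bound sharp (rather than off by a factor) requires choosing the truncation level in the Wasserstein cost argument optimally and being careful that the ``$o(1)$'' Gaussian tail contribution genuinely vanishes; the probabilistic content is essentially entirely supplied by Theorem~\ref{thm:W2_outer_bd} applied with $\varphi\equiv 1/2$, so no new concentration estimates are needed.
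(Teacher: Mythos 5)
Your high-level reduction --- specialize Theorem~\ref{thm:W2_outer_bd} to $\varphi\equiv 1/2$ so that $P_\ww = \Unif(\{\pm1\})\otimes\sN(0,1)$, then squeeze a one-dimensional $W_2$ lower bound out of the margin condition --- is the right starting point, but the step from $m$ neurons to a single neuron is where you and the paper diverge, and your route has structural gaps. The paper does not pigeonhole onto one neuron. Instead it takes the \emph{expectation under $\hat P_n$} of the pointwise margin inequality $\frac{1}{\sqrt m}\sum_j a_j\,y\,\sigma(B_j u_j)\ge\kappa$: for each $j$ it couples $(y,u_j)$ with $(y',G)\sim\Unif(\{\pm1\})\otimes\sN(0,1)$ at $W_2^{(\eta)}$-cost $\le(1+\veps)/\sqrt\alpha$, and uses the key identity $\E[y'\sigma(B_jG)]=0$ (since $\E[y']=0$ and $y'\perp G$) to get $|\E_{\hat P_n}[y\sigma(B_j u_j)]|\le LB(1+\veps)/\sqrt\alpha+O(\eta)$. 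Summing over $j$ with $\|\aa\|_1\le mB$ and sending $\eta,\veps\to 0$ yields $\kappa\le LB^2\sqrt m/\sqrt\alpha$, i.e.\ the sharp $\alpha\le L^2B^4m/\kappa^2$, and the extra factor $2$ in the theorem is just slack to absorb the limiting argument.

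Your pigeonhole argument has three concrete problems. First, the $\sigma(0)$ offset: you bound $|\sigma(r_j u)|\le|\sigma(0)|+LB|u|$, which leaves a non-vanishing additive error $|\sigma(0)|/(LB)$ in your displacement threshold; nothing in the theorem forces $\sigma(0)=0$, so this term is not $o(1)$ --- whereas in the paper the identity $\E[y'\sigma(B_jG)]=0$ cancels any constant shift automatically. Second, the excess-mass $W_2$ lower bound you sketch needs $\P(Z\ge t)<1/(2m)$ with $t=\kappa/(LB^2\sqrt m)$ for there to \emph{be} excess mass to transport; since $t$ can be arbitrarily small (small $\kappa$, moderate $m$), this can fail completely, and when it fails the transport-cost bound is vacuous. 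Third, even when the excess mass is positive, the estimate $W_2^2\gtrsim\frac{1}{2m}t^2$ gives $W_2\gtrsim\kappa/(LB^2m)$, a factor $\sqrt m$ short of the needed $\kappa/(LB^2\sqrt m)$: isolating the single best neuron discards exactly the cancellation that the averaging argument retains. These are not constant-chasing issues; the fix is to average over $\hat P_n$ and over neurons and exploit $\E[y]=0$, as the paper does.
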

\begin{rem}
	Notice that the bound of Theorem \ref{thm:W2_outer_bd} holds for $m=1$, while 
	the last theorem considers models with general $m\ge 1$. In the proof we show that
	the $m=1$ bound can be leveraged to control higher-dimensional cases as well.
\end{rem}

Notice thar the Lipschitz constant of $f\in	\cF_{\mathsf{NN}}^{m, B}$ is upper bounded
by $\overline{\rm Lip}=LB^2$. Hence the last theorem indicates that
a $\kappa$-margin interpolating network exists only if $md/n\ge\kappa^2/(2\overline{\rm Lip}^2)$.
The dependence on $m,d,n$ is the natural one: we expect vanishing training error to be possible only 
if the number of parameters $md$ is larger than the sample size $n$. Despite this 
is an intuitive necessary condition, we are not aware of a proof of this fact, even in the asymptotic setting
treated here. A recent paper \cite{montanari2020interpolation} proves 
that if a $\kappa$-margin solution exists, then $md/n\ge C_{L,B}/\log(d/\kappa)$
which is significantly weaker in large dimensions.

We also point out that the case $m=1$ corresponds to linear separability, and
has been studied in detail (see next section). In that case the condition 
above reduces to  $d/n\ge \kappa^2/(2\overline{\rm Lip}^2)$, which captures the known correct dependence on $\kappa$
for $\kappa$ bounded away from $0$.


\subsection{Distribution of margins for max-margin classification}\label{sec:max_margin_apply}

In this section we assume $m = k=1$ and write $\VV=\btheta_*\in \S^{d - 1}$. Hence, for $i \in [n]$,
we have $\xx_i\sim\normal(\bzero,\id_d)$ and $\P (y_i = + 1 \vert \xx_i) =  \varphi (\langle \xx_i, \btheta_* \rangle) = 1 - \P (y_i = - 1 \vert \xx_i)$,
cf. Eq.~\eqref{eq:SupervisedModel}.
The max-margin classifier is defined by
\begin{equation}
	\hat{\btheta}^{\mathrm{MM}} := \hat{\btheta}^{\mathrm{MM}} (\yy, \XX) = \argmax_{\norm{\btheta}_2 = 1} \min_{1 \le i \le n} \left\{ y_i \langle \btheta, \xx_i \rangle \right\}\, .
\end{equation}
Define  the random variables $(Y, G, Z)$ as in Eq.~\eqref{eq:key_triple}, for $k=1$.
We will assume that $\varphi$ is such that $\P(YG>x)\wedge \P(YG< -x)>0$ for all $x$ (Assumption 3 in \cite{montanari2019generalization}).

It was proven in \cite{candes2020phase,montanari2019generalization} that,
for $\kappa\ge 0$, a $\kappa$-margin solution exists
with high probability if $\alpha<\alpha_{\mathrm s} (\kappa)$,
and does not exist with high probability if $\alpha>\alpha_{\mathrm s} (\kappa)$.
The  critical threshold $\alpha_{\mathrm s} (\kappa)$ is defined as
\begin{equation}\label{eq:crt_thres}
	\alpha_{\mathrm s} (\kappa) := \max_{\rho \in [-1, 1]} F_{\kappa} (\rho) \, ,\;\;\;\;\;
	F_{\kappa} (\rho) = \frac{1 - \rho^2}{\E \left[ \left( \kappa - \rho YG - \sqrt{1 - \rho^2} Z \right)_+^2 \right]}\, .
\end{equation}
For $\alpha < \alpha_{\mathrm s} (0)$, we define $\kappa_{\rm s} (\alpha)$ 
to be the unique $\kappa > 0$ such that $\alpha = \alpha_{\mathrm s} (\kappa)$, whose existence is ensured by the fact that $\alpha_s (\kappa)$ is strictly decreasing.

Here we are interested in the distribution of margins for the max-margin solution.
To be accurate, we denote the  empirical distribution of  margins by
\begin{equation}\label{eq:max_margin_dist}
	\hat{P}_{n, \hat{\btheta}^{\mathrm{MM}}} = \frac{1}{n} \sum_{i = 1}^{n} \delta_{y_i \left\langle \hat{\btheta}^{\mathrm{MM}}, \xx_i \right\rangle}.
\end{equation}
The margin distribution 
provides useful information about the structure of the 
max-margin classifier.
In order to state our result, we need to establish the following analytical fact. 
\begin{lem}\label{lem:unique_F_kappa}
	The function $F_{\kappa} (\rho)$ defined in Eq.~\eqref{eq:crt_thres} has a unique maximizer $\rho_* (\kappa) \in (- 1, 1)$.
\end{lem}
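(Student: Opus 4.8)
The plan is to analyze $F_\kappa(\rho) = (1-\rho^2)/g(\rho)$, where $g(\rho) := \E[(\kappa - \rho YG - \sqrt{1-\rho^2}\,Z)_+^2]$, by studying the stationarity condition and showing the critical equation has exactly one root in $(-1,1)$. First I would record basic properties: $g$ is finite, strictly positive, and smooth on $(-1,1)$ (differentiation under the integral is justified since $(\,\cdot\,)_+^2$ has a Lipschitz derivative and $YG, Z$ have all moments), and $F_\kappa(\pm 1) = 0$ while $F_\kappa(\rho) > 0$ on $(-1,1)$; hence a maximizer exists in the open interval and every maximizer is a critical point. The derivative condition $F_\kappa'(\rho) = 0$ is equivalent, after clearing denominators, to
\begin{equation*}
	-2\rho\, g(\rho) = (1-\rho^2)\, g'(\rho)\, .
\end{equation*}
Using the identity $\frac{\d}{\d\rho}(\kappa - \rho YG - \sqrt{1-\rho^2}\,Z) = -YG + \frac{\rho}{\sqrt{1-\rho^2}}Z$ and the chain rule $\frac{\d}{\d\rho}(u)_+^2 = 2(u)_+ \cdot u'$, one obtains an explicit expression for $g'(\rho)$ as $2\,\E\big[(\kappa - \rho YG - \sqrt{1-\rho^2}\,Z)_+ (-YG + \tfrac{\rho}{\sqrt{1-\rho^2}}Z)\big]$. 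Substituting and multiplying through by $\sqrt{1-\rho^2}$ turns the stationarity equation into
\begin{equation*}
	\E\Big[ \big(\kappa - \rho YG - \sqrt{1-\rho^2}\,Z\big)_+ \Big( \sqrt{1-\rho^2}\, YG + \rho\, Z \Big) \Big] = 0\, ,
\end{equation*}
which has a clean interpretation: writing $W_\rho = \rho YG + \sqrt{1-\rho^2}\,Z$ and $W_\rho^\perp = \sqrt{1-\rho^2}\,YG - \rho Z$ (so that $(W_\rho, W_\rho^\perp)$ is, conditionally on $Y$ and after the orthogonal rotation, an orthogonal reparametrization), the condition is $\E[(\kappa - W_\rho)_+ \cdot (-W_\rho^\perp)] = 0$; equivalently $\rho$ is stationary iff the "residual" $(\kappa - W_\rho)_+$ is uncorrelated with the complementary direction.

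The core of the argument is to prove this stationarity equation has a unique solution. I would define
\begin{equation*}
	\Theta(\rho) := \E\Big[ \big(\kappa - \rho YG - \sqrt{1-\rho^2}\,Z\big)_+ \Big( \sqrt{1-\rho^2}\, YG + \rho\, Z \Big) \Big]
\end{equation*}
and show $\Theta$ is continuous on $(-1,1)$ with $\Theta(\rho) \to$ a positive limit as $\rho \to -1^+$ and a negative limit as $\rho \to 1^-$ (using that, e.g., as $\rho\to 1$, $W_\rho \to YG$, $W_\rho^\perp \to -Z$, so $\Theta(1^-) = -\E[(\kappa - YG)_+ Z] = 0$ by independence of $Z$ — so actually the boundary behavior needs the next-order term; here I expect to need the derivative sign at the endpoints, or to combine with the fact that $F_\kappa' $ has the sign of $-2\rho g - (1-\rho^2)g'$, which at $\rho = 0$ equals $-g'(0)$, and $g'(0) = -2\E[(\kappa - Z)_+ YG] = 0$ as well). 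Because the naive endpoint/sign analysis degenerates at the symmetric point, the cleaner route is to prove strict \emph{log-concavity}: I would show $\rho \mapsto \log F_\kappa(\rho) = \log(1-\rho^2) - \log g(\rho)$ is strictly concave on $(-1,1)$, which gives uniqueness of the maximizer immediately. Strict concavity of $\log(1-\rho^2)$ is elementary; the task reduces to showing $\log g(\rho)$ is convex, and in fact it suffices that $g$ itself is log-convex, or more simply that $(1-\rho^2)$ and $1/g(\rho)$ cannot both fail concavity in a compensating way — but the most robust path is to verify $\frac{\d^2}{\d\rho^2}\log F_\kappa(\rho) < 0$ directly via the second-derivative formula for $g$, using that $g'' (\rho) = 2\,\E[(\mathbbm{1}\{W_\rho < \kappa\})(\,\cdot\,)^2] + (\text{curvature terms})$ and the Gaussian structure. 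The main obstacle is precisely this concavity computation: controlling $g''$ requires handling the non-smooth indicator arising from the second derivative of $(\,\cdot\,)_+^2$, and showing the resulting quadratic form is dominated appropriately; I would manage it by the change of variables that makes $(W_\rho, W_\rho^\perp)$ a fixed Gaussian pair conditioned on $Y$ (so $\rho$ enters only through $\kappa$ and a rotation), reducing the second derivative to derivatives of a one-parameter Gaussian integral that can be shown to be strictly concave using the assumed nondegeneracy $\P(YG > x)\wedge\P(YG < -x) > 0$ for all $x$, which guarantees $g$ is never "flat" and the inequalities are strict.

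Finally, having established strict concavity of $\log F_\kappa$ on $(-1,1)$ together with $F_\kappa > 0$ there and $F_\kappa \to 0$ at the endpoints, I conclude that $F_\kappa$ attains its maximum at a unique interior point $\rho_*(\kappa) \in (-1,1)$, completing the proof. If the full log-concavity turns out to be more delicate than anticipated, the fallback is the direct root-counting argument for $\Theta$: show $\Theta'(\rho) < 0$ at every zero of $\Theta$ (a "no two consecutive same-type crossings" argument), which together with the verified sign pattern of $\Theta$ near the endpoints forces exactly one zero; this is essentially the same Gaussian integral estimate packaged differently.
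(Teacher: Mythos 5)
Your approach is genuinely different from the paper's, and the central step of your proposal is left unproven. You correctly set up existence of an interior maximizer (via $F_\kappa(\pm 1)=0$, $F_\kappa>0$ on the interior) and correctly recast the stationarity condition, but your two routes to uniqueness — strict log-concavity of $F_\kappa$ on $(-1,1)$, or showing $\Theta'(\rho)<0$ at every zero of $\Theta$ — are both, as you note yourself, ``the main obstacle,'' and neither is established. Those two conditions are essentially the same thing repackaged (at a critical point of $\log F_\kappa$, the sign of $(\log F_\kappa)''$ is the sign of $-\Theta'$), so the fallback is not a separate weaker route but the same second-derivative estimate, and the roadmap you sketch for it (pass to a rotated Gaussian pair, argue strictness from the nondegeneracy of $YG$) is where all the work would have to live. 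It is also not clear that the log-concavity you want actually follows from anything easily available: the quantity whose convexity is known is the two-variable $L^2$-norm $G_\kappa(\rho,r)=\E[(\kappa-\rho YG-rZ)_+^2]^{1/2}$ on the unit disk, and convexity of that function in $(\rho,r)$ does not transfer to concavity of $\log\big((1-\rho^2)/G_\kappa(\rho,\sqrt{1-\rho^2})^2\big)$ along the circle in any automatic way.

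The paper's proof uses exactly this two-variable convexity (Lemma 6.3(a) of \cite{montanari2019generalization}) but routes around second derivatives entirely. Suppose $\rho_1\neq\rho_2$ are both maximizers; by convexity of $G_\kappa$, the midpoint $(\rho,r)$ of the boundary points $(\rho_i,\sqrt{1-\rho_i^2})$ satisfies $G_\kappa(\rho,r)\le r/\sqrt{F}$. Because the disk is strictly convex and $\rho_1\neq\rho_2$, the midpoint lies strictly inside the disk: $\rho^2+r^2<1$. One then rescales $(\rho,r)$ radially to the boundary $(\rho',r')$ and uses the positive homogeneity of $x\mapsto x_+$ together with $\kappa>0$ to get a \emph{strict} gain, hence $F_\kappa(\rho')>F$, a contradiction. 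This is a purely geometric argument that needs no control of $g''$ and no sign analysis of $\Theta$. So: your plan identifies the right objects and the right nondegeneracy assumption, but leaves the decisive inequality as a sketch; you would need to either (i) actually prove the log-concavity (or the weaker ``every critical point is a strict local max'') with the indicator-laden $g''$ computation, or (ii) switch to the disk-convexity/strict-convexity/scaling argument, which is what the paper does and which avoids the hard computation altogether.
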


Following the notation of this lemma, we define
\begin{equation}\label{eq:mm_limit_dist}
	P_{\kappa, \varphi} = \Law \left( \max \left( \kappa, \rho_* (\kappa) YG + \sqrt{1 - \rho_*(\kappa)^2} Z \right) \right).
\end{equation}
The next theorem is proved by an application of the $W_2$ bound
of Theorem \ref{thm:W2_outer_bd}.
\begin{thm}\label{thm:mm_margin_dist}
	Consider i.i.d. data $\{ (y_i, \xx_i) \}_{1 \le i \le n}$ where 
	$\xx_i \sim \sN (\bzero, \id_d)$ and $y_i\in\{+1,-1\}$ are such that
	$\P (y_i = + 1 \vert \xx_i)  =\varphi (\langle \xx_i, \btheta_* \rangle) = 1 - \P (y_i = - 1 \vert \xx_i)$.
	Assume $n, d \to \infty$ with $n / d \to \alpha \in ( 0, \alpha_{\mathrm s} (0) )$ 
	and denote $\kappa = \kappa_{\mathrm s} (\alpha)$. 
	Recall 
	$\hat{P}_{n, \hat{\btheta}^{\mathrm{MM}}}$ from Eq.~\eqref{eq:max_margin_dist} and
	$P_{\kappa, \varphi}$ from Eq.~\eqref{eq:mm_limit_dist}. Then we have
	\begin{equation*}
		\plim_{n\to\infty }W_2 \big( \hat{P}_{n, \hat{\btheta}^{\mathrm{MM}}}, P_{\kappa, \varphi} \big) =0\, .
	\end{equation*}
\end{thm}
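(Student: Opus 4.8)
The plan is to combine the variational characterization of the max-margin threshold from Eq.~\eqref{eq:crt_thres} with the $W_2$ outer bound of Theorem~\ref{thm:W2_outer_bd} applied to the single feasible direction $\ww=\hat{\btheta}^{\mathrm{MM}}$. First I would recall from \cite{candes2020phase,montanari2019generalization} that under the stated assumptions the max-margin value $\hat{\kappa}_n:=\max_{\norm{\btheta}_2=1}\min_i y_i\<\btheta,\xx_i\>$ converges in probability to $\kappa=\kappa_{\mathrm s}(\alpha)$, and moreover the overlap $\rho_n:=\<\hat{\btheta}^{\mathrm{MM}},\btheta_*\>$ converges in probability to the maximizer $\rho_*(\kappa)$ of $F_\kappa$ (this is exactly the content of the asymptotic characterization of the max-margin solution; $\rho_*$ is well-defined by Lemma~\ref{lem:unique_F_kappa}). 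The target distribution $P_{\kappa,\varphi}$ in Eq.~\eqref{eq:mm_limit_dist} is then precisely the pushforward of $\Law(Y,\rho_*(\kappa)YG+\sqrt{1-\rho_*(\kappa)^2}Z)$, hmm actually of the second marginal $P_{\ww}$ evaluated at $\ww$ with $\norm{\VV^\top\ww}_2=\rho_*(\kappa)$, composed with the map $(y,z)\mapsto\max(\kappa,yz)$. Wait---more carefully, the margin $y_i\<\hat{\btheta}^{\mathrm{MM}},\xx_i\>$ is the image of the pair $(y_i,\<\hat{\btheta}^{\mathrm{MM}},\xx_i\>)$ under $(y,z)\mapsto yz$, and since $\hat{\kappa}_n\to\kappa$ all these margins are asymptotically $\ge\kappa$, so replacing $yz$ by $\max(\kappa,yz)$ changes the empirical distribution negligibly in $W_2$.

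The main steps, in order, are as follows. Step 1: record the scalar convergences $\hat{\kappa}_n\xrightarrow{p}\kappa$ and $\rho_n\xrightarrow{p}\rho_*(\kappa)$ from prior work, and note $\rho_*(\kappa)\in(-1,1)$ so $\sqrt{1-\rho_n^2}$ is bounded away from $0$. Step 2: apply Theorem~\ref{thm:W2_outer_bd} with the (data-dependent) direction $\ww=\hat{\btheta}^{\mathrm{MM}}$. Since the theorem controls $\max_{\norm{\ww}_2=1}$, it applies in particular to $\hat{\btheta}^{\mathrm{MM}}$, giving almost surely
\begin{equation*}
	\limsup_{n,d\to\infty}\Big(W_2^{(\eta)}\big(\hat{P}_{n,\hat{\btheta}^{\mathrm{MM}}}^{\,\rm pair},P_{\hat{\btheta}^{\mathrm{MM}}}\big)-\tfrac{1}{\sqrt{\alpha}}\sqrt{1-\rho_n^2}\Big)_+=0
\end{equation*}
for every $\eta>0$, where $\hat{P}_{n,\hat{\btheta}^{\mathrm{MM}}}^{\,\rm pair}=n^{-1}\sum_i\delta_{(y_i,\<\xx_i,\hat{\btheta}^{\mathrm{MM}}\>)}$ and $P_{\hat{\btheta}^{\mathrm{MM}}}$ is as in Eq.~\eqref{eq:p_w_dist} with $\norm{\VV^\top\ww}_2^2=\rho_n^2$. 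Step 3: crucially, $\hat{\btheta}^{\mathrm{MM}}$ certifies margin $\hat{\kappa}_n$, i.e.\ $y_i\<\xx_i,\hat{\btheta}^{\mathrm{MM}}\>\ge\hat{\kappa}_n$ for all $i$; so $\hat{P}_{n,\hat{\btheta}^{\mathrm{MM}}}^{\,\rm pair}$ is supported on $\{(y,z):yz\ge\hat{\kappa}_n\}$, and since $\hat{\kappa}_n\to\kappa$, pushing forward by $(y,z)\mapsto yz$ and then by $t\mapsto\max(\kappa,t)$ costs $o(1)$ in $W_2$ (the correction only affects mass on $\{yz<\kappa\}$, which has vanishing measure, and is $L^1$/$L^2$-small because $|\max(\kappa,t)-t|\le(\kappa-\hat{\kappa}_n)_+\to0$ there). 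Step 4: the map $(y,z)\mapsto\max(\kappa,yz)$ is $1$-Lipschitz in $z$ for each fixed $y\in\{\pm1\}$, so it is $W_2$-nonexpansive from pair-distributions (with the $\R^d$-part being the $z$-coordinate) to $\R$; applying it to the right-hand side $P_{\hat{\btheta}^{\mathrm{MM}}}$ and using $\rho_n\to\rho_*(\kappa)$ plus continuity of $\rho\mapsto\Law(Y,\rho YG+\sqrt{1-\rho^2}Z)$ in $W_2$, the push-forward of $P_{\hat{\btheta}^{\mathrm{MM}}}$ converges in $W_2$ to $P_{\kappa,\varphi}$. Step 5: assemble via the triangle inequality, using that $\tfrac{1}{\sqrt\alpha}\sqrt{1-\rho_n^2}$ is \emph{not} automatically $0$---this is the subtle point---so I must argue separately that the actual $W_2$ distance (not just the $\eta$-constrained excess) vanishes. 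For this I match $\hat{P}_{n,\hat{\btheta}^{\mathrm{MM}}}^{\,\rm pair}$ and $P_{\hat{\btheta}^{\mathrm{MM}}}$ on their first marginals via the coupling realizing $W_2^{(\eta)}$; as $\eta\downarrow0$ the $\<\ee_1,\xx-\yy\>^2$ term is forced to $0$, but since both measures have identical first marginal $\mathrm{Unif}(\{\pm1\})$ asymptotically (the empirical class frequencies concentrate at $1/2$ since $y_i\perp\xx_i$ is $\mathrm{Unif}$), the $W_2^{(\eta)}$ with $\eta\to0$ agrees with the honest $W_2$ up to $o(1)$---but wait, that gives the bound $\tfrac1{\sqrt\alpha}\sqrt{1-\rho_*^2}$, which is positive and does \emph{not} vanish.

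So the real argument must exploit more than Theorem~\ref{thm:W2_outer_bd}: the extra input is that for the \emph{max-margin} direction the empirical distribution does not merely lie within $W_2$-distance $\tfrac1{\sqrt\alpha}\sqrt{1-\rho_n^2}$ of $P_{\ww}$, but actually concentrates at the specific extremal point of that ball, namely at $P_{\kappa,\varphi}$. The way to see this is variational: by \cite{montanari2019generalization}, the asymptotic max-margin value equals $\kappa=\max_\rho\sqrt{\alpha^{-1}(1-\rho^2)}\cdot(\text{something})$---more precisely $\alpha_{\mathrm s}(\kappa)=\max_\rho F_\kappa(\rho)$ is attained at $\rho_*$, and the empirical distribution of $(y_i,\<\xx_i,\hat{\btheta}^{\mathrm{MM}}\>)$ must, by the KKT/complementary-slackness structure of the max-margin program combined with the Gordon-comparison characterization, converge to the law of $(Y,\rho_* YG+\sqrt{1-\rho_*^2}\,Z^{(+)})$ where $Z^{(+)}$ is $Z$ conditioned/shifted to the active set $\{\rho_* YG+\sqrt{1-\rho_*^2}Z=\kappa\}$---i.e.\ to $P_{\kappa,\varphi}$ directly. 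I would therefore deduce the convergence $\hat{P}_{n,\hat\btheta^{\mathrm{MM}}}\xrightarrow{W_2}P_{\kappa,\varphi}$ from the Gordon-comparison analysis already in the literature, and use Theorem~\ref{thm:W2_outer_bd} only as the ingredient guaranteeing the requisite compactness/tightness and the one-sided quantitative control that upgrades weak convergence (which is what the Gordon bound gives) to $W_2$ convergence, by ruling out escape of second moment. Concretely: weak convergence $\hat P_{n,\hat\btheta^{\mathrm{MM}}}\Rightarrow P_{\kappa,\varphi}$ (from prior work) plus a uniform second-moment bound $n^{-1}\sum_i\<\xx_i,\hat\btheta^{\mathrm{MM}}\>^2\le(1+\alpha^{-1/2})^2+o(1)$ (which follows from the same computation that proves the $m=1$ unsupervised Wasserstein radius, since $\<\xx_i,\hat\btheta^{\mathrm{MM}}\>$ are projections along one direction and the empirical second moment along \emph{any} direction is at most $\lambda_{\max}(n^{-1}\XX^\top\XX)\to(1+\sqrt{1/\alpha})^2$) together imply $W_2$-convergence. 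The main obstacle, then, is precisely reconciling the one-sided, direction-uniform bound of Theorem~\ref{thm:W2_outer_bd} (which only says ``not much farther than $\tfrac1{\sqrt\alpha}\sqrt{1-\rho^2}$'') with the need for an exact limit: the resolution is that the additional, distribution-specific information about where $\hat\btheta^{\mathrm{MM}}$ sits inside the feasible set comes from the established Gordon-comparison theory, and Theorem~\ref{thm:W2_outer_bd} contributes the uniform integrability needed to metrize the convergence in $W_2$ rather than merely weakly. I would then close the proof by the triangle inequality combining (i) $W_2$-closeness of $\hat P_{n,\hat\btheta^{\mathrm{MM}}}$ to the law of $(y_i,\<\xx_i,\hat\btheta^{\mathrm{MM}}\>)$ pushed through $\max(\kappa,\cdot)$ (Step 3, $o(1)$ since all margins exceed $\hat\kappa_n\to\kappa$), and (ii) the convergence of the latter to $P_{\kappa,\varphi}$ just described.
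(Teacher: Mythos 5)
You correctly locate the obstacle: applying Theorem~\ref{thm:W2_outer_bd} to $\ww=\hat{\btheta}^{\mathrm{MM}}$ only yields a $W_2$ bound of order $\sqrt{1-\hat{\rho}^2}/\sqrt{\alpha}$ between the empirical margin law and $\Law(\hat{\rho}YG+\sqrt{1-\hat{\rho}^2}Z)$, which does not vanish and so does not by itself pin down the limit. But your proposed repair---import $\hat{\rho}\to\rho_*$ and the weak convergence $\hat{P}_{n,\hat{\btheta}^{\mathrm{MM}}}\Rightarrow P_{\kappa,\varphi}$ wholesale from prior Gordon-comparison work, and use Theorem~\ref{thm:W2_outer_bd} only for uniform integrability---is not the paper's argument, and as written it is not a proof: you do not exhibit a theorem in \cite{montanari2019generalization} or \cite{candes2020phase} that asserts distributional convergence of the full margin profile, and the only prior result the paper actually invokes is the scalar margin lower bound $y_i\langle\hat{\btheta}^{\mathrm{MM}},\xx_i\rangle\ge\kappa-\veps$ for all $i$ with high probability (Theorem~1(b) of \cite{montanari2019generalization}).

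The missing idea is a Pythagorean argument that extracts \emph{both} $\hat{\rho}\to\rho_*$ \emph{and} $L^2$-concentration of the margins from the $W_2$ bound together with the constraint $V\ge\kappa-\veps$, with no external distributional input. Writing $V$ for the margin under $\hat{P}_{n,\hat{\btheta}^{\mathrm{MM}}}$ and $\hat{V}=\hat{\rho}YG+\sqrt{1-\hat{\rho}^2}Z$, the constraint $V\ge\kappa-\veps$ a.s.\ gives, after splitting on $\{\hat{V}<\kappa-\veps\}$ where the cross term is nonnegative,
\begin{equation*}
\E\big[(V-\max(\kappa-\veps,\hat{V}))^2\big]\le\E\big[(V-\hat{V})^2\big]-\E\big[(\max(\kappa-\veps,\hat{V})-\hat{V})^2\big].
\end{equation*}
Since $\alpha=\alpha_{\mathrm{s}}(\kappa)=F_\kappa(\rho_*)$, Theorem~\ref{thm:W2_outer_bd} gives $\E[(V-\hat{V})^2]^{1/2}\le\sqrt{1-\hat{\rho}^2}/\sqrt{F_\kappa(\rho_*)}+O(\veps)$, whereas by definition of $F_\kappa$ one has $\E[(\max(\kappa-\veps,\hat{V})-\hat{V})^2]^{1/2}=\sqrt{1-\hat{\rho}^2}/\sqrt{F_\kappa(\hat{\rho})}-O(\veps)$. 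Nonnegativity of the left-hand side then forces $\sqrt{1-\hat{\rho}^2}\,\big(1/\sqrt{F_\kappa(\hat{\rho})}-1/\sqrt{F_\kappa(\rho_*)}\big)=O(\veps)$, which via Lemma~\ref{lem:unique_F_kappa} yields $\hat{\rho}\to\rho_*$, and simultaneously $\E[(V-\max(\kappa-\veps,\hat{V}))^2]=O(\veps)$, which is exactly the $W_2$-closeness needed. This is the step your proposal defers to unnamed prior work, and it is a genuine gap.
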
 

\begin{rem}
	This result is already non-trivial in the case of purely random labels, i.e.,  if
	$\varphi(x) = \varphi_0(x) = 1/2$ identically. In this case $\rho_* (\kappa) \equiv 0$ and the limiting distribution 
	is $P_{\kappa, \varphi_0} = \Law(\max(\kappa, Z))$ where $Z \sim \sN (0, 1)$. In other words, $P_{\kappa,\varphi_0}$
	is a Gaussian truncated at $\kappa$, with the missing density replaced by a point mass at $\kappa$.
	It is easy to check that
	the mass at $\kappa$ is equal to $1 - \Phi(\kappa)$:
	\begin{align*}
		P_{\kappa,\varphi_0}(\d x) = (1 - \Phi(\kappa)) \delta_{\kappa}+ \phi(x){\bf 1}_{x\ge \kappa}\d x\,,
	\end{align*}
	with $\phi(x)$ the standard Gaussian density. In words, roughly $(1 - \Phi(\kappa)) n$ of the training samples
	have margin $\kappa$, and the others have Gaussian margins, conditional on being at least $\kappa$.
\end{rem}

\section{Proof techniques}\label{sec:ProofIdeas}
In the previous sections we presented two types of general outer bounds for the set of 
$(\alpha, m)$-feasible probability measures: $(i)$~Tight uniform upper bound on
$W_2 (P, \sN (0, 1))$, where $P \in \cuF_{1, \alpha}$; $(ii)$~Characterization of 
$\cuF_{m,\alpha}$ via the KL-Wasserstein neighborhood for general $m \ge 1$.
We hereby briefly describe the methods we use to prove these two types of results, 
with most of the technical work deferred to the appendices. We will mainly focus on the 
unsupervised case: the supervised case is technically more cumbersome, but can be 
addressed using the same ideas. We also illustrate the lower bound on the $W_2$ 
radius by constructing specific feasible distributions on $\R^m$, and sketch the proof of the 
$\chi^2$-KL inner bound via  the second moment method.

\noindent \textit{Wasserstein outer bound for $m = 1$.} We begin with defining the following random variable:
\begin{align*}
	\xi_{n} = & \frac{1}{\sqrt{\alpha}} - \max_{\norm{\ww}_2 = 1} W_2 \left( \frac{1}{n} \sum_{i = 1}^{n} \delta_{\langle \xx_i, \ww \rangle}, \sN (0, 1) \right) \\
	= & \min_{\norm{\ww}_2 = 1} \left\{ \frac{1}{\sqrt{\alpha}} -  W_2 \left( \frac{1}{n} \sum_{i = 1}^{n} \delta_{\langle \xx_i, \ww \rangle}, \sN (0, 1) \right) \right\} \\
	= & \min_{\norm{\ww}_2 = 1, \uu \in \R^n} \max_{\blambda \in \R^n} \left\{ \frac{1}{\sqrt{\alpha}} -  W_2 \left( \frac{1}{n} \sum_{i = 1}^{n} \delta_{u_i}, \sN (0, 1) \right) + \frac{1}{n} \blambda^\top \left( \uu - \XX \ww \right) \right\}.
\end{align*}
Note that a uniform $W_2$ upper bound is equivalent to a lower bound on $\xi_n$. 
Applying a variant of Gordon's Gaussian comparison 
inequality \cite{gordon1985some,thrampoulidis2015regularized} 
(with some additional technical work since the domains are unbounded) then yields
\begin{equation*}
	\P \left( \xi_{n} \le t \right) \le 2 \P \left( \xi_{n}^{(1)} \le t \right), \ \forall t \in \R,
\end{equation*}
where
\begin{equation*}
	\xi_n^{(1)} = \min_{\norm{\ww}_2 = 1, \uu \in \R^n} \max_{\blambda \in \R^n} \left\{ \frac{1}{\sqrt{\alpha}} -  W_2 \left( \frac{1}{n} \sum_{i = 1}^{n} \delta_{u_i}, \sN (0, 1) \right) + \frac{1}{n} \blambda^\top \left( \uu - \norm{\ww}_2 \hh \right) + \frac{1}{n} \norm{\blambda}_2 \ww^\top \bgg \right\}.
\end{equation*}
Here, $\bgg \sim \sN (\bzero, \id_d)$ and $\hh \sim \sN (\bzero, \id_n)$ are mutually independent, and further independent of $\XX$. 

It then suffices to obtain a high-probability lower bound for $\xi_n^{(1)}$. By direct calculation,
\begin{align*}
	\xi_n^{(1)} = & \min_{\uu \in \R^n} \max_{\blambda \in \R^n} \left\{ \frac{1}{\sqrt{\alpha}} -  W_2 \left( \frac{1}{n} \sum_{i = 1}^{n} \delta_{u_i}, \sN (0, 1) \right) + \frac{1}{n} \blambda^\top \left( \uu - \hh \right) - \frac{1}{n} \norm{\blambda}_2 \norm{\bgg}_2 \right\} \\
	= & \min_{ \uu \in \R^n} \max_{\gamma \ge 0} \left\{ \frac{1}{\sqrt{\alpha}} -  W_2 \left( \frac{1}{n} \sum_{i = 1}^{n} \delta_{u_i}, \sN (0, 1) \right) + \frac{\gamma}{\sqrt{n}} \left( \norm{\uu - \hh}_2 - \norm{\bgg}_2 \right) \right\} \\
	\ge & \min_{\uu \in \R^n} \max_{0 \le \gamma \le 1} \left\{ \frac{1}{\sqrt{\alpha}} -  W_2 \left( \frac{1}{n} \sum_{i = 1}^{n} \delta_{u_i}, \sN (0, 1) \right) + \frac{\gamma}{\sqrt{n}} \left( \norm{\uu - \hh}_2 - \norm{\bgg}_2 \right) \right\}.
\end{align*}
By the law of large numbers, with high probability we have
\begin{equation*}
	\frac{1}{\sqrt{n}} \left( \norm{\uu - \hh}_2 - \norm{\bgg}_2 \right) \ge W_2 \left( \frac{1}{n} \sum_{i = 1}^{n} \delta_{u_i}, \sN (0, 1) \right) - \frac{1}{\sqrt{\alpha}} - \veps
\end{equation*} 
for an (arbitrarily) small $\veps > 0$, which in turn implies $\xi_n^{(1)} \ge - \veps$. Hence, $\P \left( \xi_{n} \le - \veps \right) \to 0$ as $n \to \infty$. Consequently, the following holds with high probability:
\begin{equation*}
	\max_{\norm{\ww}_2 = 1} W_2 \left( \frac{1}{n} \sum_{i = 1}^{n} \delta_{\langle \xx_i, \ww \rangle}, \sN (0, 1) \right) \le \frac{1}{\sqrt{\alpha}} + \veps.
\end{equation*}
Combining the above uniform (for $\ww \in \S^{d - 1}$) upper bound with a compactness argument implies that any feasible distribution $P$ must satisfy $W_2 (P, \sN(0, 1)) \le 1 / \sqrt{\alpha}$.
\\

\noindent \textit{KL-Wasserstein outer bound for general $m$.} The proof follows from a combination of the chaining method and covering argument on the Stiefel manifold $O(d, m)$. Assume $P \in \cuF_{m,\alpha}$, then we use a variant of Dudley's entropy integral to show that, with high probability there exists a $\WW \in N_{\veps} (d, m)$ such that $W_2 (\hat{P}_{n, \WW}, P) = O(\veps \sqrt{m / \alpha} )$, where $N_{\veps} (d, m)$ is an $\veps$-covering of $O(d, m)$ and the big-$O$ notation only hides universal constant. Here, $\hat{P}_{n, \WW}$ denotes the empirical distribution $(1 / n) \sum_{i = 1}^{n} \delta_{\xx_i^\top \WW}$. This calculation implies that $\cuF_{m,\alpha}$ is contained in a $W_2$-neighborhood with radius $O(\veps \sqrt{m / \alpha})$ of the set $\{ \hat{P}_{n, \WW}: \WW \in N_{\veps} (d, m) \}$.

To prove Theorem~\ref{thm:div_outer_bd}, we need to show that $\{ \hat{P}_{n, \WW}: \WW \in N_{\veps} (d, m) \}$ is contained in a KL-divergence ball centered at $\sN (\bzero, \id_m)$. For any fixed $\WW \in N_{\veps} (d, m)$ and $a > 0$, the KL divergence between $\hat{P}_{n, \WW}$ and $\sN (\bzero, \id_m)$ is upper bounded by $a$ with probability at least $1 - \exp(- n (a + o(1)))$ (Sanov's theorem, see, e.g., \cite[Thm. 6.2.10]{Dembo_2010}). This result is a typical application of large deviations theory for empirical distributions. Now we can apply the union bound to $\WW \in N_{\veps} (d, m)$. Properly choosing $a = a(\veps)$ yields a high-probability upper bound on $\max_{\WW \in N_{\veps} (d, m)} D_{\rm KL} (\hat{P}_{n, \WW} \Vert \sN (\bzero, \id_m))$, which, together with the $W_2$ approximation, gives the desired KL-Wasserstein outer bound.
\\

\noindent \textit{Lower bound on $W_2$ radius.} The lower bound on the $W_2$ radius $\sup\{W_2(P,\normal(\bzero,\id_m)): \; P\in \cuF_{m,\alpha}\}$
is obtained by taking $\WW = (\vv_1(\XX),\dots,\vv_m(\XX))$, where
$\vv_{i}(\XX)\in \R^d$ is the $i$-th right singular vector of $\XX$.
We therefore have $\XX\WW = (s_1\uu_1(\XX),\dots,s_m\uu_m(\XX))$,
where $s_i$ is the $i$-th singular value and $\uu_i(\XX)$ is the $i$-th
left singular vector of $\XX$.
By the Bai-Yin law (see, e.g., \cite{bai2010spectral}), $s_1,\dots,s_m \to (1+\alpha^{-1/2})$ almost surely, and
by rotational invariance $(\uu_1(\XX),\dots,\uu_m(\XX))\in O(n,m)$ is uniformly random on the 
Stiefel manifold, whence it is easy to show that, in probability
\begin{align*}
	\frac{1}{n}\sum_{i=1}^m\delta_{\WW^{\top}\xx_i} \stackrel{w}{\Rightarrow} P =\normal(\bzero,(1+\alpha^{-1/2})^2\id_m)\, .
\end{align*}
It is then sufficient to compute $W_2(P,\normal(\bzero,\id_m))= \sqrt{m / \alpha}$.
\\

\noindent \textit{$\chi^2$-KL divergence inner bound.} As described in Section 
\ref{sec:2nd_moment}, the proof follows the following scheme: $(1)$~Discretize the probability 
distribution $P$ using a finitely supported distribution $P_A$;
$(2)$~Prove a feasibility result for the discrete distribution $P_A$ 
(Theorem \ref{thm:discrete_lower_bd});
$(3)$~Control the limit of finer and finer discretizations
(Theorem \ref{thm:inner_bd_1D} and Theorem \ref{thm:inner_bd_mD}). 

The proof of Theorem \ref{thm:discrete_lower_bd} is based on the second moment 
method and is the most technical part of the paper.
 The second moment method has been successful in proving existence of solutions for
 random constraint satisfaction problems (CSPs) \cite{achlioptas2002asymptotic, achlioptas2005rigorous, achlioptas2006random}. 
 Usually, one constructs a non-negative random variable $Z = Z(\XX)$ such that $Z > 0$
  if and only if a solution exists. The Paley-Zygmund inequality can then be used to
   lower bound the probability that there exists a solution:
\begin{equation*}
	\P \left( \text{A solution exists} \right) = \P (Z > 0) \ge \frac{\E [Z]^2}{\E [Z^2]}.
\end{equation*}
In the context of projection pursuit, one seeks a projection matrix $\WW$ such that $\hat{P}_{n, \WW}$ is close to the target distribution $P$. Therefore, a nature choice for $Z$ would be
\begin{equation*}
	Z:= \int_{O(d, m)} \bone \left\{ \hat{P}_{n, \WW} \in \mathsf{B}_{\veps} (P) \right\} \mu_{d, m} (\d \WW),
\end{equation*}
where $\mu_{d, m}$ is the uniform measure on $O(d, m)$, and $\mathsf{B}_{\veps} (P)$ is a small 
neighborhood of $P$ in $\cuP (\R^m)$ with respect to some topology. We then calculate the first and second moments of $Z$, and show that 
$\E [Z^2] = O(1) \cdot \E [Z]^2$ for $\alpha$ below a certain feasibility threshold. When computing these moments, 
we use a refined version of the classical Sanov's theorem to obtain exact asymptotics
 of the large deviation probability $\P (\hat{P}_{n, \WW} \in \mathsf{B}_{\veps} (P))$, which 
 can be found in \cite{dinwoodie1992mesures, ney1983dominating}.

\section*{Acknowledgements}

This work was supported by the NSF through award DMS-2031883, the Simons Foundation through Award 
814639 for the Collaboration on the Theoretical Foundations of Deep Learning, the NSF grant CCF-2006489, the ONR grant N00014-18-1-2729.

\newpage

\bibliographystyle{alpha}
\bibliography{proj_purs}

\newpage

\appendix

\section{Notations}\label{sec:proof_main}

Throughout the proof, we denote by $\norm{\XX}_{\op}$ and $\norm{\XX}_{\rm F}$ the operator norm and Frobenius 
norm of a matrix $\XX$, respectively. We use $C$ and $\{ C_i \}_{i \ge 0}$ to refer to constants independent of $n$ and $d$, whereas the values of $C$ and $\{ C_i \}_{i \ge 0}$ can change from line to line. We always assume the proportional limit $n, d \to \infty$ with $n / d \to \alpha \in (0, \infty)$, which is often abbreviated as $d \to \infty$ (or $n \to \infty$). For example, when we write $\lim_{d \to \infty}$, it is understood that the limit is taken under $n = n(d)$ and $n(d) / d \to \alpha$. We will use $O_n(\cdot)$ and $o_n(\cdot)$ for the standard big-$O$ and small-$o$ notation, where $n$ is the asymptotic variable. We occasionally write $a_n \gg b_n$ or $b_n \ll a_n$ if $b_n = o_n(a_n)$. We will write $A \sim B$ if $A = A(n)$ and $B = B(n)$ is such that $\lim_{n \to \infty} A(n) / B(n) = 1$.

\section{Proofs for Section~\ref{sec:MainUnsupervised}: Unsupervised learning outer bounds}

\subsection{Proof of Theorem~\ref{thm:outer_bound_1_dim}}
Specializing the conclusion of Theorem~\ref{thm:W2_outer_bd} to the case $\varphi = \varphi_0 \equiv 1/2$, we obtain that for any $\eta > 0$, with probability one,
\begin{equation*}
	\limsup_{d \to \infty} \max_{\norm{\ww}_2 = 1} W_2^{(\eta)} \left( \frac{1}{n} \sum_{i = 1}^{n} \delta_{ \left( y_i, \langle \xx_i, \ww \rangle \right)}, \Unif(\{ \pm 1 \}) \otimes \sN (0, 1) \right) \le \frac{1}{\sqrt{\alpha}},
\end{equation*}
where $y_i \sim \Unif(\{ \pm 1 \})$ is independent of $\xx_i \sim \sN (\bzero, \id_d)$ for $i \in [n]$. This further implies that
\begin{equation*}
	\limsup_{d \to \infty} \max_{\norm{\ww}_2 = 1} W_2 \left( \frac{1}{n} \sum_{i = 1}^{n} \delta_{ \langle \xx_i, \ww \rangle }, \sN (0, 1) \right) \le \frac{1}{\sqrt{\alpha}}.
\end{equation*}
Assume $P$ is $(\alpha, 1)$-feasible, then there exists a sequence of random vectors $\ww \in \S^{d - 1}$ such that
\begin{equation*}
	\hat{P}_{n, \ww} = \frac{1}{n} \sum_{i = 1}^{n} \delta_{\langle \xx_i, \ww \rangle} \stackrel{w}{\Rightarrow} P \ \text{in probability}.
\end{equation*}
By Lemma~\ref{lem:BL_diatance_conv}, $d_{\rm BL} (\hat{P}_{n, \ww}, P) \stackrel{p}{\to} 0$. Hence, we can choose a subsequence $\{ n_k \}$ such that $d_{\rm BL} (\hat{P}_{n_k, \ww}, P) \to 0$ almost surely. Along this subsequence, we have
\begin{equation*}
	\limsup_{k \to \infty} W_2 \left( \hat{P}_{n_k, \ww}, \sN (0, 1) \right) \le \frac{1}{\sqrt{\alpha}} \ \text{almost surely}.
\end{equation*}
According to Corollary 6.11 in \cite{villani2009optimal}, the $W_2$ distance is lower semicontinuous (with respect to the weak topology on $\cuP(\R)$), so we immediately get that
\begin{equation*}
	W_2 \left( P, \sN(0, 1) \right) \le \limsup_{k \to \infty} W_2 \left( \hat{P}_{n_k, \ww}, \sN (0, 1) \right) \le \frac{1}{\sqrt{\alpha}}.
\end{equation*}
This proves the upper bound, i.e.,
\begin{equation*}
	\sup \{ W_2 \left( P, \sN(0, 1) \right): P \in \cuF_{1, \alpha} \} \le \frac{1}{\sqrt{\alpha}}.
\end{equation*}
Now we show that $1 / \sqrt{\alpha}$ is achievable. Let $\XX = \UU \bD \VV^\top$ be the singular value decomposition of $\XX$, where $\UU = [\uu_1, \cdots, \uu_n] \in \R^{n \times n}$ are the left singular vectors of $\XX$, $\bD \in \R^{n \times d}$ is a diagonal matrix whose diagonal elements correspond to the singular values of $\XX$, which we denote as $\{ s_i \}_{1 \le i \le \min(n, d)}$, and $\VV = [\vv_1, \cdots, \vv_d] \in \R^{d \times d}$ are the right singular vectors of $\XX$. Taking $\ww = \vv_1$, then it follows that
\begin{equation*}
	\XX \ww = \UU \bD \VV^\top \vv_1 = \UU \bD \ee_1 = s_1 \UU \ee_1 = s_1 \uu_1.
\end{equation*}
According to \cite[Theorem 5.11]{bai2010spectral}, we know that $s_1 / \sqrt{n} \to 1 + 1 / \sqrt{\alpha}$ almost surely as $n \to \infty$. Moreover, due to rotational invariance of Gaussian random matrix we deduce that $\uu_1 \sim \Unif (\S^{n - 1})$. Then, Lemma~\ref{lem:unif_empr_dist} implies that as $n \to \infty$,
\begin{equation*}
	\frac{1}{n} \sum_{i = 1}^{n} \delta_{\sqrt{n} u_{1,i}} \stackrel{w}{\Rightarrow} \sN (0, 1) \ \text{in probability}.
\end{equation*}
Applying Slutsky's theorem yields that
\begin{equation*}
	\hat{P}_{n, \ww} = \frac{1}{n} \sum_{i = 1}^{n} \delta_{\langle \xx_i, \ww \rangle} = \frac{1}{n} \sum_{i = 1}^{n} \delta_{s_1 u_{1,i}} \stackrel{w}{\Rightarrow} \left( 1 + \frac{1}{\sqrt{\alpha}} \right) \sN (0, 1) = \sN \left( 0, \left( 1 + \frac{1}{\sqrt{\alpha}} \right)^2 \right) := P \ \text{in probability}.
\end{equation*}
Therefore, $P$ is $(\alpha, 1)$-feasible. Now, let $G_1$ and $G_2$ be two $\sN (0, 1)$ random variables, then we have
\begin{align*}
	W_2 \left( P, \sN(0, 1) \right)^2 = & \inf_{\Gamma(G_1, G_2)} \E \left[ \left( \left( 1 + \frac{1}{\sqrt{\alpha}} \right) G_1 - G_2 \right)^2 \right] \\
	= & \left( 1 + \frac{1}{\sqrt{\alpha}} \right)^2 + 1 - 2 \left( 1 + \frac{1}{\sqrt{\alpha}} \right) \cdot \sup_{\Gamma(G_1, G_2)} \E \left[ G_1 G_2 \right] \\
	\stackrel{(i)}{=} & \left( 1 + \frac{1}{\sqrt{\alpha}} \right)^2 + 1 - 2 \left( 1 + \frac{1}{\sqrt{\alpha}} \right) = \frac{1}{\alpha},
\end{align*}
where the equality $(i)$ is achieved if and only if $G_1 = G_2$. We thus obtain that $W_2 (P, \sN (0, 1)) = 1 / \sqrt{\alpha}$ and the desired result follows naturally.

\subsection{Proof of Theorem~\ref{thm:div_outer_bd}}
This proof consists of two parts: (i) We show a KL-Wasserstein outer bound property as described in the Theorem statement with $W_2$ distance replaced by $W_p$ for any $p \in [1, 2)$. This is necessary since we need a variant of Sanov's theorem which is valid only for $p \in [1, 2)$; (ii) Then, we send $p \to 2$ to obtain the KL-Wasserstein outer bound for $W_2$ distance, making use of the crucial Lemma~\ref{lem:Wp_continuity}.

\noindent {\bf Part (i).} Choose any $\veps_0 < 1$ and fix $p \in [1, 2)$, we will show that for any $\veps \in (0, \veps_0)$ and $\eta > 0$, there exist constants $C_i = C_i (\eta, \veps) > 0$, $i = 0, 1$, such that with probability at least $1 - C_0 \exp(-C_1 n)$, the following happens: For all $\WW \in O (d, m)$, we can find a $Q \in \cuP (\R^m)$ satisfying
\begin{equation}\label{eq:W_p_KL_bound}
	W_p \left( \hat{P}_{n, \WW}, Q \right) \le (1 + \eta) \sqrt{\frac{m}{\alpha}} \veps \sqrt{\log \left( \frac{C}{\veps} \right)}, \ D_{\rm KL} \left( Q \Vert \sN (\bzero, \id_m) \right) \le (1 + \eta) \frac{m}{\alpha} \log \left( \frac{C}{\veps} \right),
\end{equation}
where we define $\hat{P}_{n, \WW} = (1 / n) \sum_{i = 1}^{n} \delta_{\xx_i^\top \WW}$. Similar to the proof of Theorem~\ref{thm:W2_outer_bd}, we note that there exist positive constants $C_0 (\eta)$ and $C_1 (\eta)$ such that $\P (\norm{\XX}_{\rm op} > (1 + \eta / 2)(1 + 1 / \sqrt{\alpha}) \sqrt{n} ) \le C_0 \exp ( - C_1 n)$, see, e.g., Eq. (2.3) in \cite{rudelson2010non}. Hence, without loss of generality we may assume from now on that $\norm{\XX}_{\rm op} \le (1 + \eta / 2)(1 + 1 / \sqrt{\alpha}) \sqrt{n}$, all future statements will be proven conditioned on this event that happens with high probability.

Recall that $O (d, m)$ denotes the set of all $d \times m$ orthogonal matrices, namely
\begin{equation*}
	O (d, m) = \left\{ \WW \in \R^{d \times m}: \WW^\top \WW = \id_m \right\}.
\end{equation*}
Let $N_{\veps} (d, m)$ be an $\veps$-covering of $O (d, m)$ with respect to the operator norm, namely for any $\WW \in O(d, m)$, one can find some $\WW' \in N_{\veps} (d, m)$ such that $\norm{\WW - \WW'}_{\op} \le \veps$. According to Lemma 4.1 in \cite{hinrichs2017entropy}, we can choose $N_{\veps} (d, m)$ such that
\begin{equation}\label{eq:cover_bd_op}
	\vert N_{\veps} (d, m) \vert \le \left( \frac{C}{\veps} \right)^{d m},
\end{equation}
where $C > 0$ is an absolute constant. Let us define
\begin{equation}
	r = r_{m, \alpha}(\veps) = (1 + \eta) \frac{m}{\alpha} \log \left( \frac{C}{\veps} \right), \ \mathsf{B}_{\rm KL} (r) = \left\{ Q \in \cuP (\R^m): D_{\rm KL} (Q \Vert \normal(\bzero, \id_m)) \le r \right\},
\end{equation}
and
\begin{equation}
	H_{r} (\XX) = \sup_{\WW \in O(d, m)} W_p \left( \hat{P}_{n, \WW}, \mathsf{B}_{\rm KL} (r) \right) = \sup_{\WW \in O(d, m)} \inf_{Q \in \mathsf{B}_{\rm KL} (r)} W_p \left( \hat{P}_{n, \WW}, Q \right),
\end{equation}
where $W_p (P, X) = \inf_{Q \in X} W_p (P, Q)$ for $X \subset \cuP (\R^m)$. Then, we have the following decomposition:
\begin{equation}\label{eq:H_r_upper_bd}
	H_r (\XX) \le \sup_{\WW \in O(d, m)} \left\{ W_p \left( \hat{P}_{n, \WW}, \mathsf{B}_{\rm KL} (r) \right) - W_p \left( \hat{P}_{n, \Pi_0 \WW}, \mathsf{B}_{\rm KL} (r) \right) \right\} + \sup_{\WW \in N_{\veps} (d, m)} W_p \left( \hat{P}_{n, \WW}, \mathsf{B}_{\rm KL} (r) \right),
\end{equation}
where $\Pi_0 \WW$ is the projection of $\WW$ onto $N_{\veps} (d, m)$ with respect to $\norm{\cdot}_{\op}$.
Note that proving Eq.~\eqref{eq:W_p_KL_bound} is equivalent to showing that with high probability,
\begin{equation*}
	H_r (\XX) \le (1 + \eta) \sqrt{\frac{m}{\alpha}} \veps \sqrt{\log \left( \frac{C}{\veps} \right)}.
\end{equation*} 
It then suffices to control the two terms on the right hand side of Eq.~\eqref{eq:H_r_upper_bd} respectively.

We first control the discretization error. For future convenience, denote
\begin{equation*}
	F_r (\XX) = \sup_{\WW \in O(d, m)} \left\{ W_p \left( \hat{P}_{n, \WW}, \mathsf{B}_{\rm KL} (r) \right) - W_p \left( \hat{P}_{n, \Pi_0 \WW}, \mathsf{B}_{\rm KL} (r) \right) \right\}.
\end{equation*}
Then, we know that
\begin{align*}
	\left\vert F_r (\XX) - F_r (\XX') \right\vert \le\, & 2 \sup_{\WW \in O(d, m)} \left\vert W_p \left( \frac{1}{n} \sum_{i=1}^{n} \delta_{\WW^\top \xx_i}, \mathsf{B}_{\rm KL} (r) \right) - W_p \left( \frac{1}{n} \sum_{i=1}^{n} \delta_{\WW^\top \xx_i'}, \mathsf{B}_{\rm KL} (r) \right) \right\vert \\
	\stackrel{(i)}{\le}\, & 2 \sup_{\WW \in O(d, m)} W_p \left( \frac{1}{n} \sum_{i=1}^{n} \delta_{\WW^\top \xx_i}, \frac{1}{n} \sum_{i=1}^{n} \delta_{\WW^\top \xx_i'} \right) \\
	\le\, & 2 \sup_{\WW \in O(d, m)} W_2 \left( \frac{1}{n} \sum_{i=1}^{n} \delta_{\WW^\top \xx_i}, \frac{1}{n} \sum_{i=1}^{n} \delta_{\WW^\top \xx_i'} \right) \\
	\le\, & 2 \sup_{\WW \in O(d, m)} \sqrt{\frac{1}{n} \sum_{i=1}^{n} \norm{\WW^\top (\xx_i - \xx_i')}_2^2} = 2 \sup_{\WW \in O(d, m)} \frac{1}{\sqrt{n}} \norm{(\XX - \XX') \WW}_{\rm F} \\
	\le\, & 2 \sup_{\WW \in O(d, m)} \frac{1}{\sqrt{n}} \norm{\XX - \XX'}_{\rm F} \norm{\WW}_{\op} \le \frac{2}{\sqrt{n}} \norm{\XX - \XX'}_{\rm F},
\end{align*}
where $(i)$ is due to triangle inequality: $\vert W_p (P, X) - W_p (P', X) \vert \le W_p (P, P')$ for $p \in [1, 2]$. The above calculation shows that $F_r (\XX)$ is a $(2 / \sqrt{n})$-Lipschitz function of $\XX$. According to Gaussian concentration inequality, for any $t > 0$ we have
\begin{equation*}
	\P \left( \left\vert F_r (\XX) - \E [F_r (\XX)] \right\vert \ge t \right) \le 2 \exp \left( - \frac{n t^2}{8} \right).
\end{equation*}
Next, we use chaining to show that
\begin{equation}\label{eq:chaining_bound}
	\E [F_r (\XX)] = \E \left[ \sup_{\WW \in O(d, m)} \left\{ W_p \left( \hat{P}_{n, \WW}, \mathsf{B}_{\rm KL} (r) \right) - W_p \left( \hat{P}_{n, \Pi_0 \WW}, \mathsf{B}_{\rm KL} (r) \right) \right\} \right] \le \left(1 + \frac{\eta}{4} \right) \sqrt{\frac{m}{\alpha}} \veps \sqrt{\log \left( \frac{C}{\veps} \right)}.
\end{equation}
For $\GG = (G_1, \cdots, G_n) \in \R^{m \times n}$ and $\GG' = (G_1', \cdots, G_n') \in \R^{m \times n}$, we have
\begin{align*}
	& \left\vert W_p \left( \frac{1}{n} \sum_{i=1}^{n} \delta_{G_i}, \mathsf{B}_{\rm KL} (r) \right) - W_p \left( \frac{1}{n} \sum_{i=1}^{n} \delta_{G_i'}, \mathsf{B}_{\rm KL} (r) \right) \right\vert \le W_p \left( \frac{1}{n} \sum_{i=1}^{n} \delta_{G_i}, \frac{1}{n} \sum_{i=1}^{n} \delta_{G_i'} \right) \\
	\le\, & W_2 \left( \frac{1}{n} \sum_{i=1}^{n} \delta_{G_i}, \frac{1}{n} \sum_{i=1}^{n} \delta_{G_i'} \right) \le \frac{1}{\sqrt{n}} \sqrt{\sum_{i=1}^{n} \norm{G_i - G_i'}_2^2} = \frac{1}{\sqrt{n}} \norm{\GG - \GG'}_{\rm F},
\end{align*}
which implies that $W_p ( (1/n) \sum_{i=1}^{n} \delta_{G_i}, \mathsf{B}_{\rm KL} (r) )$ is an $(1 / \sqrt{n})$-Lipschitz function of $\GG$. According to Lemma~\ref{lem:MGF_diff_Lip}, we obtain that for $\WW, \WW' \in O(d, m)$ and $\forall t > 0$,
\begin{equation*}
	\E \left[ \exp \left( t \left( W_p \left( \hat{P}_{n, \WW}, \mathsf{B}_{\rm KL} (r) \right) - W_p \left( \hat{P}_{n, \WW'}, \mathsf{B}_{\rm KL} (r) \right) \right) \right) \right] \le \exp \left( \frac{t^2}{2 n} \norm{\WW - \WW'}_{\op}^2 \right),
\end{equation*}
thus leading to
\begin{equation}
	\norm{W_p \left( \hat{P}_{n, \WW}, \mathsf{B}_{\rm KL} (r) \right) - W_p \left( \hat{P}_{n, \WW'}, \mathsf{B}_{\rm KL} (r) \right)}_{\Psi_2} \le \frac{1}{\sqrt{n}} \norm{\WW - \WW'}_{\op},
\end{equation}
where $\norm{\cdot}_{\Psi_2}$ denotes the sub-Gaussian norm. Applying Lemma 13.1 of \cite{boucheron2013concentration}, we deduce that
\begin{align*}
	\E [F_r (\XX)] =\, & \E \left[ \sup_{\WW \in O(d, m)} \left\{ W_p \left( \hat{P}_{n, \WW}, \mathsf{B}_{\rm KL} (r) \right) - W_p \left( \hat{P}_{n, \Pi_0 \WW}, \mathsf{B}_{\rm KL} (r) \right) \right\} \right] \\
	\le\, & \frac{C}{\sqrt{n}} \int_{0}^{\veps} \sqrt{\log N(O(d, m), \norm{\cdot}_{\op}, u)} \d u \\
	\stackrel{(i)}{\le}\, &  C \sqrt{\frac{md}{n}} \int_{0}^{\veps} \sqrt{\log \left( \frac{C}{u} \right)} \d u \le C \sqrt{\frac{md}{n}} \veps \sqrt{\log \left( \frac{C}{\veps} \right)} \\
	\stackrel{(ii)}{\le}\, & \left( 1 + \frac{\eta}{4} \right) \sqrt{\frac{m}{\alpha}} \veps \sqrt{\log \left( \frac{C}{\veps} \right)},
\end{align*}
where $(i)$ follows from Eq.~\eqref{eq:cover_bd_op}, and $(ii)$ follows from recasting $C \veps$ as $\veps$ since $C$ is an absolute constant. This proves Eq.~\eqref{eq:chaining_bound}, combining which with the Gaussian concentration argument gives that
\begin{equation*}
	\sup_{\WW \in O(d, m)} \left\{ W_p \left( \hat{P}_{n, \WW}, \mathsf{B}_{\rm KL} (r) \right) - W_p \left( \hat{P}_{n, \Pi_0 \WW}, \mathsf{B}_{\rm KL} (r) \right) \right\} \le \left( 1 + \frac{\eta}{2} \right) \sqrt{\frac{m}{\alpha}} \veps \sqrt{\log \left( \frac{C}{\veps} \right)}
\end{equation*}
with probability at least $1 - C_0 \exp (- C_1 n)$.

In order to complete the proof of Eq.~\eqref{eq:W_p_KL_bound}, it suffices to show that
\begin{equation*}
	\P \left( \sup_{\WW \in N_{\veps} (d, m)} W_p \left( \hat{P}_{n, \WW}, \mathsf{B}_{\rm KL} (r) \right) > \frac{\eta}{2} \sqrt{\frac{m}{\alpha}} \veps \sqrt{\log \left( \frac{C}{\veps} \right)} \right) \le C_0 \exp(- C_1 n),
\end{equation*}
Equivalently, we prove that with probability no less than $1 - C_0 \exp(- C_1 n)$, for any $\WW \in N_{\veps} (d, m)$, there exists $Q \in \cuP (\R^m)$ such that 
\begin{equation}\label{eq:covering_div_bd}
	W_p \left( \hat{P}_{n, \WW}, Q \right) \le \frac{\eta}{2} \sqrt{\frac{m}{\alpha}} \veps \sqrt{\log \left( \frac{C}{\veps} \right)}, \ D_{\rm KL} \left( Q \Vert \sN (\bzero, \id_m) \right) \le r = (1 + \eta) \frac{m}{\alpha} \log \left( \frac{C}{\veps} \right).
\end{equation}
Let $\mu_2 (P)$ denote the second moment of a probability measure $P$. Denoting the set of distributions $P$ that satisfy Eq.~\eqref{eq:covering_div_bd} (with $\hat{P}_{n, \WW}$ replaced by $P$) as $S_{m, p}$ and defining $T_{m, p} = \{ P \notin S_{m, p}: \mu_2 (P) < \infty \}$, we get that for any fixed $\WW \in N_{\veps} (d, m)$, 
\begin{equation*}
	\P \left( \hat{P}_{n, \WW} \notin S_{m, p} \right) = \P \left( \hat{P}_{n, \WW} \in T_{m, p} \right) = \P \left( \frac{1}{n} \sum_{i = 1}^{n} \delta_{G_i} \in T_{m, p} \right),
\end{equation*}
where $G_i \sim_{\iid} \sN (\bzero, \id_m)$, and the last equality is due to the fact that $\xx_i^\top \WW \sim_{\iid} \sN (\bzero, \id_m)$. 

According to a variant of Sanov's theorem \cite[Thm. 1.1]{Wang2010}, we deduce that
\begin{equation*}
	\limsup_{n \to \infty} \frac{1}{n} \log \P \left( \frac{1}{n} \sum_{i = 1}^{n} \delta_{G_i} \in T_{m, p} \right) \le - \inf_{Q \in \mathrm{cl}_{\tau_p} (T_{m, p})} D_{\rm KL} \left( Q \Vert \sN (\bzero, \id_m) \right),
\end{equation*}
where $\mathrm{cl}_{\tau_p} (T_{m, p})$ denotes the closure of $T_{m, p}$ with respect to the $\tau_p$-topology that is induced by the $W_p$ distance in $\cuP (\R^m)$. It is straightforward to verify the conditions of Theorem 1.1 from \cite{Wang2010} for standard Gaussian distribution:
\begin{equation*}
	\Lambda (\lambda) = \log \int_{\R^m} \frac{1}{(2 \pi)^{m / 2}} \exp \left( \lambda \norm{\xx}_2^p - \frac{\norm{\xx}_2^2}{2} \right) \d \xx < \infty
\end{equation*}
for any $\lambda > 0$ and $p < 2$. We next prove by contradiction that
\begin{equation*}
	\inf_{Q \in \mathrm{cl}_{\tau_p} (T_{m, p})} D_{\rm KL} \left( Q \Vert \sN (\bzero, \id_m) \right) \ge (1 + \eta) \frac{m}{\alpha} \log \left( \frac{C}{\veps} \right).
\end{equation*}
Assume this is not true, then we can find a sequence of probability measures $\{ Q_k \}$ such that
\begin{equation*}
	Q_k \in T_{m, p}, \ W_p \left( Q_k, Q \right) \to 0, \ \text{and} \ D_{\rm KL} \left( Q \Vert \sN (\bzero, \id_m) \right) < (1 + \eta) \frac{m}{\alpha} \log \left( \frac{C}{\veps} \right).
\end{equation*}
Hence, for sufficiently large $k$ we have
\begin{equation*}
	W_p \left( Q_k, Q \right) < \frac{\eta}{2} \sqrt{\frac{m}{\alpha}} \veps \sqrt{\log \left( \frac{C}{\veps} \right)} \implies Q_k \in S_{m, p},
\end{equation*}
which contradicts the assumption that $Q_k \in T_{m, p}$. As a consequence, we obtain that
\begin{equation*}
	\limsup_{n \to \infty} \frac{1}{n} \log \P \left( \frac{1}{n} \sum_{i = 1}^{n} \delta_{G_i} \in T_{m, p} \right) \le - (1 + \eta) \frac{m}{\alpha} \log \left( \frac{C}{\veps} \right).
\end{equation*}
Hence, for large enough $n$, applying a union bound gives that
\begin{align*}
	& \P \left( \exists \WW \in N_{\veps} (d, m), \ \text{s.t.} \ \hat{P}_{n, \WW} \notin S_{m, p} \right) \le \vert N_{\veps} (d, m) \vert \sup_{\WW \in N_{\veps} (d, m)} \P \left( \hat{P}_{n, \WW} \notin S_{m, p} \right) \\
	\le & \left( \frac{C}{\veps} \right)^{d m} \exp \left( - n \cdot \left( 1 + \frac{\eta}{2} \right) \frac{m}{\alpha} \log \left( \frac{C}{\veps} \right) \right) = \exp \left( - m \log \left( \frac{C}{\veps} \right) \left( \left( 1 + \frac{\eta}{2} \right) \frac{n}{\alpha} - d \right) \right),
\end{align*}
which converges to $0$ exponentially fast as $n \to \infty$, since $n / d \to \alpha$. This proves Eq.~\eqref{eq:covering_div_bd}, and as a consequence Eq.~\eqref{eq:W_p_KL_bound} follows naturally. Using a similar argument as in the proof of Theorem~\ref{thm:W2_outer_bd}, we deduce (by Borel-Cantelli Lemma) that with probability one, for all sufficiently large $n$ and $\WW \in O (d, m)$, there exists $Q \in \cuP (\R^m)$ satisfying
\begin{equation*}
	W_p \left( \hat{P}_{n, \WW}, Q \right) \le (1 + \eta) \sqrt{\frac{m}{\alpha}} \veps \sqrt{\log \left( \frac{C}{\veps} \right)}, \ D_{\rm KL} \left( Q \Vert \sN (\bzero, \id_m) \right) \le (1 + \eta) \frac{m}{\alpha} \log \left( \frac{C}{\veps} \right).
\end{equation*}

\noindent {\bf Part (ii).} Now we continue the proof of Theorem~\ref{thm:div_outer_bd}. Let $P$ be an $(\alpha, m)$-feasible distribution, then there exists a sequence of random probability measures $\{ \hat{P}_{n, \WW} \}_{n \ge 1}$ such that $\hat{P}_{n, \WW} \stackrel{w}{\Rightarrow} P$ in probability. Similarly as in the proof of Theorem~\ref{thm:outer_bound_1_dim}, we can assume without loss of generality that $\hat{P}_{n, \WW} \stackrel{w}{\Rightarrow} P$ almost surely. For notational convenience we rewrite $\hat{P}_{n, \WW}$ as $P_n$. According to the previous claim from part (i), we know that for any $p < 2$ and sufficiently large $n$, there exists $Q_n \in \cuP (\R^m)$ such that
\begin{equation*}
	W_p \left( P_n, Q_n \right) \le \left( 1 + \frac{\eta}{2} \right) C(\alpha, m) \veps, \ D_{\rm KL} \left( Q_n \Vert \sN (\bzero, \id_m) \right) \le (1 + \eta) \frac{m}{\alpha} \log \left( \frac{C}{\veps} \right),
\end{equation*}
where we denote $C(\alpha, m) = \sqrt{(m / \alpha) \log (C / \veps)}$. Based on the transport-entropy inequality \cite[Thm. 1.1]{talagrand1996transportation}, we know that the sequence $\{ Q_n \}_{n \ge 1}$ has uniformly bounded second moments. Hence, $\{ Q_n \}$ is sequentially compact with respect to the $\tau_p$-topology (as $p < 2$), thus leading to the existence of a converging subsequence (still denoted as $\{ Q_n \}$) in $W_p$ metric. Consequently, there exists a $k \in \mathbb{N}$ satisfying that
\begin{equation*}
	W_p \left( Q_k, Q_n \right) \le \frac{\eta}{2} C(\alpha, m) \veps, \ \forall n \ge k.
\end{equation*}
Similar to the proof of Theorem~\ref{thm:outer_bound_1_dim}, using the lower semicontinuity of $W_p$ distance yields that
\begin{align*}
	W_p \left( P, Q_k \right) \le & \liminf_{n \to \infty} W_p \left( P_n, Q_k \right) \le \liminf_{n \to \infty} W_p \left( P_n, Q_n \right) + \limsup_{n \to \infty} W_p \left( Q_n, Q_k \right) \\
	\le & \left( 1 + \frac{\eta}{2} \right) C(\alpha, m) \veps + \frac{\eta}{2} C(\alpha, m) \veps = \left( 1 + \eta \right) C(\alpha, m) \veps.
\end{align*}
Since we already know that $D_{\rm KL} ( Q_n \Vert \sN (\bzero, \id_m) ) \le (1 + \eta) ( m / \alpha ) \log ( C / \veps )$, recasting $Q_k$ as $Q_p$ yields the following result: For any $p \in [1, 2)$, there exists a probability measure $Q_p \in \cuP (\R^m)$ such that
\begin{equation*}
	W_p \left( P, Q_p \right) \le (1 + \eta) \sqrt{\frac{m}{\alpha}} \veps \sqrt{\log \left( \frac{C}{\veps} \right)}, \ D_{\rm KL} \left( Q_p \Vert \sN (\bzero, \id_m) \right) \le (1 + \eta) \frac{m}{\alpha} \log \left( \frac{C}{\veps} \right).
\end{equation*}

Now we are in position to complete the proof of Theorem~\ref{thm:div_outer_bd}. Take a sequence $\{ p_n \} \subset [1, 2)$ such that $p_n \to 2$ and let $\{ Q_{p_n} \}$ be defined as above. Using again the transport-entropy inequality, we know that for all $k \in \mathbb{N}$, any subsequence of $\{ Q_{p_n} \}$ has a further subsequence which converges in $W_{p_k}$ metric (since $\{ Q_{p_n} \}$ is tight in $\tau_{p_k}$-topology). Proceeding with the diagonal argument, we can find a subsequence (still denoted as $\{ Q_{p_n} \}$) such that
\begin{equation*}
	Q_{p_n} \stackrel{W_{p_k}}{\longrightarrow} Q, \ \forall k \in \mathbb{N} \ \text{for some} \ Q \in \cuP (\R^m).
\end{equation*}
Note that $Q$ satisfies $D_{\rm KL} \left( Q \Vert \sN (\bzero, \id_m) \right) \le (1 + \eta) (m / \alpha) \log \left( C / \veps \right)$ as well, since any closed KL-divergence ball is compact (thus closed) in $\tau_p$-topology if $p < 2$, see, e.g., Theorem 1.8 (c) from \cite{eichelsbacher1996large}. Moreover, we know that
\begin{equation*}
	W_{p_k} \left( P, Q \right) \le (1 + \eta) \sqrt{\frac{m}{\alpha}} \veps \sqrt{\log \left( \frac{C}{\veps} \right)}, \ \forall k \in \mathbb{N},
\end{equation*}
which further implies that
\begin{equation*}
	W_{p} \left( P, Q \right) \le (1 + \eta) \sqrt{\frac{m}{\alpha}} \veps \sqrt{\log \left( \frac{C}{\veps} \right)}, \ \forall p \in [1, 2).
\end{equation*}
Applying Lemma~\ref{lem:Wp_continuity} immediately yields $W_2 \left( P, Q \right) \le (1 + \eta) \veps \sqrt{m / \alpha} \sqrt{\log \left( C / \veps \right)}$. This proves that for any $\eta > 0$ and $\veps \in (0, \veps_0)$,
\begin{equation*}
	\cuF_{m,\alpha} \subseteq \cuS_m \left( (1 + \eta) \sqrt{\frac{m}{\alpha}} \veps \sqrt{\log \left( \frac{C}{\veps} \right)}, (1 + \eta) \frac{m}{\alpha} \log \left( \frac{C}{\veps} \right) \right).
\end{equation*}
Sending $\eta \to 0^{+}$ and using the compactness argument again gives the conclusion of Theorem~\ref{thm:div_outer_bd}.

\subsection{Proof of Theorem~\ref{thm:m_W2_outer_bd}}
Fix $P \in \cuF_{m,\alpha}$, and let $Q$ be as described in the statement of Theorem~\ref{thm:div_outer_bd}. According to the Transport-Entropy inequality \cite[Thm. 1.1]{talagrand1996transportation}, we obtain that
\begin{equation*}
	W_2 \left( Q, \sN (\bzero, \id_m) \right) \le \sqrt{2 D_{\rm KL} \left( Q \Vert \sN (\bzero, \id_m) \right)} \le \sqrt{\frac{2 m}{\alpha} \log \left( \frac{C}{\veps} \right)},
\end{equation*}
which leads to
\begin{equation*}
	W_2 \left( P, \sN (\bzero, \id_m) \right) \le W_2 \left( P, Q \right) + W_2 \left( Q, \sN (\bzero, \id_m) \right) \le \sqrt{\frac{m}{\alpha}} \veps \sqrt{\log \left( \frac{C}{\veps} \right)} + \sqrt{ \frac{2 m}{\alpha} \log \left( \frac{C}{\veps} \right)}.
\end{equation*}
Taking $\veps = \veps_0 / 2$ and noting that $\veps_0$ is an absolute constant 
gives
\begin{equation*}
	W_2 \left( P, \sN (\bzero, \id_m) \right) = O \left( \sqrt{\frac{m}{\alpha}} \right).
\end{equation*}
This completes the proof of the first part of Theorem~\ref{thm:m_W2_outer_bd}. As for the lower bound, it suffices to take $\WW = (\vv_i(\XX))_{i \le m}$, where $\vv_i (\XX)$ is the $i$-th right singular vector of $\XX$. Similar to the proof of Theorem~\ref{thm:W2_outer_bd}, we can show that
\begin{equation*}
	\frac{1}{n} \sum_{i = 1}^{n} \delta_{\WW^\top \xx_i} \stackrel{w}{\Rightarrow} P = \left( 1 + \frac{1}{\sqrt{\alpha}} \right) \sN ( \bzero, \id_m ) \ \text{in probability},
\end{equation*}
and it is straightforward to verify $W_2 (P, \sN(\bzero, \id_m)) = \sqrt{m / \alpha}$.

\subsection{Proof of Theorem~\ref{thm:infeasible_info_dim}}
Since $\underline{d} (P) < m (1 - 1 / \alpha)$, we know that $C_0 (\alpha, m) := m / \alpha < m - \underline{d} (P)$ (abbreviated as $C_0$) and $C_1 (\alpha, m) := \sqrt{m / \alpha} < \infty$ (abbreviated as $C_1$). Then Theorem~\ref{thm:div_outer_bd} tells us that for all $\veps \in (0, \veps_0)$, there exists $Q \in \cuP (\R^m)$ such that (we may recast $\veps \sqrt{\log (C / \veps)}$ as $\veps$)
\begin{equation*}
	W_2 \left( P, Q \right) \le C_1 \veps, \ D_{\rm KL} \left( Q \Vert \sN (\bzero, \id_m) \right) \le C_0 \log \left( \frac{C}{\veps} \right).
\end{equation*}
Let $P_{\veps}$ denote the $\veps$-discretization of $P$ as in Definition~\ref{def:info_dim}, namely $P_{\veps} = \Law(\langle X \rangle_{\veps})$ if $P = \Law(X)$, then we know that
\begin{equation*}
	W_2 \left( P_{\veps}, Q \right) \le \left( C_1 + \sqrt{m} \right) \veps, \ D_{\rm KL} \left( Q \Vert \sN (\bzero, \id_m) \right) \le C_0 \log \left( \frac{C}{\veps} \right).
\end{equation*}
The first inequality is due to $W_2 (P_{\veps}, P) \le \sqrt{m} \veps$ and the triangle inequality.
Moreover, denoting $P_{\veps} = \sum_{k = 1}^{\infty} p_k \delta_{\cc_k}$, we know that
\begin{equation}\label{eq:info_dim_ineq}
	\underline{d} (P) = \liminf_{\veps \to 0} \frac{H(P_{\veps})}{\log (1 / \veps)} = \liminf_{\veps \to 0} \frac{- \sum_{k = 1}^{\infty} p_k \log p_k }{\log (1 / \veps)} < m - C_0.
\end{equation}
Since $D_{\rm KL} \left( Q \Vert \sN (\bzero, \id_m) \right) < \infty$, we know that $Q$ is absolutely continuous (with respect to the Lebesgue measure) with density $q$. Furthermore, when $W_2 (P_\veps, Q)$ is minimized, the optimal coupling must be such that $U = T(V)$ for some measurable function $T$, where $U \sim P_\veps$, $V \sim Q$. Recalling $P_\veps = \sum_{k = 1}^{\infty} p_k \delta_{\cc_k}$ and defining $S_k = T^{- 1} (\cc_k)$, we deduce that
\begin{equation*}
	W_2 \left( P_\veps, Q \right)^2 = \int_{\R^m} \norm{\xx - T(\xx)}_2^2 q(\xx) \d \xx = \sum_{k = 1}^{\infty} \int_{S_k} \norm{\xx - \cc_k}_2^2 q(\xx) \d \xx,
\end{equation*}
where $\int_{S_k} q (\xx) \d \xx = \P (U = \cc_k) = p_k$. Now let us estimate $D_{\rm KL} \left( Q \Vert \sN (\bzero, \id_m) \right)$, we have
\begin{align*}
	& D_{\rm KL} \left( Q \Vert \sN (\bzero, \id_m) \right) = \int_{\R^m} q (\xx) \log \left( \frac{q(\xx)}{\phi (\xx)} \right) \d \xx \\
	= & \frac{m}{2} \log (2 \pi) + \frac{1}{2} \int_{\R^m} \norm{\xx}_2^2 q(\xx) \d \xx + \int_{\R^m} q(\xx) \log q(\xx) \d \xx \\
	= & \frac{m}{2} \log (2 \pi) + \frac{1}{2} \mu_2 \left( Q \right) + \sum_{k = 1}^{\infty} \int_{S_k} q(\xx) \log q(\xx) \d \xx,
\end{align*}
where we recall that $\mu_2 (Q)$ is the second moment of $Q$. Now we define for all $k \ge 1$, $q_k (\uu) = \veps^m q(\cc_k + \veps \uu)$ and $S_k (\veps) = (S_k - \cc_k) / \veps$, then it follows that
\begin{equation*}
	(C_1 + \sqrt{m}) ^2 \veps^2 \ge W_2 \left( P_\veps, Q \right)^2 = \sum_{k = 1}^{\infty} \int_{S_k (\veps)} \veps^2 \norm{\uu}_2^2 \cdot \veps^m q (\cc_k + \veps \uu) \d \uu = \veps^2 \sum_{k = 1}^{\infty} \int_{S_k (\veps)} \norm{\uu}_2^2 q_k (\uu) \d \uu,
\end{equation*}
which further implies that
\begin{equation*}
	\sum_{k = 1}^{\infty} \int_{S_k (\veps)} \norm{\uu}_2^2 q_k (\uu) \d \uu \le (C_1 + \sqrt{m})^2.
\end{equation*}
We can show in a similar way that $\int_{S_k (\veps)} q_k (\uu) \d \uu = p_k$. Therefore, $r_k (\uu) = q_k (\uu) / p_k$ is a density on $S_k (\veps)$. For every $k \in \mathbb{N}$, we then have
\begin{align*}
	\int_{S_k} q(\xx) \log q(\xx) \d \xx = & \int_{S_k (\veps)} q_k (\uu) \log \left( \frac{q_k (\uu)}{\veps^m} \right) \d \uu \\
	= & m p_k \log \left( \frac{1}{\veps} \right) + \int_{S_k (\veps)} q_k (\uu) \log q_k (\uu) \d \uu \\
	= & m p_k \log \left( \frac{1}{\veps} \right) + p_k \log p_k + p_k \int_{S_k (\veps)} r_k (\uu) \log r_k (\uu) \d \uu \\
	\stackrel{(i)}{\ge} & m p_k \log \left( \frac{1}{\veps} \right) + p_k \log p_k + p_k \int_{S_k (\veps)} r_k (\uu) \log \phi (\uu) \d \uu \\
	= & m p_k \log \left( \frac{1}{\veps} \right) + p_k \log p_k - \frac{m p_k}{2} \log(2 \pi) - \frac{1}{2} \int_{S_k (\veps)} \norm{\uu}_2^2 q_k (\uu) \d \uu,
\end{align*}
where $(i)$ follows from the non-negativity of the KL divergence. Summing up the above inequality for $k \ge 1$, we finally deduce that
\begin{align*}
	D_{\rm KL} \left( Q \Vert \sN (\bzero, \id_m) \right) = & \frac{m}{2} \log (2 \pi) + \frac{1}{2} \mu_2 \left( Q \right) + \sum_{k = 1}^{\infty} \int_{S_k} q(\xx) \log q(\xx) \d \xx \\
	\ge & \frac{1}{2} \mu_2 \left( Q \right) + m \log \left( \frac{1}{\veps} \right) + \sum_{k = 1}^{\infty} p_k \log p_k - \frac{1}{2} \sum_{k = 1}^{\infty} \int_{S_k (\veps)} \norm{\uu}_2^2 q_k (\uu) \d \uu \\
	\ge & m \log \left( \frac{1}{\veps} \right) + \sum_{k = 1}^{\infty} p_k \log p_k - \frac{(C_1 + \sqrt{m})^2}{2},
\end{align*}
which in turn implies that
\begin{align*}
	& C_0 \log \left( \frac{C}{\veps} \right) \ge m \log \left( \frac{1}{\veps} \right) + \sum_{k = 1}^{\infty} p_k \log p_k - \frac{(C_1 + \sqrt{m})^2}{2} \\
	\implies & H(P_{\veps}) \ge (m - C_0) \log \left( \frac{1}{\veps} \right) - C_0 \log C - \frac{(C_1 + \sqrt{m})^2}{2}.
\end{align*}
Since we know that $\underline{d} (P) < m - C_0$, sending $\veps \to 0$ in the above equation results a contradiction with Eq.~\eqref{eq:info_dim_ineq}. Hence, $P$ is not $(\alpha, m)$-feasible, as desired. To prove the ``as a consequence" part we only need to notice that if $P$ supported on an $s$-dimensional smooth manifold in $\R^m$, then $\underline{d} (P) \le s$, see, for example, the discussion following Theorem 4 in \cite{renyi1959dimension}.

\section{Proofs for Section~\ref{sec:2nd_moment}: Unsupervised learning inner bounds}
To avoid heavy notation, we denote $p_i = \Phi_A (\aa_i)$ and $r_i = P_A (\aa_i)$ for each $i \in [M]$, and use $\pp = (p_1, \cdots, p_M)$ and $\rr = (r_1, \cdots, r_M)$ to represent the discrete distribution $\Phi_A$ and $P_A$, respectively. Therefore, $\oalpha_{\rm lb} (P_A) = \oalpha_{\rm lb} (\rr)$. Moreover, we write $\pp (\QQ) = \Phi_{A, \QQ}^{(2)}$ and $\rr (\QQ) = R_{A, \QQ}^{(2)}$ for $\Phi_{A, \QQ}^{(2)}$ and $R_{A, \QQ}^{(2)}$ introduced in Definition~\ref{def:2nd_thres}. In terms of components, this means $p_{ij} (\QQ) = \Phi_{A, \QQ}^{(2)} (\aa_i, \aa_j)$ and $r_{ij} (\QQ) = R_{A, \QQ}^{(2)} (\aa_i, \aa_j)$ for all $i, j \in [M]$.

\subsection{Proofs of Theorem~\ref{thm:discrete_lower_bd} and Lemma~\ref{lem:var_rep_KL}}

\begin{proof}[\bf Proof of Theorem~\ref{thm:discrete_lower_bd}]
	Let $\mu_{d, m}$ denote the uniform measure on the Stiefel manifold $O(d, m)$, and $\kk = (k_1, \cdots, k_M)$ is such that $k_i \in \mathbb{N}$, $\forall i \in [M]$ and $\sum_{i = 1}^{M} k_i = n$. From now on we denote $q_i (n) = k_i / n$ and write $q_i = q_i (n)$ for simplicity. Note that we can choose the vector $\kk$ such that for all $i \in [M]$, $\lim_{n \to \infty} q_i (n) = r_i$, and that $\vert q_i(n) - r_i \vert = O(1/n)$. We further denote $\qq = \Law (X)$ where $\P (X = \aa_i) = k_i / n = q_i$ for $1 \le i \le M$, and define the following random variable (note that $\qq$ is supported on $A$):
\begin{equation}\label{eq:def_Z}
	Z = \int_{O(d, m)} \bone \left\{ \langle \hat{P}_{n, \WW} \rangle_A = \qq \right\} \mu_{d, m} (\d \WW) = \int_{O(d, m)} \bone \left\{ \frac{1}{n} \sum_{i = 1}^{n} \delta_{\langle \WW^\top \xx_i \rangle_A} = \qq \right\} \mu_{d, m} (\d \WW).
\end{equation}
Then, we know that $Z \ge 0$, and that $Z > 0 \Longleftrightarrow \exists \WW \in O(d, m)$ such that $\langle \hat{P}_{n, \WW} \rangle_A = \qq$. The lemma below establishes that this holds true with probability bounded away from zero under some conditions regarding $\alpha$ and the $q_i$'s, which will be verified later.
\begin{lem}\label{lem:2nd_moment}
Define the following probability distribution on $[M] \times [M]$:
\begin{equation}\label{eq:def_info_proj_q}
	\begin{split}
		\qq (\QQ) = & (q_{ij} (\QQ))_{i, j \in [M]} = \argmin_{\qq = (q_{ij})_{i, j \in [M]}} D_{\sKL} \left( \qq \Vert \pp(\QQ) \right) \\
		= & \argmin_{\qq = (q_{ij})_{i, j \in [M]}} \sum_{i, j = 1}^{M} q_{ij} \log \frac{q_{ij}}{p_{ij} (\QQ)}, \\
		{\rm subject \ to} \ & \sum_{j=1}^{M} q_{ij} = q_i, \ \sum_{i=1}^{M} q_{ij} = q_j, \ \forall i, j \in [M].
	\end{split}
\end{equation}
In fact, $\qq (\QQ)$ is the information projection of the distribution $\pp (\QQ)$ onto the set of distributions on $[M] \times [M]$ whose both margins are $\qq = (q_1, \cdots, q_M)$. Assume that $\inf_{n \in \mathbb{N}, i \in [M]} q_i (n) > 0$, and that there exist $\alpha' > \alpha$ and $\eta > 0$ such that for all $n$ large enough,
\begin{equation}\label{eq:2nd_cond_complex}
\begin{split}
	\min_{\lambda_{\max} (\QQ^\top \QQ) \le 1/2} \left\{ D_{\sKL} \left( \qq(\QQ) \Vert \pp(\QQ) \right) + \frac{1}{\alpha'} I(\QQ) \right\} \ge & D_{\sKL} \left( \qq(\bzero) \Vert \pp(\bzero) \right) + \frac{1}{\alpha'} I(\bzero) - O \left( \frac{1}{n^2} \right), \\
	\min_{\lambda_{\max} (\QQ^\top \QQ) \in (1/2, 1)} \left\{ D_{\sKL} \left( \qq(\QQ) \Vert \pp(\QQ) \right) + \frac{1}{\alpha} I(\QQ) \right\} \ge & D_{\sKL} \left( \qq(\bzero) \Vert \pp(\bzero) \right) + \frac{1}{\alpha} I(\bzero) + \eta.
\end{split}
\end{equation}
Then, we have $\liminf_{n \to \infty} \P (Z > 0) > 0$.
\end{lem}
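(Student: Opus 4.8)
The plan is to run the second moment method through the Paley--Zygmund inequality $\P(Z>0)\ge \E[Z]^2/\E[Z^2]$, so that it suffices to prove $\E[Z^2]\le C\,\E[Z]^2$ for an absolute constant $C$ and all large $n$. Since the two moments have the same exponential order, the whole point is to extract and match their \emph{polynomial} prefactors, which is where the refined large-deviation estimates enter.

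\textbf{First moment.} By rotational invariance of $\mu_{d,m}$ and of the standard Gaussian, for any fixed $\WW_0\in O(d,m)$ the vectors $\WW_0^\top\xx_i$ are i.i.d.\ $\sN(\bzero,\id_m)$, so the points $\langle\WW_0^\top\xx_i\rangle_A$ are i.i.d.\ with law $\Phi_A=\pp$. Hence $\E[Z]=\P(\text{the type of $n$ i.i.d.\ $\pp$-samples equals }\qq)$, a multinomial probability, and Stirling's formula together with the hypothesis $\inf_{n,i}q_i(n)>0$ gives $\E[Z]=c_n\,n^{-(M-1)/2}\exp(-n\,D_{\sKL}(\qq\Vert\pp))$ with $c_n$ bounded away from $0$ and $\infty$.

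\textbf{Second moment as an integral over $\QQ$.} I would condition on $\QQ:=\WW_1^\top\WW_2$. By rotational invariance the double integral over $O(d,m)^2$ collapses to a single integral against the law of the top-left $m\times m$ block of a Haar orthogonal matrix, whose density on $\{\QQ:\QQ^\top\QQ\prec\id_m\}$ is $\rho_{d,m}(\QQ)=V_{d,m}\det(\id_m-\QQ^\top\QQ)^{(d-2m-1)/2}=V_{d,m}\exp(-(d-2m-1)I(\QQ))$ (a standard matrix-variate computation; for $m=1$ this is the $(1-q^2)^{(d-3)/2}$ density of a coordinate of $\Unif(\S^{d-1})$), and a Laplace estimate at $\QQ=\bzero$, where $I(\QQ)=\tfrac12\norm{\QQ}_{\rm F}^2+O(\norm{\QQ}_{\rm F}^4)$, pins $V_{d,m}\asymp d^{m^2/2}\asymp n^{m^2/2}$. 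Given $\QQ$, the pairs $(\WW_1^\top\xx_i,\WW_2^\top\xx_i)$ are i.i.d.\ $\sN(\bzero,[\,\id_m,\QQ;\QQ^\top,\id_m\,])$, so $(\langle\WW_1^\top\xx_i\rangle_A,\langle\WW_2^\top\xx_i\rangle_A)$ are i.i.d.\ with law $\Phi^{(2)}_{A,\QQ}=\pp(\QQ)$, and the event $\{\langle\hat P_{n,\WW_1}\rangle_A=\qq,\ \langle\hat P_{n,\WW_2}\rangle_A=\qq\}$ holds exactly when the joint type on $A\times A$ has both marginals equal to $\qq$. Summing multinomial probabilities over joint types with prescribed marginals and running a Laplace expansion over the $(M-1)^2$-dimensional marginal-constrained slice --- a refined Sanov asymptotic with exact prefactor, in the spirit of \cite{dinwoodie1992mesures,ney1983dominating} --- yields $\P(\cdot\mid\QQ)=c_n'(\QQ)\,n^{-(M-1)}\exp(-n\,D_{\sKL}(\qq(\QQ)\Vert\pp(\QQ)))$, where $\qq(\QQ)$ is the information projection of \eqref{eq:def_info_proj_q} and $c_n'(\QQ)$ is bounded above uniformly on $\{\lambda_{\max}(\QQ^\top\QQ)\le 1/2\}$. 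Assembling,
\[
\E[Z^2]\ \asymp\ d^{m^2/2}\,n^{-(M-1)}\int_{\QQ^\top\QQ\prec\id_m}\exp\!\Big(-n\big[D_{\sKL}(\qq(\QQ)\Vert\pp(\QQ))+\tfrac1{\alpha_n}I(\QQ)\big]\Big)\,e^{O(I(\QQ))}\,\d\QQ,\qquad \alpha_n:=\tfrac nd\to\alpha .
\]

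\textbf{Comparison and conclusion.} The decisive identity is that $\pp(\bzero)=\pp\otimes\pp$ and its marginal-constrained information projection is $\qq(\bzero)=\qq\otimes\qq$, so $D_{\sKL}(\qq(\bzero)\Vert\pp(\bzero))=2D_{\sKL}(\qq\Vert\pp)$; this makes the exponential factor at $\QQ=\bzero$ cancel $\E[Z]^2=c_n^2 n^{-(M-1)}\exp(-2n D_{\sKL}(\qq\Vert\pp))$, leaving $\E[Z^2]/\E[Z]^2\asymp d^{m^2/2}\int\exp(-n[\Psi^{(n)}(\QQ)-\Psi^{(n)}(\bzero)])\,e^{O(I(\QQ))}\,\d\QQ$ with $\Psi^{(n)}(\QQ):=D_{\sKL}(\qq(\QQ)\Vert\pp(\QQ))+\tfrac1{\alpha_n}I(\QQ)$. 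On $\{\lambda_{\max}(\QQ^\top\QQ)\le1/2\}$, the first line of \eqref{eq:2nd_cond_complex} (invoked with $\alpha'$; for $n$ large $\alpha_n<\alpha'$, and $I(\QQ)\ge\tfrac12\norm{\QQ}_{\rm F}^2$) gives $\Psi^{(n)}(\QQ)-\Psi^{(n)}(\bzero)\ge c\norm{\QQ}_{\rm F}^2-O(1/n^2)$ for a fixed $c>0$, so this part of the integral is $\le e^{O(1/n)}\int_{\R^{m\times m}}e^{-cn\norm{\QQ}_{\rm F}^2}\d\QQ\asymp n^{-m^2/2}$, absorbing the $d^{m^2/2}$. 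On $\{\lambda_{\max}(\QQ^\top\QQ)\in(1/2,1)\}$, the second line of \eqref{eq:2nd_cond_complex} gives $\Psi^{(n)}(\QQ)-\Psi^{(n)}(\bzero)\ge\eta/2$ for large $n$, while the $\tfrac1{\alpha_n}I(\QQ)$ term, which diverges as $\lambda_{\max}\to1$, dominates the $e^{O(I(\QQ))}$ factor, so this contribution is $\le e^{-n\eta/4}$. Hence $\E[Z^2]=O(1)\cdot\E[Z]^2$, and therefore $\liminf_n\P(Z>0)\ge\liminf_n\E[Z]^2/\E[Z^2]>0$.

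\textbf{Main obstacle.} The hard part is the exact large-deviation asymptotics --- producing the precise $n^{-(M-1)/2}$ and $n^{-(M-1)}$ prefactors, with the implicit constants controlled \emph{uniformly} in $\QQ$ over the near region --- and the accompanying Laplace analysis of the $\QQ$-integral against the exact density of the corner block of a Haar orthogonal matrix, in particular pinning $V_{d,m}\asymp d^{m^2/2}$, which is exactly what makes the polynomial orders cancel in $\E[Z^2]/\E[Z]^2$. A secondary technical nuisance is the degeneracy of $\pp(\QQ)=\Phi^{(2)}_{A,\QQ}$ as $\lambda_{\max}(\QQ^\top\QQ)\to1$, which must be handled by a separate, cruder bound, though only on a region that is exponentially suppressed anyway.
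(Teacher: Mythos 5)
Your proposal follows the same strategy as the paper's proof: Paley--Zygmund reduces to $\E[Z^2]=O(\E[Z]^2)$, you split the $\QQ$-integral at $\lambda_{\max}(\QQ^\top\QQ)=1/2$, use the density of the corner block of a Haar orthogonal matrix with $V_{d,m}\asymp d^{m^2/2}$, exploit the factorization $\qq(\bzero)=\qq\otimes\qq$ so that $D_{\sKL}(\qq(\bzero)\Vert\pp(\bzero))=2D_{\sKL}(\qq\Vert\pp)$, and invoke a refined (local-limit-type) Sanov estimate in the near region and a crude bound in the far region so that the polynomial prefactors $n^{-(M-1)/2}$, $n^{-(M-1)}$, $d^{m^2/2}$, and $n^{-m^2/2}$ cancel. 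This matches the paper's argument, including the roles played by the two lines of Eq.~\eqref{eq:2nd_cond_complex} via $\alpha'>\alpha$ and $\eta$.
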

The proof of Lemma~\ref{lem:2nd_moment} will be deferred to Section~\ref{sec:proof_lem_2nd}. Next it suffices to verify the conditions of Lemma~\ref{lem:2nd_moment}. Note that by our assumption, $r_i > 0$ and $\vert q_i (n) - r_i \vert = O(1/n)$, which implies that $\inf_{n \in \mathbb{N}, i \in [M]} q_i (n) > 0$. To show Eq.~\eqref{eq:2nd_cond_complex}, let us choose $\alpha' \in (\alpha, \oalpha_{\rm lb} (\rr))$, then by Definition~\ref{def:2nd_thres}, we know that $D_{\sKL} \left( \rr(\QQ) \Vert \pp(\QQ) \right) + I(\QQ) / \alpha'$ achieves its unique minimum at $\QQ = \bzero$, thus leading to
	\begin{equation}\label{eq:optimal_r}
		\min_{\lambda_{\max} (\QQ^\top \QQ) \le 1/2} \left\{ D_{\sKL} \left( \rr(\QQ) \Vert \pp(\QQ) \right) + \frac{1}{\alpha'} I(\QQ) \right\} = D_{\sKL} \left( \rr(\bzero) \Vert \pp(\bzero) \right) + \frac{1}{\alpha'} I(\bzero).
	\end{equation}
    By our assumption, $\rr - \qq = O(1/n)$ and Eq.~\eqref{eq:q_bdd_away} holds (see also Lemma~\ref{lem:info_proj_cond} (c)), whose correctness is established only on the definition of $\qq(\QQ)$. Using Lemma~\ref{lem:cont_info_proj}, we then conclude that
    \begin{equation}\label{eq:sup_diff_KL}
    	\sup_{\lambda_{\max} (\QQ^\top \QQ) \le 1/2} \left\vert D_{\sKL} \left( \rr(\QQ) \Vert \pp(\QQ) \right) - D_{\sKL} \left( \qq(\QQ) \Vert \pp(\QQ) \right) \right\vert = O \left( \frac{1}{n} \right).
    \end{equation}
    Moreover, based on the proof of Lemma~\ref{lem:cont_info_proj} and the envelope theorem, we further deduce that
    \begin{equation}
    \begin{split}
    	& \min_{\lambda_{\max} (\QQ^\top \QQ) \le 1/2} \left\{ D_{\sKL} \left( \qq(\QQ) \Vert \pp(\QQ) \right) + \frac{1}{\alpha'} I(\QQ) \right\} \\
    	\stackrel{(i)}{=} & \min_{\lambda_{\max} (\QQ^\top \QQ) \le 1/2} \left\{ D_{\sKL} \left( \rr(\QQ) \Vert \pp(\QQ) \right) + \frac{1}{\alpha'} I(\QQ) \right\} + \sum_{i=1}^{M} (q_i - r_i) \left( \log \frac{r_{ii} (\bzero)}{p_{ii} (\bzero)} + 1 \right) + O \left( \frac{1}{n^2} \right) \\
    	\stackrel{(ii)}{=} & \min_{\lambda_{\max} (\QQ^\top \QQ) \le 1/2} \left\{ D_{\sKL} \left( \rr(\QQ) \Vert \pp(\QQ) \right) + \frac{1}{\alpha'} I(\QQ) \right\} + 2 \sum_{i=1}^{M} (q_i - r_i) \log \frac{r_i}{p_i} + O \left( \frac{1}{n^2} \right),
    \end{split}
    \end{equation}
    where in $(i)$ we use the fact that $D_{\sKL} \left( \rr(\QQ) \Vert \pp(\QQ) \right) + I(\QQ) / \alpha'$ is uniquely minimized at $\QQ = \bzero$, and $(ii)$ follows from Lemma~\ref{lem:info_proj_cond} (a): $r_{ij} (\bzero) = r_i r_j$. On the other hand, using Taylor expansion, we get that
    \begin{align*}
    	D_{\sKL} \left( \qq(\bzero) \Vert \pp(\bzero) \right) - D_{\sKL} \left( \rr(\bzero) \Vert \pp(\bzero) \right) = 2 \sum_{i = 1}^{M} \left( q_i \log \frac{q_i}{p_i} - r_i \log \frac{r_i}{p_i} \right) = 2 \sum_{i=1}^{M} (q_i - r_i) \log \frac{r_i}{p_i} + O \left( \frac{1}{n^2} \right).
    \end{align*}
    Combining these estimates together with Eq.~\eqref{eq:optimal_r} immediately implies the first part of Eq.~\eqref{eq:2nd_cond_complex}.

     To prove the second part, note that $\alpha < \oalpha_{\rm lb} (\rr)$. Therefore, we can choose $\eta > 0$ such that
    \begin{equation*}
    	\min_{\lambda_{\max} (\QQ^\top \QQ) \in (1/2, 1)} \left\{ D_{\sKL} \left( \rr(\QQ) \Vert \pp(\QQ) \right) + \frac{1}{\alpha} I(\QQ) \right\} \ge D_{\sKL} \left( \rr(\bzero) \Vert \pp(\bzero) \right) + \frac{1}{\alpha} I(\bzero) + 2 \eta,
    \end{equation*}
    and $\theta \in (0, 1)$ satisfying 
    \begin{equation*}
    	- \frac{1}{2 \alpha} \log (1 - \theta) \ge D_{\sKL} \left( \rr(\bzero) \Vert \pp(\bzero) \right) + 2 \eta.
    \end{equation*}
    Then, we have
    \begin{align*}
    	& \min_{\lambda_{\max} (\QQ^\top \QQ) \in (\theta, 1)} \left\{ D_{\sKL} \left( \qq(\QQ) \Vert \pp(\QQ) \right) + \frac{1}{\alpha} I(\QQ) \right\} \ge - \frac{1}{2 \alpha} \log (1 - \theta) \\
    	\ge & D_{\sKL} \left( \rr(\bzero) \Vert \pp(\bzero) \right) + 2 \eta \ge D_{\sKL} \left( \qq(\bzero) \Vert \pp(\bzero) \right) + \frac{1}{\alpha} I(\bzero) + \eta
    \end{align*}
    for sufficiently large $n$. Note that Eq.~\eqref{eq:sup_diff_KL} will still be true if we replace $1/2$ by $\theta$, meaning that
    \begin{align*}
    	& \min_{\lambda_{\max} (\QQ^\top \QQ) \in (1/2, \theta]} \left\{ D_{\sKL} \left( \qq(\QQ) \Vert \pp(\QQ) \right) + \frac{1}{\alpha} I(\QQ) \right\} \\
    	\ge & \min_{\lambda_{\max} (\QQ^\top \QQ) \in (1/2, \theta]} \left\{ D_{\sKL} \left( \rr(\QQ) \Vert \pp(\QQ) \right) + \frac{1}{\alpha} I(\QQ) \right\} - O \left( \frac{1}{n} \right) \\
    	\ge & \min_{\lambda_{\max} (\QQ^\top \QQ) \in (1/2, 1)} \left\{ D_{\sKL} \left( \rr(\QQ) \Vert \pp(\QQ) \right) + \frac{1}{\alpha} I(\QQ) \right\} - O \left( \frac{1}{n} \right) \\
    	\ge & D_{\sKL} \left( \rr(\bzero) \Vert \pp(\bzero) \right) + \frac{1}{\alpha} I(\bzero) + 2 \eta - O \left( \frac{1}{n} \right) \ge D_{\sKL} \left( \qq(\bzero) \Vert \pp(\bzero) \right) + \frac{1}{\alpha} I(\bzero) + \eta
    \end{align*}
    for sufficiently large $n$. Combining the above estimates, we finally get that
    \begin{equation*}
    	\min_{\lambda_{\max} (\QQ^\top \QQ) \in (1/2, 1)} \left\{ D_{\sKL} \left( \qq(\QQ) \Vert \pp(\QQ) \right) + \frac{1}{\alpha} I(\QQ) \right\} \ge D_{\sKL} \left( \qq(\bzero) \Vert \pp(\bzero) \right) + \frac{1}{\alpha} I(\bzero) + \eta.
    \end{equation*}
    This concludes the proof of Eq.~\eqref{eq:2nd_cond_complex}. Applying Lemmas~\ref{lem:2nd_moment}, we know that with probability bounded away from zero, there exists a random orthogonal matrix $\WW = \WW_n (\XX)$ such that $\langle \hat{P}_{n, \WW} \rangle_A = \qq$. By our assumption, $d_{\sTV} (\qq, \rr) = O(1/n)$. This completes the proof of Theorem~\ref{thm:discrete_lower_bd}.
\end{proof}

\begin{proof}[\bf Proof of Lemma~\ref{lem:var_rep_KL}]
	The first equality can be obtained by directly differentiating the right hand side, and one can verify that the supremum is achieved at $g(x) = \log Q(x) - \log P(x) + C$, where $C$ is any constant. Using this relationship, we obtain that
	\begin{align*}
		D_{\sKL} (\rr(\QQ) \Vert \pp(\QQ)) = & \sup_{(g_{ij})_{i, j \in [M]}} \left\{ \sum_{i, j = 1}^{M} g_{ij} r_{ij} (\QQ) - \log \left( \sum_{i, j=1}^{M} \exp \left( g_{ij} \right) p_{ij} (\QQ) \right) \right\} \\
		\ge & \sup_{(\lambda_i, \mu_i)_{i \in [M]}} \left\{ \sum_{i, j = 1}^{M} \left( \lambda_i + \mu_j \right) r_{ij} (\QQ) - \log \left( \sum_{i, j=1}^{M} \exp \left( \lambda_i + \mu_j \right) p_{ij} (\QQ) \right) \right\} \\
		\stackrel{(i)}{=} & \sup_{(\lambda_i, \mu_i)_{i \in [M]}} \left\{ \sum_{i = 1}^{M} \left( \lambda_i + \mu_i \right) r_i - \log \left( \sum_{i, j=1}^{M} \exp \left( \lambda_i + \mu_j \right) p_{ij} (\QQ) \right) \right\},
	\end{align*}
	where $(i)$ follows from the constraints on $(r_{ij} (\QQ))_{i, j \in [M]}$. Now, making a change of variable
	\begin{equation*}
		\lambda_i \mapsto \lambda_i + \log(r_i / p_i), \ \mu_j \mapsto \mu_j + \log(r_j / p_j),
	\end{equation*}
    we obtain that
    \begin{align*}
    	D_{\sKL} (\rr(\QQ) \Vert \pp(\QQ)) \ge & 2 D_{\sKL} (\rr \Vert \pp) + \sup_{(\lambda_i, \mu_i)_{i \in [M]}} \left\{ \sum_{i = 1}^{M} \left( \lambda_i + \mu_i \right) r_i - \log \left( \sum_{i, j=1}^{M} \exp \left( \lambda_i + \mu_j \right) \frac{r_i r_j p_{ij} (\QQ)}{p_i p_j} \right) \right\} \\
    	= & D_{\sKL} \left( \rr(\bzero) \Vert \pp(\bzero) \right) + \sup_{(\lambda_i, \mu_i)_{i \in [M]}} \left\{ \sum_{i = 1}^{M} \left( \lambda_i + \mu_i \right) r_i - \log \left( \sum_{i, j=1}^{M} \exp \left( \lambda_i + \mu_j \right) \frac{r_i r_j p_{ij} (\QQ)}{p_i p_j} \right) \right\}.
    \end{align*}
	 This completes the proof.
\end{proof}

\subsection{Proofs of Theorem~\ref{thm:inner_bd_1D} and Proposition~\ref{prop:char_lower_bd}}
\begin{proof}[\bf Proof of Theorem~\ref{thm:inner_bd_1D}]
First, we show that the optimum of the maximin problem on the right hand side of Eq.~\eqref{eq:def_feas_thres_cont} is achieved at $q_0 \in (0, 1)$, where $q_0$ is the unique solution to the equation
    \begin{equation}\label{eq:solve_q_0}
    	\frac{I(q)}{D_{\sKL} (P \Vert \sN (0, 1))} = \frac{1}{2 \left( c_2^2 + q \left( \chi^2 (P, \sN (0, 1)) - c_2^2 \right) \right)}.
    \end{equation}
Note that, on the one hand, the left hand side of Eq.~\eqref{eq:solve_q_0} is an increasing function of $q$, which equals $0$ when $q = 0$, and diverges to $+\infty$ when $q \to 1^{-}$. On the other hand, the right hand side of Eq.~\eqref{eq:solve_q_0} is decreasing in $q$ and always positive. According to the intermediate zero theorem, we know that Eq.~\eqref{eq:solve_q_0} has a unique solution $q_0 \in (0, 1)$. It then follows that
\begin{equation*}
	\alpha_{\rm lb} (P) = \frac{I(q_0)}{D_{\sKL} (P \Vert \sN (0, 1))} = \frac{1}{2 \left( c_2^2 + q_0 \left( \chi^2 (P, \sN (0, 1)) - c_2^2 \right) \right)}.
\end{equation*}

Next, we show that there exists a finite set $A$, and the corresponding discrete distribution $\rr$ obtained by projecting the probability measure $P$ onto $A$ (i.e., $\rr = \langle P \rangle_A$), such that $\alpha_{\rm lb} (P) \le \oalpha_{\rm lb} (\rr)$, and consequently $\alpha < \oalpha_{\rm lb} (\rr)$. According to Definition~\ref{def:2nd_thres}, this is equivalent to the claim that the function $q \in [-1, 1] \mapsto F(q) + I(q) / \alpha$ achieves its unique minimum at $q = 0$, where $F(q) = D_{\sKL} \left( \rr(q) \Vert \pp(q) \right)$. To prove our claim, let us fix this $q_0 \in (0, 1)$, and consider the following two situations: 

\noindent {\bf Case $(i)$.} $\vert q \vert \le q_0$: According to Lemma~\ref{lem:var_rep_KL}, we have (choose $\lambda_i = \mu_i = 0$ for all $i \in [M]$)
\begin{equation}\label{eq:disc_F_lower_bd}
	F(q) - F(0) \ge - \log \left( \sum_{i, j=1}^{M} \frac{r_i r_j p_{ij} (q)}{p_i p_j} \right) = - \log \left( 1 + \sum_{i, j=1}^{M} \frac{(r_i - p_i) (r_j - p_j) p_{ij} (q)}{p_i p_j} \right).
\end{equation}
Denote by $\phi_q (x, y)$ the PDF of $\sN (0, \begin{bmatrix}
	1 & q \\
	q & 1
\end{bmatrix} )$. Then, we have
\begin{align*}
	\sum_{i, j=1}^{M} \frac{(r_i - p_i) (r_j - p_j) p_{ij} (q)}{p_i p_j} = & \sum_{i, j=1}^{M} \int_{I_i} (p(x) - \phi(x)) \d x \cdot \int_{I_j} (p(x) - \phi(x)) \d x \cdot \frac{\int_{I_i \times I_j} \phi_q (x, y) \d x \d y}{\int_{I_i} \phi(x) \d x \cdot \int_{I_j} \phi (x) \d x} \\
	= & \sum_{i, j=1}^{M} \int_{I_i \times I_j} (p(x) - \phi(x)) (p(y) - \phi(y)) \d x \d y \cdot \frac{\int_{I_i \times I_j} \phi_q (x, y) \d x \d y}{\int_{I_i \times I_j} \phi(x) \phi(y) \d x \d y},
\end{align*}
where $I_i$ is the interval $\{ x \in \R: \langle x \rangle_A = a_i \}$. By direct computation, we get that
\begin{equation}\label{eq:phi_q_and_phi}
	\phi_q (x, y) = \phi (x) \phi (y) \cdot \frac{1}{\sqrt{1 - q^2}} \exp \left( - \frac{q^2}{2 (1 - q^2)} (x^2 + y^2) + \frac{q}{1 - q^2} xy \right) = \phi(x) \phi(y) \left( 1 + q xy + q^2 f_q (x, y) \right),
\end{equation}
where $\{ f_q (x, y) \}_{\vert q \vert \le q_0}$ is a family of equicontinuous functions on any compact set in $\R^2$. Using this representation for $\phi_q (x, y)$, we then obtain that
\begin{align*}
	& \sum_{i, j=1}^{M} \int_{I_i \times I_j} (p(x) - \phi(x)) (p(y) - \phi(y)) \d x \d y \cdot \frac{\int_{I_i \times I_j} \phi_q (x, y) \d x \d y}{\int_{I_i \times I_j} \phi(x) \phi(y) \d x \d y} \\
	= & \sum_{i, j=1}^{M} \int_{I_i \times I_j} (p(x) - \phi(x)) (p(y) - \phi(y)) \d x \d y \\
	& \times \left( 1 + q \cdot \frac{\int_{I_i \times I_j} x y \phi(x) \phi(y) \d x \d y}{\int_{I_i \times I_j} \phi(x) \phi(y) \d x \d y} + q^2 \cdot \frac{\int_{I_i \times I_j} f_q (x, y) \phi(x) \phi(y) \d x \d y}{\int_{I_i \times I_j} \phi(x) \phi(y) \d x \d y} \right) \\
	= & q \left( \sum_{i=1}^{M} \int_{I_i} (p(x) - \phi(x)) \d x \frac{\int_{I_i} x \phi(x) \d x}{\int_{I_i} \phi(x) \d x} \right)^2 \\
	& + q^2 \sum_{i, j=1}^{M} \int_{I_i \times I_j} (p(x) - \phi(x)) (p(y) - \phi(y)) \d x \d y \frac{\int_{I_i \times I_j} f_q (x, y) \phi(x) \phi(y) \d x \d y}{\int_{I_i \times I_j} \phi(x) \phi(y) \d x \d y}.
\end{align*}
By Assumption~\ref{ass:1D_inner_bd}, $\int_{\R} x p(x) = \int_{\R} x \phi (x) = 0$. Therefore, we can find a discrete set $A$ such that
\begin{equation*}
	\sum_{i=1}^{M} \int_{I_i} (p(x) - \phi(x)) \d x \frac{\int_{I_i} x \phi(x) \d x}{\int_{I_i} \phi(x) \d x} = 0,
\end{equation*}
thus leading to
\begin{align*}
	& \sum_{i, j=1}^{M} \int_{I_i \times I_j} (p(x) - \phi(x)) (p(y) - \phi(y)) \d x \d y \cdot \frac{\int_{I_i \times I_j} \phi_q (x, y) \d x \d y}{\int_{I_i \times I_j} \phi(x) \phi(y) \d x \d y} \\
	= & q^2 \sum_{i, j=1}^{M} \int_{I_i \times I_j} (p(x) - \phi(x)) (p(y) - \phi(y)) \d x \d y \frac{\int_{I_i \times I_j} f_q (x, y) \phi(x) \phi(y) \d x \d y}{\int_{I_i \times I_j} \phi(x) \phi(y) \d x \d y}.
\end{align*}
Since $\{ f_q (x, y) \}_{\vert q \vert \le q_0}$ is equicontinuous on any compact set, we know that for any $\veps > 0$, there exists a discretization $A = A_{\veps}$, such that the following holds uniformly for $q \in [- q_0, q_0]$:
\begin{small}
\begin{equation}\label{eq:2nd_order_approx}
	\Bigg\vert \sum_{i, j=1}^{M} \int_{I_i \times I_j} (p(x) - \phi(x)) (p(y) - \phi(y)) \d x \d y \frac{\int_{I_i \times I_j} f_q (x, y) \phi(x) \phi(y) \d x \d y}{\int_{I_i \times I_j} \phi(x) \phi(y) \d x \d y} - \int_{\R \times \R} (p(x) - \phi(x)) (p(y) - \phi(y)) f_q (x, y) \d x \d y \Bigg\vert \le \veps.
\end{equation}
\end{small}
This is achieved by first choosing $\max_{i \in [M]} \vert a_i \vert$ to be large enough, so that the integral outside the compact set (convex hull of $A^2$) is uniformly small for $\vert q \vert \le q_0$. Then, we can control the approximation error of the Riemann sum on a sufficiently refined grid. Combining Eq.s~\eqref{eq:disc_F_lower_bd} and \eqref{eq:2nd_order_approx}, we obtain that
\begin{align*}
	F(q) - F(0) \ge & - q^2 \sum_{i, j=1}^{M} \int_{I_i \times I_j} (p(x) - \phi(x)) (p(y) - \phi(y)) \d x \d y \frac{\int_{I_i \times I_j} f_q (x, y) \phi(x) \phi(y) \d x \d y}{\int_{I_i \times I_j} \phi(x) \phi(y) \d x \d y} \\
	\ge & - q^2 \left( \int_{\R \times \R} (p(x) - \phi(x)) (p(y) - \phi(y)) f_q (x, y) \d x \d y + \veps \right) \\
	= & - \veps q^2 - \int_{\R \times \R} (p(x) - \phi(x)) (p(y) - \phi(y)) \frac{\phi_q (x, y)}{\phi(x) \phi(y)} \d x \d y,
\end{align*}
where the last line follows from the definition of $f_q$. Therefore, if we can show that under the assumptions of Theorem~\ref{thm:inner_bd_1D},
\begin{equation}\label{eq:sup_KL_q0}
	\sup_{\vert q \vert \le q_0} \left\{ \frac{1}{q^2} \int_{\R \times \R} (p(x) - \phi(x)) (p(y) - \phi(y)) \frac{\phi_q (x, y)}{\phi(x) \phi(y)} \d x \d y \right\} < \frac{1}{2 \alpha},
\end{equation}
then there exists an $\veps > 0$ such that for all $q \in [-q_0, q_0]$,
\begin{equation*}
    F(q) - F(0) \ge - \veps q^2 - \left( \frac{1}{2 \alpha} - \veps \right) q^2 = - \frac{q^2}{2 \alpha} \ge - \frac{I(q)}{\alpha},
\end{equation*}
where the last line follows from the well-known inequality $I(q) \ge q^2 / 2$, and equality holds only for $q=0$. Hence, we have verified that
\begin{equation*}
	F(q) + \frac{I(q)}{\alpha} > F(0) + \frac{I(0)}{\alpha}, \ \forall q \in [-q_0, q_0] \backslash \{0\}.
\end{equation*}
The remaining part of Case $(i)$ will be devoted to proving Eq.~\eqref{eq:sup_KL_q0}. Denote $h(x) = (p(x) - \phi(x)) / \phi(x)$. Then according to Assumption~\ref{ass:1D_inner_bd}, we have
\begin{equation}\label{eq:cond_on_h}
    \E [h(G)] = \E [G h(G)] = 0, \ \text{and} \ \E[h^2 (G)] = \chi^2 \left( P, \sN(0, 1) \right) < \infty, \ G \sim \sN (0, 1),
\end{equation}
and similarly
\begin{equation*}
	\int_{\R \times \R} (p(x) - \phi(x)) (p(y) - \phi(y)) \frac{\phi_q (x, y)}{\phi(x) \phi(y)} \d x \d y = \E_{q} \left[ h(G_1) h(G_2) \right],
\end{equation*}
where $\E_q$ represents the expectation taken under $( G_{1}^\top, G_{2}^\top )^\top \sim \sN ( 0, \ \begin{bmatrix}
	1 & q \\
	q & 1
\end{bmatrix} )$. According to Eq.~\eqref{eq:cond_on_h}, we have the following expansion in $L^2 (\R, \gamma)$ (where $\gamma$ is the standard Gaussian measure):
\begin{equation*}
	h (x) = \sum_{n = 2}^{\infty} c_n {\rm He}_n (x) \implies \E \left[ h^2 (G) \right] = \sum_{n=2}^{\infty} c_n^2 < \infty.
\end{equation*}
Here, the sequence $\{ {\rm He}_n (x) \}_{n \ge 0}$ are normalized Hermite polynomials, satisfying that
\begin{equation*}
	\E \left[ {\rm He}_m (G) {\rm He}_n (G) \right] = \delta_{n m}, \ \forall n, m \in \mathbb{N},
\end{equation*}
where $\delta_{nm} = \bone_{n = m}$. Moreover, $\{ {\rm He}_n (x) \}_{n \ge 0}$ is a complete orthonormal basis of $L^2 (\R, \gamma)$.
Using this representation, we thus obtain that
\begin{equation*}
	\E_q \left[ h(G_1) h(G_2) \right] = \sum_{n, m = 2}^{\infty} c_n c_m \E_q \left[ {\rm He}_n (G_1) {\rm He}_m (G_2) \right] = \sum_{n=2}^{\infty} c_n^2 q^n,
\end{equation*}
where the last step follows from \cite[Lem. 3]{dudeja2018learning} (see also \cite{o2014analysis}). Note that for $q \in [-q_0, q_0]$, we have
\begin{equation*}
	\sum_{n=2}^{\infty} c_n^2 q^n \le c_2^2 q^2 + \sum_{n=3}^{\infty} c_n^2 q_0^{n - 2} q^2 \le \left( c_2^2 + q_0 \sum_{n = 3}^{\infty} c_n^2 \right) q^2 = \left( c_2^2 + q_0 \left( \chi^2 (P, \sN(0, 1)) - c_2^2 \right) \right) q^2.
\end{equation*}
Hence, Eq.~\eqref{eq:sup_KL_q0} is equivalent to
\begin{equation*}
	c_2^2 + q_0 \left( \chi^2 (P, \sN(0, 1)) - c_2^2 \right) < \frac{1}{2 \alpha} \Longleftrightarrow \alpha < \frac{1}{2 \left( c_2^2 + q_0 \left( \chi^2 (P, \sN(0, 1)) - c_2^2 \right) \right)},
\end{equation*}
which follows from the definition of $q_0$ and our assumption.

\noindent {\bf Case $(ii)$.} $\vert q \vert > q_0$: Using the chain rule for KL divergence, and notice $I(q) \ge I(q_0)$, we get that
\begin{equation*}
	F(q) + \frac{I(q)}{\alpha} = D_{\sKL} (\rr(q) \Vert \pp(q)) + \frac{I(q)}{\alpha} \ge D_{\sKL} (\rr \Vert \pp) + \frac{I(q_0)}{\alpha}.
\end{equation*}
Note that $F(0) = 2 D_{\sKL} (\rr \Vert \pp)$. Hence, as long as $\alpha < I(q_0) / D_{\sKL} (\rr \Vert \pp)$, we can conclude that
\begin{equation*}
	F(q) + \frac{I(q)}{\alpha} > F(0) + \frac{I(0)}{\alpha}, \quad \forall \vert q \vert > q_0.
\end{equation*}
Since we can choose the discretization $A$ to be refined enough such that $D_{\sKL} (\rr \Vert \pp)$ is (arbitrarily) close to $D_{\sKL} (r \Vert \phi) = D_{\sKL} (P, \sN(0, 1))$, we only need to require that $\alpha < I(q_0) / D_{\sKL} (P, \sN(0, 1))$, which is exactly the assumption of Theorem~\ref{thm:inner_bd_1D}. This concludes the proof for Case $(ii)$.

Combining our discussion for both cases, it follows that $F(q) + I(q) / \alpha$ achieves its unique minimum at $q = 0$, which completes the proof of our claim.

We now return to the proof of Theorem~\ref{thm:inner_bd_1D}. First, we show that for any $p \in [1, 2)$, there exists a constant $c > 0$ such that
	\begin{equation*}
		\liminf_{n \to \infty} \P \left( \inf_{\ww \in \S^{d - 1}} W_p (\hat{P}_{n, \ww}, P) \le \veps \right) \ge c
	\end{equation*}
    for all small enough $\veps > 0$. To this end, note that similar to the proof of Theorem~\ref{thm:W2_outer_bd}, we can show that with high probability,
	\begin{equation*}
		\sup_{\ww \in \S^{d - 1}} \mu_2 ( \hat{P}_{n, \ww} ) \le C(\alpha)
	\end{equation*}
    for some constant $C(\alpha)$ only depending on $\alpha$.  Since $\mu_2 (P) < \infty$, one can choose a discrete set $A$ such that $W_p (P, \langle P \rangle_A) \le \veps / 3$, and
    \begin{equation*}
    	\sup_{\ww \in \S^{d - 1}} W_p \left( \hat{P}_{n, \ww}, \langle \hat{P}_{n, \ww} \rangle_A \right) \le \frac{\veps}{3}
    \end{equation*}
    with high probability. According to Theorem~\ref{thm:discrete_lower_bd}, with probability bounded away from zero, there exists a random unit vector $\ww = \ww_n (\XX) \in \S^{d - 1}$ such that
    \begin{equation*}
        W_p \left( \langle \hat{P}_{n, \ww} \rangle_A, \langle P \rangle_A \right) = O(n^{-1}) \le \frac{\veps}{3}
    \end{equation*}
    for $n$ large enough, which further implies $W_p (\hat{P}_{n, \ww}, P) \le \veps$ by triangle inequality. Therefore, we deduce that
    \begin{equation*}
    	\liminf_{n \to \infty} \P \left( W_p (\hat{P}_{n, \ww}, P) \le \veps \right) \ge c(\veps) > 0,
    \end{equation*}
    where $c(\veps)$ only depends on $\veps$. Note that in the proof of Theorem~\ref{thm:discrete_lower_bd}, we actually showed that there exists $c > 0$ such that $c(\veps) \ge c$ for all small enough $\veps > 0$. Hence, there exists a $\veps_0 > 0$ such that for all $\veps \in (0, \veps_0)$,
    \begin{equation}\label{eq:inf_W_p_prob}
    	\liminf_{n \to \infty} \P \left( \inf_{\ww \in \S^{d - 1}} W_p (\hat{P}_{n, \ww}, P) \le \veps \right) \ge c.
    \end{equation}
    This proves our claim. Now we are in position to show that $P$ is $(\alpha, 1)$-feasible. Let us define for $\XX \in \R^{n \times d}$,
    \begin{equation*}
    	F(\XX) = \inf_{\ww \in \S^{d - 1}} W_p (\hat{P}_{n, \ww}, P) = \inf_{\ww \in \S^{d - 1}} W_p \left( \frac{1}{n} \sum_{i=1}^{n} \delta_{\langle \xx_i, \ww \rangle}, P \right).
    \end{equation*}
    Then, for any $\XX, \XX' \in \R^{n \times d}$, it follows that
    \begin{align*}
    	\left\vert F(\XX) - F(\XX') \right\vert = & \left\vert \inf_{\ww \in \S^{d - 1}} W_p \left( \frac{1}{n} \sum_{i=1}^{n} \delta_{\langle \xx_i, \ww \rangle}, P \right) - \inf_{\ww \in \S^{d - 1}} W_p \left( \frac{1}{n} \sum_{i=1}^{n} \delta_{\langle \xx_i', \ww \rangle}, P \right) \right\vert \\
    	\le & \sup_{\ww \in \S^{d - 1}} \left\vert W_p \left( \frac{1}{n} \sum_{i=1}^{n} \delta_{\langle \xx_i, \ww \rangle}, P \right) - W_p \left( \frac{1}{n} \sum_{i=1}^{n} \delta_{\langle \xx_i', \ww \rangle}, P \right) \right\vert \\
    	\le & \sup_{\ww \in \S^{d - 1}} W_p \left( \frac{1}{n} \sum_{i=1}^{n} \delta_{\langle \xx_i, \ww \rangle}, \frac{1}{n} \sum_{i=1}^{n} \delta_{\langle \xx_i', \ww \rangle} \right) \\
    	\le & \sup_{\ww \in \S^{d - 1}} W_2 \left( \frac{1}{n} \sum_{i=1}^{n} \delta_{\langle \xx_i, \ww \rangle}, \frac{1}{n} \sum_{i=1}^{n} \delta_{\langle \xx_i', \ww \rangle} \right) \\
    	\le & \sqrt{\frac{1}{n} \sum_{i=1}^{n} \langle \xx_i - \xx_i', \ww \rangle^2} = \frac{1}{\sqrt{n}} \norm{(\XX - \XX') \ww}_2 \\
    	\le & \frac{1}{\sqrt{n}} \norm{\XX - \XX'}_{\op} \norm{\ww}_2 \le \frac{1}{\sqrt{n}} \norm{\XX - \XX'}_{\rm F},
    \end{align*}
    meaning that $F(\XX)$ is ($1/\sqrt{n}$)-Lipschitz. Using Gaussian concentration inequality, we obtain that
    \begin{equation*}
    	\P \left( \left\vert F(\XX) - \E [F(\XX)] \right\vert \ge t \right) \le 2 \exp \left( - \frac{n t^2}{2} \right), \ \forall t > 0,
    \end{equation*}
    which further implies that $\P (\vert F(\XX) - \E [F(\XX)] \vert \ge \veps_n) \to 0$ for some $\veps_n \to 0$, for example, $\veps_n = n^{-1/4}$. Note that Eq.~\eqref{eq:inf_W_p_prob} implies $\liminf_{n \to \infty} \P (F(\XX) \le \veps) \ge c$. Hence, for large $n$ we must have $\E [F(\XX)] \le \veps + \veps_n$, and consequently $F(\XX) \le \veps + 2 \veps_n$ with high probability. To conclude, we have proved that for all $\veps \in (0, \veps_0)$,
    \begin{equation*}
    	\lim_{n \to \infty} \P \left( \inf_{\ww \in \S^{d - 1}} W_p (\hat{P}_{n, \ww}, P) \le \veps + 2 \veps_n \right) = 1.
    \end{equation*}
    As a consequence, there exists $l_0 \in \mathbb{N}$ such that for all $l \ge l_0$, we can find a sequence of random unit vectors $\{ \ww = \ww_{n, l} (\XX) \}_{n \ge 1}$ such that
    \begin{equation*}
    	\lim_{n \to \infty} \P \left( W_p (\hat{P}_{n, \ww}, P) \le \frac{1}{l} \right) = 1.
    \end{equation*}
    Using a diagonal argument similar to the proof of Lemma~\ref{lem:closure_prop}, we can show that $P$ is $(\alpha, 1)$-feasible, as desired. This completes the proof of Theorem~\ref{thm:inner_bd_1D}.
\end{proof}

\begin{proof}[\bf Proof of Proposition~\ref{prop:char_lower_bd}]
	For simplicity, let us denote
	\begin{equation*}
		D_1 = D_{\sKL} (P \Vert \sN(0, 1)), \ D_2 = c_2^2, \ D_3 = \chi^2 (P, \sN(0, 1)) - c_2^2.
	\end{equation*}
	By definition of $\alpha_{\rm lb} (P)$, we have
	\begin{align*}
		\alpha_{\rm lb} (P) = & \max_{q \in [0, 1]} \min \left\{ \frac{I(q)}{D_1}, \ \frac{1}{2 \left( D_2 + q D_3 \right)} \right\} \\
		\stackrel{(i)}{\ge} & \max_{q \in [0, 1]} \min \left\{ \frac{I(q)}{D_1}, \ \frac{1}{4 D_2}, \ \frac{1}{4 q D_3} \right\} \\
		= & \max_{q \in [0, 1]} \min \left\{ \frac{1}{4 D_2}, \ \min \left\{ \frac{I(q)}{D_1}, \ \frac{1}{4 q D_3} \right\} \right\} \\
		\stackrel{(ii)}{=} & \min \left\{ \frac{1}{4 D_2}, \ \max_{q \in [0, 1]} \min \left\{ \frac{I(q)}{D_1}, \ \frac{1}{4 q D_3} \right\} \right\},
	\end{align*}
	where $(i)$ follows from the simple inequality $1/(a + b) \ge \min \{ 1/2a, 1/2b \}$, and $(ii)$ is due to the fact that $1/4D_2$ does not depend on $q$. Noticing that $I(q) \ge q^2/2$, we obtain
	\begin{equation*}
		\max_{q \in [0, 1]} \min \left\{ \frac{I(q)}{D_1}, \ \frac{1}{4 q D_3} \right\} \ge \max_{q \in [0, 1]} \min \left\{ \frac{q^2}{2 D_1}, \ \frac{1}{4 q D_3} \right\} = \frac{1}{2^{5/3} D_1^{1/3} D_3^{2/3}}
	\end{equation*}
	by direct calculation. Therefore, it finally follows that
	\begin{equation*}
		\alpha_{\rm lb} (P) \ge \min \left\{ \frac{1}{4 D_2}, \ \frac{1}{2^{5/3} D_1^{1/3} D_3^{2/3}} \right\} \ge \frac{1}{4} \min \left\{ \frac{1}{c_2^2}, \ \frac{1}{D_{\sKL} (P \Vert \sN(0, 1))^{1/3} \left( \chi^2 (P, \sN(0, 1)) - c_2^2 \right)^{2/3} } \right\}.
	\end{equation*}
	This completes the proof.
\end{proof}

\subsection{Proof of Theorem~\ref{thm:inner_bd_mD}}
	Assume $\alpha < \alpha_{\rm lb} (P)$. Let $Q_0$ be the neighborhood of $\bzero$ in $\R^{m \times m}$ that satisfies our assumptions, we will first show that there exists a finite set $A$, and the corresponding discrete distribution $\rr$ obtained by projecting the probability measure $P$ onto $A$, i.e., $\rr = \langle P \rangle_A$, such that the function
	\begin{equation*}
		\QQ \in \left\{ \QQ \in \R^{m \times m}: \QQ^\top \QQ \preceq \id_m \right\} \mapsto F(\QQ) + \frac{I(\QQ)}{\alpha}
	\end{equation*}
	achieves its unique minimum at $\QQ = \bzero$, where $F(\QQ) = D_{\sKL} \left( \rr(\QQ) \Vert \pp(\QQ) \right)$.
 As in the proof of Theorem~\ref{thm:inner_bd_1D}, consider the following two cases:
	
	\noindent {\bf Case $(i)$.} $\QQ \in Q_0$: Similar to the discretization arguments in Case $(i)$ in the proof of Theorem~\ref{thm:inner_bd_1D}, we obtain that there exists a small $\veps > 0$, such that
	\begin{equation*}
		F(\QQ) - F(\bzero) \ge - \veps \norm{\QQ}_{\rm F}^2 - \int_{\R^m \times \R^m} \left( p(\xx) - \phi(\xx) \right) \left( p(\yy) - \phi(\yy) \right) \frac{\phi_{\QQ} (\xx, \yy)}{\phi(\xx) \phi(\yy)} \d \xx \d \yy,
	\end{equation*}
    where $\phi_{\QQ} (\xx, \yy)$ is the PDF of $\sN \left( \bzero, \ \begin{bmatrix}
    	\id_m & \QQ \\
    	\QQ^\top & \id_m
    \end{bmatrix} \right)$. Denoting $h(\xx) = (p(\xx) - \phi(\xx)) / \phi(\xx)$, we get that
    \begin{equation*}
    	\int_{\R^m \times \R^m} \left( p(\xx) - \phi(\xx) \right) \left( p(\yy) - \phi(\yy) \right) \frac{\phi_{\QQ} (\xx, \yy)}{\phi(\xx) \phi(\yy)} \d \xx \d \yy = \E_{\QQ} \left[ h(\GG^{(1)}) h(\GG^{(2)}) \right],
    \end{equation*}
    where by Assumption~\ref{ass:mD_inner_bd}, we know that
    \begin{equation*}
    	\E [h(\GG)] = \E [\GG h(\GG)] = 0, \ \text{and} \ \E[h^2 (\GG)] = \chi^2 (P, \sN(\bzero, \id_m)) < \infty, \ \GG \sim \sN (\bzero, \id_m).
    \end{equation*}
    Note that the above identities hold as well if we replace $h(\GG)$ by $h(\UU \GG)$, where $\UU$ is any orthogonal matrix in $O(m, m)$. This is due to the rotational invariance of standard Gaussian distribution. According to Lemma~\ref{lem:mul_her_com}, the function $\xx \mapsto h (\UU \xx)$ admits the following expansion in $L^2 (\R^m, \gamma^m)$: (here $\gamma^m$ is the standard Gaussian measure on $\R^m$)
    \begin{equation*}
    	h \left( \UU \xx \right) = \sum_{\vert \nn \vert \ge 2} c_{\nn} (\UU) {\rm He}_{n_1} (x_1) \cdots {\rm He}_{n_m} (x_m),
    \end{equation*}
    where $\xx = (x_1, \cdots, x_m)$, $\nn = (n_1, \cdots, n_m)$, $c_{\nn} (\UU) = c_{n_1, \cdots, n_m} (\UU)$, and $ \vert \nn \vert = n_1 + \cdots + n_m$. Now, let $\QQ = \UU \bD \VV^\top$ be the singular value decomposition of $\QQ$, where $\UU, \VV \in O(m, m)$, $\bD = \diag (q_1, \cdots, q_m)$ and $0 \le q_i \le 1$ for all $1 \le i \le m$. Then, it follows that
    \begin{align*}
    	& \E_{\QQ} \left[ h(\GG^{(1)}) h(\GG^{(2)}) \right] = \E_{\bD} \left[ h(\UU \GG^{(1)}) h (\VV \GG^{(2)}) \right] \\
    	= & \E_{\bD} \left[ \left( \sum_{\vert \nn \vert \ge 2} c_{\nn} (\UU) {\rm He}_{n_1} (G_1^{(1)}) \cdots {\rm He}_{n_m} (G_m^{(1)}) \right) \left( \sum_{\vert \nn \vert \ge 2} c_{\nn} (\VV) {\rm He}_{n_1} (G_1^{(2)}) \cdots {\rm He}_{n_m} (G_m^{(2)}) \right) \right] \\
    	= & \sum_{\vert \nn \vert \ge 2} \sum_{\vert \bbeta \vert \ge 2} c_{\nn} (\UU) c_{\bbeta} (\VV) \E_{\bD} \left[ \prod_{i=1}^{m} {\rm He}_{n_i} (G_i^{(1)}) {\rm He}_{\beta_i} (G_i^{(2)}) \right] \\
    	\stackrel{(i)}{=} & \sum_{\vert \nn \vert \ge 2} \sum_{\vert \bbeta \vert \ge 2} c_{\nn} (\UU) c_{\bbeta} (\VV) \prod_{i=1}^{m} \E_{q_i} \left[ {\rm He}_{n_i} (G_i^{(1)}) {\rm He}_{\beta_i} (G_i^{(2)}) \right] \\
    	\stackrel{(ii)}{=} & \sum_{\vert \nn \vert \ge 2} \sum_{\vert \bbeta \vert \ge 2} c_{\nn} (\UU) c_{\bbeta} (\VV) \prod_{i=1}^{m} \bone_{n_i = \beta_i} \cdot q_i^{n_i} = \sum_{\vert \nn \vert \ge 2} c_{\nn} (\UU) c_{\nn} (\VV) \prod_{i=1}^{m} q_i^{n_i} \\
    	\le & \frac{1}{2} \left( \sum_{\vert \nn \vert \ge 2} c_{\nn} (\UU)^2 \prod_{i=1}^{m} q_i^{n_i} + \sum_{\vert \nn \vert \ge 2} c_{\nn} (\VV)^2 \prod_{i=1}^{m} q_i^{n_i} \right) \le \sup_{\UU \in O(m, m)} \left\{ \sum_{\vert \nn \vert \ge 2} c_{\nn} (\UU)^2 \prod_{i=1}^{m} q_i^{n_i} \right\},
    \end{align*}
    where $(i)$ is due to the fact that $\bD = \diag (q_1, \cdots, q_m)$, and $(ii)$ follows from \cite[Lem. 3]{dudeja2018learning}. Moreover, by definition of $I(\QQ)$, we have
    \begin{equation*}
    	I (\QQ) = - \frac{1}{2} \log \det (\id_m - \QQ^\top \QQ) = - \frac{1}{2} \log \prod_{i=1}^{m} (1 - q_i^2) = - \frac{1}{2} \sum_{i=1}^{m} \log (1 - q_i^2) \ge \frac{1}{2} \sum_{i=1}^{m} q_i^2 = \frac{1}{2} \norm{\QQ}_{\rm F}^2.
    \end{equation*}
    Note that by our assumption, for any $\QQ \in Q_0$, the following inequality holds:
    \begin{equation}\label{eq:cond_in_Q_0}
        \sup_{\UU \in O(m, m)} \left\{ \sum_{\vert \nn \vert \ge 2} c_{\nn} (\UU)^2 \prod_{i=1}^{m} q_i^{n_i} \right\} < \frac{1}{2 \alpha} \sum_{i=1}^{m} q_i^2 = \frac{1}{2 \alpha} \norm{\QQ}_{\rm F}^2,
    \end{equation}
    Hence, we conclude that $F(\QQ) + I(\QQ) / \alpha > F(\bzero) + I(\bzero) / \alpha$ for any $\QQ \in Q_0 \backslash \{ \bzero \}$. This completes the discussion for Case $(i)$.
	
	\noindent {\bf Case $(ii)$.} $\QQ \notin Q_0$: The argument here is completely the same as Case $(ii)$ in the proof of Theorem~\ref{thm:inner_bd_1D}. We only need to require that
	\begin{equation}\label{eq:cond_out_Q_0}
		\alpha < \frac{\inf_{\QQ \notin Q_0} I(\QQ)}{D_{\sKL} (P \Vert \sN(\bzero, \id_m))},
	\end{equation}
	which is already guaranteed by the theorem statement.
    For example, if we take $Q_0 = \{ \QQ \in \R^{m \times m}: \QQ^\top \QQ \preceq \id_m, \ \norm{\QQ}_{\rm F}^2 \le m q_0^2 \}$ for some $q_0 \in (0, 1)$, then we have
    \begin{equation*}
    	\inf_{\QQ \notin Q_0} I(\QQ) = \inf_{\sum_{i=1}^{m} q_i^2 \ge m q_0^2} \sum_{i = 1}^{m} I(q_i) \ge \inf_{\sum_{i=1}^{m} q_i^2 \ge m q_0^2} \sum_{i = 1}^{m} \frac{q_i^2}{2} \ge \frac{m q_0^2}{2},
    \end{equation*}
    and Eq.~\eqref{eq:cond_out_Q_0} is equivalent to $\alpha < m q_0^2 / (2 D_{\sKL} (P \Vert \sN(\bzero, \id_m)))$.
    
    Combining the arguments in Case $(i)$ and $(ii)$ proves our claim. The rest part of this proof is completely identical to the sharp concentration argument in the proof of Theorem~\ref{thm:inner_bd_1D}.

\subsection{Proof of Lemma~\ref{lem:2nd_moment}}\label{sec:proof_lem_2nd}
	According to the Paley-Zygmund inequality,
	\begin{equation*}
		\P(Z > 0) = \E [\bone \{ Z > 0 \}] \ge \frac{\E [Z \bone \{ Z > 0 \}]^2}{\E [Z^2]} = \frac{\E [Z]^2}{\E [Z^2]},
	\end{equation*}
	where the intermediate bound follows from the Cauchy-Schwarz inequality. Therefore, it suffices to show that $\E [Z^2] = O_n (\E [Z]^2)$ as $n, d \to \infty$ with $n/d \to \alpha$. To this end, let us calculate the first and second moments of $Z$. By definition, for $i = 1, \cdots, M$,
	\begin{equation*}
		p_i = \P \left( \langle G \rangle_A = \aa_i \right), \quad \text{where} \ G \sim \sN (\bzero, \id_m).
	\end{equation*}
	Note that for any $\WW \in O(d, m)$, $G_i := \WW^\top \xx_i \sim_{\iid} \sN (\bzero, \id_m)$. Hence,
	\begin{equation*}
		\E [Z] = \P \left( \frac{1}{n} \sum_{i = 1}^{n} \delta_{\langle G_i \rangle_A} = \qq \right) = \frac{n!}{\prod_{i=1}^{M} k_i ! } \prod_{i=1}^{M} p_i^{k_i}.
	\end{equation*}
	By our assumption, for all $i \in [M]$, $k_i \to \infty$ as $n \to \infty$. Using Stirling's formula gives
	\begin{align*}
		\E [Z] & = \frac{n!}{\prod_{i=1}^{M} k_i ! } \prod_{i=1}^{M} p_i^{k_i} \sim \frac{(2 \pi)^{1/2} n^{n + 1 / 2} e^{- n}}{\prod_{i=1}^{M} (2 \pi)^{1/2} k_i^{k_i + 1/2} e^{-k_i} } \prod_{i=1}^{M} p_i^{k_i} \\ 
		& = (2 \pi n)^{-(M - 1) / 2} \prod_{i = 1}^{M} q_i^{-1/2} \exp \left( - n \sum_{i = 1}^{M} q_i \log \frac{q_i}{p_i} \right) \\
		& = (2 \pi n)^{-(M - 1) / 2} \prod_{i = 1}^{M} q_i^{-1/2} \exp \left( - n D_{\sKL} (\qq \Vert \pp) \right).
	\end{align*}
	To calculate $\E [Z^2]$, we first write $Z^2$ as a double integral:
	\begin{equation*}
		Z^2 = \int_{O(d, m) \times O(d, m)} \bone \left\{ \frac{1}{n} \sum_{i = 1}^{n} \delta_{\langle \WW_1^\top \xx_i \rangle_A} = \qq, \ \frac{1}{n} \sum_{i = 1}^{n} \delta_{\langle \WW_2^\top \xx_i \rangle_A} = \qq \right\} \mu_{d, m} (\d \WW_1) \mu_{d, m} (\d \WW_2).
	\end{equation*}
	Denoting $\QQ = \WW_1^\top \WW_2 \in \R^{m \times m}$, we observe two useful facts:
	\begin{itemize}
		\item [(a)] For $\WW_1, \WW_2 \sim_\iid \mu_{d, m}$, the probability density function of $\QQ$ is given by (cf. \cite[Prop. 2.1]{diaconis1992finite} or \cite[Prop. 7.3]{eaton1989group})
		\begin{equation*}
			p_{d, m} (\QQ) = (2 \pi)^{- m^2/2} \frac{\omega(d - m, m)}{\omega (d, m)} \det ( \id_m - \QQ^\top \QQ )^{(d - 2m - 1)/2} \cdot I_0 (\QQ^\top \QQ),
		\end{equation*}
		where $I_0 (\QQ^\top \QQ)$ is the indicator function of the set that all $m$ eigenvalues of $\QQ^\top \QQ$ are in $(0, 1)$, and $\omega(\cdot, \cdot)$ is the Wishart constant defined by
		\begin{equation*}
			\omega(s, t)^{-1} = \pi^{t(t - 1)/4} 2^{st/2} \prod_{j=1}^{t} \Gamma \left( \frac{s-j+1}{2} \right).
		\end{equation*}
		Using Stirling's formula $\Gamma (t) \sim \sqrt{2 \pi t} (t / e)^t$ for $t \gg 1$, we obtain that
		\begin{equation*}
			p_{d, m} (\QQ) \sim \left( \frac{d}{2 \pi} \right)^{m^2/2} \cdot \exp \left( \frac{d - 2m - 1}{2} \log \det \left( \id_m - \QQ^\top \QQ \right) \right) \cdot I_0 (\QQ^\top \QQ).
		\end{equation*}
		
		\item [(b)] For $1 \le i \le n$, we have
		\begin{equation}\label{eq:dist_G1_G2}
			\left( G_{1i}^\top, G_{2i}^\top \right)^\top := \left( \xx_i^\top \WW_1, \xx_i^\top \WW_2 \right)^\top \sim_\iid \sN \left( \bzero, \ \begin{bmatrix}
				\id_m & \QQ \\
				\QQ^\top & \id_m
			\end{bmatrix} \right).
		\end{equation}
	\end{itemize}
	Therefore, it follows that
	\begin{align*}
		\E \left[ Z^2 \right] = & \int_{O(d, m) \times O(d, m)} \P \left( \frac{1}{n} \sum_{i = 1}^{n} \delta_{\langle \WW_1^\top \xx_i \rangle_A} = \qq, \ \frac{1}{n} \sum_{i = 1}^{n} \delta_{\langle \WW_2^\top \xx_i \rangle_A} = \qq \right) \mu_{d, m} (\d \WW_1) \mu_{d, m} (\d \WW_2) \\
		= & \int_{\R^{m \times m}} \P_{\QQ} \left( \frac{1}{n} \sum_{i = 1}^{n} \delta_{\langle G_{1i} \rangle_A} = \qq, \ \frac{1}{n} \sum_{i = 1}^{n} \delta_{\langle G_{2i} \rangle_A} = \qq \right) p_{d, m} (\QQ) \d \QQ \\
		\sim & \int_{\R^{m \times m}} \left( \frac{d}{2 \pi} \right)^{m^2/2} \P_{\QQ} \left( \frac{1}{n} \sum_{i = 1}^{n} \delta_{\langle G_{1i} \rangle_A} = \qq, \ \frac{1}{n} \sum_{i = 1}^{n} \delta_{\langle G_{2i} \rangle_A} = \qq \right) \\
		& \times \exp \left( - (d - 2m - 1) I(\QQ) \right) \cdot I_0 (\QQ^\top \QQ) \d \QQ,
	\end{align*}
	where $\P_{\QQ}$ denotes the probability taken under Eq.~\eqref{eq:dist_G1_G2}, $I(\QQ) = - \log \det (\id_m - \QQ^\top \QQ) / 2$, and $\d \QQ$ is understood as $\prod_{i, j = 1}^{m} \d Q_{ij}$. To compute the probability in the above integral, recall that
	\begin{equation}\label{eq:def_p_Q}
		p_{ij} (\QQ) = \P_{\QQ} \left( \langle G_1 \rangle_A = a_i, \ \langle G_2 \rangle_A = a_j \right), \quad \text{for} \ i, j \in [M],
	\end{equation}
	and define
	\begin{equation*}
		S_{\kk} = \left\{ \nn = (n_{ij})_{i, j \in [M]}: \forall i, j, \ \sum_{j=1}^{M} n_{ij} = k_i, \ \sum_{i=1}^{M} n_{ij} = k_j \right\}.
	\end{equation*}
	Moreover, recall the definition of $\qq(\QQ)$ from Eq.~\eqref{eq:def_info_proj_q}, we deduce that
	\begin{align*}
		& \P_{\QQ} \left( \frac{1}{n} \sum_{i = 1}^{n} \delta_{\langle G_{1i} \rangle_A} = \qq, \ \frac{1}{n} \sum_{i = 1}^{n} \delta_{\langle G_{2i} \rangle_A} = \qq \right) = \sum_{\nn \in S_{\kk}} \frac{n!}{\prod_{i, j=1}^{M} n_{ij}!} \prod_{i, j=1}^{M} p_{ij} (\QQ)^{n_{ij}} \\
		= & \sum_{\nn \in S_{\kk}} \frac{n!}{\prod_{i, j=1}^{M} n_{ij}!} \prod_{i, j=1}^{M} q_{ij} (\QQ)^{n_{ij}} \left( \frac{p_{ij} (\QQ)}{q_{ij} (\QQ)} \right)^{n_{ij}} = \sum_{\nn \in S_{\kk}} \frac{n!}{\prod_{i, j=1}^{M} n_{ij}!} \prod_{i, j=1}^{M} q_{ij} (\QQ)^{n_{ij}} \exp \left( - \sum_{i, j=1}^{M} n_{ij} \log \frac{q_{ij} (\QQ)}{p_{ij} (\QQ)} \right) \\
		= & \exp \left( - n D_{\sKL} \left( \qq(\QQ) \Vert \pp(\QQ) \right) \right) \sum_{\nn \in S_{\kk}} \frac{n!}{\prod_{i, j=1}^{M} n_{ij}!} \prod_{i, j=1}^{M} q_{ij} (\QQ)^{n_{ij}} \exp \left( - n \sum_{i, j=1}^{M} \left( \frac{n_{ij}}{n} - q_{ij} (\QQ) \right) \log \frac{q_{ij} (\QQ)}{p_{ij} (\QQ)} \right).
	\end{align*}
	According to Lemma~\ref{lem:info_proj_cond} (b) and the optimality of $\qq (\QQ)$, we know that $\forall \nn \in S_{\kk}$,
	\begin{equation*}
		\sum_{i, j=1}^{M} \left( \frac{n_{ij}}{n} - q_{ij} (\QQ) \right) \log \frac{q_{ij} (\QQ)}{p_{ij} (\QQ)} = 0,
	\end{equation*}
	which further implies that
	\begin{equation*}
		\P_{\QQ} \left( \frac{1}{n} \sum_{i = 1}^{n} \delta_{\langle G_{1i} \rangle_A} = \qq, \ \frac{1}{n} \sum_{i = 1}^{n} \delta_{\langle G_{2i} \rangle_A} = \qq \right) = \exp \left( - n D_{\sKL} \left( \qq(\QQ) \Vert \pp(\QQ) \right) \right) \sum_{\nn \in S_{\kk}} \frac{n!}{\prod_{i, j=1}^{M} n_{ij}!} \prod_{i, j=1}^{M} q_{ij} (\QQ)^{n_{ij}}.
	\end{equation*}
	As a consequence, we have
	\begin{equation}\label{eq:PQ_crude_bound}
		\P_{\QQ} \left( \frac{1}{n} \sum_{i = 1}^{n} \delta_{\langle G_{1i} \rangle_A} = \qq, \ \frac{1}{n} \sum_{i = 1}^{n} \delta_{\langle G_{2i} \rangle_A} = \qq \right) \le \exp \left( - n D_{\sKL} \left( \qq(\QQ) \Vert \pp(\QQ) \right) \right).
	\end{equation}
	Note that this inequality can be deduced from \cite[Lem. 2.1.9]{Dembo_2010} as well.
	
	Now, let us consider the following two cases:
	
	{\bf Case $(i)$.} $\lambda_{\max} (\QQ^\top \QQ) > 1/2$. Using Eq.~\eqref{eq:PQ_crude_bound}, it follows that
	\begin{align*}
		& \int_{\lambda_{\max} (\QQ^\top \QQ) > 1/2} \left( \frac{d}{2 \pi} \right)^{m^2/2} \P_{\QQ} \left( \frac{1}{n} \sum_{i = 1}^{n} \delta_{\langle G_{1i} \rangle_A} = \qq, \ \frac{1}{n} \sum_{i = 1}^{n} \delta_{\langle G_{2i} \rangle_A} = \qq \right) \\
		& \times \exp \left( - (d - 2m - 1) I(\QQ) \right) \cdot I_0 (\QQ^\top \QQ) \d \QQ \\
		\le & \int_{\lambda_{\max} (\QQ^\top \QQ) \in (1/2, 1)} \left( \frac{d}{2 \pi} \right)^{m^2/2} \exp \left( - n D_{\sKL} \left( \qq(\QQ) \Vert \pp(\QQ) \right) - (d - 2m - 1) I(\QQ) \right) \d \QQ \\
		= & \int_{\lambda_{\max} (\QQ^\top \QQ) \in (1/2, 1)} \left( \frac{d}{2 \pi} \right)^{m^2/2} \exp \left( - n \left( D_{\sKL} \left( \qq(\QQ) \Vert \pp(\QQ) \right) + \frac{d - 2m - 1}{n} I(\QQ) \right) \right) \d \QQ.
	\end{align*}
	Note that $(d - 2m - 1)/n \to \alpha$, and by our assumption,
	\begin{equation*}
		\inf_{\lambda_{\max} (\QQ^\top \QQ) \in (1/2, 1)} \left\{ D_{\sKL} \left( \qq(\QQ) \Vert \pp(\QQ) \right) + \frac{1}{\alpha} I(\QQ) \right\} \ge D_{\sKL} \left( \qq(\bzero) \Vert \pp(\bzero) \right) + \frac{1}{\alpha} I(\bzero) + \eta = 2 D_{\sKL} \left( \qq \Vert \pp \right) + \eta,
	\end{equation*}
	where the last step follows from Lemma~\ref{lem:info_proj_cond} (a). This in turn implies that, for all large enough $n$ and $d$,
	\begin{equation}\label{eq:est_case_1}
		\begin{split}
			& \int_{\lambda_{\max} (\QQ^\top \QQ) \in (1/2, 1)} \left( \frac{d}{2 \pi} \right)^{m^2/2} \exp \left( - n \left( D_{\sKL} \left( \qq(\QQ) \Vert \pp(\QQ) \right) + \frac{d - 2m - 1}{n} I(\QQ) \right) \right) \d \QQ \\
			\le & \int_{\lambda_{\max} (\QQ^\top \QQ) \in (1/2, 1)} \left( \frac{d}{2 \pi} \right)^{m^2/2} \exp \left( - 2 n \left( D_{\sKL} \left( \qq \Vert \pp \right) + \eta/4 \right) \right) \d \QQ \\
			\le & n^{C_m} \exp \left( - 2 n \left( D_{\sKL} \left( \qq \Vert \pp \right) + \eta/4 \right) \right) = o_n \left( \E[Z]^2 \right).
		\end{split}
	\end{equation}
	It then suffices to consider only the integral over $\{ \QQ \in \R^{m \times m}: \lambda_{\max} (\QQ^\top \QQ) \le 1/2 \}$.
	
	{\bf Case $(ii)$.} $\lambda_{\max} (\QQ^\top \QQ) \le 1/2$. We start with defining for fixed $\eta \in (0, 1)$:
	\begin{equation*}
		B_{n, \kk} (\eta) = \left\{ \nn \in S_{\kk}: \ \left\vert \frac{n_{ij}}{n q_{ij} (\QQ)} - 1 \right\vert \le \frac{\eta}{n^{1/3}}, \ \forall i, j \in [M] \right\}.
	\end{equation*}
	Then, according to the local limit theorem for multinomial distribution (cf. \cite[Thm. 2.1]{ouimet2021precise} and \cite[Lem. 2.1]{siotani1984asymptotic}), we obtain that
	\begin{footnotesize}
		\begin{equation*}
			\sum_{\nn \in B_{n, \kk} (\eta)} \frac{n!}{\prod_{i, j=1}^{M} n_{ij}!} \prod_{i, j=1}^{M} q_{ij} (\QQ)^{n_{ij}} = \frac{ 1 + O ( n^{-1/2} )}{(2 \pi n)^{(M^2 - 1)/2}} \left( \prod_{i, j=1}^{M} q_{ij} (\QQ) \right)^{-1/2} \sum_{\nn \in B_{n, \kk} (\eta)} \exp \left( - \frac{1}{2} \sum_{i, j=1}^{M} \frac{\left( n_{ij} / \sqrt{n} - \sqrt{n} q_{ij} (\QQ) \right)^2}{q_{ij}(\QQ)} \right),
		\end{equation*}
	\end{footnotesize}
	where the quantity $O(n^{-1/2})$ is uniformly small for all $\QQ$ such that $\lambda_{\max} (\QQ^\top \QQ) \le 1/2$, since
	\begin{equation}\label{eq:q_bdd_away}
		\inf_{\lambda_{\max} (\QQ^\top \QQ) \le 1/2} \inf_{i, j \in [M]} q_{ij} (\QQ) > 0,
	\end{equation}
	which is a result of Lemma~\ref{lem:info_proj_cond} (c). Now define for $i, j \in [M]$, $x_{ij} = n_{ij} / \sqrt{n} - \sqrt{n} q_{ij} (\QQ)$, we know that the $x_{ij}$'s should satisfy
	\begin{equation*}
		\forall i, j \in [M], \ \sum_{j=1}^{M} x_{ij} = \sum_{i=1}^{M} x_{ij} = 0, \quad \text{and} \ \vert x_{ij} \vert \le \eta n^{1/6} q_{ij} (\QQ).
	\end{equation*}
	These constraints define an open set in an $(M-1)^2$-dimensional subspace of $\R^{M^2}$, with diverging radius as $n \to \infty$. Therefore, the summation over $\nn \in B_{n, \kk} (\eta)$ is well-approximated by a Riemann integral, namely
	\begin{align*}
		& \sum_{\nn \in B_{n, \kk} (\eta)} \exp \left( - \frac{1}{2} \sum_{i, j=1}^{M} \frac{\left( n_{ij} / \sqrt{n} - \sqrt{n} q_{ij} (\QQ) \right)^2}{q_{ij}(\QQ)} \right) = \sum_{\nn \in B_{n, \kk} (\eta)} \exp \left( - \frac{1}{2} \sum_{i, j=1}^{M} \frac{x_{ij}^2}{q_{ij}(\QQ)} \right) \\
		\sim & n^{(M-1)^2/2} \int_{S_M} \exp \left( - \frac{1}{2} \sum_{i, j=1}^{M} \frac{x_{ij}^2}{q_{ij}(\QQ)} \right) \d \xx = C(M, \QQ) \cdot n^{(M-1)^2/2},
	\end{align*}
	where $S_M = \{ \xx = (x_{ij})_{i, j \in [M]}: \ \forall i, j \in [M], \ \sum_{j=1}^{M} x_{ij} = \sum_{i=1}^{M} x_{ij} = 0 \}$, and we define
	\begin{equation*}
		C(M, \QQ) := \int_{S_M} \exp \left( - \frac{1}{2} \sum_{i, j=1}^{M} \frac{x_{ij}^2}{q_{ij}(\QQ)} \right) \d \xx,
	\end{equation*}
	which is a constant only depending on $M$ and $\QQ$. The convergence here is uniform in $\{ \QQ: \lambda_{\max} (\QQ^\top \QQ) \le 1/2 \}$ because of Eq.~\eqref{eq:q_bdd_away}. Now let us consider the summation outside $B_{n, \kk} (\eta)$. For any $\nn \in S_{\kk} \backslash B_{n, \kk} (\eta)$, using again Stirling's formula yields
	\begin{align*}
		\frac{n!}{\prod_{i, j=1}^{M} n_{ij}!} \prod_{i, j=1}^{M} q_{ij} (\QQ)^{n_{ij}} \le & C(M) (2 \pi n)^{- (M^2 - 1)/2} \left( \prod_{i, j=1}^{M} \frac{n}{n_{ij}} \right)^{1/2} \exp \left( \sum_{i, j=1}^{M} n_{ij} \log \frac{n q_{ij}(\QQ)}{n_{ij}} \right) \\
		\le & C(M) n^{1/2} \exp \left( 2 \sum_{i, j=1}^{M} n_{ij} \log \sqrt{\frac{n q_{ij}(\QQ)}{n_{ij}}} \right) \\
		\le & C(M) n^{1/2} \exp \left( 2 \sum_{i, j=1}^{M} n_{ij} \left( \sqrt{\frac{n q_{ij}(\QQ)}{n_{ij}}} - 1 \right) \right) \\
		= & C(M) n^{1/2} \exp \left( - n \sum_{i, j=1}^{M} \left( \sqrt{\frac{n_{ij}}{n}} - \sqrt{q_{ij} (\QQ)} \right)^2 \right),
	\end{align*}
	where $C(M)$ is a constant only depending on $M$. By definition, there exists $(i, j) \in [M]^2$ such that $\vert n_{ij}/n - q_{ij} (\QQ) \vert \ge \eta q_{ij} (\QQ) n^{-1/3}$, thus leading to
	\begin{equation*}
		\left( \sqrt{\frac{n_{ij}}{n}} - \sqrt{q_{ij} (\QQ)} \right)^2 \ge \frac{\eta^2 q_{ij} (\QQ)^2 n^{-2/3}}{4} \ge \frac{\eta^2 \inf_{i, j \in [M]} q_{ij} (\QQ)^2 n^{-2/3}}{4},
	\end{equation*}
	which further implies that
	\begin{equation*}
		\sum_{\nn \in S_{\kk} \backslash B_{n, \kk} (\eta)} \frac{n!}{\prod_{i, j=1}^{M} n_{ij}!} \prod_{i, j=1}^{M} q_{ij} (\QQ)^{n_{ij}} \le C(M) n^{M^2 + 1/2} \exp \left( - \frac{\eta^2 \inf_{i, j \in [M]} q_{ij} (\QQ)^2 n^{1/3}}{4} \right).
	\end{equation*}
	The right hand side of the above equation is $o_n (1)$ uniformly for all $\QQ$ satisfying $\lambda_{\max} (\QQ^\top \QQ) \le 1/2$. Combining the above estimates yields
	\begin{equation*}
		\P_{\QQ} \left( \frac{1}{n} \sum_{i = 1}^{n} \delta_{\langle G_{1i} \rangle_A} = \qq, \ \frac{1}{n} \sum_{i = 1}^{n} \delta_{\langle G_{2i} \rangle_A} = \qq \right) \sim C(M, \QQ) \cdot n^{-(M - 1)} \exp \left( - n D_{\sKL} \left( \qq(\QQ) \Vert \pp(\QQ) \right) \right),
	\end{equation*}
	thus leading to
	\begin{equation}\label{eq:est_case_2}
		\begin{split}
			& \int_{\lambda_{\max} (\QQ^\top \QQ) \le 1/2} \left( \frac{d}{2 \pi} \right)^{m^2/2} \P_{\QQ} \left( \frac{1}{n} \sum_{i = 1}^{n} \delta_{\langle G_{1i} \rangle_A} = \qq, \ \frac{1}{n} \sum_{i = 1}^{n} \delta_{\langle G_{2i} \rangle_A} = \qq \right) \exp \left( - (d - 2m - 1) I(\QQ) \right) \cdot I_0 (\QQ^\top \QQ) \d \QQ \\
			\sim & n^{-(M - 1)} \int_{\lambda_{\max} (\QQ^\top \QQ) \le 1/2} \left( \frac{d}{2 \pi} \right)^{m^2/2} C(M, \QQ) \exp \left( - n D_{\sKL} \left( \qq(\QQ) \Vert \pp(\QQ) \right) - (d - 2m - 1) I(\QQ) \right) \d \QQ.
		\end{split}
	\end{equation}
	From the asymptotics for $\E [Z]$ derived earlier, we know that $C(M, \bzero) = (2 \pi)^{-(M - 1)} \prod_{i=1}^{M} q_i^{-1}$.
	
	Based on the estimates in Case $(i)$ and $(ii)$, i.e., Eq.~\eqref{eq:est_case_1} and Eq.~\eqref{eq:est_case_2}, we finally obtain that
	\begin{small}
		\begin{align*}
			\E [Z^2] \le \, & o_n(\E[Z]^2) \\
			& + (1 + o_n(1))n^{-(M - 1)} \int_{\lambda_{\max} (\QQ^\top \QQ) \le 1/2} \left( \frac{d}{2 \pi} \right)^{m^2/2} C(M, \QQ) \exp \left( - n D_{\sKL} \left( \qq(\QQ) \Vert \pp(\QQ) \right) - (d - 2m - 1) I(\QQ) \right) \d \QQ.
		\end{align*}
	\end{small}
	Hence, it suffices to show that
	\begin{equation}\label{eq:2nd_target}
	\begin{split}
		& \int_{\lambda_{\max} (\QQ^\top \QQ) \le 1/2} \left( \frac{d}{2 \pi} \right)^{m^2/2} \frac{C(M, \QQ)}{C(M, \bzero)} \exp \left( - n D_{\sKL} \left( \qq(\QQ) \Vert \pp(\QQ) \right) - (d - 2m - 1) I(\QQ) \right) \d \QQ \\
		= \, & O_n \left( \exp \left( -2n D_{\sKL} (\qq \Vert \pp) \right) \right).
	\end{split}
	\end{equation}
	By our assumption, one can choose $\alpha' > \alpha$ such that
	\begin{equation*}
		\min_{\lambda_{\max} (\QQ^\top \QQ) \le 1/2} \left\{ D_{\sKL} \left( \qq(\QQ) \Vert \pp(\QQ) \right) + \frac{1}{\alpha'} I(\QQ) \right\} \ge D_{\sKL} \left( \qq(\bzero) \Vert \pp(\bzero) \right) + \frac{1}{\alpha'} I(\bzero) - O \left( \frac{1}{n^2} \right).
	\end{equation*}
	Therefore, for large enough $n, d$, it follows that
	\begin{align*}
		& \int_{\lambda_{\max} (\QQ^\top \QQ) \le 1/2} \left( \frac{d}{2 \pi} \right)^{m^2/2} \frac{C(M, \QQ)}{C(M, \bzero)} \exp \left( - n D_{\sKL} \left( \qq(\QQ) \Vert \pp(\QQ) \right) - (d - 2m - 1) I(\QQ) \right) \d \QQ \\
		\stackrel{(i)}{\le} & \int_{\lambda_{\max} (\QQ^\top \QQ) \le 1/2} \left( \frac{d}{2 \pi} \right)^{m^2/2} \frac{C(M, \QQ)}{C(M, \bzero)} \exp \left( - n \left( D_{\sKL} \left( \qq(\QQ) \Vert \pp(\QQ) \right) + \frac{1}{\alpha'} I(\QQ) \right) \right) \d \QQ \\
		\stackrel{(ii)}{\le} & (1 + o_n(1)) C(m, M, \alpha')\left( \exp \left( - n \cdot \min_{\lambda_{\max} (\QQ^\top \QQ) \le 1/2} \left\{ D_{\sKL} \left( \qq(\QQ) \Vert \pp(\QQ) \right) + \frac{1}{\alpha'} I(\QQ) \right\} \right) \right) \\
		\stackrel{(iii)}{=} & (1 + o_n(1)) C(m, M, \alpha') \left( \exp \left( - n \cdot \left( D_{\sKL} \left( \qq(\bzero) \Vert \pp(\bzero) \right) + \frac{1}{\alpha'} I(\bzero) \right) \right) \right) \\
		= & O_n \left( \exp \left( - 2 n \cdot  D_{\sKL} \left( \qq \Vert \pp \right) \right) \right),
	\end{align*}
	as desired. Here, $(i)$ is because of $n/(d - 2m - 1) \to \alpha <\alpha'$, in $(ii)$ we use Laplace's method (see e.g., Chap. 4.2 of \cite{de1981asymptotic}), and $(iii)$ follows from the choice of $\alpha'$. The constant $C(m, M, \alpha')$ has an explicit form:
	\begin{equation*}
		C(m, M, \alpha') = \left( \det \left( \nabla_{\QQ = \bzero}^2 \left( F(\QQ) + \frac{I(\QQ)}{\alpha'} \right) \right) \right)^{-1/2},
	\end{equation*}
	where $F(\QQ) = D_{\sKL} \left( \qq(\QQ) \Vert \pp(\QQ) \right)$. This completes the proof of Lemma~\ref{lem:2nd_moment}. Note that here the constant $C(m, M, \alpha')$ is uniformly upper bounded for $\rr = \langle P \rangle_A$ with $P$ a fixed probability measure, and the discrete set $A$ sufficiently refined. As a consequence, there exists a constant $c > 0$ satisfying that $\liminf_{n \to \infty} \P (Z > 0) \ge c$ for all such $A$ and $\rr = \langle P \rangle_A$.

\section{Proofs for Section~\ref{sec:MainSupervised}: Applications to supervised learning}

\subsection{Proof of Proposition~\ref{prop:ERM_asymptotics}}
We first show that
\begin{equation*}
	\plimsup_{n,d\to\infty}\hR_n^{\star}(\XX,\yy) \le \inf_{h\in \cH_m} 
	\inf_{P\in\cuF^{\varphi}_{m,\alpha}}
	\int_{\{\pm 1\} \times \R^m} L(y,h(\zz))\, P(\d y,\d \zz)\, .
\end{equation*}
For any fixed $\veps > 0$, choose $h_{\veps} \in \cH_m$ and $P_{\veps} \in \cuF_{m,\alpha}^{\varphi}$ such that
\begin{equation*}
	\int_{\{ \pm 1 \} \times \R^m} L \left( y, h_{\veps} (\zz) \right) P_{\veps} (\d y, \d \zz) \le \inf_{h\in \cH_m} 
	\inf_{P\in\cuF^{\varphi}_{m,\alpha}}
	\int_{\{ \pm 1 \} \times \R^m} L(y,h(\zz))\, P(\d y,\d \zz) + \veps.
\end{equation*}
Since $P_{\veps} \in \cuF_{m,\alpha}^{\varphi}$, we know that there exists a sequence of random orthogonal matrices $\{ \WW = \WW_n (\XX, \yy) \}_{n \in \mathbb{N}}$ such that $\hat{P}_{n, \WW} \stackrel{w}{\Rightarrow} P_{\veps}$ in probability. As $h_{\veps}$ is Lipschitz-continuous and $L$ is bounded continuous, we deduce that as $n \to \infty$,
\begin{equation*}
	\int_{\{ \pm 1 \} \times \R^m} L \left( y, h_{\veps} (\zz) \right) \hat{P}_{n, \WW} (\d y, \d \zz) \to \int_{\{ \pm 1 \} \times \R^m} L \left( y, h_{\veps} (\zz) \right) P_{\veps} (\d y, \d \zz) \ \text{in probability}.
\end{equation*}
By definition, we have
\begin{equation*}
	\int_{\{ \pm 1 \} \times \R^m} L \left( y, h_{\veps} (\zz) \right) \hat{P}_{n, \WW} (\d y, \d \zz) \ge \hat{R}_n^{\star} (\XX, \yy),
\end{equation*}
thus leading to
\begin{align*}
	& \P \left( \hat{R}_n^{\star} (\XX, \yy) \ge \inf_{h\in \cH_m} 
	\inf_{P\in\cuF^{\varphi}_{m,\alpha}}
	\int_{\{ \pm 1 \} \times \R^m} L(y,h(\zz))\, P(\d y,\d \zz) + 2 \veps \right) \\
	\le & \P \left( \int_{\{ \pm 1 \} \times \R^m} L \left( y, h_{\veps} (\zz) \right) \hat{P}_{n, \WW} (\d y, \d \zz) \ge \int_{\{ \pm 1 \} \times \R^m} L \left( y, h_{\veps} (\zz) \right) P_{\veps} (\d y, \d \zz) + \veps \right) \to 0
\end{align*}
as $n \to \infty$. Since $\veps > 0$ is arbitrary, this proves
\begin{equation*}
	\plimsup_{n,d\to\infty}\hR_n^{\star}(\XX,\yy) \le \inf_{h\in \cH_m} 
	\inf_{P\in\cuF^{\varphi}_{m,\alpha}}
	\int_{\{ \pm 1 \} \times \R^m} L(y,h(\zz))\, P(\d y,\d \zz).
\end{equation*}
Next we prove the inverse bound. Defining the set of $(\alpha, m)$-feasible random probability measures on $\cuP (\{ \pm 1 \} \times \R^m)$ as:
\begin{align*}
	\widetilde{\cuF}^{\varphi}_{m,\alpha}:= \Big\{ \tilde{P} \ \text{is a random element in} \ \cuP ( \{ \pm 1 \} \times \R^{m}):\;  \exists
	\WW= \WW_n (\XX,\yy,\omega), \ \WW^\top \WW = \id_d \\
	\mbox{ such that }
	\frac{1}{n} \sum_{i=1}^{n} \delta_{\left(y_i,\xx_i^\top \WW \right)} \stackrel{w}{\Rightarrow} \tilde{P} \mbox{ in distribution }
	\Big\}\, ,
\end{align*}
we will first show that
\begin{equation}\label{eq:ERM_inter}
	\inf_{ \tilde{P} \in \widetilde{\cuF}^{\varphi}_{m,\alpha}}
	\E_{\tilde{P}} \left[ \inf_{h\in \cH_m} \int_{\{ \pm 1 \} \times \R^m} L(y,h(\zz))\, \tilde{P} (\d y,\d \zz) \right] \le \plimsup_{n,d\to\infty}\hR_n^{\star}(\XX,\yy)\, ,
\end{equation}
where the expectation $\E_{\tilde{P}}$ is taken over the randomness in $\tilde{P} \in \widetilde{\cuF}^{\varphi}_{m,\alpha}$. To show Eq.~\eqref{eq:ERM_inter}, let us denote by $\WW = \WW_n (\XX, \yy)$ the corresponding empirical minimizer of $\hat{R}_n (\XX, \yy)$ and $\hat{P}_n$ the empirical distribution of $\{ (y_i, \WW^\top \xx_i) \}_{i \le n}$, i.e., $\hat{P}_n = (1 / n) \sum_{i = 1}^{n} \delta_{(y_i, \WW^\top \xx_i)}$, we thus obtain that
\begin{equation*}
	\hR_n^{\star}(\XX,\yy) = \inf_{h \in \cH_m} \int_{\{ \pm 1 \} \times \R^m} L \left( y, h(\zz) \right) \hat{P}_n (\d y, \d \zz).
\end{equation*}
By our $W_2$ outer bound, i.e., Theorem~\ref{thm:W2_outer_bd}, $\{ \hat{P}_n \}$ is a tight sequence of random elements in $\cuP (\R^{m+1})$ equipped with the bounded Lipschitz distance that metrizes weak convergence, since the $W_2$ distance always dominates the bounded Lipschitz distance. In fact, we have $d_{\rm BL} \le W_1 \le W_2$ according to the dual representation of $W_1$. Therefore, $\{ \hat{P}_n \}$ has a subsequence $\{ \hat{P}_{n_k} \}$ which weakly converges to some random probability measure $ \tilde{P} \in \cuP (\{ \pm 1 \} \times \R^m)$ in distribution. By definition, $ \tilde{P} \in \widetilde{\cuF}_{m, \alpha}^{\varphi}$. We then deduce that
\begin{align*}
	\plimsup_{n,d\to\infty} \hR_n^{\star}(\XX,\yy) \ge & \plimsup_{k \to \infty} \inf_{h \in \cH_m} \int_{\{ \pm 1 \} \times \R^m} L \left( y, h(\zz) \right) \hat{P}_{n_k} (\d y, \d \zz) \\
	\stackrel{(i)}{\ge} & \limsup_{k \to \infty} \E_{\hat{P}_{n_k}} \left[ \inf_{h \in \cH_m} \int_{\{ \pm 1 \} \times \R^m} L \left( y, h(\zz) \right) \hat{P}_{n_k} (\d y, \d \zz) \right] \\
	\stackrel{(ii)}{\ge} & \E_{\tilde{P}} \left[ \inf_{h\in \cH_m} \int_{\{ \pm 1 \} \times \R^m} L(y,h(\zz))\, \tilde{P} (\d y,\d \zz) \right] \\
	\ge & \inf_{ \tilde{P} \in \widetilde{\cuF}^{\varphi}_{m,\alpha}}
	\E_{\tilde{P}} \left[ \inf_{h\in \cH_m} \int_{\{ \pm 1 \} \times \R^m} L(y,h(\zz))\, \tilde{P} (\d y,\d \zz) \right],
\end{align*}
where $(i)$ follows from the boundedness of $L$, and $(ii)$ follows from Fatou's lemma and Skorokhod's representation theorem for probability measures on Polish space (note that $(\cuP(\{ \pm 1 \} \times \R^m), \ d_{\rm BL})$ is a Polish space). This completes the proof of Eq.~\eqref{eq:ERM_inter}.

Now it suffices to show that
\begin{equation*}
	\inf_{ \tilde{P} \in \widetilde{\cuF}^{\varphi}_{m,\alpha}}
	\E_{\tilde{P}} \left[ \inf_{h\in \cH_m} \int_{\{ \pm 1 \} \times \R^m} L(y,h(\zz))\, \tilde{P} (\d y,\d \zz) \right] \ge \inf_{h\in \cH_m} 
	\inf_{P\in\cuF^{\varphi}_{m,\alpha}}
	\int_{\{\pm 1\} \times \R^m} L(y,h(\zz))\, P(\d y,\d \zz).
\end{equation*}
Fix $ \tilde{P} \in \widetilde{\cuF}^{\varphi}_{m,\alpha}$ and $P \in \supp(\tilde{P})$, we know that for any $\veps > 0$, $\P ( \tilde{P} \in \mathsf{B}_{\veps} (P) ) > 0$, where we denote
\begin{equation*}
	\mathsf{B}_{\veps} (P) = \left\{ Q \in \cuP(\R^m): d_{\rm BL} (Q, P) < \veps \right\}.
\end{equation*}
By definition, there is a sequence of random probability measures $\{ \hat{P}_{n, \WW} \}_{n \ge 1}$ such that $\hat{P}_{n, \WW} \stackrel{w}{\Rightarrow} \tilde{P}$ in distribution. According to the Portmanteau lemma, we deduce that
\begin{equation*}
	\liminf_{n \to \infty} \P \left( \inf_{\WW \in O(d, m)} d_{\rm BL} \left( \hat{P}_{n, \WW}, P \right) < \veps \right) \ge \liminf_{n \to \infty} \P \left( d_{\rm BL} \left( \hat{P}_{n, \WW}, P \right) < \veps \right) \ge \P \left( d_{\rm BL} (\tilde{P}, P) < \veps \right) > 0,
\end{equation*}
since $\mathsf{B}_{\veps} (P)$ is an open set. Using the same idea as in the proof of Theorem~\ref{thm:inner_bd_1D}, i.e., sharp concentration of $\inf_{\WW \in O(d, m)} d_{\rm BL} ( \hat{P}_{n, \WW}, P )$, one can conclude that for any $\veps > 0$,
\begin{equation*}
	\liminf_{n \to \infty} \P \left( \inf_{\WW \in O(d, m)} d_{\rm BL} \left( \hat{P}_{n, \WW}, P \right) < \veps \right) = 1,
\end{equation*}
which further implies that
\begin{equation*}
	\inf_{\WW \in O(d, m)} d_{\rm BL} \left( \hat{P}_{n, \WW}, P \right) \to 0 \ \text{in probability}.
\end{equation*}
Therefore, $P$ is $(\alpha, m)$-feasible. As a consequence, we have for all $\tilde{P} \in \widetilde{\cuF}^{\varphi}_{m,\alpha}$, $\supp(\tilde{P}) \subset \cuF^{\varphi}_{m,\alpha}$, thus leading to the following estimates:
\begin{align*}
	& \inf_{ \tilde{P} \in \widetilde{\cuF}^{\varphi}_{m,\alpha}}
	\E_{\tilde{P}} \left[ \inf_{h\in \cH_m} \int_{\{ \pm 1 \} \times \R^m} L(y,h(\zz))\, \tilde{P} (\d y,\d \zz) \right] \\
	\ge & \inf_{ \tilde{P} \in \widetilde{\cuF}^{\varphi}_{m,\alpha}}
	\E_{\tilde{P}} \left[ \bone_{ \tilde{P} \in \supp(\tilde{P})} \cdot \inf_{h\in \cH_m} \int_{\{ \pm 1 \} \times \R^m} L(y,h(\zz))\, \tilde{P} (\d y,\d \zz) \right] \\
	\ge & \inf_{ \tilde{P} \in \widetilde{\cuF}^{\varphi}_{m,\alpha}} \inf_{P \in \supp(\tilde{P})} \inf_{h\in \cH_m} \int_{\{ \pm 1 \} \times \R^m} L(y,h(\zz))\, P (\d y,\d \zz) \\
	\ge & \inf_{P \in \cuF^{\varphi}_{m,\alpha}} \inf_{h\in \cH_m} \int_{\{ \pm 1 \} \times \R^m} L(y,h(\zz))\, P (\d y,\d \zz) \\
	\ge & \inf_{h\in \cH_m} 
	\inf_{P\in\cuF^{\varphi}_{m,\alpha}}
	\int_{\{\pm 1\} \times \R^m} L(y,h(\zz))\, P(\d y,\d \zz),
\end{align*}
as desired. Finally, we obtain that
\begin{equation*}
	\plimsup_{n,d\to\infty}\hR_n^{\star}(\XX,\yy) \ge \inf_{h\in \cH_m} 
	\inf_{P\in\cuF^{\varphi}_{m,\alpha}}
	\int_{\{\pm 1\} \times \R^m} L(y,h(\zz))\, P(\d y,\d \zz).
\end{equation*}
Combining this with the upper bound completes the proof of Proposition~\ref{prop:ERM_asymptotics}.

\subsection{Proofs of Lemma~\ref{lem:key_triple} and Theorem~\ref{thm:W2_outer_bd}}
\begin{proof}[\bf Proof of Lemma~\ref{lem:key_triple}]
Similarly as in the proof of Theorem~\ref{thm:W2_outer_bd}, we can write
\begin{equation*}
	\hat{P}_{n, \ww} = \frac{1}{n} \sum_{i = 1}^{n} \delta_{\left( y_i, \left\langle \ww_{\parallel}, \bgg_i \right\rangle + \sqrt{1 - \norm{\ww_{\parallel}}_2^2} \left\langle \ww_{\perp}, \zz_i \right\rangle \right) }, \ P_{\ww} = \Law \left( Y, \ww_{\parallel}^\top G + \sqrt{1 - \norm{\ww_{\parallel}}_2^2} Z \right).
\end{equation*}
By assumption, $\ww \in \S^{d - 1}$ is independent of $(\XX, \yy)$. Therefore,
\begin{equation*}
	\hat{P}_{n, \ww} \stackrel{d}{=} \frac{1}{n} \sum_{i = 1}^{n} \delta_{\left( y_i, \ww_{\parallel}^\top \bgg_i + \sqrt{1 - \norm{\ww_{\parallel}}_2^2} z_i \right) }, \ (y_i, \bgg_i, z_i) \stackrel{\iid}{\sim} \Law(Y, G, Z),
\end{equation*}
where $\ww_{\parallel} \in \R^k$ is independent of $(\XX, \yy)$ as well. The desired result then follows by applying Glivenko-Cantelli Theorem.
\end{proof}

\begin{proof}[\bf Proof of Theorem~\ref{thm:W2_outer_bd}]
According to the orthogonal invariance property of isotropic normal distribution, we may assume without loss of generality that
\begin{equation*}
	\VV = [\id_k, \bzero_{k \times (d - k)}]^\top, \ \XX = (\bGG, \ZZ), \ (\yy, \bGG) \perp \ZZ,
\end{equation*}
where $\yy = (y_1, \cdots, y_n)^\top \in \R^n$, $\bGG = (\bgg_1, \cdots, \bgg_n)^\top \in \R^{n \times k}$, and $\ZZ = (\zz_1, \cdots, \zz_n)^\top \in \R^{n \times (d - k)}$. Moreover, for each $i \in [n]$ we have
\begin{equation*}
	\P \left( y_i = + 1 \vert \bgg_i \right) = \varphi (\bgg_i) = 1 - \P \left( y_i = - 1 \vert \bgg_i \right).
\end{equation*}
From now on we only assume that $y_i$ is sub-Gaussian and $(y_i, \bgg_i) \perp \zz_i$, which is all we need for this proof as pointed out in Remark~\ref{rem:sub_gauss_resp}. We further denote $\VV_{\perp} = [\ee_{k + 1}, \cdots, \ee_d]$, then we know that $[\VV, \VV_{\perp}] = \id_{d}$. Hence, for any $\ww \in \S^{d - 1}$, it follows that
\begin{equation*}
	\langle \xx_i, \ww \rangle = (\bgg_i^\top, \zz_i^\top) [\VV, \VV_{\perp}]^\top \ww = \ww^\top \VV \bgg_i + \ww^\top \VV_{\perp} \zz_i.
\end{equation*}
For simplicity, we denote $\ww_{\parallel} = \VV^\top \ww$ and $\ww_{\perp} = \VV_{\perp}^{\top} \ww / \norm{\VV_{\perp}^{\top} \ww}_2 = \VV_{\perp}^{\top} \ww / \sqrt{1 - \norm{\VV^\top \ww}_2^2}$, then we have the following decomposition:
\begin{equation*}
	\langle \xx_i, \ww \rangle = \left\langle \ww_{\parallel}, \bgg_i \right\rangle + \sqrt{1 - \norm{\ww_{\parallel}}_2^2} \cdot \left\langle \ww_{\perp}, \zz_i \right\rangle, \ \norm{\ww_{\parallel}}_2 \le 1, \ \ww_{\perp} \in \S^{d - k - 1}.
\end{equation*}
Now let us consider the random variable:
\begin{align*}
	\xi_{n, \veps, \eta} = & \min_{\norm{\ww}_2 = 1} \left\{ \frac{1}{\sqrt{\alpha}} \sqrt{1 - \norm{\VV^\top \ww}_2^2} + \veps - W_2^{(\eta)} \left( \hat{P}_{n, \ww}, P_{\ww} \right) \right\} \\
	= & \min_{\norm{\ww_{\parallel}}_2 \le 1, \norm{\ww_{\perp}}_2 = 1} \left\{ \frac{\sqrt{1 - \norm{\ww_{\parallel}}_2^2}}{\sqrt{\alpha}} + \veps - W_2^{(\eta)} \left( \frac{1}{n} \sum_{i = 1}^{n} \delta_{\left( y_i, \left\langle \ww_{\parallel}, \bgg_i \right\rangle + \sqrt{1 - \norm{\ww_{\parallel}}_2^2} \left\langle \ww_{\perp}, \zz_i \right\rangle \right) }, P_{\ww} \right) \right\} \\
	= & \min_{\norm{\ww_{\parallel}}_2 \le 1, \norm{\ww_{\perp}}_2 = 1, \uu \in \R^n} \max_{\blambda \in \R^n} \left\{ \frac{\sqrt{1 - \norm{\ww_{\parallel}}_2^2}}{\sqrt{\alpha}} + \veps - W_2^{(\eta)} \left( \frac{1}{n} \sum_{i = 1}^{n} \delta_{\left( y_i, \left\langle \ww_{\parallel}, \bgg_i \right\rangle + \sqrt{1 - \norm{\ww_{\parallel}}_2^2} u_i \right) }, P_{\ww} \right) \right. \\
	& \quad \quad \quad \quad \quad \quad \quad \quad \left. + \frac{1}{n} \left\langle \blambda, \uu - \ZZ \ww_{\perp} \right\rangle \right\}, 
\end{align*}	
where in the last step we use Lagrange multiplier method, and the vector $\uu = (u_1, \cdots, u_n)^\top$ represents the dual variable. Recall that $P_{\ww} = \Law ( Y, \langle \ww_{\parallel}, G \rangle + \sqrt{1 - \norm{\ww_{\parallel}}_2^2} Z )$, and $(Y, G, Z)$ is distributed as Eq.~\eqref{eq:key_triple}. We first show that for any $\veps, \eta > 0$, there exist positive constants $C_0 (\veps, \eta)$ and $C_1 (\veps, \eta)$ such that
\begin{equation}\label{eq:xi_lower_bd}
	\P \left( \xi_{n, \veps, \eta} \le 0 \right) \le C_0 (\veps, \eta) \exp \left( - C_1 (\veps, \eta) n^{1 - \veps} \right).
\end{equation}

Using standard norm bound on Gaussian random matrices (see e.g., Corollary 7.7.3 in \cite{vershynin2018high}), we know that there exists a constant $C_0 > 0$, such that $\P ( \norm{\ZZ}_{\mathrm{op}} \le C_0 \sqrt{n} ) \ge 1 - 2 \exp ( - C_0 n)$. On this event, for any $\ww_{\perp} \in \S^{d - k - 1}$ we have $\norm{\ZZ \ww_{\perp}}_2 \le C_0 \sqrt{n}$, which in turn implies that
\begin{align*}
	\xi_{n, \veps, \eta} = & \min_{\norm{\ww_{\parallel}}_2 \le 1, \norm{\ww_{\perp}}_2 = 1, \norm{\uu}_2 \le C_0 \sqrt{n}} \max_{\blambda \in \R^n} \left\{ \frac{\sqrt{1 - \norm{\ww_{\parallel}}_2^2}}{\sqrt{\alpha}} + \veps \right. \\
	& \left. - W_2^{(\eta)} \left( \frac{1}{n} \sum_{i = 1}^{n} \delta_{\left( y_i, \left\langle \ww_{\parallel}, \bgg_i \right\rangle + \sqrt{1 - \norm{\ww_{\parallel}}_2^2} u_i \right) }, P_{\ww} \right) + \frac{1}{n} \left\langle \blambda, \uu - \ZZ \ww_{\perp} \right\rangle \right\} \\
	\ge & \min_{\norm{\ww_{\parallel}}_2 \le 1, \norm{\ww_{\perp}}_2 = 1, \norm{\uu}_2 \le C_0 \sqrt{n}} \max_{\norm{\blambda}_2 \le C \sqrt{n}} \left\{ \frac{\sqrt{1 - \norm{\ww_{\parallel}}_2^2}}{\sqrt{\alpha}} + \veps \right. \\
	& \left. - W_2^{(\eta)} \left( \frac{1}{n} \sum_{i = 1}^{n} \delta_{\left( y_i, \left\langle \ww_{\parallel}, \bgg_i \right\rangle + \sqrt{1 - \norm{\ww_{\parallel}}_2^2} u_i \right) }, P_{\ww} \right) + \frac{1}{n} \left\langle \blambda, \uu - \ZZ \ww_{\perp} \right\rangle \right\} \\
	:= & \xi_{n, \veps, \eta, C},
\end{align*}
where $C > 0$ is to be determined. The first equality holds since for $\norm{\uu}_2 > C_0 \sqrt{n}$, the inner maximum becomes $+\infty$. Therefore, we obtain the following inequality:
\begin{equation*}
	\P \left( \xi_{n, \veps, \eta} \le 0 \right) \le \P \left( \norm{\ZZ}_{\mathrm{op}} > C_0 \sqrt{n} \right) + \P \left( \xi_{n, \veps, \eta, C} \le 0 \right).
\end{equation*}
For future convenience, we denote the domain of the outer minimum by $\cD_{\ww, \uu}$, i.e.,
\begin{equation*}
	\cD_{\ww, \uu} = \left\{ (\ww_{\parallel}, \ww_{\perp}, \uu) \in \R^k \times \R^{d - k} \times \R^{n}: \norm{\ww_{\parallel}}_2 \le 1, \norm{\ww_{\perp}}_2 = 1, \norm{\uu}_2 \le C_0 \sqrt{n} \right\}.
\end{equation*}

Next we upper bound $\xi_{n, \veps, \eta, C}$ with Lemma~\ref{lem:gordon}. Let $\vv \sim \sN (\bzero, \id_{d - k})$ and $\hh \sim \sN (\bzero, \id_n)$ be independent standard normal vectors, and further independent of $(\yy, \bGG, \ZZ)$. Then we know that conditioning on $(\yy, \bGG)$, $\ZZ = (Z_{ij})_{i \in [n], j \in [d - k]} \sim_{\iid} \sN (0, 1)$ is independent of $(\vv, \hh)$. Define the following linearized version of $\xi_{n, \veps, \eta, C}$:
\begin{align*}
	\xi_{n, \veps, \eta, C}^{(1)} = & \min_{\cD_{\ww, \uu}} \max_{\norm{\blambda}_2 \le C \sqrt{n}} \left\{ \frac{\sqrt{1 - \norm{\ww_{\parallel}}_2^2}}{\sqrt{\alpha}} + \veps - W_2^{(\eta)} \left( \frac{1}{n} \sum_{i = 1}^{n} \delta_{\left( y_i, \left\langle \ww_{\parallel}, \bgg_i \right\rangle + \sqrt{1 - \norm{\ww_{\parallel}}_2^2} u_i \right) }, P_{\ww} \right) \right. \\
	& \left. + \frac{1}{n} \left\langle \blambda, \uu - \norm{\ww_{\perp}}_2 \hh \right\rangle + \frac{1}{n} \norm{\blambda}_2 \langle \ww_{\perp}, \vv \rangle \right\}.
\end{align*}
Note that with high probability, the following holds:
\begin{equation*}
	W_2 \left( \frac{1}{n} \sum_{i = 1}^{n} \delta_{y_i}, \Law(Y) \right) < \eta \implies W_2^{(\eta)} \left( \frac{1}{n} \sum_{i = 1}^{n} \delta_{\left( y_i, \left\langle \ww_{\parallel}, \bgg_i \right\rangle + \sqrt{1 - \norm{\ww_{\parallel}}_2^2} u_i \right) }, P_{\ww} \right) < \infty,
\end{equation*}
which further implies that the objective function in the minimax problem that defines $\xi_{n, \veps, \eta, C}^{(1)}$ is continuous (Lemma~\ref{lem:prop_cons_W2} (ii)). Therefore, according to Lemma~\ref{lem:gordon}, we have for all $t \in \R$,
\begin{equation*}
	\P \left( \xi_{n, \veps, \eta, C} \le t \vert \yy, \bGG \right) \le 2 \P \left( \xi_{n, \veps, \eta, C}^{(1)} \le t \vert \yy, \bGG \right).
\end{equation*}
Choose $t = 0$ and integrate over $(\yy, \bGG)$ the above inequality, we obtain that $\P ( \xi_{n, \veps, \eta, C} \le 0 ) \le 2 \P ( \xi_{n, \veps, \eta, C}^{(1)} \le 0 )$. Moreover, straightforward calculation reveals that
\begin{align*}
	\xi_{n, \veps, \eta, C}^{(1)} = & \min_{\norm{\ww_{\parallel}}_2 \le 1, \norm{\uu}_2 \le C_0 \sqrt{n}} \max_{\norm{\blambda}_2 \le C \sqrt{n}} \left\{ \frac{\sqrt{1 - \norm{\ww_{\parallel}}_2^2}}{\sqrt{\alpha}} + \veps \right. \\
	& \left. - W_2^{(\eta)} \left( \frac{1}{n} \sum_{i = 1}^{n} \delta_{\left( y_i, \left\langle \ww_{\parallel}, \bgg_i \right\rangle + \sqrt{1 - \norm{\ww_{\parallel}}_2^2} u_i \right) }, P_{\ww} \right) + \frac{1}{n} \left\langle \blambda, \uu - \hh \right\rangle - \frac{1}{n} \norm{\blambda}_2 \norm{\vv}_2 \right\}  \\
	= & \min_{\norm{\ww_{\parallel}}_2 \le 1, \norm{\uu}_2 \le C_0 \sqrt{n}} \max_{\norm{\blambda}_2 \le C \sqrt{n}} \left\{ \frac{\sqrt{1 - \norm{\ww_{\parallel}}_2^2}}{\sqrt{\alpha}} + \veps \right. \\
	& \left. - W_2^{(\eta)} \left( \frac{1}{n} \sum_{i = 1}^{n} \delta_{\left( y_i, \left\langle \ww_{\parallel}, \bgg_i \right\rangle + \sqrt{1 - \norm{\ww_{\parallel}}_2^2} u_i \right) }, P_{\ww} \right) + \frac{\norm{\blambda}_2}{n} \left( \norm{\uu - \hh}_2 - \norm{\vv}_2 \right) \right\} \\
	= & \min_{\norm{\ww_{\parallel}}_2 \le 1, \norm{\uu}_2 \le C_0 \sqrt{n}} \max_{0 \le \gamma \le C} \left\{ \frac{\sqrt{1 - \norm{\ww_{\parallel}}_2^2}}{\sqrt{\alpha}} + \veps \right. \\
	& \left. - W_2^{(\eta)} \left( \frac{1}{n} \sum_{i = 1}^{n} \delta_{\left( y_i, \left\langle \ww_{\parallel}, \bgg_i \right\rangle + \sqrt{1 - \norm{\ww_{\parallel}}_2^2} u_i \right) }, P_{\ww} \right) + \frac{\gamma}{\sqrt{n}} \left( \norm{\uu - \hh}_2 - \norm{\vv}_2 \right) \right\} \\
	= & \min_{\norm{\ww_{\parallel}}_2 \le 1, \norm{\uu}_2 \le C_0 \sqrt{n}} \left\{ \frac{\sqrt{1 - \norm{\ww_{\parallel}}_2^2}}{\sqrt{\alpha}} + \veps - W_2^{(\eta)} \left( \frac{1}{n} \sum_{i = 1}^{n} \delta_{\left( y_i, \left\langle \ww_{\parallel}, \bgg_i \right\rangle + \sqrt{1 - \norm{\ww_{\parallel}}_2^2} u_i \right) }, P_{\ww} \right) \right. \\
	& \left. + \frac{C}{\sqrt{n}} \left( \norm{\uu - \hh}_2 - \norm{\vv}_2 \right)_{+} \right\}.
\end{align*}
Now we aim to estimate $\xi_{n, \veps, \eta, C}^{(1)}$. By Lemma~\ref{lem:prop_cons_W2} (i), it follows that for all $(\ww_{\parallel}, \uu)$ such that $\norm{\ww_{\parallel}}_2 \le 1$ and $\norm{\uu}_2 \le C_0 \sqrt{n}$,
\begin{align*}
	& W_2^{(\eta)} \left( \frac{1}{n} \sum_{i = 1}^{n} \delta_{\left( y_i, \left\langle \ww_{\parallel}, \bgg_i \right\rangle + \sqrt{1 - \norm{\ww_{\parallel}}_2^2} u_i \right) }, P_{\ww} \right) \le W_2^{(\eta)} \left( \frac{1}{n} \sum_{i = 1}^{n} \delta_{\left( y_i, \left\langle \ww_{\parallel}, \bgg_i \right\rangle + \sqrt{1 - \norm{\ww_{\parallel}}_2^2} h_i \right) }, P_{\ww} \right) \\
	& + W_2^{(0)} \left( \frac{1}{n} \sum_{i = 1}^{n} \delta_{\left( y_i, \left\langle \ww_{\parallel}, \bgg_i \right\rangle + \sqrt{1 - \norm{\ww_{\parallel}}_2^2} h_i \right) }, \frac{1}{n} \sum_{i = 1}^{n} \delta_{\left( y_i, \left\langle \ww_{\parallel}, \bgg_i \right\rangle + \sqrt{1 - \norm{\ww_{\parallel}}_2^2} u_i \right) } \right) \\
	\le & W_2^{(\eta)} \left( \frac{1}{n} \sum_{i = 1}^{n} \delta_{\left( y_i, \left\langle \ww_{\parallel}, \bgg_i \right\rangle + \sqrt{1 - \norm{\ww_{\parallel}}_2^2} h_i \right) }, P_{\ww} \right) + \frac{\norm{\uu - \hh}_2}{\sqrt{n}} \sqrt{1 - \norm{\ww_{\parallel}}_2^2}.
\end{align*}
Note that $\{ (y_i, \bgg_i, h_i) \}_{i \in [n]} \sim_{\iid} \Law (Y, G, Z)$. According to Lemma~\ref{lem:W2_concentration}, there exist positive constants $C_1 := C_1 (\veps, \eta)$ and $C_2 := C_2 (\veps, \eta)$, such that
\begin{equation*}
	\P \left( W_2 \left( \frac{1}{n} \sum_{i = 1}^{n} \delta_{(y_i, \bgg_i, h_i)}, \Law (Y, G, Z) \right) > \min \left( \frac{\veps}{2}, \eta \right) \right) < C_1 \exp \left( - C_2 n^{1 - \veps} \right).
\end{equation*}
Denote $(1 / n) \sum_{i = 1}^{n} \delta_{(y_i, \bgg_i, h_i)} = \Law(Y_n, G_n, Z_n)$, then with probability at least $1 - C_1 \exp (- C_2 n^{1 - \veps})$, we can find a coupling $(Y_n, G_n, Z_n, Y, G, Z)$ satisfying
\begin{equation*}
	\E \left[ (Y_n - Y)^2 + \norm{G_n - G}_2^2 + (Z_n - Z)^2 \right] < \min \left( \frac{\veps^2}{4}, \eta^2 \right),
\end{equation*}
which further implies that $\E [(Y_n - Y)^2]^{1 / 2} < \eta$, and that
\begin{align*}
	& \E \left[ (Y_n - Y)^2 + \left( \left\langle \ww_{\parallel}, G_n \right\rangle + \sqrt{1 - \norm{\ww_{\parallel}}_2^2} \cdot Z_n - \left\langle \ww_{\parallel}, G \right\rangle -  \sqrt{1 - \norm{\ww_{\parallel}}_2^2} \cdot Z \right)^2 \right] \\
	= & \E \left[ (Y_n - Y)^2 + \left( \left\langle \ww_{\parallel}, G_n - G \right\rangle + \sqrt{1 - \norm{\ww_{\parallel}}_2^2} \cdot ( Z_n - Z ) \right)^2 \right] \\
	\stackrel{(i)}{\le} & \E \left[ (Y_n - Y)^2 + \left( \norm{\ww_{\parallel}}_2^2 + 1 - \norm{\ww_{\parallel}}_2^2 \right) \left( \norm{G_n - G}_2^2 + (Z_n - Z)^2 \right) \right] < \frac{\veps^2}{4},
\end{align*}
where $(i)$ follows from Cauchy-Schwarz inequality. Therefore, we conclude that with probability at least $1 - C_1 \exp(- C_2 n^{1 - \veps})$,
\begin{equation*}
	W_2^{(\eta)} \left( \frac{1}{n} \sum_{i = 1}^{n} \delta_{\left( y_i, \left\langle \ww_{\parallel}, \bgg_i \right\rangle + \sqrt{1 - \norm{\ww_{\parallel}}_2^2} h_i \right) }, P_{\ww} \right) < \frac{\veps}{2},
\end{equation*}
thus leading to
\begin{equation*}
	W_2^{(\eta)} \left( \frac{1}{n} \sum_{i = 1}^{n} \delta_{\left( y_i, \left\langle \ww_{\parallel}, \bgg_i \right\rangle + \sqrt{1 - \norm{\ww_{\parallel}}_2^2} u_i \right) }, P_{\ww} \right) < \frac{\veps}{2} + \frac{\norm{\uu - \hh}_2}{\sqrt{n}} \sqrt{1 - \norm{\ww_{\parallel}}_2^2}.
\end{equation*}
Using standard large deviation bounds (see e.g. \cite{durrett2019probability}), we know that there exists $C_3 = C_3 (\veps) > 0$ satisfying
\begin{equation*}
	\P \left( \left\vert \frac{1}{\sqrt{n}} \norm{\vv}_2 - \frac{1}{\sqrt{\alpha}} \right\vert < \frac{\veps}{2 C} \right) \ge 1 - \exp (- C_3 n).
\end{equation*}
Combing all these estimates, we finally obtain that with probability no less than $1 - C_1 \exp (- C_2 n^{1 - \veps})$, the following happens: For any $C \ge 1$, $\norm{\ww_{\parallel}}_2 \le 1$, and $\norm{\uu}_2 \le C_0 \sqrt{n}$, we have
\begin{align*}
	& \frac{\sqrt{1 - \norm{\ww_{\parallel}}_2^2}}{\sqrt{\alpha}} + \veps - W_2^{(\eta)} \left( \frac{1}{n} \sum_{i = 1}^{n} \delta_{\left( y_i, \left\langle \ww_{\parallel}, \bgg_i \right\rangle + \sqrt{1 - \norm{\ww_{\parallel}}_2^2} u_i \right) }, P_{\ww} \right) + \frac{C}{\sqrt{n}} \left( \norm{\uu - \hh}_2 - \norm{\vv}_2 \right)_{+} \\
	> & \frac{\sqrt{1 - \norm{\ww_{\parallel}}_2^2}}{\sqrt{\alpha}} + \veps - \frac{\veps}{2} - \frac{\norm{\uu - \hh}_2}{\sqrt{n}} \sqrt{1 - \norm{\ww_{\parallel}}_2^2} + C \left( \frac{\norm{\uu - \hh}_2}{\sqrt{n}} - \frac{1}{\sqrt{\alpha}} \right)_{+} - C \cdot \frac{\veps}{2 C} \\
	= & C \left( \frac{\norm{\uu - \hh}_2}{\sqrt{n}} - \frac{1}{\sqrt{\alpha}} \right)_{+} - \sqrt{1 - \norm{\ww_{\parallel}}_2^2} \left( \frac{\norm{\uu - \hh}_2}{\sqrt{n}} - \frac{1}{\sqrt{\alpha}} \right) \\
	\ge & \left( C - \sqrt{1 - \norm{\ww_{\parallel}}_2^2} \right) \left( \frac{\norm{\uu - \hh}_2}{\sqrt{n}} - \frac{1}{\sqrt{\alpha}} \right)_{+} \ge 0,
\end{align*}
which leads to $\xi_{n, \veps, \eta, C}^{(1)} > 0$. Therefore,
\begin{equation*}
	\P \left( \xi_{n, \veps, \eta, C} \le 0 \right) \le 2 \P \left( \xi_{n, \veps, \eta, C}^{(1)} \le 0 \right) \le 2 C_1 \exp (- C_2 n^{1 - \veps}).
\end{equation*}
Finally, we get that
\begin{align*}
	\P \left( \xi_{n, \veps, \eta} \le 0 \right) \le & \P \left( \norm{\ZZ}_{\mathrm{op}} > C_0 \sqrt{n} \right) + 2 \P \left( \xi_{n, \veps, \eta, C}^{(1)} \le 0 \right) \\
	\le & 2 \exp (- C_0 n) + 2 C_1 \exp (- C_2 n^{1 - \veps}),
\end{align*}
where $C_0, C_1, C_2$ only depends on $\veps$ and $\eta$. This proves Eq.~\eqref{eq:xi_lower_bd}.

Now we are in position to show Eq.~\eqref{eq:W2_as_conv}. Fix $\eta > 0$, for any $\veps > 0$, we have
\begin{align*}
	& \sum_{n = 1}^{\infty} \P \left( \max_{\norm{\ww}_2 = 1} \left( W_2^{(\eta)} \left( \hat{P}_{n, \ww}, P_{\ww} \right) - \frac{1}{\sqrt{\alpha}} \sqrt{1 - \norm{\VV^\top \ww}_2^2} \right)_{+} \ge \veps \right) \\
	\le & \sum_{n=1}^{\infty} \P \left( \xi_{n, \veps, \eta} \le 0 \right) \le \sum_{n = 1}^{\infty} C_0 (\veps, \eta) \exp \left( - C_1 (\veps, \eta) n^{1 - \veps} \right) < \infty.
\end{align*}
Therefore, by Borel-Cantelli Lemma we deduce that almost surely,
\begin{equation*}
	\limsup_{n \to \infty} \max_{\norm{\ww}_2 = 1} \left( W_2^{(\eta)} \left( \hat{P}_{n, \ww}, P_{\ww} \right) - \frac{1}{\sqrt{\alpha}} \sqrt{1 - \norm{\VV^\top \ww}_2^2} \right)_{+} \le \veps.
\end{equation*}
Since this is true for all $\veps > 0$, Eq.~\eqref{eq:W2_as_conv} follows naturally. This concludes the proof.
\end{proof}

\subsection{Proof of Theorem~\ref{thm:NN_upper_bd}}
With a slight abuse of notation, let us recast $\ww_j$ as $B_j \ww_j$ where $B_j \in [0, B]$ and $\norm{\ww_j}_2 = 1$. Denote $\hat{\WW} = (\ww_1, \cdots, \ww_m)$ and define the following empirical distribution:
\begin{equation*}
	\hat{P}_n := \hat{P}_{n, \hat{\WW}} = \frac{1}{n} \sum_{i = 1}^{n} \delta_{\left( y_i, \xx_i^\top \hat{\WW} \right)}.
\end{equation*}
Then $\hat{P}_n$ is a probability measure on $\{ \pm 1 \} \times \R^m$. Define the random vector $(y, \uu) = (y, u_1, \cdots, u_m)$ on the same probability space such that
\begin{equation*}
	(y, \uu) \sim \hat{P}_n \implies \P \left( (y, \uu) = \left( y_i, \xx_i^\top \hat{\WW} \right) \right) = \frac{1}{n}, \ \forall i \in [n].
\end{equation*}
Hence, we may write $\hat{P}_n = \Law (y, \uu) = \Law (y, u_1, \cdots, u_m)$. Now assume that with probability bounded away from $0$, a $\kappa$-margin interpolator exists, i.e., $\hat{R}_n^{\star} (\XX, \yy) = 0$ and
\begin{equation*}
	\exists \hat{\WW} = (\ww_1, \cdots, \ww_m), \ \mbox{s.t.} \ \frac{y_i}{\sqrt{m}} \sum_{j=1}^{m} a_j \sigma \left( B_j \langle \ww_j, \xx_i \rangle \right) \ge \kappa, \ \forall i \in [n],
\end{equation*}
then almost surely under $\hat{P}_n$, we have
\begin{equation}\label{eq:kappa_margin_dist}
	\frac{y}{\sqrt{m}} \sum_{j = 1}^{m} a_j \sigma \left( B_j u_j \right) \ge \kappa.
\end{equation}
Now let us denote for $j = 1, \cdots, m$, $\hat{P}_{n, j} = \Law (y, u_j) = (1/n) \sum_{i=1}^{n} \delta_{(y_i, \langle \ww_j, \xx_i \rangle)}$, and $P = \Unif \{ \pm 1 \} \otimes \sN (0, 1)$. By our assumption, we can assume that $n / d \to \alpha \ge (2 L^2 B^4 / \kappa^2) \cdot m$. (Otherwise, one may choose such a subsequence.) Then for any fixed $\veps > 0$ and $\eta > 0$, according to Theorem~\ref{thm:W2_outer_bd}, with high probability we have
\begin{equation*}
	W_2^{(\eta)} \left( \Law (y, u_j), P \right) = W_2^{(\eta)} \left( \hat{P}_{n, j}, P \right) \le \frac{1 + \veps}{\sqrt{\alpha}}, \ \forall 1 \le j \le m.
\end{equation*}
As a consequence, there exists a coupling $((y, u_j), (y', G))$ such that $(y', G) \sim P$, and that
\begin{equation}\label{eq:W2_outer_bd_1}
	\E \left[ (y - y')^2 \right] \le \eta^2, \ \E \left[ (y - y')^2 \right] + \E \left[ (u_j - G)^2 \right] \le \frac{(1 + \veps)^2}{\alpha},
\end{equation}
thus leading to (notice that $\E [y' \sigma( B_j G)] = 0$ by independence)
\begin{align*}
	& \left\vert \E \left[ y \sigma \left( B_j u_j \right) \right] \right\vert = \left\vert \E \left[ y \sigma \left( B_j u_j \right) \right] - \E \left[ y' \sigma \left( B_j G \right) \right] \right\vert \\
	\le & \left\vert \E \left[ y \left( \sigma \left( B_j u_j \right) - \sigma \left( B_j G \right) \right) \right] \right\vert + \left\vert \E \left[ (y' - y) \sigma \left( B_j G \right) \right] \right\vert \\
	\stackrel{(i)}{\le} & L B_j \cdot \E \left[ \left\vert u_j - G \right\vert \right] + \E \left[ (y - y')^2 \right]^{1/2} \cdot \E \left[ \sigma\left( B_j G \right)^2 \right]^{1/2} \\
	\le & L B_j \cdot \E \left[ \left( u_j - G \right)^2 \right]^{1/2} + \E \left[ (y - y')^2 \right]^{1/2} \cdot \max_{0 \le b \le B} \E \left[ \sigma \left( b G \right)^2 \right]^{1/2} \\
	\le & \frac{L B (1 + \veps)}{\sqrt{\alpha}} + \eta \cdot \max_{0 \le b \le B} \E \left[ \sigma \left( b G \right)^2 \right]^{1/2},
\end{align*}
where $(i)$ is due to the fact that $\sigma$ is $L$-Lipschitz and $y = \pm 1$, and the last inequality follows from Eq.~\eqref{eq:W2_outer_bd_1}. Now for any constant $C_{\rm ub} = C_{\rm ub} (\kappa, B, L)$, as long as $\alpha \ge C_{\rm ub} \cdot m$, we can choose $\veps$ and $\eta$ to be small enough so that
\begin{equation*}
	\vert \E \left[ y \sigma \left( B_j u_j \right) \right] \vert \le \frac{L B (1 + \veps)}{\sqrt{\alpha}} + \eta \cdot \max_{0 \le b \le B} \E \left[ \sigma \left( b G \right)^2 \right]^{1/2} \le \frac{L B}{\sqrt{C_{\rm ub}} \sqrt{m}}, \ \forall j \in [m].
\end{equation*}
Taking expectation on both sides of Eq.~\eqref{eq:kappa_margin_dist}, we finally obtain that with probability bounded away from $0$,
\begin{equation}\label{eq:interpolation_expc}
	\kappa \le \frac{1}{\sqrt{m}} \sum_{j = 1}^{m} \vert a_j \vert \cdot \left\vert \E \left[ y \sigma \left( B_j u_j \right) \right] \right\vert \le \frac{LB}{\sqrt{C_{\rm ub}}} \cdot \frac{1}{m} \sum_{j = 1}^{m} \vert a_j \vert \le \frac{LB^2}{\sqrt{C_{\rm ub}}},
\end{equation}
where the last inequality follows from the definition of $\cF_{\mathsf{NN}}^{m, B}$: $(1 / m) \sum_{j = 1}^{m} \vert a_j \vert \le B$. Therefore, if we choose
\begin{equation*}
	C_{\rm ub} = C_{\rm ub} (\kappa, B, L) = \frac{2 L^2 B^4}{\kappa^2} > \frac{L^2 B^4}{\kappa^2},
\end{equation*}
then Eq.~\eqref{eq:interpolation_expc} will result a contradiction. To conclude, as long as
\begin{equation*}
	\alpha \ge C_{\rm ub} (\kappa, B, L) \cdot m = \frac{2 L^2 B^4}{\kappa^2} \cdot m,
\end{equation*}
a $\kappa$-margin interpolator does not exist with high probability, namely $\hat{R}_n^* (\XX, \yy) > 0$. This completes the proof of Theorem~\ref{thm:NN_upper_bd}.

\subsection{Proofs of Lemma~\ref{lem:unique_F_kappa} and Theorem~\ref{thm:mm_margin_dist}}
\begin{proof}[\bf Proof of Lemma~\ref{lem:unique_F_kappa}]
Since $F_{\kappa}$ is continuous, its maximum must be achieved at some $\rho_* = \rho_* (\kappa) \in [-1, 1]$. We show that such $\rho_*$ is unique. Otherwise, assume that there exists $\rho_1 \neq \rho_2$ such that
\begin{equation*}
	F_{\kappa} (\rho_1) = F_{\kappa} (\rho_2) = \max_{\rho \in [-1, 1]} F_{\kappa} (\rho).
\end{equation*}
Denote this maximum by $F$, then by definition of $F_{\kappa}$ we get that
\begin{equation*}
	\E \left[ \left( \kappa - \rho_1 YG - \sqrt{1 - \rho_1^2} Z \right)_+^2 \right] = \frac{1 - \rho_1^2}{F}, \ \E \left[ \left( \kappa - \rho_2 YG - \sqrt{1 - \rho_2^2} Z \right)_+^2 \right] = \frac{1 - \rho_2^2}{F}.
\end{equation*}
Let us define a new function $G_{\kappa}: \mathsf{B}_{2} (1) \to \R$ by $G_{\kappa} (\rho, r) = \E [ ( \kappa - \rho YG - r Z )_+^2 ]^{1/2}$, where $\mathsf{B}_2 (1)$ is the closed unit disk in $\R^2$. Then, from Lemma 6.3 (a) in \cite{montanari2019generalization} we know that $G_{\kappa}$ is convex, thus leading to
\begin{align*}
	G_{\kappa} \left( \frac{\rho_1 + \rho_2}{2}, \frac{\sqrt{1 - \rho_1^2} + \sqrt{1 - \rho_2^2}}{2} \right) \le & \frac{1}{2} \left( G_{\kappa} \left( \rho_1, \sqrt{1 - \rho_1^2} \right) + G_{\kappa} \left( \rho_2, \sqrt{1 - \rho_2^2} \right) \right) \\
	= & \frac{1}{2 \sqrt{F}} \left( \sqrt{1 - \rho_1^2} + \sqrt{1 - \rho_2^2} \right).
\end{align*}
Denote $\rho = (\rho_1 + \rho_2) / 2$, $r = (\sqrt{1 - \rho_1^2} + \sqrt{1 - \rho_2^2}) / 2$. Since $\rho_1 \neq \rho_2$ and $\mathsf{B}_{2} (1)$ is strictly convex, we know that $\sqrt{\rho^2 + r^2} < 1$. We further denote
\begin{equation*}
	\rho' = \frac{\rho}{\sqrt{\rho^2 + r^2}}, \ r' = \frac{r}{\sqrt{\rho^2 + r^2}} \implies \rho'^2 + r'^2 = 1.
\end{equation*}
Then, it follows that
\begin{align*}
	\frac{1}{\sqrt{F}} = & \frac{1}{r} G_{\kappa} (\rho, r) = \frac{\sqrt{\rho^2 + r^2}}{r} \E \left[ \left( \frac{\kappa}{\sqrt{\rho^2 + r^2}} - \frac{\rho Y G}{\sqrt{\rho^2 + r^2}} - \frac{r Z}{\sqrt{\rho^2 + r^2}} \right)_+^2 \right]^{1 / 2} \\
	= & \frac{1}{r'} \E \left[ \left( \frac{\kappa}{\sqrt{\rho^2 + r^2}} - \rho' Y G - r' Z \right)_+^2 \right]^{1 / 2} > \frac{1}{r'} G_{\kappa} \left( \rho', r' \right),
\end{align*}
where the last inequality is due to the fact that $\sqrt{\rho^2 + r^2} < 1$. We finally obtain that
\begin{equation*}
	F_{\kappa} \left( \rho' \right) = \frac{r'^2}{G_{\kappa} \left( \rho', r' \right)^2} > F = \max_{\rho \in [-1, 1]} F_{\kappa} (\rho),
\end{equation*}
a contradiction. Therefore, the maximizer $\rho_*$ must be unique. 
\end{proof}

\begin{proof}[\bf Proof of Theorem~\ref{thm:mm_margin_dist}]
Without loss of generality, let us assume that $\btheta_* = \ee_1$ and denote $\hat{\btheta}^{\rm MM} = (\hat{\rho}, \hat{\ww}^\top)^\top$. For $i \in [n]$ one can write $\xx_i = (g_i, \zz_i)$, where $(y_i, g_i) \perp \zz_i$, and that
\begin{equation*}
	\P (y_i = + 1 \vert g_i) = \varphi (g_i) = 1 - \P (y_i = - 1 \vert g_i).
\end{equation*}
Then, Theorem~\ref{thm:W2_outer_bd} implies that, for any $\veps, \eta > 0$, with high probability we have
\begin{equation*}
	W_2^{(\eta)} \left( \frac{1}{n} \sum_{i = 1}^{n} \delta_{\left( y_i, \ \left\langle \hat{\btheta}^{\rm MM}, \xx_i \right\rangle \right)}, \Law \left( Y, \hat{\rho} G + \sqrt{1 - \hat{\rho}^2} Z \right) \right) \le \frac{\sqrt{1 - \hat{\rho}^2}}{\sqrt{\alpha}} + \veps.
\end{equation*}
Denote $(Y, \hat{\rho} G + \sqrt{1 - \hat{\rho}^2} Z ) = (Y, U)$, and
\begin{equation*}
	\frac{1}{n} \sum_{i = 1}^{n} \delta_{\left( y_i, \ \left\langle \hat{\btheta}^{\rm MM}, \xx_i \right\rangle \right)} = \Law (Y', U'),
\end{equation*}
then there exists a coupling $(Y, U, Y', U')$ satisfying
\begin{equation*}
	\E \left[ (Y - Y')^2 \right] \le \eta^2, \ \E \left[ (Y - Y')^2 \right] + \E \left[ (U - U')_2^2 \right] \le \left( \frac{\sqrt{1 - \hat{\rho}^2}}{\sqrt{\alpha}} + \veps \right)^2,
\end{equation*}
which leads to
\begin{align*}
	\E \left[ (Y U - Y' U')^2 \right] = & \E \left[ \left( (Y - Y') U + Y' (U -U') \right)^2 \right] \\
	= & \E \left[ (Y - Y')^2 U^2 \right] + 2 \E \left[ (Y - Y') Y' U (U -U') \right] + \E \left[ (U - U')^2 \right] \\
	\stackrel{(i)}{\le} & \E \left[ (Y - Y')^2 U^2 \right] + 2 \E \left[ (Y - Y')^2 U^2 \right]^{1/2} \E \left[ (U - U')^2 \right]^{1/2} + \E \left[ (U - U')^2 \right] \\
	\stackrel{(ii)}{\le} & \left( \E \left[ (Y - Y')^4 \right]^{1 / 4} \E \left[ U^4 \right]^{1/4} + \E \left[ (U - U')^2 \right]^{1/2} \right)^2 \\
	\stackrel{(iii)}{\le} & \left( \sqrt{2} \E \left[ (Y -Y')^2 \right]^{1/4} \E \left[ G^4 \right]^{1 / 4} + \E \left[ (U - U')^2 \right]^{1/2} \right)^2 \\
	\le & \left( 2 \sqrt{\eta} + \frac{\sqrt{1 - \hat{\rho}^2}}{\sqrt{\alpha}} + \veps \right)^2,
\end{align*}
where $(i)$ and $(ii)$ follow from Cauchy-Schwarz inequality, $(iii)$ is because of $\vert Y -Y' \vert \le 2$ and $U \sim \sN (0, 1)$. Hence, we can choose a sufficiently small $\eta$ ($\eta < \veps^2 / 4$ is enough) such that
\begin{equation*}
	\E \left[ (YU - Y'U')^2 \right]^{1 / 2} \le \frac{\sqrt{1 - \hat{\rho}^2}}{\sqrt{\alpha}} + 2 \veps,
\end{equation*}
which further implies that
\begin{align*}
	W_2 \left( \hat{P}_{n, \hat{\btheta}^{\rm MM}}, \hat{\rho} YG + \sqrt{1 - \hat{\rho}^2} Z \right) = & W_2 \left( \frac{1}{n} \sum_{i = 1}^{n} \delta_{ y_i \left\langle \hat{\btheta}^{\rm MM}, \xx_i \right\rangle } , \Law \left( Y \left( \hat{\rho} G + \sqrt{1 - \hat{\rho}^2} Z \right) \right) \right) \\
	\le & \frac{\sqrt{1 - \hat{\rho}^2}}{\sqrt{\alpha}} + 2 \veps. 
\end{align*}
Let $\kappa = \kappa_{\rm s} (\alpha)$, then according to Theorem 1 (b) in \cite{montanari2019generalization} we know that
\begin{equation*}
	\lim_{n \to \infty} \P \left( y_i \left\langle \hat{\btheta}^{\rm MM}, \xx_i \right\rangle \ge \kappa - \veps, \ \forall i \in [n] \right) = 1,
\end{equation*}
meaning that $\supp(\hat{P}_{n, \hat{\btheta}^{\rm MM}}) \subset [\kappa - \veps, \infty)$ with high probability. In other words, if we denote $\hat{P}_{n, \hat{\btheta}^{\rm MM}} = \Law (V)$, then with high probability we have $V \ge \kappa - \veps$, $a.s.$, and there exists a coupling $(V, Y, G, Z)$ such that
\begin{equation}\label{eq:kappa_margin_ineq}
	\begin{split}
		\E \left[ \left( V - \hat{\rho} YG - \sqrt{1 - \hat{\rho}^2} Z \right)^2 \right]^{1/2} \le & \sqrt{1 - \hat{\rho}^2} \cdot \frac{\E \left[ \left( \kappa - \rho_* YG - \sqrt{1 - \rho_*^2} Z \right)_+^2 \right]^{1/2}}{\sqrt{1 - \rho_*^2}} + 2 \veps \\
		= & \frac{\sqrt{1 - \hat{\rho}^2}}{\sqrt{F_{\kappa} (\rho_*)}} + 2 \veps,
	\end{split}
\end{equation}
where $\rho_* = \rho_* (\kappa) = \argmax_{\rho \in [-1, 1]} F_{\kappa} (\rho)$. For $\rho \in [-1, 1]$, let us define
\begin{equation*}
	f_{\kappa}^{*} ( \rho ) = \frac{1}{\sqrt{F_{\kappa} (\rho)}} - \frac{1}{\sqrt{F_{\kappa} (\rho_*)}}.
\end{equation*}
Then Lemma~\ref{lem:unique_F_kappa} implies that $f_{\kappa}^{*} (\rho) \ge 0$ for all $\rho \in [-1, 1]$, and $\rho \to \rho_*$ if and only if $f_{\kappa}^{*} (\rho) \to 0$. Hence, Eq.~\eqref{eq:kappa_margin_ineq} can be rewritten as
\begin{equation*}
	\E \left[ \left( V - \hat{\rho} YG - \sqrt{1 - \hat{\rho}^2} Z \right)^2 \right]^{1/2} \le \sqrt{1 - \hat{\rho}^2} \left( \frac{1}{\sqrt{F_{\kappa} (\hat{\rho})}} - f_{\kappa}^{*} (\hat{\rho}) \right) + 2 \veps,
\end{equation*}
which further implies that
\begin{align*}
	& \E \left[ \left( V - \hat{\rho} YG - \sqrt{1 - \hat{\rho}^2} Z \right)^2 \right]^{1/2} + \sqrt{1 - \hat{\rho}^2} \cdot f_{\kappa}^{*} (\hat{\rho}) \\
	\le & \E \left[ \left( \kappa - \hat{\rho} YG - \sqrt{1 - \hat{\rho}^2} Z \right)_{+}^2 \right]^{1/2} + 2 \veps \\
	\le & \E \left[ \left( \kappa - \veps - \hat{\rho} YG - \sqrt{1 - \hat{\rho}^2} Z \right)_{+}^2 \right]^{1/2} + 3 \veps,
\end{align*}
where the last inequality follows from the fact that
\begin{equation*}
	\frac{\d}{\d \kappa} \E \left[ \left( \kappa - \hat{\rho} YG - \sqrt{1 - \hat{\rho}^2} Z \right)_{+}^2 \right]^{1/2} = \frac{\E \left[ \left( \kappa - \hat{\rho} YG - \sqrt{1 - \hat{\rho}^2} Z \right)_{+} \right]}{\E \left[ \left( \kappa - \hat{\rho} YG - \sqrt{1 - \hat{\rho}^2} Z \right)_{+}^2 \right]^{1/2}} \le 1.
\end{equation*}
For future convenience, we denote $\hat{V} = \hat{\rho} YG + \sqrt{1 - \hat{\rho}^2} Z$, then it follows that
\begin{equation}\label{eq:kappa_W2_ineq}
	\E \left[ \left( V - \hat{V} \right)^2 \right]^{1 / 2} + \sqrt{1 - \hat{\rho}^2} \cdot f_{\kappa}^{*} (\hat{\rho}) \le \E \left[ \left( \max(\kappa - \veps, \hat{V}) - \hat{V} \right)^2 \right]^{1 / 2} + 3 \veps,
\end{equation}
and $V \ge \kappa - \veps$, $a.s.$. Note that when $\hat{V} < \kappa - \veps$, we have $V \ge \kappa - \veps > \hat{V}$, which leads to
\begin{align*}
	& ( V - \hat{V} )^2 \bone_{ \hat{V} < \kappa - \veps} \ge (V - (\kappa - \veps))^2 \bone_{\hat{V} < \kappa - \veps} + (\kappa - \veps - \hat{V})^2 \bone_{\hat{V} < \kappa - \veps} \\
	\implies & \E \left[ (V - (\kappa - \veps))^2 \bone_{\hat{V} < \kappa - \veps} \right] + \E \left[ (\kappa - \veps - \hat{V})^2 \bone_{\hat{V} < \kappa - \veps} \right] \le \E \left[ ( V - \hat{V} )^2 \bone_{ \hat{V} < \kappa - \veps} \right] \\
	\implies & \E \left[ (V - (\kappa - \veps))^2 \bone_{\hat{V} < \kappa - \veps} \right] + \E \left[ ( V - \hat{V} )^2 \bone_{ \hat{V} \ge \kappa - \veps} \right] + \E \left[ (\kappa - \veps - \hat{V})^2 \bone_{\hat{V} < \kappa - \veps} \right] \le \E \left[ ( V - \hat{V} )^2 \right] \\
	\implies & \E \left[ \left( V - \max(\kappa - \veps, \hat{V}) \right)^2 \right] \le \E \left[ (V - \hat{V})^2 \right] - \E \left[ \left( \max(\kappa - \veps, \hat{V}) - \hat{V} \right)^2 \right].
\end{align*}
Comparing this inequality with Eq.~\eqref{eq:kappa_W2_ineq} yields that $\sqrt{1 - \hat{\rho}^2} \cdot f_{\kappa}^{*} (\hat{\rho}) \le 3 \veps$. Note that for $\rho$ sufficiently close to $1$ or $- 1$ ($1 - \rho^2$ is small), the quantity
\begin{equation*}
	\sqrt{1 - \rho^2} \cdot f_{\kappa}^{*} (\rho) = \E \left[ \left( \kappa - \rho YG - \sqrt{1 - \rho^2} Z \right)_+^2 \right]^{1/2} - \frac{\sqrt{1 - \rho^2}}{\sqrt{F_{\kappa} (\rho_*)}}
\end{equation*}
is bounded away from $0$. Hence, for $\veps > 0$ small enough, we must have $\vert \hat{\rho} - \rho_* \vert = o_{\veps} (1)$. Moreover, Eq.~\eqref{eq:kappa_W2_ineq} implies that there exists a constant $C := C(\kappa)$ which only depends on $\kappa$ so that
\begin{equation*}
	\E \left[ \left( V - \max(\kappa - \veps, \hat{V}) \right)^2 \right] \le C \veps.
\end{equation*}
Therefore, we finally obtain that
\begin{equation*}
	W_2 \left( \hat{P}_{n, \hat{\btheta}^{\rm MM}}, \Law \left( \max \left( \kappa - \veps, \hat{\rho} YG + \sqrt{1 - \hat{\rho}^2} Z \right) \right) \right) \le \sqrt{C \veps},
\end{equation*}
thus leading to the following estimate:
\begin{align*}
	W_2 \left( \hat{P}_{n, \hat{\btheta}^{\mathrm{MM}}}, P_{\kappa, \varphi} \right) = & W_2 \left( \hat{P}_{n, \hat{\btheta}^{\mathrm{MM}}}, \Law \left( \max \left( \kappa, \rho_* YG + \sqrt{1 - \rho_*^2} Z \right) \right) \right)	\\
	\le & W_2 \left( \hat{P}_{n, \hat{\btheta}^{\mathrm{MM}}}, \Law \left( \max \left( \kappa, \hat{\rho} YG + \sqrt{1 - \hat{\rho}^2} Z \right) \right) \right) \\
	& + W_2 \left( \Law \left( \max \left( \kappa, \hat{\rho} YG + \sqrt{1 - \hat{\rho}^2} Z \right) \right), \Law \left( \max \left( \kappa, \rho_* YG + \sqrt{1 - \rho_*^2} Z \right) \right) \right) \\
	\stackrel{(i)}{=} & W_2 \left( \hat{P}_{n, \hat{\btheta}^{\mathrm{MM}}}, \Law \left( \max \left( \kappa, \hat{\rho} YG + \sqrt{1 - \hat{\rho}^2} Z \right) \right) \right) + o_{\veps} (1) \\
	\le & W_2 \left( \hat{P}_{n, \hat{\btheta}^{\mathrm{MM}}}, \Law \left( \max \left( \kappa - \veps, \hat{\rho} YG + \sqrt{1 - \hat{\rho}^2} Z \right) \right) \right) + o_{\veps} (1) \\
	& + W_2 \left( \Law \left( \max \left( \kappa, \hat{\rho} YG + \sqrt{1 - \hat{\rho}^2} Z \right) \right), \Law \left( \max \left( \kappa - \veps, \hat{\rho} YG + \sqrt{1 - \hat{\rho}^2} Z \right) \right) \right) \\
	\stackrel{(ii)}{\le} & W_2 \left( \hat{P}_{n, \hat{\btheta}^{\mathrm{MM}}}, \Law \left( \max \left( \kappa - \veps, \hat{\rho} YG + \sqrt{1 - \hat{\rho}^2} Z \right) \right) \right) + \veps + o_{\veps} (1) \\
	\le & \sqrt{C \veps} + \veps + o_{\veps} (1) = o_{\veps} (1),
\end{align*}
where $(i)$ is due to the fact that $\vert \hat{\rho} - \rho_* \vert = o_{\veps} (1)$, $(ii)$ is due to the fact that the mapping $\kappa \mapsto \max(\kappa, \hat{\rho} YG + \sqrt{1 - \hat{\rho}^2} Z)$ is $1$-Lipschitz. We thus conclude that for any $\veps > 0$, $W_2 ( \hat{P}_{n, \hat{\btheta}^{\mathrm{MM}}}, P_{\kappa, \varphi} ) \le o_{\veps} (1)$ happens with probability converging to one as $n \to \infty$. Sending $\veps \to 0$ gives
\begin{equation*}
	W_2 \left( \hat{P}_{n, \hat{\btheta}^{\mathrm{MM}}}, P_{\kappa, \varphi} \right) \stackrel{p}{\to} 0 \ \text{as} \ n \to \infty.
\end{equation*}
This completes the proof of Theorem~\ref{thm:mm_margin_dist}.
\end{proof}

\section{Auxiliary Lemmas}\label{sec:aux_lemma}
We need the following variant of Gordon's comparison theorem for Gaussian processes \cite{gordon1985some, thrampoulidis2015regularized}:
\begin{lem}[Corollary G.1 from \cite{miolane2021distribution}]\label{lem:gordon}
	Let $\cD_{\uu} \subset \mathbb{R}^{n_{1}+n_{2}}$ and $\cD_{\vv} \subset \mathbb{R}^{m_{1}+m_{2}}$ be compact sets and let $Q: \cD_{\uu} \times \cD_{\vv} \rightarrow \mathbb{R}$ be a continuous function. Let $\bGG = (G_{i, j} ) \stackrel{\iid}{\sim} \mathcal{N}(0,1)$, $\bgg \sim \sN (\bzero, \mathbf{I}_{n_{1}} )$ and $\hh \sim \sN (\bzero, \mathbf{I}_{m_{1}} )$ be independent standard Gaussian vectors. For $\uu \in \mathbb{R}^{n_{1}+n_{2}}$ and $\vv \in \mathbb{R}^{m_{1}+m_{2}}$, we define $\tilde{\uu}=\left(u_{1}, \ldots, u_{n_{1}}\right)$ and $\tilde{\vv}=\left(v_{1}, \ldots, v_{m_{1}}\right)$. Define
	\begin{equation*}
		\left\{\begin{array}{l}
			C^{*}(\bGG)=\min_{\uu \in \cD_{\uu}} \max_{\vv \in \cD_{\vv}} \tilde{\vv}^{\top} G \tilde{\uu}+Q(\uu, \vv), \\
			L^{*}(\bgg, \hh)=\min_{\uu \in \cD_{\uu}} \max_{\vv \in \cD_{\vv}} \norm{\tilde{\vv}}_2 \bgg^{\top} \tilde{\uu} + \norm{\tilde{\uu}}_2 \hh^{\top} \tilde{\vv} + Q(\uu, \vv) .
		\end{array}\right.
	\end{equation*}
	Then we have:
	\begin{itemize}
		\item [(i)] For all $t \in \mathbb{R}$,
		\begin{equation*}
			\mathbb{P}\left(C^{*}(\bGG) \leq t \right) \leq 2 \mathbb{P}\left(L^{*}(\bgg, \hh) \leq t\right).
		\end{equation*}
		\item [(ii)] If $\cD_{\uu}$ and $\cD_{\vv}$ are convex and if $Q$ is convex-concave, then for all $t \in \mathbb{R}$,
		\begin{equation*}
			\mathbb{P}\left(C^{*}(\bGG) \geq t\right) \leq 2 \mathbb{P}\left(L^{*}(\bgg, \hh) \geq t\right) .
		\end{equation*}
	\end{itemize}
\end{lem}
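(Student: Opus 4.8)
\emph{Remark.} The statement is quoted verbatim from \cite{miolane2021distribution} (Corollary~G.1), so strictly speaking the proof consists of a pointer to that reference; for completeness I outline the standard Convex Gaussian Min--max Theorem derivation, going back to Gordon \cite{gordon1985some} and Thrampoulidis--Oymak--Hassibi \cite{thrampoulidis2015regularized}. The plan is to reduce both parts to Gordon's Gaussian comparison inequality by introducing auxiliary processes with matched covariance structure. Concretely, bring in a fresh $g\sim\sN(0,1)$, independent of everything, and for $(\uu,\vv)\in\cD_{\uu}\times\cD_{\vv}$ define the centered Gaussian fields
\begin{equation*}
	X_{\uu,\vv}=\tilde{\vv}^{\top}\bGG\tilde{\uu}+\norm{\tilde{\uu}}_2\norm{\tilde{\vv}}_2\,g,\qquad
	Y_{\uu,\vv}=\norm{\tilde{\vv}}_2\,\bgg^{\top}\tilde{\uu}+\norm{\tilde{\uu}}_2\,\hh^{\top}\tilde{\vv}.
\end{equation*}
A short covariance computation gives $\E[X_{\uu,\vv}^2]=\E[Y_{\uu,\vv}^2]=2\norm{\tilde{\uu}}_2^2\norm{\tilde{\vv}}_2^2$, $\E[X_{\uu,\vv}X_{\uu,\vv'}]=\E[Y_{\uu,\vv}Y_{\uu,\vv'}]$ whenever $\uu=\uu'$, and
\begin{equation*}
	\E[X_{\uu,\vv}X_{\uu',\vv'}]-\E[Y_{\uu,\vv}Y_{\uu',\vv'}]=\big(\langle\tilde{\uu},\tilde{\uu}'\rangle-\norm{\tilde{\uu}}_2\norm{\tilde{\uu}'}_2\big)\big(\langle\tilde{\vv},\tilde{\vv}'\rangle-\norm{\tilde{\vv}}_2\norm{\tilde{\vv}'}_2\big)\ge 0,
\end{equation*}
by Cauchy--Schwarz; these are exactly the covariance relations under which Gordon's min--max comparison applies.

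For part (i), I would apply Gordon's theorem with the thresholds $t-Q(\uu,\vv)$. Compactness of $\cD_{\uu},\cD_{\vv}$ and continuity of $Q$ let one pass from Gordon's finite-index statement to the continuous index sets by a net/uniform-continuity argument, and from strict to non-strict inequalities by monotone limits; this yields, for every $t$,
\begin{equation*}
	\P\Big(\min_{\uu}\max_{\vv}\big[X_{\uu,\vv}+Q(\uu,\vv)\big]\le t\Big)\le\P\Big(\min_{\uu}\max_{\vv}\big[Y_{\uu,\vv}+Q(\uu,\vv)\big]\le t\Big)=\P\big(L^{*}(\bgg,\hh)\le t\big).
\end{equation*}
To discard the spurious term, note that on $\{g\le 0\}$ the function $\norm{\tilde{\uu}}_2\norm{\tilde{\vv}}_2 g$ is nonpositive, so $\min_{\uu}\max_{\vv}[X_{\uu,\vv}+Q]\le C^{*}(\bGG)$ pointwise; since $g$ is independent of $\bGG$,
\begin{equation*}
	\tfrac12\,\P\big(C^{*}(\bGG)\le t\big)=\P\big(C^{*}(\bGG)\le t,\ g\le 0\big)\le\P\Big(\min_{\uu}\max_{\vv}[X_{\uu,\vv}+Q]\le t\Big)\le\P\big(L^{*}(\bgg,\hh)\le t\big),
\end{equation*}
which is the claimed bound, with the factor $2$ coming precisely from conditioning on the sign of $g$.

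For part (ii), when $\cD_{\uu},\cD_{\vv}$ are convex and $Q$ is convex--concave, the inner objective defining $C^{*}$ is convex in $\uu$ and concave in $\vv$ (the bilinear Gaussian term being affine in each block), so Sion's minimax theorem exchanges $\min_{\uu}$ and $\max_{\vv}$; a parallel but more delicate argument does the same for $L^{*}$. Applying the companion (convex) half of Gordon's comparison to the resulting $\max_{\vv}\min_{\uu}$ problems reverses the inequality, giving $\P\big(\max_{\vv}\min_{\uu}[X+Q]\ge t\big)\le\P\big(\max_{\vv}\min_{\uu}[Y+Q]\ge t\big)$, hence $\P\big(C^{*}(\bGG)\ge t\big)\le 2\,\P\big(L^{*}(\bgg,\hh)\ge t\big)$ after the mirror-image $g$-removal step (now conditioning on $\{g\ge 0\}$, where the augmenting term is nonnegative).

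The conceptually light but technically fiddly part is Step~(i)'s reduction: making Gordon's classical inequality — stated for finitely many Gaussians and threshold events (or for expectations) — apply to the compact, continuously indexed min--max with the coupling function $Q$ present, which is the usual discretization-plus-continuity routine. The genuine obstacle is in part (ii): justifying the $\min$--$\max$ interchange for the auxiliary problem, since $\norm{\tilde{\uu}}_2\hh^{\top}\tilde{\vv}$ fails to be jointly convex--concave until the $\vv$-block is rewritten in polar form (magnitude times direction), so that $\norm{\tilde{\vv}}_2$ can be pulled out; this is exactly the bookkeeping carried out in \cite{miolane2021distribution}, to which one may simply defer.
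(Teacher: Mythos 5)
The paper does not prove this lemma at all: it is stated and used as Corollary~G.1 of \cite{miolane2021distribution}, and the citation \emph{is} the paper's proof. You correctly note this at the outset, so there is no gap between your proposal and the paper's treatment; the question is only whether your outline of the underlying CGMT argument is sound, and it is. The covariance computations are right: with the scalar augmentation $\norm{\tilde\uu}_2\norm{\tilde\vv}_2\,g$, the primary and auxiliary fields have matched variances, equal cross-covariances at fixed $\uu$, and the cross-covariance gap factors as $(\langle\tilde\uu,\tilde\uu'\rangle-\norm{\tilde\uu}_2\norm{\tilde\uu'}_2)(\langle\tilde\vv,\tilde\vv'\rangle-\norm{\tilde\vv}_2\norm{\tilde\vv'}_2)\ge0$ by Cauchy--Schwarz, which is exactly what Gordon's min--max comparison requires. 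The removal of $g$ by conditioning on its sign (on $\{g\le0\}$ the augmented min--max is pointwise $\le$ the unaugmented one, yielding the factor $2$ by independence) is the standard CGMT device and you apply it correctly, including the mirror step for part~(ii). You also correctly flag the genuinely delicate points: discretizing Gordon's finite-index comparison over the compact index sets with the continuous coupling $Q$, and justifying the min--max interchange for the auxiliary objective in part~(ii), where $\norm{\tilde\uu}_2\hh^\top\tilde\vv$ is not globally convex--concave and a polar reparametrization of the $\vv$-block is needed. These are precisely the technical steps carried out in \cite{miolane2021distribution}, to which the paper (and you) defer. In short, your sketch is a faithful and correct recap of the standard convex Gaussian min--max theorem derivation, supplied where the paper offers only a citation.
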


The next lemma collects some useful properties regarding the constrained second Wasserstein distance, which is introduced in Definition~\ref{def:cons_wp_metric}:
\begin{lem}\label{lem:prop_cons_W2}
	The followings are true about $W_2^{(\eta)} (\cdot, \cdot)$:
	\begin{itemize}
		\item [(i)] $W_2^{(\eta)} (\cdot, \cdot)$ satisfies the constrained triangle inequality: Let $\mu_1, \mu_2, \mu_3$ be three probability measures defined on the same space, and $\eta_1, \eta_2 \ge 0$, then we have
		\begin{equation*}
			W_2^{(\eta_1 + \eta_2)} (\mu_1, \mu_3) \le W_2^{(\eta_1)} (\mu_1, \mu_2) + W_2^{(\eta_2)} (\mu_2, \mu_3).
		\end{equation*}
		\item [(ii)] Let $(\yy, \uu) = \{ (y_i, u_i) \}_{i \in [n]}$ be such that $y_i \in \R$, $u_i \in \R^{d - 1}$ for all $i$, and let $P$ be a fixed probability distribution on $\R^d$. Define the function
		\begin{equation*}
			f (\yy, \uu) = W_2^{(\eta)} \left( \frac{1}{n} \sum_{i=1}^{n} \delta_{(y_i, u_i)} , P  \right).
		\end{equation*}
		Then, for fixed $\yy$, $f$ is continuous in $\uu$ on the set $\{ (\yy, \uu): f(\yy, \uu) < \infty \}$.
	\end{itemize}
\end{lem}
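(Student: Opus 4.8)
For part~(i) the plan is to run the classical gluing argument for $W_2$, carrying the extra linear constraint along for free. If either $W_2^{(\eta_1)}(\mu_1,\mu_2)$ or $W_2^{(\eta_2)}(\mu_2,\mu_3)$ is $+\infty$ the inequality is vacuous, so I would assume both are finite, fix $\veps>0$, and pick $\veps$-nearly optimal couplings $\gamma_{12}\in\Gamma^{(\eta_1)}(\mu_1,\mu_2)$ and $\gamma_{23}\in\Gamma^{(\eta_2)}(\mu_2,\mu_3)$. Since all measures live on the Polish space $\R^d$, the gluing lemma supplies a probability measure $\pi$ on the triple product with $\pi_{12}=\gamma_{12}$ and $\pi_{23}=\gamma_{23}$; I set $\gamma_{13}=\pi_{13}$, a coupling of $\mu_1$ and $\mu_3$. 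Writing $X_1,X_2,X_3$ for the coordinate maps under $\pi$ and applying Minkowski's inequality in $L^2(\pi)$ to the identity $X_1-X_3=(X_1-X_2)+(X_2-X_3)$ — once for the full Euclidean norm $\norm{\cdot}_2$ and once for the scalar functional $\langle\ee_1,\cdot\rangle$, to which Minkowski applies equally well — I get that $\gamma_{13}\in\Gamma^{(\eta_1+\eta_2)}(\mu_1,\mu_3)$ and $\bigl(\int\norm{x-z}_2^2\,\gamma_{13}\bigr)^{1/2}\le W_2^{(\eta_1)}(\mu_1,\mu_2)+W_2^{(\eta_2)}(\mu_2,\mu_3)+2\veps$. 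Letting $\veps\downarrow 0$ proves part~(i).

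For part~(ii) the plan is to extract a Lipschitz bound from part~(i). Fix $\yy$ and abbreviate $\nu_{\uu}=\frac1n\sum_{i=1}^n\delta_{(y_i,u_i)}$. I would first observe that the finiteness of $f(\yy,\uu)=W_2^{(\eta)}(\nu_{\uu},P)$ is independent of $\uu$: it is finite precisely when $P$ has finite second moment \emph{and} $\Gamma^{(\eta)}(\nu_{\uu},P)\neq\emptyset$, and the latter holds iff the $W_2$ distance between $\frac1n\sum_i\delta_{y_i}$ (the first-coordinate marginal of $\nu_{\uu}$, which does not involve $\uu$) and the first-coordinate marginal of $P$ is at most $\eta$ — one direction being trivial and the other obtained by extending a coupling of the marginals to a coupling of $\nu_{\uu}$ and $P$ via conditional independence. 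Hence $\{\uu:f(\yy,\uu)<\infty\}$ is either empty or all of $(\R^{d-1})^n$. On that set I would invoke part~(i) with $\eta_1=0$, $\eta_2=\eta$, using the explicit index-matching coupling of $\nu_{\uu}$ and $\nu_{\uu'}$, which has zero $\langle\ee_1,\cdot\rangle$-discrepancy and cost $\frac1n\sum_i\norm{u_i-u_i'}_2^2$, so that $W_2^{(0)}(\nu_{\uu},\nu_{\uu'})\le\bigl(\frac1n\sum_i\norm{u_i-u_i'}_2^2\bigr)^{1/2}$. This yields $|f(\yy,\uu)-f(\yy,\uu')|\le\bigl(\frac1n\sum_i\norm{u_i-u_i'}_2^2\bigr)^{1/2}$ after symmetrizing, i.e.\ $f(\yy,\cdot)$ is $1$-Lipschitz, hence continuous, on $\{f<\infty\}$.

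I do not expect a substantive obstacle: both parts are just the standard proof that $W_2$ is a metric (gluing plus Minkowski) with the linear constraint along $\ee_1$ piggybacking on the same inequalities. The only point requiring genuine care is the bookkeeping around the value $+\infty$, since $W_2^{(\eta)}$ can be infinite when the constrained coupling set is empty: I would state this at the outset, check that the triangle-type inequalities are vacuously true whenever a right-hand term is infinite, and make precise the ``finiteness of $f(\yy,\uu)$ is $\uu$-independent'' reduction, which is exactly what permits asserting continuity only on the set $\{(\yy,\uu):f(\yy,\uu)<\infty\}$ rather than everywhere.
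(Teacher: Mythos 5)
Your proof is correct and follows essentially the same route as the paper's: part~(i) via the gluing lemma plus Minkowski's inequality applied both to the full Euclidean norm and to the $\langle\ee_1,\cdot\rangle$-component, part~(ii) by combining the constrained triangle inequality (with $\eta_1=0$) with the index-matching coupling between $\nu_{\uu}$ and $\nu_{\uu'}$, yielding the $1/\sqrt{n}$-Lipschitz bound. The only slight difference is that you spell out why $\{\uu: f(\yy,\uu)<\infty\}$ is $\uu$-independent via a marginal characterization of $\Gamma^{(\eta)}(\nu_{\uu},P)\neq\emptyset$; the paper asserts this without proof, and it also follows directly from symmetrizing the very Lipschitz bound you derive, so your extra argument is sound but not strictly needed.
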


\begin{proof}    
	$(i)$ Denote $\mu_i = \Law (y_i, u_i)$ for $i = 1, 2, 3$. For any $\veps > 0$, there exist two couplings $(y_1, u_1, y_2, u_2)$ and $(y_2, u_2, y_3, u_3)$ such that
	\begin{align*}
		& \E \left[ \norm{y_1 - y_2}_2^2 \right]^{1 / 2} \le \eta_1, \ \E \left[ \norm{(y_1, u_1) - (y_2, u_2)}_2^2 \right]^{1 / 2} \le W_2^{(\eta_1)} (\mu_1, \mu_2) + \veps, \\
		& \E \left[ \norm{y_2 - y_3}_2^2 \right]^{1 / 2} \le \eta_2, \ \E \left[ \norm{(y_2, u_2) - (y_3, u_3)}_2^2 \right]^{1 / 2} \le W_2^{(\eta_2)} (\mu_2, \mu_3) + \veps.
	\end{align*}
	According to the gluing lemma in Chapter 1 of \cite{villani2009optimal}, there exists a coupling $(y_1, u_1, y_2, u_2, y_3, u_3)$ such that $\E \left[ \norm{y_1 - y_3}_2^2 \right]^{1 / 2} \le \eta_1 + \eta_2$, and that
	\begin{align*}
		& W_2^{(\eta_1 + \eta_2)} (\mu_1, \mu_3) \le \E \left[ \norm{(y_1, u_1) - (y_3, u_3)}_2^2 \right]^{1 / 2} \\
		\le & \E \left[ \norm{(y_1, u_1) - (y_2, u_2)}_2^2 \right]^{1 / 2} + \E \left[ \norm{(y_2, u_2) - (y_3, u_3)}_2^2 \right]^{1 / 2} \\
		\le & W_2^{(\eta_1)} (\mu_1, \mu_2) + W_2^{(\eta_2)} (\mu_2, \mu_3) + 2 \veps.
	\end{align*}
	Letting $\veps \to 0$ gives the desired inequality.
	
	$(ii)$ We first note that whether $f (\yy, \uu) < \infty$ only depends on $\yy$. Assume $f (\yy, \uu) < \infty$, then for any $\uu' \in \R^{n \times (d - 1)}$, one has the following inequality:
	\begin{align*}
		f (\yy, \uu) = & W_2^{(\eta)} \left( \frac{1}{n} \sum_{i=1}^{n} \delta_{(y_i, u_i)} , P  \right) \le W_2^{(\eta)} \left( \frac{1}{n} \sum_{i=1}^{n} \delta_{(y_i, u'_i)} , P  \right) \\
		& + W_2^{(0)} \left( \frac{1}{n} \sum_{i=1}^{n} \delta_{(y_i, u'_i)} , \frac{1}{n} \sum_{i=1}^{n} \delta_{(y_i, u_i)} \right) \\
		\le & f(\yy, \uu') + \frac{1}{\sqrt{n}} \norm{\uu - \uu'}_{\rm F}.
	\end{align*}
	It follows similarly that $f(\yy, \uu') \le f (\yy, \uu) + \norm{\uu - \uu'}_{\rm F} / \sqrt{n}$. This proves the continuity of $f(\yy, \cdot)$.
\end{proof}

We also need the following lemma which characterizes the convergence rate of the empirical measure in $W_2$ distance. This lemma is a corollary of Theorem 2 in \cite{fournier2015rate}.

\begin{lem}\label{lem:W2_concentration}
	Let $Z$ and $\{ Z_i \}_{i \in [n]}$ be i.i.d. sub-Gaussian random vectors in $\R^{k}$, then for any $\veps > 0$, there exist positive constants $C_1$ and $C_2$ such that
	\begin{equation*}
		\P \left( W_2 \left( \frac{1}{n} \sum_{i = 1}^{n} \delta_{Z_i}, \Law (Z) \right) > \veps \right) \le C_1 \exp \left( - C_2 n^{1 - \veps} \right),
	\end{equation*}
	where $C_1$ and $C_2$ only depend on $\veps$ and the common distribution $\Law (Z)$.
\end{lem}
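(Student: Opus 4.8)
The plan is to obtain this as a corollary of the quantitative concentration bound of Fournier and Guillin \cite{fournier2015rate} for the empirical Wasserstein distance, the exponent $n^{1-\veps}$ appearing only as harmless slack. First I would record that a sub-Gaussian random vector $Z$ in $\R^k$ has a finite Gaussian-type exponential moment: there is $\gamma>0$ with $\E[\exp(\gamma\norm{Z}_2^2)]<\infty$. In the notation of \cite{fournier2015rate} this says that the common law $\mu:=\Law(Z)$ satisfies $\mathcal{E}_{2,\gamma}(\mu)<\infty$, so the exponential-moment hypothesis of their Theorem~2 holds with exponent parameter $\alpha=2$ (and in particular $\mu\in\mathcal{P}_2(\R^k)$). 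Since $Z_1,\dots,Z_n$ are i.i.d.\ with law $\mu$, the setup of \cite{fournier2015rate} applies verbatim.

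Second, I would invoke \cite[Thm.~2]{fournier2015rate} with $p=2$ and ambient dimension $d=k$. This yields constants $C=C(k,\Law(Z))>0$ and $c=c(k,\Law(Z))>0$ such that, for every $n\ge1$ and every $x\in(0,1]$,
\begin{equation*}
	\P\left( W_2\Big( \frac{1}{n}\sum_{i=1}^n \delta_{Z_i},\ \Law(Z)\Big) \ge x \right) \le C\exp\big(-c\, n\, x^{\max(2,\,k/2)}\big)
\end{equation*}
(for $x>1$ the theorem gives the stronger bound $C\exp(-c n x^{\alpha/p})=C\exp(-cnx)$, which I will not need). I would then specialize to $x=\veps$: if $\veps\le1$ this gives directly $\P(W_2(\cdot,\cdot)>\veps)\le C\exp(-c\,\veps^{\max(2,k/2)}\,n)$; if $\veps>1$ the event $\{W_2(\cdot,\cdot)>\veps\}$ is contained in $\{W_2(\cdot,\cdot)>1\}$, whose probability is at most $C\exp(-cn)$. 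In all cases $\P(W_2(\cdot,\cdot)>\veps)\le C\exp(-\tilde c(\veps)\,n)$ with $\tilde c(\veps):=c\,\min(1,\veps)^{\max(2,k/2)}>0$, depending only on $\veps$ and $\Law(Z)$ (with $k$ fixed).

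Finally, since $n\ge n^{1-\veps}$ for all $n\ge1$ and $\veps\in(0,1)$, we have $\exp(-\tilde c(\veps)n)\le\exp(-\tilde c(\veps)n^{1-\veps})$, so the lemma follows with $C_1=C$ and $C_2=\tilde c(\veps)$ (for $\veps\ge1$ the bound is trivial from the $\veps>1$ case). The argument is essentially bookkeeping once \cite[Thm.~2]{fournier2015rate} is available; the only points requiring a word of care are translating ``sub-Gaussian vector'' into the exponential-moment form $\mathcal{E}_{2,\gamma}(\mu)<\infty$ used there, verifying that all constants depend only on $\veps$ and $\Law(Z)$, and the trivial regime $\veps>1$. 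There is no genuine obstacle: the stated $n^{1-\veps}$ rate is weaker than the true $\exp(-\Theta(n))$ decay and is phrased this way only to match the bounds used in the proof of Theorem~\ref{thm:W2_outer_bd}.
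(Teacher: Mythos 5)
Your proposal has a genuine gap in the application of \cite[Thm.~2]{fournier2015rate}, and the mis-step is exactly the one the paper's exponent $n^{1-\veps}$ is designed to dodge. Fournier--Guillin's Theorem~2 has three mutually exclusive hypotheses: Condition~(1), $\mathcal{E}_{\alpha,\gamma}(\mu)<\infty$ for some $\alpha>p$; Condition~(2), $\mathcal{E}_{\alpha,\gamma}(\mu)<\infty$ for some $\alpha<p$; and Condition~(3), a polynomial moment bound. With $p=2$, a sub-Gaussian law gives $\mathcal{E}_{2,\gamma}(\mu)<\infty$, which is precisely the boundary case $\alpha=p$ that none of the three conditions covers. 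You cannot upgrade this to $\alpha>2$: for a Gaussian $Z$ one has $\E[\exp(\gamma\norm{Z}_2^\alpha)]=\infty$ for every $\alpha>2$ and $\gamma>0$. So Condition~(1) is unavailable, and with it the claim that the $b(n,x)$ tail term is only active for $x>1$. The only usable hypothesis is Condition~(2) with $\alpha<2$ (this is what the paper verifies, via the split on $\{\norm{Z}_2\le 1\}$), and under Condition~(2) the term $b(n,x)$ is present for \emph{all} $x$ and has the form $C\exp\!\big(-c\,(nx)^{\alpha/p}\big)$ with $\alpha/p<1$. For fixed $x=\veps$ that is $\exp(-\Theta(n^{\alpha/2}))$, strictly sub-exponential in $n$; taking $\alpha/2=1-\veps$ gives exactly the stated $\exp(-C_2\,n^{1-\veps})$. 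In short: the $a(n,x)$ term you invoke is correct but is not the whole bound, and the surviving $b(n,x)$ term is what caps the rate. Your closing sentence, that the exponent $n^{1-\veps}$ is mere bookkeeping slack against a ``true'' rate of $\exp(-\Theta(n))$, is therefore unjustified by the argument you give; it is not a consequence of \cite[Thm.~2]{fournier2015rate} applied to a sub-Gaussian law. (Whether $\exp(-\Theta(n))$ holds by some other route is a separate question, but it would need a genuinely different proof --- note that McDiarmid-type bounded-difference arguments do not apply directly to $W_2$ with unbounded data.)
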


\begin{proof}
	It is straightforward to verify Condition (2) of Theorem 2 from \cite{fournier2015rate} for $p = 2$ and $\alpha < 2$, since $Z$ is sub-Gaussian. Indeed, we have for $\mu = \Law(Z)$,
	\begin{equation*}
		\cE_{\alpha, \gamma} (\mu) = \E \left[ \exp \left( \gamma \norm{Z}_2^{\alpha} \right) \right] \le e^{\gamma} \P \left( \norm{Z}_2 \le 1 \right) + \E \left[ \exp \left( \gamma \norm{Z}_2^{2} \right) \right] < \infty
	\end{equation*}
	for $\gamma$ small enough.
	The lemma then follows from the definition of $a(n, x)$ and $b(n, x)$ under Condition (2) in the conclusion of Theorem 2 of \cite{fournier2015rate}.
\end{proof}

The lemma below is a standard result which connects in probability weak convergence and convergence in bounded-Lipschitz distance between distributions. Its proof is based on standard approximation arguments, and can be found in some well-known probability textbooks (cf. \cite{van1996weak}).

\begin{lem}\label{lem:BL_diatance_conv}
	Let $\{ \hat{P}_n \}_{n \in \mathbb{N}}$ be a sequence of random probability measures on a Polish space $\mathbb{M}$, and $P$ be a fixed probability measure on the same space, then as $n \to \infty$, $\hat{P}_n$ weakly converges to $P$ in probability if and only if $d_{\rm BL} (\hat{P}_n, P) \to 0$ in probability, where
	\begin{equation*}
		d_{\rm BL} \left( \hat{P}_n, P \right) = \sup_{f \in \cF_{\rm BL} (\mathbb{M})} \left\vert \int_{\mathbb{M}} f \d \hat{P}_n - \int_{\mathbb{M}} f \d P \right\vert.
	\end{equation*}
	Here, $\cF_{\rm BL} (\mathbb{M})$ stands for the class of all bounded Lipschitz functions on $\mathbb{M}$, i.e., $f \in \cF_{\rm BL} (\mathbb{M})$ if and only if for all $x, y \in \mathbb{M}$, $\vert f(x) \vert \le 1$ and $\vert f(x) - f(y) \vert \le \rho (x, y)$, where $\rho$ is the metric of $\mathbb{M}$.
\end{lem}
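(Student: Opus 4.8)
The plan is to establish the two implications separately, relying on two standard tools: (a)~since $\mathbb{M}$ is a separable metric space, the bounded-Lipschitz metric $d_{\rm BL}$ metrizes weak convergence on $\cuP(\mathbb{M})$, i.e.\ $d_{\rm BL}(\mu_n,\mu)\to 0$ iff $\mu_n\stackrel{w}{\Rightarrow}\mu$; and (b)~the subsequence principle for convergence in probability: a sequence of real random variables converges to $0$ in probability iff every subsequence admits a further subsequence converging to $0$ almost surely (in particular, from convergence in probability one may extract an a.s.\ convergent subsequence). As a preliminary remark, $d_{\rm BL}(\hat P_n,P)$ is a genuine random variable: by tightness of the measures involved together with the Arzel\`a--Ascoli theorem, the supremum defining $d_{\rm BL}$ may, up to arbitrarily small error, be taken over a fixed countable subfamily of $\cF_{\rm BL}(\mathbb{M})$, which yields measurability (alternatively one works throughout with outer probabilities, as in \cite{van1996weak}). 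Here I interpret ``$\hat P_n$ converges weakly to $P$ in probability'' as: $\int f\,\d\hat P_n\to\int f\,\d P$ in probability for every bounded continuous $f$ on $\mathbb{M}$.

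For the implication ``$d_{\rm BL}(\hat P_n,P)\to 0$ in probability $\Longrightarrow$ $\hat P_n\stackrel{w}{\Rightarrow}P$ in probability'', fix a bounded continuous $f$. Given any subsequence, fact (b) produces a further subsequence along which $d_{\rm BL}(\hat P_n,P)\to 0$ almost surely; on that event fact (a) gives $\hat P_n\stackrel{w}{\Rightarrow}P$, hence $\int f\,\d\hat P_n\to\int f\,\d P$. Since every subsequence contains such a further subsequence, $\int f\,\d\hat P_n\to\int f\,\d P$ in probability, and since $f$ was arbitrary this is the claim.

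For the converse, I would first fix a countable convergence-determining family $\{g_k\}_{k\ge 1}\subset\cF_{\rm BL}(\mathbb{M})$ — one exists on any separable metric space — so that $\mu_n\stackrel{w}{\Rightarrow}\mu$ whenever $\int g_k\,\d\mu_n\to\int g_k\,\d\mu$ for every $k$. Weak convergence in probability of $\hat P_n$ to $P$ gives $\int g_k\,\d\hat P_n\to\int g_k\,\d P$ in probability for each $k$. Take an arbitrary subsequence; by a diagonal extraction over $k$, applying fact (b) successively, obtain a further subsequence along which $\int g_k\,\d\hat P_n\to\int g_k\,\d P$ almost surely, simultaneously for all $k\ge 1$. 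On that event the family $\{g_k\}$ being convergence-determining yields $\hat P_n\stackrel{w}{\Rightarrow}P$, whence $d_{\rm BL}(\hat P_n,P)\to 0$ by fact (a). As every subsequence contains such a further subsequence, fact (b) gives $d_{\rm BL}(\hat P_n,P)\to 0$ in probability.

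I expect the main obstacle to be not the probabilistic content — which reduces entirely to the subsequence principle — but the two measure-theoretic inputs: the measurability of $d_{\rm BL}(\hat P_n,P)$ and the existence of a countable convergence-determining subfamily of $\cF_{\rm BL}(\mathbb{M})$ (equivalently, Dudley's theorem that $d_{\rm BL}$ metrizes the weak topology on a separable metric space). Both are classical and can be quoted from a standard reference such as \cite{van1996weak}, so in the write-up I would cite them rather than reprove them.
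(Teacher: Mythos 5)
The paper does not actually supply a proof of Lemma~\ref{lem:BL_diatance_conv}; it states the result as classical and defers to a textbook reference (van der Vaart \& Wellner). Your argument is a correct reconstruction of the standard proof, combining the subsequence principle for convergence in probability with the fact that $d_{\rm BL}$ metrizes weak convergence on a Polish space, and it is consistent with how such a result is typically established.

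Two small points worth tightening in a final write-up. First, your measurability remark via Arzel\`a--Ascoli is a slight detour: the cleaner route is that $\cuP(\mathbb{M})$ equipped with the weak topology is itself Polish and $d_{\rm BL}(\cdot,P)$ is continuous on it, so $d_{\rm BL}(\hat P_n,P)$ is automatically a random variable once $\hat P_n$ is a measurable map into $\cuP(\mathbb{M})$ (which is what ``random probability measure'' means); the outer-probability escape hatch you mention is unnecessary here. Second, you interpret ``$\hat P_n\stackrel{w}{\Rightarrow}P$ in probability'' as convergence in probability of $\int f\,\d\hat P_n$ for each fixed bounded continuous $f$, whereas the paper (Section~\ref{sec:MainUnsupervised}) defines it directly as $d_W(\hat P_n,P)\to 0$ in probability for some metric $d_W$ metrizing weak convergence. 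These two formulations are equivalent, and your subsequence argument establishes exactly that equivalence, but it would be worth noting explicitly that the lemma is really a well-posedness statement (the choice of metrizing $d_W$ does not matter); under the paper's definition, one can also run the argument more directly by extracting a.s.\ convergent subsubsequences of $d_W(\hat P_n,P)$ and applying the fact that both $d_W$ and $d_{\rm BL}$ metrize the same topology, bypassing the convergence-determining countable family.
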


The next lemma characterizes the limiting empirical distribution of a uniform random vector on a high-dimensional unit sphere:

\begin{lem}\label{lem:unif_empr_dist}
	Assume $\uu = (u_1, \cdots, u_n)^\top \sim \Unif(\S^{n - 1})$, then as $n \to \infty$,
	\begin{equation*}
		\frac{1}{n} \sum_{i = 1}^{n} \delta_{\sqrt{n} u_i} \stackrel{w}{\Rightarrow} \sN (0, 1) \ \text{in probability}.
	\end{equation*}
\end{lem}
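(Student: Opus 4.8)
\textbf{Proof plan for Lemma~\ref{lem:unif_empr_dist}.}
The plan is to establish convergence of the empirical distribution $\hat\mu_n := \frac1n\sum_{i=1}^n \delta_{\sqrt n\, u_i}$ to $\sN(0,1)$ in probability by the standard ``method of moments in probability'' approach: it suffices to show that for each fixed integer $k\ge 1$, the random $k$-th moment $M_n^{(k)} := \frac1n\sum_{i=1}^n (\sqrt n\, u_i)^k = n^{k/2-1}\sum_{i=1}^n u_i^k$ converges in $L^2$ (hence in probability) to the $k$-th moment $m_k$ of a standard Gaussian, i.e.\ $m_{2j} = (2j-1)!!$ and $m_{2j+1}=0$. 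Since the Gaussian is determined by its moments and $\sN(0,1)$ has a continuous CDF, convergence of all moments in probability upgrades to weak convergence in probability (one can argue along subsequences: any subsequence has a further subsequence along which all moments converge almost surely, forcing weak convergence a.s.\ along that sub-subsequence to the unique measure with those moments).

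The concrete computation rests on the classical representation $\uu \stackrel{d}{=} \bgg/\norm{\bgg}_2$ where $\bgg=(g_1,\dots,g_n)\sim\sN(\bzero,\id_n)$. Thus $\sqrt n\, u_i = \sqrt n\, g_i/\norm{\bgg}_2 = g_i\cdot \big(n/\norm{\bgg}_2^2\big)^{1/2}$, and since $\norm{\bgg}_2^2/n\to 1$ almost surely (law of large numbers) with fluctuations of order $n^{-1/2}$, we have $n/\norm{\bgg}_2^2 = 1 + o_P(1)$. Writing $R_n := (n/\norm{\bgg}_2^2)^{1/2} = 1+o_P(1)$, we get $M_n^{(k)} = R_n^k\cdot \frac1n\sum_{i=1}^n g_i^k$. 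By the law of large numbers $\frac1n\sum_{i=1}^n g_i^k \to \E[g^k]=m_k$ almost surely (the $g_i^k$ are i.i.d.\ with finite mean), and $R_n^k\to 1$ in probability, so Slutsky gives $M_n^{(k)}\to m_k$ in probability for every $k$. (If one prefers to avoid the moment method, one can instead directly test against a fixed bounded Lipschitz $f$: $\frac1n\sum f(\sqrt n\, u_i) = \frac1n\sum f(R_n g_i)$, and since $f$ is Lipschitz and bounded, $\big|\frac1n\sum f(R_n g_i) - \frac1n\sum f(g_i)\big| \le \mathrm{Lip}(f)\,|R_n-1|\cdot\frac1n\sum|g_i| \to 0$ in probability, while $\frac1n\sum f(g_i)\to\E f(g)$ by the LLN; this directly yields $d_{\rm BL}(\hat\mu_n,\sN(0,1))\to 0$ in probability via Lemma~\ref{lem:BL_diatance_conv}, which is arguably the cleanest route.)

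I expect the main (mild) obstacle to be bookkeeping rather than anything deep: one must justify that the multiplicative error $R_n$ is genuinely $1+o_P(1)$ and that multiplying it into the empirical average does not spoil the limit — for the moment approach this requires controlling $\frac1n\sum g_i^k$ uniformly enough (it is $O_P(1)$ by the LLN, which suffices since $R_n^k - 1 = o_P(1)$), and for the bounded-Lipschitz approach one needs $\frac1n\sum |g_i| = O_P(1)$, again immediate from the LLN. A second minor point is the standard upgrade from ``all moments converge in probability'' to ``weak convergence in probability to $\sN(0,1)$'', which I would handle by the subsequence characterization of convergence in probability together with the Carleman/moment-determinacy of the Gaussian; alternatively the bounded-Lipschitz route sidesteps this entirely. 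Either way the proof is short and the rotational-invariance representation $\uu\stackrel{d}{=}\bgg/\norm{\bgg}_2$ does all the work.
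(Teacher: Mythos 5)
Your bounded-Lipschitz variant is essentially the paper's proof: both exploit the rotational-invariance representation $\uu \stackrel{d}{=} \bgg/\norm{\bgg}_2$ with $\bgg \sim \sN(\bzero,\id_n)$, together with the fact that the scaling factor $\sqrt{n}/\norm{\bgg}_2 \to 1$ almost surely. The paper works at the level of empirical \emph{measures}: it cites a.s.\ weak convergence of $\tfrac{1}{n}\sum_i\delta_{g_i}$ to $\sN(0,1)$ (Varadarajan/Glivenko--Cantelli), applies a Slutsky-type argument to the rescaled atoms $\sqrt{n}g_i/s_n$, and then transfers to $\uu$ via the equality in distribution of the two $d_{\rm BL}$ distances. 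Your version works test-function by test-function: the estimate $\bigl|\tfrac{1}{n}\sum f(R_n g_i) - \tfrac{1}{n}\sum f(g_i)\bigr| \le \mathrm{Lip}(f)\,|R_n-1|\cdot\tfrac{1}{n}\sum|g_i|$ is correct, but it only gives convergence in probability of $\int f\,\d\hat\mu_n$ for each \emph{fixed} bounded Lipschitz $f$. To actually conclude $d_{\rm BL}(\hat\mu_n,\sN(0,1))\to 0$ in probability you still need to pass to a countable separating family of bounded Lipschitz functions and run a diagonal-subsequence argument (or simply invoke Varadarajan for the $g_i$'s as the paper does). You gesture at this but do not write it out; it is a standard but non-negotiable step, since "convergence for each fixed $f$" and "$d_{\rm BL}\to 0$" are not literally the same event.

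Your primary (moment-method) route is a genuine alternative and is also correct: the standard Gaussian is moment-determinate, and your subsequence/diagonal argument correctly upgrades "each $M_n^{(k)}\to m_k$ in probability" to weak convergence in probability via Fr\'echet--Shohat. It is, however, heavier machinery than necessary for this lemma; the Lipschitz route (the one the paper takes) needs only the LLN and the metrizability of weak convergence, and is the cleaner choice.
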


\begin{proof}
	Let $\{ g_n \}_{n \ge 1} \sim_{\iid} \sN (0, 1)$, then we know that $(1 / n) \sum_{i = 1}^{n} \delta_{g_i} \stackrel{w}{\Rightarrow} \sN (0, 1)$ almost surely. By the strong law of large numbers, it follows that $s_n^2 / n = (1 / n) \sum_{i = 1}^{n} g_i^2 \stackrel{a.s.}{\to} 1$. Hence, Slutsky's theorem implies that
	\begin{equation*}
		\frac{1}{n} \sum_{i = 1}^{n} \delta_{\frac{\sqrt{n} g_i}{s_n}} \stackrel{w}{\Rightarrow} \sN (0, 1) \ \text{almost surely}.
	\end{equation*}
	Moreover, from rotational invariance of normal distribution we deduce that
	\begin{equation*}
		\left( \frac{\sqrt{n} g_1}{s_n}, \cdots, \frac{\sqrt{n} g_n}{s_n} \right) \stackrel{d}{=} (\sqrt{n} u_1, \cdots, \sqrt{n} u_n),
	\end{equation*}
	thus leading to
	\begin{equation*}
		\frac{1}{n} \sum_{i = 1}^{n} \delta_{\sqrt{n} u_i} \stackrel{d}{=} \frac{1}{n} \sum_{i = 1}^{n} \delta_{\frac{\sqrt{n} g_i}{s_n}} \implies d_{\rm BL} \left( \frac{1}{n} \sum_{i = 1}^{n} \delta_{\sqrt{n} u_i}, \sN (0, 1) \right) \stackrel{d}{=} d_{\rm BL} \left( \frac{1}{n} \sum_{i = 1}^{n} \delta_{\frac{\sqrt{n} g_i}{s_n}}, \sN (0, 1) \right).
	\end{equation*}
	By our assumption, $d_{\rm BL} \left( (1 / n) \sum_{i = 1}^{n} \delta_{\sqrt{n} g_i / s_n}, \sN (0, 1) \right) \stackrel{a.s.}{\to} 0$. Hence we have 
	\begin{equation*}
		d_{\rm BL} \left( (1 / n) \sum_{i = 1}^{n} \delta_{\sqrt{n} u_i}, \sN (0, 1) \right) \stackrel{p}{\to} 0.
	\end{equation*}
	This implies that $(1 / n) \sum_{i = 1}^{n} \delta_{\sqrt{n} u_i} \stackrel{w}{\Rightarrow} \sN(0, 1)$ in probability and completes the proof.
\end{proof}

We present a lemma regarding the sub-Gaussianity of the difference between Lipschitz function of Gaussian random vectors:
\begin{lem}\label{lem:MGF_diff_Lip}
	Let $F: \R^{m \times n} \to \R$ be an $L$-Lipschitz function, namely
	\begin{equation}
		\left\vert F(\GG) - F(\GG') \right\vert \le L \norm{\GG - \GG'}_{\rm F}
	\end{equation}
	for all $\GG = (G_1, \cdots, G_n),\GG' = (G_1', \cdots, G_n') \in \R^{m \times n}$. Then, for any $\WW, \WW' \in O(d, m)$ and $t > 0$, we have
	\begin{equation}
		\E \left[ \exp \left( t \left( F \left( \WW^\top \XX^\top \right) - F \left( \WW'^\top \XX^\top \right) \right) \right) \right] \le \exp \left( \frac{L^2 t^2}{2} \norm{\WW - \WW'}_{\op}^2 \right).
	\end{equation}
\end{lem}

\begin{proof}
	By definition, we know that $(\WW^\top \XX^\top, \WW'^\top \XX^\top) \stackrel{d}{=} (\GG, \GG')$ where
	\begin{equation*}
		(G_i, G_i') \sim_{\iid} \normal \left( \bzero, \begin{bmatrix}
            \id_m & \QQ^\top \\
            \QQ & \id_m
            \end{bmatrix} \right).
	\end{equation*}
	Hence, it suffices to upper bound $\E [\exp (t (F(\GG) - F(\GG')))]$. To this end, we use Gaussian interpolation, which is similar to the proof of Theorem 1.3.4 in \cite{talagrand2010mean}. Define two mutually independent Gaussian ensembles:
	\begin{align}
		(\HH, \HH') =\, & \left( H_1, \cdots, H_n, H_1', \cdots, H_n' \right), \quad (H_i, H_i') \sim_{\iid} \normal \left( \bzero, \begin{bmatrix}
            \id_m & \id_m \\
            \id_m & \id_m
            \end{bmatrix} \right), \\
        (\GG, \GG') =\, & \left( G_1, \cdots, G_n, G_1', \cdots, G_n' \right), \quad (G_i, G_i') \sim_{\iid} \normal \left( \bzero, \begin{bmatrix}
            \id_m & \QQ^\top \\
            \QQ & \id_m
            \end{bmatrix} \right),
	\end{align}
	and denote for $\YY = (Y_1, \cdots, Y_n)$ and $\YY' = (Y_1', \cdots, Y_n')$:
	\begin{equation}
		f(\YY, \YY') = \exp \left( t \left( F(\YY) - F(\YY') \right) \right).
	\end{equation}
	We further construct the interpolation path
	\begin{equation*}
		\ZZ (s) = \sqrt{s} \GG + \sqrt{1-s} \HH, \quad \ZZ' (s) = \sqrt{s} \GG' + \sqrt{1-s} \HH',
	\end{equation*}
	then it follows that
	\begin{align*}
		& \E \left[ \exp \left( t \left( F(\GG) - F(\GG') \right) \right) \right] = \E \left[ f(\GG, \GG') \right] = \E \left[ f(\ZZ(1), \ZZ' (1)) \right] \\
		=\, & \E \left[ f(\ZZ(0), \ZZ'(0)) \right] + \int_{0}^{1} \d \E \left[ f(\ZZ(s), \ZZ'(s)) \right] = 1 + \int_{0}^{1} \frac{\d \E \left[ f(\ZZ(s), \ZZ'(s)) \right]}{\d s} \d s.
	\end{align*}
	By Lemma 1.3.1 in \cite{talagrand2010mean}, we know that
	\begin{align*}
		& \frac{\d \E \left[ f(\ZZ(s), \ZZ'(s)) \right]}{\d s} = \sum_{l=1}^{n} \sum_{i=1}^{m} \sum_{j=1}^{m} (q_{ij} - \delta_{ij}) \cdot \E \left[ \frac{\partial^2 f (\ZZ(s), \ZZ'(s))}{\partial Y_{il} \partial Y_{jl}'} \right] \\
		=\, & \sum_{l=1}^{n} \sum_{i=1}^{m} \sum_{j=1}^{m} (q_{ij} - \delta_{ij}) \cdot \E \left[ - t^2 \frac{\partial F(\ZZ(s))}{\partial Y_{il}} \frac{\partial F (\ZZ'(s))}{\partial Y_{jl}'} f(\ZZ(s), \ZZ'(s)) \right] \\
		=\, & t^2 \cdot \E \left[ \sum_{l=1}^{n} \sum_{i=1}^{m} \sum_{j=1}^{m} (\delta_{ij} - q_{ij}) \frac{\partial F(\ZZ(s))}{\partial Y_{il}} \frac{\partial F (\ZZ'(s))}{\partial Y_{jl}'} f(\ZZ(s), \ZZ'(s)) \right].
	\end{align*}
	For any fixed $l \in [n]$, we have the following estimate:
	\begin{equation*}
		\sum_{i=1}^{m} \sum_{j=1}^{m} (\delta_{ij} - q_{ij}) \frac{\partial F(\ZZ(s))}{\partial Y_{il}} \frac{\partial F (\ZZ'(s))}{\partial Y_{jl}'} = \norm{\id - \frac{\QQ + \QQ^\top}{2}}_{\op} \cdot \sqrt{\sum_{i=1}^{m} \left( \frac{\partial F(\ZZ(s))}{\partial Y_{il}} \right)^2} \cdot \sqrt{\sum_{j=1}^{m} \left( \frac{\partial F (\ZZ'(s))}{\partial Y_{jl}'} \right)^2},
	\end{equation*}
	thus leading to
	\begin{align*}
		& \sum_{l=1}^{n} \sum_{i=1}^{m} \sum_{j=1}^{m} (\delta_{ij} - q_{ij}) \frac{\partial F(\ZZ(s))}{\partial Y_{il}} \frac{\partial F (\ZZ'(s))}{\partial Y_{jl}'} \\
		\le\, & \norm{\id - \frac{\QQ + \QQ^\top}{2}}_{\op} \cdot \sum_{l=1}^{n} \sqrt{\sum_{i=1}^{m} \left( \frac{\partial F(\ZZ(s))}{\partial Y_{il}} \right)^2} \cdot \sqrt{\sum_{j=1}^{m} \left( \frac{\partial F (\ZZ'(s))}{\partial Y_{jl}'} \right)^2} \\
		\stackrel{(i)}{\le}\, & \norm{\id - \frac{\QQ + \QQ^\top}{2}}_{\op} \cdot \sqrt{\sum_{l=1}^{n} \sum_{i=1}^{m} \left( \frac{\partial F(\ZZ(s))}{\partial Y_{il}} \right)^2} \cdot \sqrt{\sum_{l=1}^{n} \sum_{j=1}^{m} \left( \frac{\partial F (\ZZ'(s))}{\partial Y_{jl}'} \right)^2} \\
		\stackrel{(ii)}{\le}\, & L^2 \cdot \norm{\id - \frac{\QQ + \QQ^\top}{2}}_{\op} = \frac{L^2}{2} \norm{(\WW - \WW')^\top (\WW - \WW')}_{\op} = \frac{L^2}{2} \norm{\WW - \WW'}_{\op}^2,
	\end{align*}
	where $(i)$ follows from Cauchy-Schwarz inequality, and $(ii)$ follows from the fact that $F$ is $L$-Lipschitz. Finally, we deduce that
	\begin{equation*}
		\frac{\d \E \left[ f(\ZZ(s), \ZZ'(s)) \right]}{\d s} \le \frac{t^2 L^2}{2} \norm{\WW - \WW'}_{\op}^2 \cdot \E \left[ f(\ZZ(s), \ZZ'(s)) \right].
	\end{equation*}
	Using Gr\"{o}nwall's inequality, it follows that
	\begin{equation*}
		\E \left[ \exp \left( t \left( F(\GG) - F(\GG') \right) \right) \right] = \E \left[ f(\ZZ(1), \ZZ'(1)) \right] \le \exp \left( \frac{t^2 L^2}{2} \norm{\WW - \WW'}_{\op}^2 \right).
	\end{equation*}
	This completes the proof of Lemma~\ref{lem:MGF_diff_Lip}.
\end{proof}

The following lemma deals with the continuity of $W_p$ distance with respect to $p$:

\begin{lem}\label{lem:Wp_continuity}
	Let $P$ and $Q$ be two probability measures with finite $q$-th moments ($q > 1$) on a Polish space $S$. Assume $M > 0$ is such that for all $p \in [1, q)$, $W_p (P, Q) \le M$. Then we have $W_q (P, Q) \le M$.
\end{lem}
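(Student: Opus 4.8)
The plan is to combine the monotonicity of $p\mapsto W_p$ with a compactness argument on the set of couplings $\Gamma(P,Q)$. The starting observation is the elementary $L^p$-norm inequality: for any probability measure $\mu$ on $S\times S$ and any $1\le r<s$, Jensen's inequality applied to $t\mapsto t^{s/r}$ gives $\big(\int \rho(x,y)^r\,\mu(\d x\times\d y)\big)^{1/r}\le \big(\int \rho(x,y)^s\,\mu(\d x\times\d y)\big)^{1/s}$, where $\rho$ is the metric on $S$. In particular $r\mapsto W_r(P,Q)$ is nondecreasing, so the hypothesis says precisely that this function stays $\le M$ on $[1,q)$.

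First I would fix a sequence $p_n\uparrow q$ and, for each $n$, choose a coupling $\gamma_n\in\Gamma(P,Q)$ which is almost optimal for $W_{p_n}$, say $\int \rho(x,y)^{p_n}\,\gamma_n(\d x\times\d y)\le W_{p_n}(P,Q)^{p_n}+1/n\le M^{p_n}+1/n$; the right-hand side is finite because $p_n<q$ and $P,Q$ have finite $q$-th moments. Since $P$ and $Q$ are tight (Borel probability measures on a Polish space), $\Gamma(P,Q)$ is tight and weakly closed, hence weakly compact by Prokhorov's theorem. Passing to a subsequence, $\gamma_n\Rightarrow\gamma$ for some $\gamma\in\Gamma(P,Q)$ (the limit has the correct marginals because the marginal maps are weakly continuous).

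Next I would transfer the moment bound to $\gamma$. Fix $r\in[1,q)$ and $K>0$. For all $n$ large enough $p_n>r$, so the Jensen inequality above yields $\int \rho(x,y)^r\,\gamma_n(\d x\times\d y)\le (M^{p_n}+1/n)^{r/p_n}\to M^r$. As $(x,y)\mapsto \rho(x,y)^r\wedge K$ is bounded and continuous, weak convergence gives $\int (\rho(x,y)^r\wedge K)\,\gamma(\d x\times\d y)=\lim_n\int (\rho(x,y)^r\wedge K)\,\gamma_n(\d x\times\d y)\le M^r$; letting $K\to\infty$ and invoking monotone convergence gives $\int \rho(x,y)^r\,\gamma(\d x\times\d y)\le M^r$ for every $r\in[1,q)$. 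Finally, letting $r\uparrow q$, using monotone convergence on $\{\rho\ge 1\}$ and dominated convergence on $\{\rho<1\}$, one obtains $\int \rho(x,y)^q\,\gamma(\d x\times\d y)\le M^q$, and since $\gamma\in\Gamma(P,Q)$ this gives $W_q(P,Q)\le M$.

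The argument is essentially bookkeeping once the compactness of $\Gamma(P,Q)$ is in place; the only points requiring a little care are the interchanges of limits (handled by monotone/dominated convergence as indicated) and the use of the truncations $\rho^r\wedge K$ to pass to the weak limit without losing the bound. I do not expect a genuine obstacle here — the main thing to get right is simply that the weak subsequential limit $\gamma$ still lies in $\Gamma(P,Q)$, which is automatic from weak continuity of the marginal projections.
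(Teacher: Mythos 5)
Your proof is correct and follows essentially the same strategy as the paper: fix $p_n\uparrow q$, take near-optimal couplings for $W_{p_n}$, use compactness of the coupling set to extract a weak subsequential limit $\gamma\in\Gamma(P,Q)$, and show that $\gamma$ certifies $W_q(P,Q)\le M$. The only genuine difference is in how the moment bound is transferred to the limit coupling: the paper invokes Skorokhod's representation theorem to upgrade weak convergence to almost-sure convergence of the coupled pairs $(U_k,V_k)\to(U,V)$ and then applies Fatou's lemma directly to $\rho(U_k,V_k)^{p_k}\to\rho(U,V)^q$, whereas you avoid Skorokhod by truncating the integrand ($\rho^r\wedge K$), passing to the weak limit for each fixed $r<q$ and $K$, then sending $K\to\infty$ and $r\uparrow q$ via two separate monotone/dominated convergence steps. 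Your route is a little longer but more elementary, and your justification of tightness (via tightness of $\Gamma(P,Q)$ on a Polish space, rather than via the uniform $q$-th moment bound the paper cites) is cleaner in the stated generality, since a uniform moment bound only directly implies tightness once one fixes a base point and works with $\rho(\cdot,x_0)$.
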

\begin{proof}
	By assumption, we know that for any $\veps > 0$ and $p_k = q - 1 / k$, there exists a coupling $(P_k, Q_k) = \Law (U_k, V_k)$ of $P$ and $Q$ such that $\E [(U_k - V_k)^p]^{1/ p} \le M + \veps$. Note that the $q$-th moment of $(P_k, Q_k)$ is uniformly bounded, hence the sequence $\{ (P_k, Q_k) \}_{k \ge 1}$ is tight. Consequently, there exists a subsequence (still denoted as $\{ (P_k, Q_k) \}_{k \ge 1}$) which weakly converges to $(P, Q) = \Law(U, V)$. Using Skorokhod's representation theorem, we can assume with out loss of generality that $(U_k, V_k) \to (U, V)$ almost surely as $k \to \infty$. Applying Fatou's lemma implies that
	\begin{equation*}
		\E \left[ (U - V)^q \right] \le \liminf_{k \to \infty} \E \left[ (U_k - V_k)^{p_k} \right] \le (M + \veps)^q,
	\end{equation*}
	thus leading to $W_q (P, Q) \le \E \left[ (U - V)^q \right]^{1 / q} \le M + \veps$. Sending $\veps \to 0$ yields the desired result.
\end{proof}

For the sake of completeness, we include the closure property of the set of $(\alpha, m)$-feasible distributions. Note that the following lemma applies to $\cuF_{m, \alpha}^{\varphi}$ in the supervised case as well.

\begin{lem}\label{lem:closure_prop}
	The set $\cuF_{m, \alpha}$ is closed under weak limit in $\cuP (\R^m)$.
\end{lem}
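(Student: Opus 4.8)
The plan is to prove closure by a diagonalization argument, stitching together the witnessing sequences of the members of a weakly convergent sequence into a single witnessing sequence for the limit. Suppose $\{P_k\}_{k\ge 1}\subseteq\cuF_{m,\alpha}$ and $P_k\stackrel{w}{\Rightarrow}P$ in $\cuP(\R^m)$; I want to show $P\in\cuF_{m,\alpha}$. Fix once and for all a distance $d_W$ metrizing weak convergence on $\cuP(\R^m)$ (e.g.\ the bounded-Lipschitz or L\'evy--Prokhorov metric), so in particular $d_W(P_k,P)\to 0$. By definition of $\cuF_{m,\alpha}$, for each $k$ there is a sequence of random orthogonal matrices $\WW^{(k)}_n=\WW^{(k)}_n(\XX,\omega_k)$ with $(\WW^{(k)}_n)^{\top}\WW^{(k)}_n=\id_m$ such that $d_W(\hat{P}_{n,\WW^{(k)}},P_k)\to 0$ in probability as $n\to\infty$, where $\hat{P}_{n,\WW^{(k)}}=\tfrac1n\sum_{i=1}^n\delta_{(\WW^{(k)}_n)^{\top}\xx_i}$.

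Next I would select a rapidly increasing index sequence: using the convergence in probability, for each $k$ pick $n_k$, with $n_1<n_2<\cdots$, so that $\P\big(d_W(\hat{P}_{n,\WW^{(k)}},P_k)>1/k\big)<1/k$ for all $n\ge n_k$. I then define the combined sequence $\WW_n$ by $\WW_n:=\WW^{(k)}_n$ for $n_k\le n<n_{k+1}$ (with $\WW_n$ chosen arbitrarily in $O(d,m)$ for $n<n_1$). The auxiliary randomness permitted in the definition of feasibility is taken to be the tuple $(\omega_1,\omega_2,\dots)$ of independent sources, of which only the $k(n)$-th coordinate is actually used at stage $n$, where $k(n)$ is the unique $k$ with $n_k\le n<n_{k+1}$. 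Note $\WW_n^{\top}\WW_n=\id_m$ by construction, and $k(n)\to\infty$ as $n\to\infty$.

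Finally I would verify that $\hat{P}_{n,\WW}\stackrel{w}{\Rightarrow}P$ in probability. By the triangle inequality, for $n_k\le n<n_{k+1}$,
\[
d_W(\hat{P}_{n,\WW},P)\le d_W(\hat{P}_{n,\WW^{(k)}},P_k)+d_W(P_k,P).
\]
Given $\veps>0$, choose $k_0$ with $1/k_0<\veps/2$ and $d_W(P_k,P)<\veps/2$ for all $k\ge k_0$ (possible since $P_k\stackrel{w}{\Rightarrow}P$). For $n\ge n_{k_0}$, writing $k=k(n)\ge k_0$, the second term is $<\veps/2$ deterministically, hence $\P\big(d_W(\hat{P}_{n,\WW},P)>\veps\big)\le\P\big(d_W(\hat{P}_{n,\WW^{(k)}},P_k)>\veps/2\big)\le\P\big(d_W(\hat{P}_{n,\WW^{(k)}},P_k)>1/k\big)<1/k(n)\to 0$. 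Therefore $P$ is $(\alpha,m)$-feasible, i.e.\ $P\in\cuF_{m,\alpha}$. The identical argument, with $\delta_{\WW^{\top}\xx_i}$ replaced by $\delta_{(y_i,\WW^{\top}\xx_i)}$, gives closure of $\cuF_{m,\alpha}^{\varphi}$.

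I do not expect a serious obstacle here; the only points requiring a little care are the bookkeeping of the auxiliary randomness (handled cleanly by passing to a product space and reading off the relevant coordinate at each $n$) and arranging the index $k(n)$ to diverge, so that the ``witnessing error'' $d_W(\hat{P}_{n,\WW^{(k)}},P_k)$ and the ``approximation error'' $d_W(P_k,P)$ vanish simultaneously along the combined sequence.
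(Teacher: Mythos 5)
Your proof is correct, and it takes a genuinely cleaner route than the paper's. The paper first passes from convergence in probability to almost sure convergence (via Lemma~\ref{lem:BL_diatance_conv}), then runs a Cantor diagonalization over $l$ to extract a \emph{single} subsequence $\{n_k\}\subset\mathbb{N}$ along which $d_{\rm BL}(\hat{P}_{n_k,l},P_l)\to 0$ a.s.\ simultaneously for all $l$, and finally selects $l(k)$ so that $\hat{P}_{n_k,l(k)}\Rightarrow P$ a.s.\ along $\{n_k\}$. This produces a witnessing sequence only on the subsequence $\{n_k\}$, and the concluding remark that ``the argument remains true for any subsequence'' invokes the subsequence criterion for convergence in probability somewhat loosely, since the resulting $\WW$'s depend on the chosen subsequence. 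Your construction sidesteps this entirely: you stay at the level of in-probability convergence, choose increasing thresholds $n_k$ with an explicit $1/k$ tail bound, and glue the witnessing sequences block by block so that $\WW_n$ is defined for \emph{every} $n$ and uses the witness for $P_{k(n)}$ with $k(n)\to\infty$ slowly. The final triangle-inequality estimate then gives $\P\big(d_W(\hat{P}_{n,\WW},P)>\veps\big)\to 0$ directly, which is exactly what the feasibility definition requires. Both arguments are diagonalizations in spirit, but yours is more elementary, produces a full witnessing sequence rather than one along a subsequence, and handles the auxiliary randomness transparently via the product space $(\omega_1,\omega_2,\dots)$. It is a complete and arguably tighter proof.
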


\begin{proof}
	Let $\{ P_l \}_{l \ge 1}$ be a sequence of $(\alpha, m)$-feasible probability distributions on $\R^m$, and that $P_l$ weakly converges to $P \in \cuP (\R^m)$ as $l \to \infty$. We aim to show that $P$ is $(\alpha, m)$-feasible as well.
	
	By definition of $(\alpha, m)$-feasibility, for each fixed $l \ge 1$, there exists a sequence of random orthogonal matrices $\WW_{n, l} = \WW_{n, l} (\XX)$ such that
	\begin{equation*}
		\hat{P}_{n, l} = \frac{1}{n} \sum_{i=1}^{n} \delta_{ \left( \xx_i^\top \WW_{n, l} \right) } \stackrel{w}{\Rightarrow} P_{l} \ \text{in probability}.
	\end{equation*}
	According to Lemma~\ref{lem:BL_diatance_conv}, we know that $d_{\rm BL} (\hat{P}_{n, l}, P_l) \to 0$ in probability as $n \to \infty$, where $d_{\rm BL}$ denotes the bounded-Lipschitz distance which metrizes weak convergence.
	
	Consider $l = 1$, then there exists a subsequence $\{ n_{k_1} \}$ of $\mathbb{N}$ such that $d_{\rm BL} (\hat{P}_{n_{k_1}, 1}, P_1) \to 0$ almost surely. For $l = 2$, there exists a further subsequence $\{ n_{k_1 k_2} \}$ of $\{ n_{k_1} \}$ such that $d_{\rm BL} (\hat{P}_{n_{k_1 k_2}, 2}, P_2) \to 0$ almost surely. Proceeding similarly for $l \ge 3$ and using the diagonal argument, we finally deduce that there exists a subsequence $\{ n_k \}$ of $\mathbb{N}$ such that for all $l \ge 1$,
	\begin{equation*}
		\lim_{k \to \infty} d_{\rm BL} \left( \hat{P}_{n_k, l}, P_l \right) = 0 \ \text{almost surely}.
	\end{equation*}
	
	Now we know that almost surely for all $l \ge 1$, $\lim_{k \to \infty} d_{\rm BL} (\hat{P}_{n_k, l}, P_l) = 0$ and $\lim_{l \to \infty} d_{\rm BL} (P_l, P) = 0$. Fix $k \ge 1$, we can choose $l = l(k) \ge 1$ such that
	\begin{equation*}
		d_{\rm BL} \left( \hat{P}_{n_k, l(k)}, P \right) \le \inf_{l \ge 1} d_{\rm BL} \left( \hat{P}_{n_k, l}, P \right) + \frac{1}{k}.
	\end{equation*}
	Since $d_{\rm BL} (P_l, P) \to 0$, for any $\veps > 0$, there exists $l \ge 1$ satisfying $d_{\rm BL} (P_l, P) \le \veps/2$, and $N \ge 1$ such that $\forall k \ge N$, $d_{\rm BL} (\hat{P}_{n_k, l}, P_l) \le \veps/2$, thus leading to
	\begin{equation*}
		\inf_{l \ge 1} d_{\rm BL} \left( \hat{P}_{n_k, l}, P \right) \le d_{\rm BL} \left( \hat{P}_{n_k, l}, P \right) \le d_{\rm BL} \left( \hat{P}_{n_k, l}, P_l \right) + d_{\rm BL} \left( P_l, P \right) \le \veps.
	\end{equation*}
	Hence $\inf_{l \ge 1} d_{\rm BL} ( \hat{P}_{n_k, l}, P ) \to 0$ as $k \to \infty$, and as a consequence we obtain that
	\begin{equation*}
		\lim_{k \to \infty} d_{\rm BL} \left( \hat{P}_{n_k, l(k)}, P \right) = 0 \ \text{almost surely}.
	\end{equation*}
	Finally, let us define $\WW_{n_k} = \WW_{n_k, l(k)} \in \R^{d_k \times m}$ (note that $l(k)$ is also random). Then $\WW_{n_k}$ is a random orthogonal matrix. We further define
	\begin{equation*}
		\hat{P}_{n_k} = \frac{1}{n_k} \sum_{i=1}^{n_k} \delta_{\left( \xx_i^\top \WW_{n_k} \right)},
	\end{equation*}
	then it follows that $\hat{P}_{n_k} \stackrel{w}{\Rightarrow} P$ almost surely as $k \to \infty$. This will remain true if $\{ n_k \}$ is replaced by any subsequence as well, which implies that $P$ is $(\alpha, m)$-feasible, as desired. This completes the proof.
\end{proof}

The next lemmas collect some useful properties regarding the information projection onto an affine subspace of discrete probability measures:

\begin{lem}\label{lem:info_proj_cond}
	Let $\pp = (p_{ij})_{i, j \in [M]}$ be a probability distribution on $[M] \times [M]$ satisfying $p_{ij} > 0$ for all $i, j \in [M]$, and $(q_i)_{i \in [M]}$ be a probability vector such that $q_i > 0$ for all $i$. Define
	\begin{align}\label{eq:def_info_proj}
			\qq = (q_{ij})_{i, j \in [M]} :=  \argmin_{\qq} 
			\Big\{D_{\sKL} \left( \qq \Vert \pp \right) \;\;
			\mbox{\rm subj. to} \ \;\; \sum_{j=1}^{M} q_{ij} =  \sum_{j=1}^{M} q_{ji} = q_i, \ \forall i \in [M] \Big\}.
	\end{align}
    Then, we have
    \begin{itemize}
    	\item [(a)] There exist two vectors $\ff = (f_i)_{i \in [M]}$ and $\bgg = (g_i)_{i \in [M]}$ with positive components, such that $q_{ij} = f_i g_j p_{ij}$ for all $i, j \in [M]$. Moreover, $\ff$ and $\bgg$ are uniquely 
    	determined (up to a multiplicative constant) by the equations
    	    \begin{equation}\label{eq:cons_f_g}
    	    	q_i = f_i \sum_{j=1}^{M} g_j p_{ij}, \ q_j = g_j \sum_{i=1}^{M} f_i p_{ij}, \ \forall i, j \in [M].
    	    \end{equation} 
    	\item [(b)] For any $\qq' = (q_{ij}')_{i, j \in [M]}$ satisfying the marginalization constraints
    	$\sum_{j=1}^{M} q_{ij} =  \sum_{j=1}^{M} q_{ji} = q_i$
    	 in Eq.~\eqref{eq:def_info_proj}, we have
    	    \begin{equation*}
    	    	\sum_{i, j=1}^{M} (q_{ij}' - q_{ij}) \log \frac{q_{ij}}{p_{ij}} = 0.
    	    \end{equation*}
        \item [(c)] For any $i, j \in [M]$, $q_{ij} \ge q_i q_j (\inf_{i, j \in [M]} p_{ij})^3$.
    \end{itemize}
\end{lem}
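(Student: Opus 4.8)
The plan is to recognize $\qq$ as the information projection of $\pp$ onto the linear family of joint distributions on $[M]\times[M]$ whose two marginals both equal $(q_i)_{i\in[M]}$, and then to exploit the standard structure of such projections. First I would settle existence, uniqueness, and the product form of part (a). The feasible set $\mathcal{C}=\{\qq:\sum_j q_{ij}=\sum_j q_{ji}=q_i\ \forall i,\ q_{ij}\ge 0\}$ is a nonempty (it contains the product $(q_iq_j)_{i,j}$), compact, convex polytope, and $\qq\mapsto D_{\sKL}(\qq\Vert\pp)$ is continuous and strictly convex on it (note $\sum_{ij}q_{ij}=1$ automatically), so a unique minimizer exists. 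The one delicate point is to show the minimizer lies in the relative interior, i.e. $q_{ij}>0$ for all $i,j$: if $q_{i_0j_0}=0$ at the optimum, then moving along the feasible segment $t\mapsto(1-t)\qq+t(q_kq_l)_{k,l}$ introduces the term $t\,q_{i_0}q_{j_0}\log\!\big(t\,q_{i_0}q_{j_0}/p_{i_0j_0}\big)$, whose derivative in $t$ tends to $-\infty$ as $t\downarrow 0$ while all other terms have bounded derivative, contradicting minimality. Once the minimizer is interior, the KKT stationarity conditions for the equality constraints (multipliers $\lambda_i$ for the row marginals, $\mu_j$ for the column marginals) give $\log(q_{ij}/p_{ij})+1=\lambda_i+\mu_j$, hence $q_{ij}=f_ig_jp_{ij}$ with $f_i=e^{\lambda_i-1/2}>0$ and $g_j=e^{\mu_j-1/2}>0$; the marginal constraints then read exactly as Eq.~\eqref{eq:cons_f_g}, so $(\ff,\bgg)$ solves it by construction. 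Uniqueness up to a multiplicative constant follows since $f_ig_j=\tilde f_i\tilde g_j$ for all $i,j$ (as $p_{ij}>0$) forces $f_i/\tilde f_i$ to be a constant independent of $i$ equal to $\tilde g_j/g_j$.

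Part (b) is then immediate. For any $\qq'=(q_{ij}')$ satisfying the same marginal constraints, write $\log(q_{ij}/p_{ij})=\log f_i+\log g_j$ using (a), so that
\begin{equation*}
\sum_{i,j=1}^M (q_{ij}'-q_{ij})\log\frac{q_{ij}}{p_{ij}}=\sum_{i=1}^M\log f_i\sum_{j=1}^M(q_{ij}'-q_{ij})+\sum_{j=1}^M\log g_j\sum_{i=1}^M(q_{ij}'-q_{ij}),
\end{equation*}
and each inner sum vanishes because $\sum_j q_{ij}'=q_i=\sum_j q_{ij}$ and $\sum_i q_{ij}'=q_j=\sum_i q_{ij}$.

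For part (c), put $p_{\min}=\min_{k,l}p_{kl}\in(0,1]$. From Eq.~\eqref{eq:cons_f_g} and $p_{ij}\le 1$ we get $q_i=f_i\sum_j g_jp_{ij}\le f_i\sum_k g_k$ and $q_j=g_j\sum_i f_ip_{ij}\le g_j\sum_k f_k$, hence $f_i\ge q_i/\sum_k g_k$ and $g_j\ge q_j/\sum_k f_k$. Moreover $1=\sum_{i,j}f_ig_jp_{ij}\ge p_{\min}\big(\sum_k f_k\big)\big(\sum_k g_k\big)$, so $\big(\sum_k f_k\big)\big(\sum_k g_k\big)\le 1/p_{\min}$. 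Combining these, $q_{ij}=f_ig_jp_{ij}\ge \frac{q_iq_j}{(\sum_k f_k)(\sum_k g_k)}\,p_{\min}\ge q_iq_j\,p_{\min}^2\ge q_iq_j\,p_{\min}^3$, the last inequality because $p_{\min}\le 1$; this proves the stated bound (with a slightly better constant).

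The only genuine obstacle is the interior‑point step in part (a): one must rule out zero entries at the optimum to legitimately pass from the optimization to the product representation $q_{ij}=f_ig_jp_{ij}$, and this requires the derivative blow‑up of $x\log x$ at $x=0^+$ rather than a naive first‑order condition. Everything afterwards — parts (b) and (c) — is elementary bookkeeping starting from Eq.~\eqref{eq:cons_f_g}.
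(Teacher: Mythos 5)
Your proof is correct. For parts (a) and (b) you follow essentially the same Lagrangian/stationarity route as the paper, with one welcome improvement: you explicitly rule out zero entries at the minimizer via the blow-up of the derivative of $x\log x$ at $0^+$, a step the paper glosses over when it invokes Slater's condition and then differentiates the Lagrangian (that differentiation is only legitimate once one knows the optimum is in the relative interior of the simplex, which is precisely your perturbation argument). Part (b) matches the paper verbatim. For part (c) your argument is genuinely different from the paper's: the paper bounds the \emph{ratios} $f_k/f_i$ and $g_l/g_j$ using the marginal equations and then sums $q_{kl}\le q_{ij}\,q_kq_l/(q_iq_jc^3)$ over $(k,l)$ to extract the lower bound; you instead lower-bound $f_i$ and $g_j$ \emph{directly} via $f_i\ge q_i/\sum_kg_k$ and $g_j\ge q_j/\sum_kf_k$, and control the sums through $1=\sum_{ij}f_ig_jp_{ij}\ge p_{\min}\big(\sum_kf_k\big)\big(\sum_lg_l\big)$. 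Your route is shorter and yields the sharper constant $q_iq_j\,p_{\min}^2$ (which dominates $q_iq_j\,p_{\min}^3$ since $p_{\min}\le1$), so it strictly subsumes the paper's bound; the paper's ratio-comparison method, while giving a weaker exponent, has the minor virtue of never using $p_{ij}\le 1$ explicitly. Both are elementary and correct.
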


\begin{proof}
	Note that the optimization problem~\eqref{eq:def_info_proj} is a convex one with affine constraints, and its feasibility set has non-empty interior. Hence, Slater's condition is satisfied. According to strong duality, there exist vectors $\blambda = (\lambda_i)_{i \in [M]}$ and $\bmu = (\mu_i)_{i \in [M]}$ such that
    \begin{equation*}
    	\qq = \argmin_{\qq} \left\{ \sum_{i, j = 1}^{M} q_{ij} \log \frac{q_{ij}}{p_{ij}} + \sum_{i=1}^{M} \lambda_i \left( \sum_{j=1}^{M} q_{ij} - q_i \right) + \sum_{j=1}^{M} \mu_j \left( \sum_{i=1}^{M} q_{ij} - q_j \right) \right\}.
    \end{equation*}
    Differentiating the objective function, we obtain that
    \begin{equation*}
    	\log \frac{q_{ij}}{p_{ij}} + 1 + \lambda_i + \mu_j = 0 \implies q_{ij} = p_{ij} \exp \left( - \lambda_i - \mu_j - 1 \right).
    \end{equation*}
    Setting $f_i = \exp(- \lambda_i - 1/2)$ and $g_j = \exp(- \mu_j - 1/2)$ then proves part (a). To see the equations satisfied by $\ff$ and $\bgg$, we note that
    \begin{equation*}
    	q_i = \sum_{j=1}^{M} q_{ij} = \sum_{j=1}^{M} f_i g_j p_{ij} = f_i \sum_{j=1}^{M} g_j p_{ij},
    \end{equation*}
    and similarly $q_j = g_j \sum_{i=1}^{M} f_i p_{ij}$. To prove uniqueness, fix $f_1 = 1$, then $g_j = f_1 g_j = q_{1j} / p_{1j}$ is uniquely determined, and each $f_i = f_i g_j / g_j = q_{ij} / (p_{ij} g_j)$ is uniquely determined. For part (b), by direct calculation, we get that
    \begin{equation*}
    	\sum_{i, j=1}^{M} (q_{ij}' - q_{ij}) \log \frac{q_{ij}}{p_{ij}} = \sum_{i, j=1}^{M} (q_{ij}' - q_{ij}) (\log f_i + \log g_j) = \sum_{i=1}^{M} \log f_i \sum_{j=1}^{M} (q_{ij}' - q_{ij}) + \sum_{j=1}^{M} \log g_j \sum_{i=1}^{M} (q_{ij}' - q_{ij}) = 0.
    \end{equation*}
    Now let us prove part (c). Denote $c = \min_{i,j \in [M]} p_{ij}$, for any $i \neq k$, one has
    \begin{equation*}
    	\frac{f_k}{f_i} = \frac{\sum_{j=1}^{M} f_k g_j}{\sum_{j=1}^{M} f_i g_j} = \frac{\sum_{j=1}^{M} q_{kj}/p_{kj}}{\sum_{j=1}^{M} q_{ij}/p_{ij}} \le \frac{q_k \cdot \max_{j \in [M]} p_{ij}}{q_i \cdot \min_{j \in [M]} p_{kj}} \le \frac{q_k}{q_i c}.
    \end{equation*}
    Similarly, we can show that $g_k / g_i \le (q_k \max_{j \in [M]} p_{ji})/(q_i \min_{j \in [M]} p_{jk}) \le q_k / (q_i c)$, which further implies
    \begin{equation*}
    	\frac{q_{kl}}{q_{ij}} = \frac{f_k}{f_i} \frac{g_l}{g_j} \frac{p_{kl}}{p_{ij}} \le \frac{q_k q_l}{q_i q_j c^3} \implies q_{kl} \le \frac{q_{ij}}{q_i q_j c^3} q_k q_l,
    \end{equation*}
    thus leading to
    \begin{equation*}
    	1 = \sum_{k, l = 1}^{M} q_{kl} \le \frac{q_{ij}}{q_i q_j c^3} \sum_{k, l = 1}^{M} q_k q_l = \frac{q_{ij}}{q_i q_j c^3} \implies q_{ij} \ge q_i q_j c^3.
    \end{equation*}
    This concludes the proof.
\end{proof}

\begin{lem}\label{lem:cont_info_proj}
	Under the assumptions of Lemma~\ref{lem:info_proj_cond}, define
	\begin{align*}
			F((q_i)_{i\le M}) :=  \min_{\qq} 
			\Big\{D_{\sKL} \left( \qq \Vert \pp \right) \;\;
			\mbox{\rm subj. to} \ \;\; \sum_{j=1}^{M} q_{ij} =  \sum_{j=1}^{M} q_{ji} = q_i, \ \forall i \in [M] \Big\}\, .
	\end{align*}
	Then $F((q_i)_{i\le M})$
	 is a Lipschitz function of $(q_i)_{i \in [M]}$. Moreover, the Lipschitz constant is upper bounded by a function of $c = \min_{i,j \in [M]} p_{ij}$.
\end{lem}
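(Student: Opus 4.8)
The plan is to realize $F$ as the optimal value of a strictly convex program with linear marginal constraints, to compute its gradient via the envelope theorem in terms of the Lagrange multipliers, and then to bound that gradient using the explicit description of the minimizer furnished by Lemma~\ref{lem:info_proj_cond}.

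\textbf{Setup.} For any probability vector $(q_i)_{i\in[M]}$ with $q_i>0$ the program defining $F$ is feasible (the independent coupling $q_{ij}=q_iq_j$ has both marginals equal to $(q_i)$ and lies in the relative interior of the feasible polytope), and $D_{\sKL}(\cdot\Vert\pp)$ is strictly convex on the probability simplex of $[M]\times[M]$; hence the minimizer $\qq^*=\qq^*((q_i))$ is unique, and by Lemma~\ref{lem:info_proj_cond}(a) it has the form $q^*_{ij}=f_ig_jp_{ij}$ with $f_i,g_j>0$ solving \eqref{eq:cons_f_g}. Writing the Lagrangian
\begin{equation*}
	\cL\big(\qq,\blambda,\bmu;(q_i)\big)=D_{\sKL}(\qq\Vert\pp)+\sum_{i=1}^M\lambda_i\Big(q_i-\sum_{j=1}^Mq_{ij}\Big)+\sum_{j=1}^M\mu_j\Big(q_j-\sum_{i=1}^Mq_{ij}\Big),
\end{equation*}
stationarity in $q_{ij}$ gives $\log(q^*_{ij}/p_{ij})+1-\lambda_i^*-\mu_j^*=0$, so $f_i=e^{\lambda_i^*-1/2}$ and $g_j=e^{\mu_j^*-1/2}$. (Equivalently, dualizing the marginal constraints and applying the Gibbs variational principle for the inner minimization over $\qq$ yields, via Slater's condition, $F((q_i))=\sup_{\blambda,\bmu}\{\sum_i(\lambda_i+\mu_i)q_i-\log\sum_{i,j}e^{\lambda_i+\mu_j}p_{ij}\}$; this presents $F$ as a supremum of affine functions of $(q_i)$, so $F$ is convex and the bound below is a genuine Lipschitz bound, bypassing any smoothness discussion.)

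\textbf{Gradient and its bound.} The multipliers are unique only up to $\lambda_i^*\mapsto\lambda_i^*+t$, $\mu_j^*\mapsto\mu_j^*-t$ (the $2M$ constraints satisfy one linear relation), but the combination $\lambda_i^*+\mu_i^*$ is invariant, and by the envelope theorem — equivalently, it is a subgradient of the supremum above —
\begin{equation*}
	\frac{\partial F}{\partial q_i}=\lambda_i^*+\mu_i^*=1+\log(f_ig_i)=1+\log\frac{q^*_{ii}}{p_{ii}},\qquad i\in[M].
\end{equation*}
Now $q^*_{ii}\le\sum_{j}q^*_{ij}=q_i\le1$ and $p_{ii}\ge c$ give $\log(q^*_{ii}/p_{ii})\le\log(1/c)$, while $q^*_{ii}\ge q_i^2c^3$ by Lemma~\ref{lem:info_proj_cond}(c) and $p_{ii}\le1$ give $\log(q^*_{ii}/p_{ii})\ge 2\log q_i+3\log c$. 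Hence
\begin{equation*}
	\Big|\frac{\partial F}{\partial q_i}\Big|\le 1+3\log\frac1c+2\log\frac{1}{\min_{i\in[M]}q_i},
\end{equation*}
and integrating along segments shows that on $\{(q_i)_{i\le M}:\min_iq_i\ge\delta\}$ the map $F$ is Lipschitz with constant at most (a crude bound) $\sqrt{M}\big(1+3\log(1/c)+2\log(1/\delta)\big)$.

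\textbf{Main obstacle.} The delicate point — and the only one — is the dependence on $\min_iq_i$: the estimate cannot be made uniform up to the boundary of the simplex, since as $q_i\to 0$ one has $q^*_{ii}\asymp q_i^2$ and $q^*_{ij}\asymp q_i$ for $j\neq i$, so $F$ acquires a term of order $q_i\log q_i$ and its gradient diverges logarithmically; a restriction to marginals bounded away from $0$ is therefore unavoidable. This is precisely the regime in which the lemma is used in the proof of Theorem~\ref{thm:discrete_lower_bd}: there $\rr=\langle P\rangle_A$ assigns a fixed positive mass to every atom of the fixed finite set $A$, and $\qq$ lies within $O(1/n)$ of $\rr$, so both have marginals bounded below by a constant $\delta=\delta(P,A)$; the displayed bound — a function of $c$ and $\delta$ — then delivers $|D_{\sKL}(\rr(\QQ)\Vert\pp(\QQ))-D_{\sKL}(\qq(\QQ)\Vert\pp(\QQ))|=O(1/n)$ uniformly over $\QQ$ with $\lambda_{\max}(\QQ^\top\QQ)\le 1/2$, which is exactly \eqref{eq:sup_diff_KL}.
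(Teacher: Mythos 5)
Your proof arrives at exactly the same formula as the paper — $\partial F/\partial q_i=1+\log(q^*_{ii}/p_{ii})$ — but you reach it via the dual (Gibbs/Donsker-Varadhan) representation, which presents $F$ as a supremum of affine functions of $(q_i)$ and hence immediately as a convex function; the paper instead differentiates $D_{\sKL}(\qq\Vert\pp)=\sum_i q_i(\log f_i+\log g_i)$ directly and verifies by a chain-rule computation that the extra terms sum to $1$. The two routes are essentially equivalent (the cancellation the paper checks by hand is the envelope theorem), but your packaging is cleaner because it sidesteps any discussion of smoothness of $\ff,\bgg$ in $(q_i)$.

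More importantly, you are actually more careful than the paper on the key point. The paper bounds the derivative only from \emph{above}: $\partial F/\partial q_i=\log(q^*_{ii}/p_{ii})+1\le\log(1/c)+1$, and stops there, concluding that the Lipschitz constant is a function of $c$. But to bound the Lipschitz constant one needs $|\partial F/\partial q_i|$, and as you correctly observe using Lemma~\ref{lem:info_proj_cond}(c), the lower bound $\partial F/\partial q_i\ge 1+2\log q_i+3\log c$ diverges as $q_i\to0$, so the Lipschitz constant necessarily depends on $\delta=\min_iq_i$ as well as $c$ (your $M=2$, $p_{ij}\equiv1/4$ example would make this concrete: $F=\log4+2\sum_iq_i\log q_i$, whose gradient blows up at the boundary of the simplex). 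As stated, the paper's ``upper bounded by a function of $c$'' is therefore imprecise, and its proof leaves the lower-bound half implicit. You correctly diagnose this, restrict to $\{\min_iq_i\ge\delta\}$, and then check that this restriction is harmless exactly where the lemma is invoked (in the proof of Theorem~\ref{thm:discrete_lower_bd}, where $q_i(n)=r_i+O(1/n)$ with $r_i>0$ fixed). In short: same approach, correct proof, and you have identified and properly handled a gap that the paper's argument glosses over.
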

\begin{proof}
According to Lemma~\ref{lem:info_proj_cond} (a), we have $q_{ij} = f_i g_j p_{ij}$, $ \forall i, j \in [M]$ where $\ff$ and $\bgg$ satisfy Eq.~\eqref{eq:cons_f_g}, and
\begin{equation*}
	D_{\sKL} \left( \qq \Vert \pp \right) = \sum_{i, j=1}^{M} q_{ij} \log \frac{q_{ij}}{p_{ij}} = \sum_{i, j=1}^{M} q_{ij} \left( \log f_i + \log g_j \right) = \sum_{i=1}^{M} q_i \left( \log f_i + \log g_i \right)
\end{equation*}
is a function of $(q_i)_{i \in [M]}$. Hence, for each $i \in [M]$, we have
\begin{equation*}
	\frac{\partial D_{\sKL} (\qq \Vert \pp)}{\partial q_i} = \log f_i + \log g_i + \sum_{j=1}^{M} q_j \left( \frac{1}{f_j} \frac{\partial f_j}{\partial q_i} + \frac{1}{g_j} \frac{\partial g_j}{\partial q_i} \right).
\end{equation*}
According to Eq.~\eqref{eq:cons_f_g}, it follows that
\begin{equation*}
	1 = \frac{\partial f_i}{\partial q_i} \sum_{j=1}^{M} g_j p_{ij} + f_i \sum_{j=1}^{M} \frac{\partial g_j}{\partial q_i} p_{ij}, \ 0 = \frac{\partial f_k}{\partial q_i} \sum_{j=1}^{M} g_j p_{kj} + f_k \sum_{j=1}^{M} \frac{\partial g_j}{\partial q_i} p_{kj}, \ \text{for} \ k \neq i,
\end{equation*}
thus leading to
\begin{align*}
	1 = & \sum_{k=1}^{M} \frac{\partial f_k}{\partial q_i} \sum_{j=1}^{M} g_j p_{kj} + \sum_{k=1}^{M} f_k \sum_{j=1}^{M} \frac{\partial g_j}{\partial q_i} p_{kj} = \sum_{k=1}^{M} \frac{\partial f_k}{\partial q_i} \sum_{j=1}^{M} g_j p_{kj} + \sum_{j=1}^{M} \frac{\partial g_j}{\partial q_i} \sum_{k=1}^{M} f_k p_{kj} \\
	\stackrel{(i)}{=} & \sum_{k=1}^{M} \frac{\partial f_k}{\partial q_i} \frac{q_k}{f_k} + \sum_{j=1}^{M} \frac{\partial g_j}{\partial q_i} \frac{q_j}{g_j} = \sum_{j=1}^{M} q_j \left( \frac{1}{f_j} \frac{\partial f_j}{\partial q_i} + \frac{1}{g_j} \frac{\partial g_j}{\partial q_i} \right),
\end{align*}
where $(i)$ is due to Eq.~\eqref{eq:cons_f_g}. This further implies that
\begin{equation*}
	\frac{\partial D_{\sKL} (\qq \Vert \pp)}{\partial q_i} = \log f_i + \log g_i + 1 = \log \frac{q_{ii}}{p_{ii}} + 1 \le \log \frac{1}{p_{ii}} + 1 \le \log \frac{1}{c} + 1,
\end{equation*}
which completes the proof Lemma~\ref{lem:cont_info_proj}.
\end{proof}

The lemma below establishes the completeness of multivariate Hermite polynomials in $L^2 (\R^m, \gamma^m)$, whose proof can be found in \cite[Prop. 13]{rahman2017wiener}:
\begin{lem}\label{lem:mul_her_com}
	Let $\{ {\rm He}_n \}_{n \ge 0}$ be the sequence of univariate Hermite polynomials, then the functions
	\begin{equation*}
		\xx \mapsto \prod_{i = 1}^{m} {\rm He}_{n_i} (x_i), \ n_i \ge 0, \ \forall i \in [m]
	\end{equation*}
    consist a complete orthonormal basis of $L^2 (\R^m, \gamma^m)$.
\end{lem}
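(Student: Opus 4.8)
The plan is to establish orthonormality by a direct product-measure computation, and then completeness by reducing the claim to the density of polynomials in $L^2(\R^m,\gamma^m)$.

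\emph{Orthonormality.} I would use that $\gamma^m$ is the product measure $\gamma^{\otimes m}$ and that $\{{\rm He}_n\}_{n\ge0}$ is orthonormal in $L^2(\R,\gamma)$. By Fubini, for multi-indices $\nn,\bbeta\in\Z_{\ge0}^m$,
\[\int_{\R^m}\prod_{i=1}^m{\rm He}_{n_i}(x_i)\,{\rm He}_{\beta_i}(x_i)\,\gamma^m(\d\xx)=\prod_{i=1}^m\int_\R{\rm He}_{n_i}(x)\,{\rm He}_{\beta_i}(x)\,\gamma(\d x)=\prod_{i=1}^m\bone_{n_i=\beta_i}=\bone_{\nn=\bbeta}.\]

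\emph{Completeness.} Since ${\rm He}_n$ has exact degree $n$, a triangular invertible change of basis in each coordinate shows that $\spann\{\prod_i{\rm He}_{n_i}(x_i):\nn\in\Z_{\ge0}^m\}$ and $\spann\{\xx\mapsto\prod_i x_i^{k_i}:\kk\in\Z_{\ge0}^m\}$ are the same subspace of $L^2(\R^m,\gamma^m)$. Hence it suffices to prove that polynomials are dense, i.e.\ that any $f\in L^2(\R^m,\gamma^m)$ orthogonal to every monomial vanishes $\gamma^m$-a.e. For such an $f$ I would introduce its (complexified) moment-generating function $F(\zz):=\int_{\R^m}f(\xx)\,e^{\langle\zz,\xx\rangle}\,\gamma^m(\d\xx)$, $\zz\in\mathbb{C}^m$. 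It is finite everywhere by Cauchy--Schwarz together with the tail bound $\int_{\R^m}e^{c\|\xx\|_2}\,\gamma^m(\d\xx)<\infty$ for all $c$, and an application of Morera's theorem and Fubini shows $F$ is entire. Expanding $e^{\langle\zz,\xx\rangle}$ in its power series and integrating term by term (justified by the same domination), the Taylor coefficients of $F$ at $\zz=\bzero$ are the moments $\tfrac{1}{\kk!}\int_{\R^m}f(\xx)\,\xx^{\kk}\,\gamma^m(\d\xx)=0$, whence $F\equiv0$. Restricting to $\zz=\mathrm{i}\bt$ with $\bt\in\R^m$, this says that the Fourier transform of the finite complex measure $f\,\gamma^m$ vanishes identically, so $f\,\gamma^m=0$ and, since $\gamma^m$ has a positive density, $f=0$ $\gamma^m$-a.e.

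The step I expect to be the main obstacle is the density of polynomials: one must carefully check that $F$ is well defined and entire and that the exponential series may be integrated term by term against $f\,\gamma^m$. All of these reduce to the single sub-exponential estimate $\int_{\R^m}e^{c\|\xx\|_2}\,\gamma^m(\d\xx)<\infty$, so the Gaussian structure is used only through that bound. An alternative route that bypasses the moment argument is to invoke the Hilbert-space isometry $L^2(\R^m,\gamma^m)\cong L^2(\R,\gamma)^{\otimes m}$ and the general fact that a tensor product of orthonormal bases is an orthonormal basis of the tensor product, reducing the statement to the classical one-dimensional completeness of $\{{\rm He}_n\}_{n\ge0}$ in $L^2(\R,\gamma)$.
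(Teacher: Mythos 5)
Your argument is correct. The paper does not give a proof of this lemma at all; it simply cites \cite[Prop.~13]{rahman2017wiener}, so there is no internal argument to compare against. Your proposal supplies a complete and standard self-contained proof: orthonormality via Fubini on the product measure, completeness reduced by the degree-triangular change of basis to density of polynomials, and that density established by the classical moment-generating-function / analyticity argument (the function $F(\zz)=\int f\,e^{\langle\zz,\xx\rangle}\,\d\gamma^m$ is entire, all its Taylor coefficients vanish, hence $F\equiv 0$, and restriction to imaginary arguments plus Fourier uniqueness gives $f=0$). The justification you flag as the main obstacle --- finiteness of $\int e^{c\|\xx\|_2}\,\d\gamma^m$, which underpins well-definedness, Morera/Fubini, and the term-by-term integration --- is exactly the right point to check, and it holds for any sub-Gaussian reference measure, so the argument is not specific to $\gamma^m$ beyond this bound. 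The alternative tensor-product route you mention ($L^2(\R^m,\gamma^m)\cong L^2(\R,\gamma)^{\otimes m}$ together with the fact that a tensor product of orthonormal bases is an orthonormal basis of the Hilbert tensor product) is also valid and is arguably the cleanest reduction to the one-dimensional case; either route is acceptable.
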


\end{document}